\documentclass[11 pt]{article}
\usepackage[utf8]{inputenc}

\usepackage[a4paper, total={6in, 8in}]{geometry}
\usepackage{amsmath,amsthm,mathtools}
\usepackage{amssymb}
\usepackage{amsfonts}
\usepackage{color}
\usepackage[linktoc=page, colorlinks, linkcolor=blue, citecolor=blue]{hyperref}
\usepackage{bookmark}
\usepackage{xcolor}

\usepackage{float}

\usepackage{algorithmic}
\usepackage{algorithm, algorithmic}

\usepackage{bm}
\usepackage{fullpage}

\mathtoolsset{showonlyrefs}

\newtheorem{theorem}{Theorem}[section]
\newtheorem{lemma}[theorem]{Lemma} 
\newtheorem{remark}[theorem]{Remark}
\newtheorem{definition}[theorem]{Definition}
\newtheorem{proposition}[theorem]{Proposition}

\providecommand{\keywords}[1]
{
  \small	
  \textbf{{Keywords:}} #1
}

\newcommand{\R}{\mathbb{R}}

\newcommand{\N}{\mathbb{N}}
\newcommand{\EE}{\mathbb{E}}
\newcommand{\EEb}{\mathbf{E}}

\newcommand{\overleq}[1]{\overset{#1}{\le}}
\newcommand{\overeq}[1]{\overset{#1}{=}}
\newcommand{\bracing}[2]{\underset{#1}{\underbrace{#2}}}

\newcommand{\norm}[1]{\Vert #1 \Vert}
\newcommand{\twonorm}[1]{\big\Vert #1 \big\Vert_2}
\newcommand{\specnorm}[1]{\big\Vert #1 \big\Vert}
\newcommand{\fronorm}[1]{\big\Vert #1 \big\Vert_F}

\newcommand{\innerproduct}[1]{\langle #1 \rangle}

\makeatletter
\def\@tvsp{\mathchoice{{}\mkern-4.5mu}{{}\mkern-4.5mu}{{}\mkern-2.5mu}{}}
\def\ltrivert{\left|\@tvsp\left|\@tvsp\left|}
\def\rtrivert{\right|\@tvsp\right|\@tvsp\right|}
\makeatother
\newcommand\triplenorm[1]{\ltrivert #1 \rtrivert}

\newcommand{\AAi}{\mathbf{A}_i}

\newcommand{\AAiw}{\mathbf{A}_{i,\ww}}
\newcommand{\AAf}{\mathbf{A}}
\newcommand{\ZZ}{\mathbf{Z}}
\newcommand{\BB}{\mathbf{B}}
\newcommand{\UU}{\mathbf{U}}
\newcommand{\RR}{\mathbf{R}}
\newcommand{\OO}{\mathbf{O}}
\newcommand{\VV}{\mathbf{V}}
\newcommand{\WW}{\mathbf{W}}
\newcommand{\UUt}{\mathbf{U}_t}
\newcommand{\UUtw}{\mathbf{U}_{t,\ww} }
\newcommand{\UUtplusw}{\mathbf{U}_{t+1,\ww} }
\newcommand{\XXstar}{\mathbf{X}_{\star}}
\newcommand{\UUtT}{\mathbf{U}_t^\top}
\newcommand{\UUtplus}{\mathbf{U}_{t+1}}
\newcommand{\UUstar}{\UU_{\star}}

\newcommand{\XX}{\mathbf{X}}
\newcommand{\VXX}{\mathbf{V}_{\XXstar}}
\newcommand{\VUUt}{\mathbf{V}_{\UUt}}
\newcommand{\VUUtplus}{\mathbf{V}_{\UU_{t+1}}}
\newcommand{\VUUtP}{\mathbf{V}_{\UUt}^\top}
\newcommand{\VXXT}{\mathbf{V}_{\XXstar}^\top}
\newcommand{\VXXP}{\mathbf{V}_{\XXstar,\bot}}
\newcommand{\VXXPT}{\mathbf{V}_{\XXstar,\bot}^\top}
\newcommand{\MM}{\mathbf{M}}

\newcommand{\NN}{\mathbf{N}}
\newcommand{\LL}{\mathbf{L}}
\newcommand{\SSigma}{\mathbf{\Sigma}}
\newcommand{\DDelta}{\mathbf{\Delta}}
\newcommand{\DDeltat}{\mathbf{\Delta}_t}
\newcommand{\DDeltatw}{\mathbf{\Delta}_{t,\ww}}
\newcommand{\Id}{\textbf{Id}}
\newcommand{\IdOp}{\mathcal{I}}
\newcommand{\vv}{\mathbf{v}}
\newcommand{\uu}{\mathbf{u}}
\newcommand{\yy}{\mathbf{y}}
\newcommand{\xx}{\mathbf{x}}
\newcommand{\ww}{\mathbf{w}}
\newcommand{\yyi}{\mathbf{y}_i}
\newcommand{\LLambda}{\mathbf\Lambda}
\newcommand{\xxi}{\boldsymbol{\xi}}
\newcommand{\xxii}{\boldsymbol{\xi}}

\newcommand{\dist}{\text{dist}}

\newcommand{\Aop}{\mathcal{A}}
\newcommand{\Aopw}{\mathcal{A}_\ww}
\newcommand{\Aops}{\Aop^* \Aop}
\newcommand{\Aopws}{\Aopw^* \Aopw}

\newcommand{\Projw}{\mathcal{P}_{\ww \ww^\top}}
\newcommand{\Projwperp}{\mathcal{P}_{\ww \ww^\top,\bot}}

\newcommand{\epscover}{\mathcal{N}_\varepsilon}
\newcommand{\supw}{\underset{\ww \in \epscover}{\sup}}

\newcommand{\constone}{c_1}
\newcommand{\consttwo}{c_2}
\newcommand{\constthree}{c_3}
\newcommand{\constfour}{c_4}
\newcommand{\constfive}{c_5}
\newcommand{\constsix}{c_6}

%New changes by Dominik
%\newcommand{\dschanges}[1]{\textcolor{red}{#1}}

\title{
Non-convex matrix sensing: 
Breaking the quadratic rank barrier in the sample complexity\footnote{ A preliminary version of this article was presented at the 38th Annual Conference on Learning Theory (COLT 2025).}
}
\author{Dominik St\"oger\thanks{
    MIDS
    (Mathematical Institute for Machine Learning and Data Science),
    KU Eichst\"att-Ingolstadt
} 
\and 
Yizhe Zhu\thanks{Department of Mathematics,
University of Southern California}}
\date{\today}

\begin{document}

\maketitle

\begin{abstract}
For the problem of reconstructing 
a low-rank matrix from a few linear measurements,
two classes of algorithms have been widely studied in the literature:
convex approaches based on nuclear norm minimization,
and non-convex approaches that use factorized gradient descent.
Under certain statistical model assumptions, 
it is known that nuclear norm minimization recovers the ground truth
as soon as the number of samples scales linearly with the number of degrees of freedom of the ground-truth.
In contrast, while non-convex approaches are computationally less expensive, 
existing recovery guarantees assume 
that the number of samples scales at least quadratically with the rank $r$ of the ground-truth matrix.
In this paper, we close this gap
by showing that the non-convex approaches can be as efficient as nuclear norm minimization in terms of sample complexity.
Namely, we consider the problem of reconstructing a positive semidefinite matrix from a few Gaussian measurements. 
We show that factorized gradient descent with spectral initialization
converges to the ground truth at a linear rate as soon as the number of samples scales with $ \Omega (rd\kappa^2)$, where $d$ is the dimension, and $\kappa$ is the condition number of the ground truth matrix.
This improves the previous rank-dependence in the sample complexity of non-convex matrix factorization from quadratic to linear. Furthermore, we extend our theory to the noisy setting, where we show that with noisy measurements, factorized gradient descent with spectral initialization converges to the minimax optimal error up to a factor linear in $\kappa$.
Our proof relies on a probabilistic decoupling argument,
where we show that the gradient descent iterates are only
weakly dependent on the individual entries of the measurement matrices.
We expect that our proof technique is of independent interest for other non-convex problems.
\end{abstract}

\keywords{non-convex optimization, factorized gradient descent, matrix sensing, sample complexity, virtual sequences}

% \tableofcontents

\section{Introduction}\label{sec:introduction}

Low-rank matrix recovery 
refers to the problem of reconstructing
an unknown matrix $\XXstar \in \R^{d_1 \times d_2}$ with 
$\text{rank} (\XXstar)=:r \ll \min \left\{ d_1; d_2 \right\}$
from an underdetermined linear set of equations of the form
\begin{equation*}
   \yy
   =
   \Aop (\XXstar) \in \R^m,
\end{equation*}
where $\Aop$ represents a known linear measurement operator
and $\yy \in \R^m$ are the observations.
This ill-posed inverse problem has been the topic of intense study over many years, given its relevance to a variety of questions in machine learning, signal processing, and statistics.
%relate to this framework.
Notable applications include matrix completion \cite{candes2012exact}, phase retrieval \cite{candesPhaseLift}, robust PCA \cite{candes2011robust},
blind deconvolution \cite{ahmedblinddeconv} and its extension to blind demixing \cite{lingblindemixing}.
A major goal has been to develop methods 
which are \textit{sample-efficient};
that is, they can reconstruct the low-rank matrix $\XXstar$ 
if the number of observations $m$ is roughly of the same order as 
the number of degrees of freedom of $\XXstar$.
In addition, these methods should also be scalable,
meaning they remain computationally efficient as the
problem dimensions are increasing.

Several different algorithmic approaches to solve this problem have been proposed.
One line of research revolves around the idea of convex relaxation.
Here, the nuclear norm $\Vert \cdot \Vert_{\ast}$,
i.e., the sum of singular values,
is considered
as a convex proxy for the rank function.
For many problem classes, including 
matrix sensing \cite{rechtfazelnuclearnormmin}, matrix completion \cite{candesPower2010,grossmatrixcompletion}, and blind deconvolution and demixing \cite{jungblindemixing},
it has been shown that this approach is able to recover the unknown matrix $\XXstar$
as soon as the number of samples $m$ scales, up to logarithmic factors,
with the information-theoretically optimal sample complexity $r(d_1+d_2)$.
However, a drawback of these convex approaches 
is that they tend to be computationally prohibitive.

For this reason,
many studies have considered non-convex heuristics
where one minimizes an objective of the form
\begin{equation}\label{intro:equation}
   f
   (\UU, \VV)
   =
    \sum_{i=1}^m 
    \ell \left( \yyi, \left(\Aop  ( \UU \VV^\top ) \right)_i \right),
\end{equation}
with 
low-rank factors $\UU \in \R^{d_1 \times r}$ and $\VV \in \R^{d_2 \times r}$
and a loss function 
$\ell: \R \times \R \rightarrow \R$.
To minimize the objective function,
local search methods
such as gradient descent or alternating minimization with a suitable initialization are used.
An advantage of these approaches is that they are computationally less demanding
since there are only $r(d_1+d_2)$ optimization variables
instead of at least $d_1d_2$ optimization variables in the convex approaches.
However, due to the non-convexity of the objective function,
it might initially seem unclear that local search methods can find the global minimum
of the objective \eqref{intro:equation} efficiently.

Nevertheless, in recent years
a large body of literature has demonstrated
that under certain statistical assumptions, these methods converge
to the global minimum and are thus able to recover the unknown low-rank matrix $\XXstar$.
For instance, gradient descent with spectral initialization \cite{tu2016low}
and other variants of gradient descent \cite{TongScaledGD,LiZhuSoVidalNonconvex,CharisopoulosNonconvex} have been studied
for matrix sensing and related problems.
Similarly, numerous works have established convergence and recovery guarantees 
for matrix completion 
\cite{keshavanMatrixCompletion,SunMatrixCompletion,zheng2016convergenceNonconvex,ge2016matrix,MaCongNonconvex,ChenLiuLiNonconvex}
and 
blind deconvolution and demixing
\cite{LingDemixingNonconvex,DongShiDemixing}.
In addition, recent studies also analyzed overparameterized models,
where the exact rank $r$ is either not known or where the number of parameters exceeds the number of samples
\cite{li2018algorithmic,stoger2021small,jin2023understanding,xu2023powerOverparameterized,soltanolkotabi2023implicit,ma2024convergence,wind2023asymmetric}.
Beyond gradient descent, also alternating minimization \cite{jain2013low} and 
other non-convex methods based on matrix factorization such as GNMR \cite{ZilberNadlerNonConvex} 
have been proposed and studied.
For a more extensive overview of the literature, we refer the reader to \cite{ChenLiuLiNonconvex}.

Despite this significant body of literature, the existing theoretical guarantees for non-convex methods based on matrix factorization in the literature
are weaker than the corresponding guarantees for nuclear norm minimization in terms of sample complexity.
Namely, in all these results, it is required
that the number of samples $m$ scales at least quadratically with the rank $r$
and thus the total number of samples scales at least with $r^2 (d_1+d_2)$.
This raises the question of whether this quadratic rank-dependence is just an artifact of the proof
or whether it is inherent to the problem, see, e.g., \cite[p. 5264]{chi2019nonconvex}.

In this paper, we resolve this question in the context of symmetric matrix sensing. Under the assumption that $\mathcal{A}$ is a Gaussian measurement operator
and $\XXstar \in \R^{d \times d}$ is symmetric and positive semidefinite,
we show that factorized gradient descent with spectral initialization
is able to recover the unknown matrix $\XXstar$ 
if the number of samples scales with $rd$, 
which, in particular, is linear in the rank of $\XXstar$.
Our proof is based on a novel probabilistic decoupling argument.
Namely, we show that the trajectory of the gradient descent iterates 
depends only weakly on any given generalized entry of the measurement matrices in a suitable sense.
This allows us to prove stronger concentration bounds 
than what would be possible if one were to rely solely on uniform concentration bounds
(such as the Restricted Isometry Property, for example).
To establish this weak dependence, we construct auxiliary virtual sequences
and combine this with an $\varepsilon$-net argument. Furthermore, we extend our theory to the noisy setting, where we show that with noisy measurements, factorized gradient descent with spectral initialization converges to the minimax optimal error up to a factor linear in $\kappa$.
Our novel proof approach paves the way to improved sample complexity bounds for other non-convex algorithms
and beyond.

Finally, 
we note that there are also several non-convex algorithms 
for low-rank matrix recovery that are not explicitly based on matrix factorization formulation 
as in equation \eqref{intro:equation}.
This includes,
for example,
Singular Value Projection \cite{jain2010guaranteed,DingMatrixCompletion},
Normalized Iterative Hard Thresholding \cite{NIHT_Tanner},
Iteratively Reweighted Least Squares (IRLS), see, e.g., \cite{IRLS_fazel,IRLS_Fornasier,IRLS_Kuemmerle1,kummerle2021scalable}, and
Atomic  Decomposition for Minimum Rank Approximation (ADMiRA) \cite{Admira_Kiryung}.
However, since many of these algorithms operate in the full matrix space 
they are less computationally efficient than algorithms based on matrix factorization.
In the case of IRLS, only local convergence guarantees (with explicit convergence rates) are known.
There have also been algorithms studied
that are based on Riemannian optimization, see, e.g., \cite{wei2016guarantees,Riemannian_Vandereycken,Riemannian_olikier2023}.
However, these algorithms require that the sample complexity scales quadratically in the rank $r$.
We believe our work can lead to improved sample size guarantees for these methods as well.

\paragraph{Organization of the paper:}
This paper is structured as follows.
In the remainder of Section \ref{sec:introduction},
we will describe the formal setting and the algorithm, and we will state our main theoretical result, which is Theorem \ref{thm:main}.
In Section \ref{sec:preliminaries}, we discuss some technical preliminaries regarding the Restricted Isometry Property and perturbation bounds for eigenspaces.
In Section \ref{sec:outline}, we discuss the proof strategy, and we introduce the virtual sequences,
which are the main ingredient to establish
that the sample complexity depends only linearly on the rank.
Section \ref{sec:proof} contains the proof of the main result of this paper, Theorem \ref{thm:main}.
We discuss interesting directions for future research in Section \ref{sec:discussions}.

\paragraph{Notation:}
Before we state the problem formulation, we introduce some basic notation.
For a matrix $ \AAf \in \R^{d_1 \times d_2} $,
we denote its transpose by $ \AAf^\top $
and its trace by $\text{trace} (\AAf)$.
For matrices $\AAf,\BB \in \R^{d_1 \times d_2}$,
we define their inner product via 
$ \innerproduct{ \AAf, \BB } := \text{trace} \left( \AAf \BB^\top
 \right)$. 
The Frobenius norm $\fronorm{\cdot}$ denotes the norm induced by this inner product, i.e., $\fronorm{\AAf} := \sqrt{\innerproduct{\AAf, \AAf}} $.
By $\specnorm{\AAf}$ we denote the spectral norm of the matrix $\AAf$, i.e.,
the largest singular value of the matrix $\AAf$.
By $\twonorm{\vv}:= \sqrt{\sum_{i=1}^d \vv_i^2}$
we denote the Euclidean norm of a vector $\vv \in \R^d$.
The set $\mathcal{S}^d \subset \R^{d \times d}$ represents the set of all symmetric matrices. 
The matrix $\Id \in \mathcal{S}^d$ denotes the identity matrix.
Moreover, $\IdOp: \mathcal{S}^d \rightarrow \mathcal{S}^d$ represents the identity mapping.

Furthermore, for a matrix $\AAf \in \R^{d_1 \times d_2}$ of rank $r$
we denote its singular value decomposition by
$\AAf= \VV_{\AAf} \SSigma_{\AAf} \WW_{\AAf}^\top$.
The matrices $ \VV_{\AAf} \in \R^{d_1 \times r}$ and $\WW_{\AAf} \in \R^{d_2 \times r}$ contain the left-singular 
and right-singular vectors of the matrix $\AAf$.
The matrix $\SSigma_{\AAf} \in \R^{r \times r}$ contains the singular values of $\AAf$.
Moreover, $ \VV_{\AAf, \bot} \in \R^{(d_1-r) \times r} $
represents an orthogonal matrix 
whose column span is orthogonal to the column span of $\VV_{\AAf}$.

\subsection{Problem formulation}\label{sec:problemformulation}
In this paper,
we focus on symmetric matrix sensing.
More precisely,
we study the problem of reconstructing a symmetric, positive semidefinite matrix $\XXstar \in \mathbb{R}^{d \times d}$
with rank $r$ from $m$ linear observations of the form
\begin{equation}\label{equ:problemformulation}
    \yyi
    = \frac{1}{\sqrt{m}} \innerproduct{ \AAi, \XXstar }
    := \frac{1}{\sqrt{m}} \text{trace} \left(\AAi \XXstar \right)
    \quad \quad 
    \text{ for }
    i=1,2,\ldots, m.
\end{equation}

\begin{definition}[Measurement operator]\label{def:measurement}
We define the linear measurement operator $\Aop: \mathcal{S}^d \rightarrow \R^m$ by
\begin{align*}
    \left[\Aop (\XX) \right]_i := \frac{1}{\sqrt{m}} \innerproduct{\AAi, \XX}
    \quad \quad
    \text{ for }
    i=1,2,\ldots, m 
\end{align*}
for any matrix $ \XX \in \mathcal{S}^d$.
Recall that $\mathcal{S}^d \subset \R^{d \times d}$
denotes the set of symmetric matrices. The matrices $ \left\{ \AAi \right\}_{i=1}^m \subset \mathbb{R}^{d \times d}
 $ represent known, symmetric measurement matrices.
We assume that their entries are i.i.d. with distribution $ \mathcal{N} \left( 0,1 \right) $ on the diagonal 
and $ \mathcal{N} \left( 0, 1/2 \right)$ on the off-diagonal entries. Each $\AAi$ is also known as a Gaussian orthogonal ensemble \cite{anderson2010introduction}.
\end{definition}

This measurement model has been considered before in, e.g.,
\cite{tu2016low,li2018algorithmic}.
With this notation in place,
equation \eqref{equ:problemformulation} can be written more compactly as
$
    \yy
    =
    \Aop 
    \left(
        \XXstar
    \right)$.
To recover the ground-truth matrix $\XXstar$, we consider the non-convex objective function
\begin{equation}\label{equ:objectivefunction}
  \mathcal{L}( \UU ) 
  := \frac{1}{4}  \twonorm{ \yy- \mathcal{A} \left(\UU \UU^\top \right) }^2
  = \frac{1}{4}  \twonorm{ \mathcal{A} \left( \XXstar - \UU \UU^\top \right) }^2,
\end{equation}
where $\UU \in \R^{d \times r}$ is a matrix
and $\Vert \cdot \Vert_2$ denotes the $\ell_2$-norm of a vector.
To minimize this objective, 
we follow the two-stage approach introduced in \cite{keshavanMatrixCompletion}
for matrix completion,
which then subsequently was studied for matrix sensing in \cite{tu2016low}.
In the first stage, an initialization $\UU_0$ is constructed
via a so-called spectral initialization.
This initialization is subsequently used as a starting point for the gradient descent scheme in the second stage.
To precisely define the spectral initialization,
we denote by $\Aop^*: \R^m \rightarrow \mathcal{S}^d$ the adjoint operator of $\Aop$
with respect to the trace inner product defined in equation \eqref{equ:problemformulation}.

With this definition in place, we can consider the eigendecomposition of the matrix 
\begin{align}
    \Aop^* (\yy)
    &=: \widetilde\VV\widetilde\LLambda \widetilde\VV^\top,
\end{align}
where $\widetilde{\VV} \in \R^{d \times d}$ is an orthogonal matrix
and the matrix $\widetilde\LLambda \in \R^{d \times d}$ is diagonal matrix
which contains the eigenvalues of $\Aop^* (\yy)$ sorted by their magnitude, i.e., 
$ \vert \lambda_1 (\Aop^* (\yy)) \vert
 \ge \vert \lambda_2 \left( \Aop^* (\yy) \right) \vert
  \ge \ldots 
\ge \vert \lambda_d \left( \Aop^* (\yy) \right) \vert$. 

Since the measurement matrices $\AAi$ are Gaussian, we have that 
\begin{align*}
    \EE \left[ \Aop^* (\yy) \right]
    =
    \EE \left[ \left(\Aops\right) (\XXstar)  \right]
    =
    \XXstar.
\end{align*}
Since $\XXstar$ has rank $r$, for a large enough sample size $m$, one has
 that the truncated rank-$r$ eigendecomposition of $\Aop^* (\yy)$
fulfills
$ \widetilde{\VV}_r \widetilde{\LLambda}_r \widetilde{\VV}_r \approx \XXstar $.
Here, 
by $\widetilde{\VV}_r \in \R^{d \times r}$ we denote a matrix
which contains the first $r$ columns of $\widetilde\VV$
and by $\widetilde{\LLambda}_r$ we denote a diagonal matrix
which contains the largest $r$ eigenvalues 
of $\Aop^* \left( \yy \right)$ in decreasing order.
Motivated by this observation, the spectral initialization $\UU_0$ is defined as 
\begin{align*}
    \UU_0 
    :=\widetilde \VV_r{\widetilde\LLambda_r}^{1/2}.
\end{align*}
Here, the entries of the diagonal matrix $\widetilde\LLambda_r^{1/2}$ are given by
$  \sqrt{ \vert \lambda_i \left( \Aop^* (\yy) \right) \vert } $.
As we will see, all entries of $\widetilde\LLambda_r$
are positive with high probability.

After having computed the initialization $\UU_0$,
we use $\UU_0$ as a starting point of the gradient descent scheme in the second stage,
which is defined as follows
\begin{align*}
    \UU_{t+1}:=\UU_t - \mu \nabla \mathcal{L} (\UU_t)
    \quad
    \text{ for }
    t=0,1,\ldots, 
\end{align*} 
where $\mu >0$ denotes the step size.
A direct computation shows that
\begin{align}
    \UUtplus 
    &=
    \UUt 
    + \mu \left[ (\Aops) \left(  \XXstar - \UUt \UUt^\top \right)  \right] \UUt \label{equ:gradientdescentdefinition}\\
    &=
    \UUt 
    + \frac{\mu}{m} \sum_{i=1}^m \innerproduct{\AAi, \XXstar - \UUt \UUtT} \AAi \UUt. \nonumber
\end{align}
All steps of the two-stage approach are summarized below in Algorithm \ref{algorithm:twostage}.
\begin{algorithm}\label{algorithm:twostage}
    \caption{Two-Stage Approach for Low-Rank Matrix Recovery}
    \label{alg:algo1}
    \begin{algorithmic}
    \STATE{\underline{\textbf{Input:}}
     Measurement operator $\mathcal{A}: \mathcal{S}^d \to \R^{m}$,
     observations $\yy \in \R^m$, step size $\mu>0$
    }
    \STATE{
        \underline{\textbf{Stage 1 (Spectral Initialization):}}
        Compute the truncated eigendecomposition $ \widetilde\VV_r{\widetilde\LLambda_r} \widetilde\VV_r^\top  $
        of the data matrix $
        \mathbf{D} := \mathcal{A}^* (\yy)
            =
            \frac{1}{\sqrt{m}} \sum_{i=1}^m y_i \AAi$.
        Here, $\widetilde\LLambda_r \in \R^{d \times d}$ is the diagonal matrix
        which contains the $r$ largest eigenvalues of the data matrix $\mathbf{D}$ (in absolute value).
        The columns of $ \widetilde\LLambda_r \in \R^{d \times r} $ contain the corresponding eigenvectors.
        Define the initialization $ \UU_0 \in \R^{d \times r} $ via
       $
            \UU_0
            :=
            \widetilde \VV_r{\widetilde\LLambda_r}^{1/2}.
        $
    }
    \STATE{
        \underline{\textbf{Stage 2 (Gradient descent):}}
        \FOR{$t = 0, 1, 2,\ldots$}
        \STATE{
            $
                \UU_{t+1}
                :=
                \UUt
                -
                \mu \nabla \mathcal{L} \left( \UUt \right)
          $
        }
        \ENDFOR
    }
    \end{algorithmic}
\end{algorithm}

\subsection{Main result in the noiseless case}
To formulate our main result,
we need to introduce the condition number of $\XXstar$,
which is defined as
\begin{equation*}
    \kappa:= \frac{\specnorm{\XXstar}}{\sigma_{\min} (\XXstar)}.
\end{equation*}
Here, $\sigma_{\min} (\XXstar)$ denotes the smallest non-zero singular value of $\XXstar$.

Next, let $\UU_{\star} \in \R^{d \times r}$ be a matrix
such that $\XXstar= \UUstar \UUstar^\top$.
The matrix $\UUstar$ is uniquely defined only
up to an orthogonal transformation
$ \RR \in \R^{r \times r} $,
which is why we can only expect to be able to reconstruct $\UUstar$
up to this ambiguity.
To account for this, we will introduce the error metric
\begin{equation}\label{eq:def_dist}
    \text{dist} \left( \UUt, \UUstar \right)
    :=
    \underset{\RR \in \R^{r \times r}, \ 
    \RR^\top \RR = \Id_r}{\min}
    \fronorm{
        \UUt \RR - \UUstar 
    }.
\end{equation}
With this notation in place, we can state the main result of this paper.
\begin{theorem}\label{thm:main}
   Let $ \mathcal{A}: \mathcal{S}^d \to \R^m$ be a linear measurement operator as in Definition~\ref{def:measurement}
   with Gaussian measurement matrices.
   Moreover, let $\XXstar \in \mathcal{S}^d$ be a positive semidefinite matrix of rank $r$.
   Given observations $\yy = \Aop \left( \XXstar \right) \in \R^m $,
   let $\UU_0, \UU_1, \UU_2, \ldots$ 
   be the sequence of gradient descent iterates
   which are obtained via the two-stage approach 
   described in Algorithm \ref{alg:algo1}.
   Assume that the number of observations $m$ satisfies
   \begin{equation*}
    m \ge C rd \kappa^2, 
   \end{equation*} 
   and that the step size $\mu>0$ satisfies 
   %$ \mu \asymp \frac{1}{\kappa \specnorm{\XXstar} } $
   \begin{align} \label{eq:range_mu}
   \frac{32}{ 6^d \sigma_{\min} (\XXstar)} \log \left( 16 r\right)\le \mu \le \frac{c_1}{\kappa \Vert \XXstar \Vert}.
   \end{align}
   Then, with probability at least $1-7\exp \left( -d \right)$, it holds 
   for all iterations $ t\ge 0$ that
    \begin{equation*}
        \dist^2 \left( \UUt, \UUstar \right)
        \le
        % \sqrt{ \frac{r}{\kappa}}  
        c_2 r
        \left(1 - c_3 \mu  \sigma_{\min} \left(\XXstar \right) \right)^{t}
        \sigma_{\min} \left( \XXstar \right).
    \end{equation*}
   Here, $C,c_1,c_2,c_3>0$ denote absolute constants.
\end{theorem}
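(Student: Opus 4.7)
The plan is a strong induction on $t$ showing that $\dist^2(\UUt,\UUstar)$ contracts by a constant factor per iteration. For the base case, spectral initialization follows the standard route: since $\mathbb{E}[\Aop^*(\yy)] = \XXstar$, Gaussian concentration over an $\varepsilon$-net of rank-$r$ matrices yields $\|\Aop^*(\yy) - \XXstar\| \lesssim \|\XXstar\|\sqrt{rd/m}$, and the Davis--Kahan/Wedin bounds promised in Section~\ref{sec:preliminaries} convert this into $\dist^2(\UU_0,\UUstar) \lesssim r\sigma_{\min}(\XXstar)$ as soon as $m \gtrsim rd\kappa^2$. For the inductive step, let $\RR_t$ attain the minimum in \eqref{eq:def_dist} and set $\DDelta_t := \XXstar - \UUt\UUtT$. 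The recursion \eqref{equ:gradientdescentdefinition} decomposes as
\[
\UUtplus \RR_t - \UUstar \;=\; (\UUt\RR_t - \UUstar) \;+\; \mu\,\DDelta_t \UUt\RR_t \;+\; \mu\bigl[(\Aops - \IdOp)(\DDelta_t)\bigr]\UUt\RR_t ,
\]
where the first two summands constitute the population gradient step. A standard calculation, exploiting positive semidefiniteness of $\XXstar$ and the initialization budget, shows that the population part already contracts by a factor $1 - c\mu\sigma_{\min}(\XXstar)$, so the whole argument reduces to controlling the perturbation term $\mu[(\Aops - \IdOp)(\DDelta_t)]\UUt$.

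The naive RIP bound controls this term in Frobenius norm by $\|(\Aops - \IdOp)(\DDelta_t)\|\cdot\|\UUt\|_F$, and since $\|\UUt\|_F \leq \sqrt{r}\|\UUt\|$ this loses an extra $\sqrt{r}$ and forces $m \gtrsim r^2 d$. To bypass the barrier I would exploit a probabilistic decoupling: for each $\ww$ in an $\varepsilon$-net $\mathcal{N}_\varepsilon$ of $S^{d-1}$ define a modified operator $\Aopw$ by removing from every $\AAi$ its $\ww\ww^\top$-component via the projection $\Projw$, and run Algorithm~\ref{alg:algo1} with $\Aopw$ in place of $\Aop$ to obtain a virtual iterate $\UUtw$ with residual $\tilDDeltatw$. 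By construction $\UUtw$ is independent of the Gaussian scalars $\ww^\top \AAi \ww$ that generate the discrepancy $\Aop - \Aopw$, so Hanson--Wright delivers sharp pointwise-in-$\ww$ tail bounds on quadratic forms such as $\ww^\top[(\Aops - \Aopws)(\tilDDeltatw)]\ww$ with no $\sqrt{r}$ overhead. A Lipschitz-in-$\ww$ estimate then uniformizes these bounds over $\mathcal{N}_\varepsilon$, converting them into a spectral-norm estimate on $(\Aops - \IdOp)(\DDelta_t)$ evaluated in the directions that matter for $\UUt$.

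The principal obstacle is closing the joint induction that simultaneously propagates $\dist(\UUt,\UUstar)$ and $\sup_{\ww \in \mathcal{N}_\varepsilon} \|\UUt - \UUtw\|$. The discrepancy $\UUt - \UUtw$ evolves under the gradient recursion driven by the rank-one perturbation $\Aop - \Aopw$, and its propagation must be controlled without reintroducing the very $r$-factor that the decoupling is designed to eliminate. This will require an induction hypothesis that also tracks the signal-subspace alignment of $\UUt$ (so that $\UUt\UUtT$ is essentially concentrated on the column span of $\XXstar$) together with a bound on $\|\UUt\|$; the rank-one structure of $\Aop - \Aopw$ in the direction $\ww\ww^\top$ is what makes such control feasible. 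Once this joint induction is closed, the per-step perturbation is strictly dominated by the population contraction $1 - c_3\mu\sigma_{\min}(\XXstar)$, and iterating gives the stated geometric rate. The admissible step-size range \eqref{eq:range_mu} is exactly what balances contraction speed against stability of the induction, while the $\kappa^2$ in the sample complexity absorbs the perturbation accumulated over the $O(\kappa\log r)$ iterations needed to reach target accuracy.
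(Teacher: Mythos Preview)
Your proposal captures the paper's core strategy: virtual sequences $\UUtw$ built from the modified operator $\Aopw$ (with the $\ww\ww^\top$-component removed), independence of $\UUtw$ from $\{\langle\AAi,\ww\ww^\top\rangle\}_i$, and a joint induction that propagates both $\dist(\UUt,\UUstar)$ and the gap between $\UUt$ and the virtual iterates. This is indeed how the paper breaks the $r^2$ barrier.

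A few execution details differ from what you describe and are worth knowing. First, the paper does \emph{not} run the decoupling argument for all $t$: it splits the analysis into Phase~1 (iterations $0\le t\le T$ with $T\asymp (\mu\sigma_{\min}(\XXstar))^{-1}\log r$) where the virtual sequences are used, and Phase~2 ($t\ge T$) where the existing RIP-based contraction of Tu et~al.\ (Lemma~\ref{lemma:Phase2}) suffices because $\dist(\UU_T,\UUstar)$ has dropped below $\sigma_{\min}(\XXstar)/16$. The union bound in Lemma~\ref{lemma:independencebound} is over $|\epscover|\cdot T$ events, so $T$ must be bounded; this is precisely why the lower bound on $\mu$ in \eqref{eq:range_mu} appears. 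Second, the independence bound is simpler than Hanson--Wright: conditional on $\{\AAiw\}_i$, the quantity $\langle\ww\ww^\top,(\Aops)(\Projwperp(\DDeltatw))\rangle$ is \emph{linear} in the scalar Gaussians $\langle\AAi,\ww\ww^\top\rangle$, so a one-dimensional Gaussian tail suffices. Third, the uniformization over $\epscover$ is a plain union bound, not a Lipschitz argument. Finally, the joint induction tracks the \emph{product} gaps $\fronorm{\UUt\UUtT-\UUtw\UUtw^\top}$ and their projection $\fronorm{\VXXT(\UUt\UUtT-\UUtw\UUtw^\top)}$ rather than $\fronorm{\UUt-\UUtw}$; the projected quantity is what contracts, and Lemma~\ref{lemma:auxcloseness1} lets you recover the full gap from it. The list of induction invariants (six inequalities \eqref{ineq:induction3}--\eqref{ineq:induction2}) is substantially more elaborate than your sketch suggests, and closing it is the bulk of the work.
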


\begin{remark} 
The lower bound in assumption \eqref{eq:range_mu} is rather mild 
since the left-hand side in this inequality converges to $0$ exponentially as the dimension $d$ increases.
If the dimension $d$ is larger than an absolute constant, 
then condition \eqref{eq:range_mu} can always be satisfied for some step size $\mu$.
\end{remark}
    Theorem \ref{thm:main} shows that factorized gradient descent with spectral initialization converges 
    to the ground truth with a linear rate as soon as the number of samples
    scales at least with $rd \kappa^2$.
    In particular, the bound on the sample complexity is linear in the rank $r$.
    This improves over previous results in the matrix sensing literature, which have a sample complexity 
    of order at least $r^2 d \kappa^2$, see, e.g., \cite{tu2016low} or \cite{TongScaledGD}.
    In particular, the sample complexity in Theorem \ref{thm:main} 
    is optimal with respect to the rank $r$ and dimension $d$.
    To the best of our knowledge, this is the first result in the literature 
    which achieves this optimal dependence in the rank for the non-convex low-rank matrix recovery.

    % Compared to approaches based on nuclear norm or trace minimization,
    % which only need $\Omega (rd)$ samples in the matrix sensing scenario,
    % our result is still suboptimal by a factor of $\kappa^2$.
    % However, all previous results in the literature  on non-convex low-rank matrix recovery
    % based on factorized gradient descent require having at least this quadratic dependence
    % on the condition number.
    % It remains an interesting open problem whether the dependence of the sample complexity on the condition number 
    % is necessary or an artifact of the proof. 

        Compared to approaches based on nuclear norm or trace minimization,
    which only need $\Omega (rd)$ samples in the matrix sensing scenario,
    our result is still suboptimal by a factor of $\kappa^2$.
    However, all previous results in the literature on non-convex low-rank matrix recovery based on factorized gradient descent require having at least this quadratic dependence
    on the condition number,
    see, e.g., \cite{TongScaledGD,li2018algorithmic,LiMaChenChiNonconvex}.
    This is also the case for approaches based on alternating minimization
    \cite{jain2013low,hardt_alternatingMin}.
    A notable exception is the work \cite{hardt_alternatingMin}
     in the matrix completion setting, where a non-convex algorithm is carefully designed to only have a logarithmic dependence on the condition number $\kappa$.
    However, the sample complexity scales at least $r^9$ in terms of rank dependence.
    It remains an interesting open problem whether the dependence of
    our algorithm on the sample complexity on the condition number 
    is necessary or an artifact of the proof. 

    Our main result implies that 
    $ \dist \left( \UUt, \UUstar \right) \le \varepsilon  $
    after 
    $O \left(  \frac{ \log \left( r/(\varepsilon \sigma_{\min}(\XXstar))\right)}{\mu \sigma_{\min} \left( \XXstar \right) } \right)$
    iterations.
    Thus, if we choose the largest possible step size $ \mu \asymp 1/(\kappa \specnorm{\XXstar}) $
    we obtain that we reach $\varepsilon$-accuracy
    after
    $O \left(   \kappa^2 \log \left( r/(\varepsilon \sigma_{\min} (\XXstar) )\right) \right)$
    iterations.
    Previous work \cite{tu2016low} allows for a larger step size 
    $\mu \lesssim 1/(\kappa \specnorm{\XXstar})$
    which yields 
    that one can reach $\varepsilon$-accuracy after
    $O \left(   \kappa \log \left( r/(\varepsilon \sigma_{\min} (\XXstar) )\right) \right)$
    iterations,
    whereas Theorem \ref{thm:main} requires $ \mu \lesssim 1/(\kappa \specnorm{\XXstar})$.
    It remains an open problem whether this additional condition number in the step size bound
    can be removed.

\begin{remark}[Connection to other work] 
We compare Theorem~\ref{thm:main} to existing work in the literature:
\begin{itemize}
\item \textbf{Comparison with \cite{tu2016low}}: 
Note that \cite{tu2016low} actually establishes
that $\XXstar$ can be recovered with a nonconvex approach 
that  uses only $O (rd)$ measurements.
Namely, in their work, one performs $\log (r\kappa)$ steps of projected gradient descent in the lifted ($d^2$-dimensional) space after spectral initialization. After that, one performs successive refinements via factorized gradient descent. 
However, the motivation of our work lies in establishing optimal sample complexity for a method that runs with $O(rd)$ optimization variables
and uses matrix factorization.
Thus, this approach cannot be directly compared with ours.

In fact, in \cite{tu2016low}, it was established that after
$O(rd)$ steps of projected gradient descent, one has 
$ \fronorm{ \XXstar -\XX_t } \ll \sigma_{\min} (\XXstar)$,
where $\XX_t$ denotes the projected gradient descent iterate.
After that, the theoretical analysis of factorized gradient descent becomes easier.
By contrast, as can be seen in our proof, the main challenge in our work
is analyzing the first $T$ factorized gradient descent iterations until it holds that
$ \fronorm{ \XXstar -\UU_T \UU_T^\top } \ll \sigma_{\min} (\XXstar)$.
In other words, invoking projected gradient descent as in \cite{tu2016low} allows one to circumvent the initial phase in which the behavior of factorized gradient descent is difficult to analyze.

%That is, 
%via using projected gradient descent \cite{tu2016low} avoids the first phase
%in which factorized gradient descent is difficult to analyze.
%methods that use factorized gradient descent and only use $O (rd)$ optimization variables.
    \item \textbf{Landscape Analysis:} Several works
\cite{bhojanapalli2016global,park2017non,UschmajewSaddlePoints,zhangRIP}
have shown
that if $m \gtrsim rd$, 
then the loss landscape of 
the objective function $\mathcal{L}$ in \eqref{equ:objectivefunction} 
is benign
in the sense that $\mathcal{L}$ has no spurious local minima 
and all saddle points have at least one direction of strictly negative curvature.
It has been established
that in such a scenario
gradient descent starting from random initialization 
will converge to the ground truth
\cite{LeeSaddlePoints}.
However, these results do not imply any guarantees on the convergence rate or on the computational complexity.
In fact, there exist examples \cite{du2017gradient}
where gradient descent may take exponential time
to escape saddle points.
For this reason, 
the results mentioned above are not directly comparable
to our results.
\end{itemize}
\label{rem:landscapeanalysis}
\end{remark}

% \begin{remark}[Landscape Analysis]\label{rem:landscapeanalysis}
% Several works
% \cite{bhojanapalli2016global,park2017non,UschmajewSaddlePoints,zhangRIP}
% have shown
% that if $m \gtrsim rd$, 
% then the loss landscape of 
% the objective function $\mathcal{L}$ in \eqref{equ:objectivefunction} 
% is benign
% in the sense that $\mathcal{L}$ has no spurious local minima 
% and all saddle points have at least one direction of strictly negative curvature.
% It has been established
% that in such a scenario
% gradient descent starting from random initialization 
% will converge to the ground truth
% \cite{LeeSaddlePoints}.
% However, these results do not imply any guarantees on the convergence rate or on the computational complexity.
% In fact, there exist examples \cite{du2017gradient}
% where gradient descent may take exponential time
% to escape saddle points.
% For this reason, 
% the results mentioned above are not directly comparable
% to our results.
% \end{remark}

\subsection{Main result in the noisy case}

Next, we extend our main result to the case where the measurements are corrupted by noise. 
Assume
\begin{equation*}
    \yy
    =
    \Aop 
    \left(
        \XXstar
    \right)+\xxi,
\end{equation*}
where $\xxi\in \mathbb R^{m}$ is a random vector with i.i.d. entries with distribution $\mathcal N(0,\sigma^2)$. 

We consider the same gradient descent algorithm by inserting the new definition of $\yy$ in Algorithm~\ref{alg:algo1}. Theorem~\ref{thm:mainnoise} provides a recovery guarantee under the Frobenius norm in this setting, and the required number of measurements still scales linearly with $r$.

\begin{theorem}[Noisy measurements]\label{thm:mainnoise}
Let $ \mathcal{A}: \mathcal{S}^d \rightarrow \R^m$ and $\XXstar \in \R^{d \times d}$ be the same as  in Theorem~\ref{thm:main}.
   % Let $ \mathcal{A}: \mathcal{S}^d \to \R^m$ be a linear measurement operator as in Definition~\ref{def:measurement}
   % with Gaussian measurement matrices.
   % Moreover, let $\XXstar \in \mathcal{S}^d$ be a positive semidefinite matrix of rank $r$.
   Given observations $\yy = \Aop \left( \XXstar \right)+\xxi \in \R^m $, where $\xxi\sim N(0, \sigma^2 \Id_m)$,
   let $\UU_0, \UU_1, \UU_2, \ldots$ 
   be the sequence of gradient descent iterates
   obtained in Algorithm \ref{alg:algo1}.
   Assume that the number of observations $m$ satisfies
   \begin{equation*}
    m \ge C rd \kappa^2, 
   \end{equation*} 
   and that the step size  satisfies 
 $\mu \le \frac{c_1}{\kappa \specnorm{\XXstar}}$, and $\sigma\leq \frac{c_2 \sigma_{\min}(\XXstar)}{\sqrt{d}}$ for  sufficiently small constants $c_1, c_2>0$.
 With probability at least $1-C_1\exp(-d)$, for every iteration $t$ with $0 \le t \leq \frac{1}{2} \cdot  6^d$, it holds that
    \begin{equation}\label{ineq:localconv10noise}
        %\dist^2 \left( \UUt, \UUstar \right)
        \fronorm{\XXstar - \UUt \UUtT}
        \le
         %3\sqrt{r} \left(1 - \frac{\mu  \sigma_{\min} (\XXstar) {16}\right)^{t}
         C_2\sqrt{r} \kappa \sigma_{\min} (\XXstar)  \left(1 - \frac{\mu  \sigma_{\min} (\XXstar) }{16}\right)^{t}
        %\specnorm{\XXstar - \UU_0 \UU_0^{\top}}
        + C_3\sigma \kappa\sqrt{ rd},
    \end{equation}
   where 
   %$\UUstar \in \R^{n \times r}$ denotes a matrix which satisfies $\UUstar %\UUstar^\top =\XXstar$, and 
   $C, C_1, C_2, C_3 >0$ are absolute constants.    
   Assume further that 
   $\sigma \gtrsim \exp ( -3^d \mu \sigma_{\min} (\XXstar) ) \frac{\sigma_{\min} (\XXstar)}{\kappa \sqrt{d}}$.
   Then,
   after 
   \begin{equation*}
   T \asymp
   \frac{ \log \left( \frac{\sigma_{\min} (\XXstar)}{\kappa\sqrt{d}\sigma } \right)  }{ \mu \sigma_{\min} (\XXstar)  }
   \end{equation*}
   iterations, it holds that 
   \begin{align}\label{eq:noisy_T}
       \fronorm{
       \XXstar - \UU_T \UU_T^\top
       }
       \lesssim
       \sigma \kappa \sqrt{rd}.
   \end{align}
\end{theorem}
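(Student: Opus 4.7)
The strategy is to reduce to the noiseless analysis of Theorem~\ref{thm:main} by isolating the contribution of the additive noise. Writing out the gradient descent update gives
\begin{align*}
    \UUtplus
    &=
    \UUt
    + \mu \left[\Aops \left( \XXstar - \UUt \UUtT \right)\right] \UUt
    + \mu \left[ \Aop^* \left( \xxi \right) \right] \UUt,
\end{align*}
so the iteration equals the noiseless update plus the additive perturbation $\mu \left[ \Aop^* (\xxi) \right] \UUt$. The plan is to first bound $\specnorm{\Aop^* (\xxi)}$ with high probability, then re-run the virtual-sequence induction underlying Theorem~\ref{thm:main} with this extra term carried through every step.

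The first task is to control $\Aop^*(\xxi) = \tfrac{1}{\sqrt{m}} \sum_{i=1}^m \xi_i \AAi$. Since $\xxi$ is independent of $\{\AAi\}$, conditioning on $\xxi$ makes this sum a symmetric Gaussian random matrix whose entries have variance proportional to $\twonorm{\xxi}^2/m$. Combining the standard GOE norm bound with $\twonorm{\xxi}^2 \lesssim \sigma^2 m$ (valid with probability $1-\exp(-cm)$) yields
\begin{equation*}
    \specnorm{\Aop^* (\xxi)}
    \lesssim
    \sigma \sqrt{d}
\end{equation*}
with probability at least $1-2\exp(-d)$. The perturbation of the spectral initialization is handled analogously: one writes $\Aop^*(\yy) = \XXstar + (\Aops - \IdOp)(\XXstar) + \Aop^*(\xxi)$, invokes the noiseless bounds from Theorem~\ref{thm:main} on the first two summands, and applies a Davis--Kahan type argument to absorb the $\sigma\sqrt{d}$ contribution. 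The assumption $\sigma \le c_2 \sigma_{\min}(\XXstar)/\sqrt{d}$ then guarantees that the signal still dominates, so that the usual initialization radius $\dist^2(\UU_0, \UUstar) \lesssim r\kappa \sigma_{\min}(\XXstar)$ continues to hold.

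The second, and main, task is to propagate the noise through the iteration. Because $\xxi$ is independent of the measurement matrices, the virtual sequences constructed in the noiseless proof, which decouple each iterate from individual generalized entries of the $\AAi$'s, can be reused verbatim after conditioning on $\xxi$. Each step of the recursion for $\dist^2(\UUt, \UUstar)$ then acquires an additional term of the form $\mu \specnorm{\Aop^*(\xxi)} \fronorm{\UUt} \lesssim \mu \sigma \sqrt{d}\,\fronorm{\UUt}$. Absorbing part of this into the contractive term and summing the resulting geometric series, one is led to a combined induction hypothesis of the form
\begin{align*}
    \dist^2(\UUt, \UUstar)
    \lesssim
    r \kappa \sigma_{\min}(\XXstar) \Big( 1 - \tfrac{\mu \sigma_{\min}(\XXstar)}{8}\Big)^{t}
    + \frac{\sigma^2 \kappa^2 r d}{\sigma_{\min}(\XXstar)},
\end{align*}
which, via the translation $\fronorm{\XXstar - \UUt \UUtT} \lesssim \sqrt{\specnorm{\XXstar}}\,\dist(\UUt, \UUstar)$, implies \eqref{ineq:localconv10noise}. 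The cutoff $t \le \tfrac{1}{2}\cdot 6^d$ arises exactly as in the noiseless proof, from the step-size condition \eqref{eq:range_mu} and the probability budget of the $\varepsilon$-net that underlies the virtual-sequence argument. The iteration count $T$ is then obtained by solving for when the geometric term first falls below the noise floor, giving $T \asymp \log(\sigma_{\min}(\XXstar)/(\sigma \kappa \sqrt{d}))/(\mu \sigma_{\min}(\XXstar))$.

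The main obstacle is the third step: verifying that the additive noise perturbation interacts cleanly with the delicate virtual-sequence decoupling that powers the $O(rd\kappa^2)$ sample bound. Independence of $\xxi$ from the $\AAi$'s is the key enabler, but one must be careful that (i) the cross terms between $\Aop^*(\xxi)$ and the approximate-isometry estimates invoked along the trajectory do not inflate the noise dependence beyond $\sqrt{rd}$, and (ii) the smallness assumption $\sigma \le c_2 \sigma_{\min}(\XXstar)/\sqrt{d}$ is exactly what is needed to ensure the noise floor stays below the signal throughout the entire contraction phase, so that the contractive dynamics of Theorem~\ref{thm:main} are not destroyed by the perturbation.
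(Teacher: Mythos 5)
Your overall route is the same as the paper's: write the update as the noiseless step plus the additive perturbation $\mu\,[\Aop^*(\xxi)]\UUt$, bound $\specnorm{\Aop^*(\xxi)}\lesssim\sigma\sqrt{d}$, absorb the noise into the spectral initialization via a Davis--Kahan argument under $\sigma\lesssim\sigma_{\min}(\XXstar)/\sqrt{d}$, rerun the virtual-sequence induction with the noise carried as a per-step additive term, sum the geometric series to obtain a noise floor, and choose $T$ when the contracting term reaches that floor (with the cutoff $t\le\tfrac12\cdot 6^d$ coming from the union bound behind Lemma~\ref{lemma:independencebound}). One genuine difference in mechanics: the paper does not reuse the noiseless virtual sequences; it perturbs them too, using $\Aopw^*(\hat\xxi)$, and adds Lemma~\ref{lem:noise_spec} to control the rank-one discrepancy $\Aop^*(\xxi)-\Aopw^*(\hat\xxi)$. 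Your choice of keeping the virtual sequences noiseless also preserves the independence from $(\innerproduct{\AAi,\ww\ww^\top})_{i=1}^m$ required by Lemma~\ref{lemma:independencebound}, and the resulting per-step discrepancy between the two sequences is of the same order $\mu\sigma\sqrt{d}\,\specnorm{\XXstar}$, so this is a cosmetic rather than substantive deviation.

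There is, however, one concrete step that does not deliver the stated bound: your bookkeeping in $\dist^2$ followed by the conversion $\fronorm{\XXstar-\UUt\UUtT}\lesssim\sqrt{\specnorm{\XXstar}}\,\dist(\UUt,\UUstar)$. From your induction hypothesis the noise floor in $\dist$ is of order $\sigma\kappa\sqrt{rd}/\sqrt{\sigma_{\min}(\XXstar)}$, and multiplying by $\sqrt{\specnorm{\XXstar}}=\sqrt{\kappa\,\sigma_{\min}(\XXstar)}$ yields a Frobenius floor of order $\sigma\kappa^{3/2}\sqrt{rd}$, i.e., a factor $\sqrt{\kappa}$ worse than \eqref{ineq:localconv10noise}. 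The paper avoids this loss by never passing through $\dist$ in the noisy case: it runs the induction directly on $\fronorm{\VXXT(\XXstar-\UUt\UUtT)}$, where the extra noise term per step is $5\mu\specnorm{\XXstar}\fronorm{\Aop^*(\xxi)\VUUt}\lesssim\mu\sigma\sqrt{rd}\,\specnorm{\XXstar}$, so the geometric sum gives a floor of order $\sigma\kappa\sqrt{rd}$, and then $\fronorm{\XXstar-\UUt\UUtT}\le 3\fronorm{\VXXT(\XXstar-\UUt\UUtT)}$ (Lemma~\ref{lemma:localconvaux}) transfers this to the full matrix error without any extra $\kappa$. So to close your argument you should track the matrix error (or its $\VXXT$-projection) throughout, as in the paper's Lemma~\ref{lemma:phase1noise}; note also that the noisy theorem never invokes the second-phase Lemma~\ref{lemma:Phase2}, since the entire statement is proved within this modified first-phase analysis.
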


\begin{remark}[Minimax optimal error rate]
 It was shown in \cite{candes2011tight} that for any estimator $\hat{\XX} (\yy)$
 there always exists a rank-$r$ $\XX_{\star}$
 such that
 \begin{align*}
     \fronorm{ \XXstar - \hat{\XX} (\yy) }
     \gtrsim
     \sigma \sqrt{rd}
 \end{align*}
 with probability at least $0.99$.
 Thus, the dependence on $r$, $d$, and $\sigma$  in Equation \eqref{eq:noisy_T} is minimax optimal. 
 In particular, Theorem~\ref{thm:mainnoise} shows that with noisy measurements, factorized gradient descent with spectral initialization converges to the minimax optimal error up to a linear factor of the condition number $\kappa$.
 We note that such a suboptimality of the estimation error in  $\kappa$
 was also observed in other works in non-convex low-rank matrix recovery \cite{chen2020noisy,xu2023powerOverparameterized}. 
\end{remark}

The lower bound on the noise level $\sigma$ is very mild (scaling only as a double‐exponential function of $d$).
It stems from the fact 
that in our proof, we can only control the gradient descent with noise 
up to $3^d$ iterations.
If the noise level is extremely small, one needs more iterations to reach this level of precision.
We believe that this lower bound can be removed with other proof techniques.

The proof of Theorem~\ref{thm:mainnoise} is an 
adaptation of the proof technique we developed for Theorem~\ref{thm:main}, 
and we defer the proof of Theorem~\ref{thm:mainnoise} to Appendix~\ref{appendix:noise}.

\section{Preliminaries}\label{sec:preliminaries}
In the following, we will discuss several technical preliminaries,
which are needed in our proof.
\subsection{The Restricted Isometry Property}\label{sec:RIP}
We first recall the Restricted Isometry Property (RIP).
\begin{definition}[Restricted Isometry Property]\label{definition:RIP}
   The linear measurement operator $\Aop: \mathcal{S}^d \to \R^m$ satisfies the Restricted Isometry Property (RIP),
   of rank $r$ with RIP-constant $\delta_r>0$,
   if it holds for all symmetric matrices $\ZZ \in \R^{d \times d}$ of rank at most $r$
   that 
   \begin{equation}\label{eq:RIP}
    \left(1-\delta_r\right) \fronorm{\ZZ}^2
    \le 
    \twonorm{\mathcal{A} (\ZZ)}^2
    \le 
    \left(1+\delta_r\right) \fronorm{\ZZ}^2.
   \end{equation} 
\end{definition}
In previous works, it was shown that
as soon as the measurement operator $\Aop$ has the RIP,
then convex approaches based on nuclear norm minimization
as well as non-convex approaches are able to recover the ground truth matrix, 
see, e.g., \cite{rechtfazelnuclearnormmin,tu2016low}.

It is well known that as soon as the number of samples $m$ satisfies $m \gtrsim rd$
then the measurement operator $\mathcal{A}$ has the RIP of order $r$ with high probability.
This fact is stated in the following lemma.

\begin{lemma}\label{lem:rank_RIP}
Let $\Aop: \mathcal{S}^d \to \R^m$ be a Gaussian measurement operator
as described in Section \ref{sec:problemformulation}.
Then the RIP constant $\delta_r$ satisfies $\delta_r\leq \delta\leq 1$ with probability $1-\varepsilon$ when 
\begin{align}\label{eq:lowerbound_m}
    m\geq C\delta^{-2} (rd+\log(2\varepsilon^{-1})),
\end{align}
where $C>0$ is a universal constant. In particular, we have with probability at least $1-\exp(-d)$, $m\geq C\delta^{-2} rd$.
\end{lemma}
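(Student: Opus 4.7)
The plan is to use the classical approach of combining pointwise concentration with a covering argument over the set of rank-$r$ symmetric matrices, as in Recht–Fazel–Parrilo \cite{rechtfazelnuclearnormmin}, and to verify that the GOE normalization in Definition~\ref{def:measurement} gives the correct variance in the symmetric case.

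First I would establish the single-matrix concentration. Fix a symmetric matrix $\ZZ$ of rank at most $r$ with $\fronorm{\ZZ}=1$. Writing $\innerproduct{\AAi,\ZZ} = \sum_j (\AAi)_{jj} \ZZ_{jj} + 2\sum_{j<k}(\AAi)_{jk}\ZZ_{jk}$ and using that the diagonal entries of $\AAi$ have variance $1$ while the off-diagonal entries have variance $1/2$, a direct computation yields
\begin{equation*}
    \mathrm{Var}(\innerproduct{\AAi,\ZZ}) = \sum_j \ZZ_{jj}^2 + 2\sum_{j<k}\ZZ_{jk}^2 = \fronorm{\ZZ}^2 = 1,
\end{equation*}
so $\innerproduct{\AAi,\ZZ}\sim\mathcal{N}(0,1)$ and $m\twonorm{\Aop(\ZZ)}^2 = \sum_{i=1}^m \innerproduct{\AAi,\ZZ}^2 \sim \chi^2_m$. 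Standard $\chi^2$ concentration (Laurent–Massart) then gives
\begin{equation*}
    \Pr\bigl[\,\bigl|\twonorm{\Aop(\ZZ)}^2 - \fronorm{\ZZ}^2\bigr| > t\fronorm{\ZZ}^2\,\bigr]\le 2\exp(-cmt^2)
\end{equation*}
for $t\in(0,1)$ and an absolute constant $c>0$.

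Next I would upgrade this to a uniform bound over the set $\mathcal{T}_r := \{\ZZ\in\mathcal{S}^d : \mathrm{rank}(\ZZ)\le r,\ \fronorm{\ZZ}=1\}$ via an $\varepsilon$-net argument. Writing a symmetric rank-$r$ matrix in eigendecomposition $\ZZ = \VV \mathrm{diag}(\lambda) \VV^\top$ with $\VV$ having orthonormal columns in $\R^{d\times r}$ and $\twonorm{\lambda}=1$, one obtains a net $\mathcal{N}_\eta\subset\mathcal{T}_r$ of cardinality at most $(C/\eta)^{(d+1)r}$ by combining a net on the Stiefel manifold with a net on the unit sphere of $\R^r$. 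Setting $t=\delta/2$ in the pointwise bound and taking a union bound over $\mathcal{N}_\eta$ gives, for an appropriate choice $\eta\asymp\delta$, the existence of an absolute constant $C'$ such that
\begin{equation*}
    \Pr\Bigl[\,\sup_{\ZZ\in\mathcal{N}_\eta}\bigl|\twonorm{\Aop(\ZZ)}^2-1\bigr|>\delta/2\,\Bigr]\le \exp\bigl(C'rd\log(1/\delta)-c m\delta^2/4\bigr).
\end{equation*}
Finally, a standard approximation step (for any $\ZZ\in\mathcal{T}_r$ pick $\ZZ'\in\mathcal{N}_\eta$ with $\fronorm{\ZZ-\ZZ'}\le\eta$; note $\ZZ-\ZZ'$ has rank at most $2r$, so one can apply the same covering on $\mathcal{T}_{2r}$ or iterate) transfers the bound from the net to all of $\mathcal{T}_r$ with $\delta/2$ inflated to $\delta$. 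Choosing $m\ge C\delta^{-2}(rd+\log(2\varepsilon^{-1}))$ with $C$ large enough makes the exponent dominate $\log(2/\varepsilon)$, yielding the desired probability $1-\varepsilon$; the last assertion follows by taking $\varepsilon=\exp(-d)$.

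The step that requires the most care is the covering of $\mathcal{T}_r$ and the approximation argument, because one has to ensure the sub-level-set perturbation $\ZZ-\ZZ'$ is also treated correctly (it has rank $\le 2r$ but its Frobenius norm is small), and the bookkeeping of constants must match the linear-in-$rd$ entropy. Every other ingredient is routine once the GOE-specific variance computation above has fixed the constants.
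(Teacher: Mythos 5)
There is a genuine gap at the final step. Your union bound over the net of resolution $\eta\asymp\delta$ gives a failure probability of the form $\exp\bigl(C'rd\log(1/\delta)-c\,m\delta^2/4\bigr)$, and from this you can only conclude $\delta_r\le\delta$ when $m\gtrsim \delta^{-2}\bigl(rd\log(1/\delta)+\log(2\varepsilon^{-1})\bigr)$. The claim that choosing $m\ge C\delta^{-2}(rd+\log(2\varepsilon^{-1}))$ ``with $C$ large enough makes the exponent dominate'' is false: no universal constant $C$ can absorb the factor $\log(1/\delta)$, since $\delta$ may be arbitrarily small (and in this paper it must be, namely $\delta\asymp 1/\kappa$ with $\kappa$ unbounded, which is precisely why the lemma is stated with explicit $\delta$-dependence). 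This is exactly the obstruction the paper points out: tracing the classical $\varepsilon$-net argument of Cand\`es--Plan, as you do, inevitably produces the extra $\log(1/\delta)$ because shrinking $\delta$ forces a finer net and hence larger entropy. Your closing ``standard approximation step'' is also only sketched; making it rigorous requires either iterating over dyadic scales or a self-bounding absorption argument in which the supremum (over rank-$2r$ perturbations) reappears on the right-hand side, and neither is spelled out.

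The paper avoids the issue by not using a single-scale net at all: it writes $\Aop(\XX)=\VV_{\XX}\xxi$ for a Gaussian vector $\xxi$ and applies the Krahmer--Mendelson--Rauhut bound on suprema of chaos processes (Theorem~\ref{lem:generic_chaining}), estimating $\diamfro=1$, $\diamspec=1/\sqrt m$, and $\gamma_2\lesssim\sqrt{rd/m}$ via Dudley's integral $\int_0^1\sqrt{\log\mathcal N(D_r,\fronorm{\cdot},u)}\,du$. Because the covering numbers are integrated over all scales $u$, the $\delta$-dependence enters only through the deviation parameter $t$, yielding $m\gtrsim\delta^{-2}(rd+\log(2\varepsilon^{-1}))$ with no $\log(1/\delta)$. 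To repair your proof within the net framework you would have to work at a constant net resolution and control the off-net remainder by a term proportional to the (rank-$2r$) RIP deviation itself, then solve the resulting inequality; otherwise, invoking the chaos-process bound as the paper does is the cleaner route. Your pointwise variance computation for the GOE matrices and the $\chi^2$ concentration step are correct and consistent with the paper.
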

This lemma differs from similar lemmas in the literature (see, e.g., \cite{candes2011tight}) 
by specifying how $m$ depends on the RIP-constant $\delta$.
A proof of this lemma is provided in Appendix \ref{sec:RIPproof}
together with a more detailed discussion of how this
lemma relates to previous work.

\begin{remark}
The works mentioned in Remark \ref{rem:landscapeanalysis}
have shown that the RIP implies that the optimization  landscape of $\mathcal{L}$ is benign
(in the sense of Remark \ref{rem:landscapeanalysis}).
Moreover, previous work such as \cite{tu2016low} or \cite{TongScaledGD},
which analyzed gradient descent with spectral initialization 
similar to the paper at hand,
relied on their analysis of gradient descent exclusively on the RIP property of the measurement operator $\mathcal{A}$.
As we will explain in Section \ref{sec:outline},
purely relying on the RIP will not suffice
to establish Theorem \ref{thm:main}.
For this reason, in addition to the RIP, we will use the orthogonal invariance of the Gaussian measurement operator $\mathcal{A}$. 
\end{remark}

The RIP has several important consequences, which we will need throughout our proof. We recall them in the following lemma.

\begin{lemma}\label{lemma: RIP}
     Let $ \mathcal{A}: \mathcal{S}^d \rightarrow \R^m$ be a linear measurement operator
     on the set of symmetric matrices as defined above.
    Denote by $\delta_r$ the RIP constant of the operator $\Aop$ of order $r$.
    Then the following statements hold.
    \begin{enumerate}
        \item Let $ \VV \in \R^{d \times r'} $ be any matrix with orthonormal columns, i.e., $ \VV^\top \VV = \Id$.
        Then it holds for any symmetric matrix $\ZZ \in \R^{d \times d}$ of rank at most $r$ that
        \begin{align}\label{ineq:RIPlemma1}
            \fronorm{ \left(\IdOp -\Aops \right) (\ZZ) \VV} \le \delta_{r+2r'} \fronorm{\ZZ}.
        \end{align}
        In particular, it holds that
        \begin{align}\label{ineq:RIPlemma2}
            \specnorm{ \left(\IdOp -\Aops \right) (\ZZ) } \le  \delta_{r+2} \fronorm{\ZZ}.
        \end{align}
        \item Let $\ww \in \R^d$ such that $\twonorm{\ww}=1$. Define the orthogonal projection operators
\begin{align}
    \Projw (\ZZ) 
    &:= \innerproduct{\ww \ww^\top, \ZZ} \ww \ww^\top,  \label{eq:def_projw}\\
    \Projwperp (\ZZ) 
    &:= \ZZ - \innerproduct{\ww \ww^\top, \ZZ} \ww \ww^\top.
\end{align}
        Then it holds for any symmetric matrix $\ZZ \in \R^{d \times d}$ of rank at most $r$ that  
        \begin{align}\label{ineq:RIPlemma3}
            \vert \innerproduct{ \Aop (\ww \ww^\top), \Aop \left( \Projwperp (\ZZ) \right) } \vert 
            \le 
            \delta_{
            r+2} \fronorm{\ZZ}.
        \end{align}
    \end{enumerate}
\end{lemma}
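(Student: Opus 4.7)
The plan is to derive all three statements from the standard near-orthogonality consequence of the RIP: for symmetric matrices $\MM_1, \MM_2 \in \mathcal{S}^d$ with $\text{rank}(\MM_1) + \text{rank}(\MM_2) \le s$, one has
\[
    \left| \innerproduct{ \Aop(\MM_1), \Aop(\MM_2) } - \innerproduct{\MM_1, \MM_2} \right| \le \delta_s \fronorm{\MM_1} \fronorm{\MM_2}.
\]
This follows by the usual polarization trick: apply the RIP bound \eqref{eq:RIP} to the rank-$s$ symmetric matrices $\MM_1 \pm \MM_2$ and subtract, after first normalizing $\MM_1, \MM_2$ to unit Frobenius norm. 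I would state this as an auxiliary observation at the top of the proof.

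For part (1), I would dualize the Frobenius norm. Setting $\NN := \left(\IdOp - \Aops\right)(\ZZ) \VV \in \R^{d \times r'}$, one has
\[
    \fronorm{\NN}^2 = \innerproduct{\NN, \NN} = \innerproduct{\left(\IdOp - \Aops\right)(\ZZ),\, \NN \VV^\top}.
\]
Because $\left(\IdOp - \Aops\right)(\ZZ)$ is symmetric, I may replace $\NN \VV^\top$ in the inner product by its symmetrization $\MM := \tfrac{1}{2}(\NN \VV^\top + \VV \NN^\top) \in \mathcal{S}^d$, which has rank at most $2r'$. The orthonormality $\VV^\top \VV = \Id_{r'}$ gives $\fronorm{\NN \VV^\top} = \fronorm{\NN}$, and because symmetrization is an orthogonal projection in the Frobenius inner product, $\fronorm{\MM} \le \fronorm{\NN\VV^\top} = \fronorm{\NN}$. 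Applying the near-orthogonality inequality above to $\ZZ$ (rank $\le r$) and $\MM$ (rank $\le 2r'$) then yields
\[
    \fronorm{\NN}^2 \le \delta_{r+2r'} \fronorm{\ZZ} \fronorm{\NN},
\]
and dividing by $\fronorm{\NN}$ gives \eqref{ineq:RIPlemma1}. For part (2), I would specialize $\VV$ to a single unit column (so $r' = 1$) and take the supremum over unit vectors, using that the spectral norm of the symmetric matrix $\left(\IdOp - \Aops\right)(\ZZ)$ equals $\sup_{\twonorm{\vv}=1} \twonorm{\left(\IdOp - \Aops\right)(\ZZ)\vv}$.

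For part (3), the key observation is that $\Projwperp(\ZZ)$ is by construction Frobenius-orthogonal to $\ww\ww^\top$, so $\innerproduct{\ww\ww^\top, \Projwperp(\ZZ)} = 0$. Moreover, $\text{rank}(\ww\ww^\top) = 1$ and $\text{rank}(\Projwperp(\ZZ)) \le r+1$, so the near-orthogonality inequality applied with $\MM_1 = \ww\ww^\top$, $\MM_2 = \Projwperp(\ZZ)$, and $s = r+2$ gives
\[
    \left| \innerproduct{\Aop(\ww\ww^\top), \Aop(\Projwperp(\ZZ))} \right| \le \delta_{r+2} \fronorm{\ww\ww^\top} \fronorm{\Projwperp(\ZZ)} \le \delta_{r+2} \fronorm{\ZZ},
\]
where I used $\fronorm{\ww\ww^\top} = 1$ and the fact that $\Projwperp$ is an orthogonal projection on $\mathcal{S}^d$, hence non-expansive in the Frobenius norm. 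No step here is a real obstacle; the only subtlety is the symmetrization in part (1), which is necessary because $\Aop$ is defined only on $\mathcal{S}^d$ and the dual matrix $\NN\VV^\top$ need not itself be symmetric.
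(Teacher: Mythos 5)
Your proposal is correct and follows essentially the same route as the paper: both first establish the polarization-based near-orthogonality bound $\vert \innerproduct{(\IdOp-\Aops)(\MM_1),\MM_2}\vert \le \delta_s \fronorm{\MM_1}\fronorm{\MM_2}$ for symmetric matrices of total rank at most $s$, then prove \eqref{ineq:RIPlemma1} by Frobenius-norm duality together with symmetrization of the rank-$\le 2r'$ dual matrix, obtain \eqref{ineq:RIPlemma2} by specializing to a unit vector, and get \eqref{ineq:RIPlemma3} from $\innerproduct{\ww\ww^\top,\Projwperp(\ZZ)}=0$ plus the rank count $1+(r+1)=r+2$. The only cosmetic difference is that you dualize with $\NN/\fronorm{\NN}$ itself rather than an abstract maximizer (and should note the trivial case $\NN=0$ before dividing), which changes nothing of substance.
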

Some variants of these inequalities appeared in the literature already before; see, e.g., \cite{stoger2021small}. 
For completeness, we decided to include a proof in Appendix \ref{sec:proofRIPLemma}.

\begin{remark}
To keep the notation more concise,
we will sometimes drop the subscript
and just use the notation $\delta$ for the RIP constant.
For all results below, the choices of $\delta$ satisfy
$\delta\le \delta_{6r}$
due to the monotonicity of the RIP constant with respect to the rank.
\end{remark}

\subsection{Perturbation bounds for eigenspaces}
The Davis-Kahan $\sin \theta$-theorem \cite{daviskahanbound} states that the eigenspaces of a symmetric matrix
are stable under perturbations of that matrix. 
Among others, we will need this result in order to show that 
the spectral initialization recovers the eigenspace of the ground truth matrix sufficiently well.
We also will need it in order to show that $\UU_{0,\ww}$ is sufficiently close to $\UU_{0} $.

To state this theorem, 
recall that for a symmetric matrix $\ZZ \in \R^{n \times n}$
with eigendecomposition $\ZZ = \UU_{\ZZ} \LLambda_{\ZZ} \UU_{\ZZ}^\top$
the matrix $ \UU_{\ZZ,r} \in \R^{ n \times r }$ consists of the first $r$ columns of $\UU_{\ZZ}$
and the matrix $ \UU_{\ZZ,r,\bot} \in \R^{n \times (n-r)}$ consists of the remaining $n-r$ columns.
Moreover, recall that the eigenvalues of $\ZZ$ are ordered
such that their magnitude is decreasing,
i.e., $ \vert \lambda_1 (\ZZ) \vert 
\ge \vert \lambda_2 (\ZZ) \vert 
\ge \ldots \ge \vert \lambda_n (\ZZ) \vert$.

\begin{lemma}[Davis-Kahan inequality, Corollary 2.8 in \cite{chen_spectralMethods}]\label{lem:DavisKahan}
    Set $\triplenorm{\cdot}=\specnorm{\cdot}$ 
    or $\triplenorm{\cdot}=\fronorm{\cdot}$.
    Let $\ZZ_1 \in \R^{d \times d}$ and $\ZZ_2 \in \R^{d \times d}$ be two symmetric matrices,
    such that the eigenvalues of $\ZZ_1$ satisfy
   $ \vert \lambda_r (\ZZ_1) \vert > \vert \lambda_{r+1} \left( \ZZ_{1}\right) \vert$ for an integer $1\le r < d$.
    Let the eigendecompositions of $\ZZ_1$ and $\ZZ_2$ be given by
    $\ZZ_1=\UU_1\LLambda_1\UU_1^\top$, respectively $\ZZ_2=\UU_2\LLambda_2\UU_2^\top$. 
    Then, if the assumption 
    $$\specnorm{ \ZZ_1-\ZZ_2 } 
    \leq \left(1-1/\sqrt{2} \right)\left( \vert \lambda_r(\ZZ_1) \vert - \vert \lambda_{r+1}(\ZZ_1) \vert \right)$$
    is fulfilled, 
    it holds that
    \begin{align}
        \triplenorm{\UU_{2,r,\perp}^\top \UU_{1,r}}
        \leq  
        \frac{\sqrt{2} \triplenorm{(\ZZ_1-\ZZ_2)\UU_{1,r}}}{\vert \lambda_r(\ZZ_1)\vert-\vert\lambda_{r+1}(\ZZ_1) \vert}.
    \end{align}
\end{lemma}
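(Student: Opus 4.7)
The plan is to reduce the problem to a Sylvester equation, which is the classical route to every sin-Theta-type inequality. Set $\HH := \ZZ_2 - \ZZ_1$ and $X := \UU_{2,r,\perp}^\top \UU_{1,r}$. Since the columns of $\UU_{1,r}$ are eigenvectors of $\ZZ_1$ with eigenvalues collected in the diagonal matrix $\LLambda_{1,r}$, and the columns of $\UU_{2,r,\perp}$ are eigenvectors of $\ZZ_2$ whose eigenvalues form some diagonal matrix $\LLambda_{2,r,\perp}$, one has the identities $\UU_{2,r,\perp}^\top \ZZ_1 \UU_{1,r} = X \LLambda_{1,r}$ and $\UU_{2,r,\perp}^\top \ZZ_2 \UU_{1,r} = \LLambda_{2,r,\perp} X$. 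Subtracting them yields the Sylvester equation
\begin{equation*}
\LLambda_{2,r,\perp} X - X \LLambda_{1,r} = \UU_{2,r,\perp}^\top \HH \UU_{1,r}.
\end{equation*}

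Next, I would invoke the classical fact (due to Bhatia) that whenever two symmetric matrices have spectra lying in disjoint subsets of $\R$ separated by a gap of size $\delta > 0$, the Sylvester operator $Y \mapsto \LLambda_{2,r,\perp} Y - Y \LLambda_{1,r}$ is invertible and its inverse has norm at most $1/\delta$ in every unitarily invariant norm. Since left-multiplication by the projection $\UU_{2,r,\perp}^\top$ cannot increase such a norm, this yields
\begin{equation*}
\triplenorm{X} \leq \frac{\triplenorm{\UU_{2,r,\perp}^\top \HH \UU_{1,r}}}{\delta} \leq \frac{\triplenorm{\HH \UU_{1,r}}}{\delta}.
\end{equation*}
It remains to verify that $\delta \geq (1/\sqrt{2}) \left( |\lambda_r(\ZZ_1)| - |\lambda_{r+1}(\ZZ_1)| \right)$, which is where the prefactor $\sqrt{2}$ in the stated bound originates. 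The diagonal entries of $\LLambda_{1,r}$ lie outside the open interval $(-|\lambda_r(\ZZ_1)|, |\lambda_r(\ZZ_1)|)$ by construction. For $\LLambda_{2,r,\perp}$, one uses Weyl's inequality for singular values (valid since the absolute values of the eigenvalues of a symmetric matrix coincide with its singular values) to deduce that the magnitudes of its diagonal entries are at most $|\lambda_{r+1}(\ZZ_1)| + \specnorm{\HH}$. Combined with the hypothesis $\specnorm{\HH} \leq (1 - 1/\sqrt{2})(|\lambda_r(\ZZ_1)| - |\lambda_{r+1}(\ZZ_1)|)$, this produces the required separation between the two spectra.

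The main obstacle I would expect is the Sylvester-operator bound in the generality of all unitarily invariant norms. For the Frobenius norm the inverse estimate is transparent (the map is diagonal in the tensor-product basis), but already for the spectral norm and certainly for arbitrary unitarily invariant norms, one has to appeal to Bhatia's integral-representation technique together with Ky Fan dominance. A secondary subtlety is the magnitude ordering of the eigenvalues, which is a priori incompatible with the signed form of Weyl's inequality; this is handled cleanly by passing to singular values, so that the perturbation $\specnorm{\HH}$ directly controls differences of the magnitudes $|\lambda_i(\cdot)|$ in the order used throughout the lemma.
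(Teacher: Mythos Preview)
The paper does not supply its own proof of this lemma; it is quoted verbatim from the cited reference (Corollary 2.8 in \cite{chen_spectralMethods}) and used as a black box. Your Sylvester-equation route is correct and is precisely the standard argument behind such $\sin\Theta$ bounds: the derivation of $\LLambda_{2,r,\perp} X - X \LLambda_{1,r} = \UU_{2,r,\perp}^\top \HH \UU_{1,r}$, the spectral separation via Weyl's inequality for singular values (which cleanly handles the magnitude ordering), and the resulting factor $\sqrt{2}$ are all sound. One minor comment: since the statement only asks for $\specnorm{\cdot}$ and $\fronorm{\cdot}$, you do not need the full Bhatia machinery for arbitrary unitarily invariant norms; for Frobenius the Sylvester inverse bound is entrywise, and for the spectral norm it follows directly from diagonalising and noting that $|(\LLambda_{2,r,\perp})_{ii} - (\LLambda_{1,r})_{jj}| \ge \delta$ for all $i,j$.
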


\section{Proof ideas}\label{sec:outline}

\subsection{A fundamental barrier in previous work}\label{subsec:barrier}

Before we give an outline of our proof approach,
we want to explain 
why in previous work the additional $r$-factor appeared in the sample complexity.
As Lemma \ref{lemma:spectralinitialization} below shows, 
it holds for the spectral initialization $\UU_0$  
with high probability that
\begin{equation*}
    \specnorm{\XXstar - \UU_0 \UU_0^\top}
    \le 
     C\kappa\sigma_{\min}(\XXstar) \sqrt{\frac{rd}{m}}.
\end{equation*}
In particular, for $ m \gg \kappa^2 rd $
we have that
\begin{equation*}
    \specnorm{\XXstar - \UU_0 \UU_0^\top}
    \ll
    \sigma_{\min} (\XXstar).
\end{equation*}
Thus, the spectral initialization ensures
that the initialization $\UU_0$ is in a neighborhood of the ground truth.
We aim to establish that within this neighborhood, gradient descent converges
with a linear rate.
To show this, we note first that the gradient of our objective function $\mathcal{L}$ depends on the random matrices $\left( \AAi \right)_{i=1}^m$. 
To deal with this, a common technique that has been used in previous works 
is to decompose the gradient of the objective function $\mathcal{L}$ into 
a sum of two terms:
\begin{align}\label{equ:gradientdecomposition}
    \nabla \mathcal{L} \left( \UU \right) 
    =
    \EE_{(\AAi)_{i=1}^m} \left[ \nabla \mathcal{L} (\UU) \right] 
    +
    \left[
    \nabla \mathcal{L} (\UU) 
    - 
    \EE_{(\AAi)_{i=1}^m} \left[ \nabla \mathcal{L} (\UU) \right] 
    \right].
\end{align}
The first term is the gradient of the population risk, i.e.,
the objective function one obtains in the limit case that the sample size $m$ goes to infinity.
The second term can be interpreted as a perturbation term
that measures the deviation of the gradient of the empirical risk from the gradient of the population risk.
In particular, this term converges to zero as the sample size $m$ increases.
For this reason, a major task in our proof is to show
that the second summand is small with respect to a suitable norm
as soon as the sample size $m$ is sufficiently large.
A direct computation shows that
\begin{align*}
\nabla \mathcal{L} (\UU) 
- 
\EE_{(\AAi)_{i=1}^m} \left[ \nabla \mathcal{L} (\UU) \right] 
=
&\left[
\left( \Aops - \IdOp \right)
\left(  \UU \UU^\top - \XXstar \right)
\right]
\UU\\
=
&\frac{1}{m}
\sum_{i=1}^m
\innerproduct{\AAi, \UUt \UUtT - \XXstar} \AAi
-\left( \UUt \UUtT - \XXstar\right).
\end{align*}
To deal with this deviation term,
in previous works, bounds of the type
\begin{align}\label{ineq:goal}
    \specnorm{
    \left(
    \Aops - \IdOp
    \right)
    \left( \XXstar - \UUt \UUtT \right)
    }
    \ll
    \specnorm{\XXstar - \UUt \UUtT}
\end{align}
needed to be established.
A major challenge in establishing such bounds
is that the gradient descent iterates $(\UUt)_{t}$ depend on the measurement matrices
$\left( \AAi \right)_{i=1}^m$ in an intricate way.
For this reason,
standard matrix concentration inequalities are not directly applicable.
To circumvent this issue, 
previous work establishes \textit{uniform} bounds
for the quantity
\begin{align*}
    \underset{ \ZZ \in \mathcal{T}_{2r} }{\sup}   
    \specnorm{
    \left(
    \Aops - \IdOp
    \right)
    \left( \ZZ \right)
    }
\end{align*}
where
\begin{align}
    \mathcal{T}_r:=\left\{ \ZZ \in \R^{d \times d} : \ZZ=\ZZ^\top, \text{rank} \left(\ZZ\right) \le r, \specnorm{\ZZ} \le 1 \right\},
\end{align}
denotes the collection of matrices with rank at most $r$
and bounded operator norm.
Indeed, such a bound can be directly derived from the Restricted Isometry Property. 
Namely, when $\mathcal{A}$ has the RIP of order $2r+2$ with constant $\delta_{2r+2}$
then Lemma \ref{lemma: RIP} implies that
\begin{align*}
    \underset{ \ZZ \in \mathcal{T}_{2r} }{\sup}   
    \specnorm{
    \left(
    \Aops - \IdOp
    \right)
    \left( \ZZ \right)
    }
    \le
    \delta_{2r+2}
    \underset{ \ZZ \in \mathcal{T}_{2r} }{\sup}   
    \fronorm{\ZZ}
    \le 
    \delta_{2r+2} \sqrt{2r},
\end{align*}
where in the second inequality, we used that the matrix $\ZZ$ has rank at most $2r$
and that $\specnorm{\ZZ}=1$.
Thus, it follows from Lemma \ref{lem:rank_RIP} that whenever $m \gg rd$ that with high probability we have that
\begin{equation}\label{equ:uniformbound}
    \underset{ \ZZ \in \mathcal{T}_{2r} }{\sup}   
    \specnorm{
    \left(
    \Aops - \IdOp
    \right)
    \left( \ZZ \right)
    }
    \lesssim
    \sqrt{\frac{r^2d}{m}}.
\end{equation}
This shows that if we want to deduce inequality \eqref{ineq:goal} from the uniform bound \eqref{equ:uniformbound}
we must assume that $ m \gg r^2 d $.
Indeed, several works, e.g., \cite{li2018algorithmic,stoger2021small,zhuo2024computational}, relied precisely on this bound.

This leads to the question of whether the bound \eqref{equ:uniformbound} can be sharpened.
For example, in \cite[p. 9]{zhuo2024computational}, it was conjectured
that using more refined techniques from empirical process theory, one may be able to refine \eqref{equ:uniformbound}.
However, as the following result shows, inequality \eqref{equ:uniformbound} is tight up to absolute numerical constants
and thus cannot be improved further.
 \begin{theorem}\label{thm:lowerbound}
Let $(\AAf_i)_{i \in [m]}$ be independent $d\times d$ symmetric random matrices, where each $\AAi$ has independent entries  with distribution $ \mathcal{N} \left( 0,1 \right) $ on the diagonal 
and $ \mathcal{N} \left( 0, 1/2 \right)$ on the off-diagonal entries. 
Assume $d\geq 6$, $m\geq C_0$ for some universal constant $C_0>0$,  and $r\leq \frac{d}{16}$.  
Then, with probability at least $1-2\exp(-\frac{m}{32})-2\exp(-\frac{d}{32})$,
it holds that
\begin{equation*}
  \sup_{\ZZ\in \mathcal{T}_r } 
  \specnorm{ \left( \Aops - \IdOp \right) \left( \ZZ \right)  }
  \ge \frac{1}{16}\sqrt{\frac{r^2d}{m}}.
 \end{equation*}
\end{theorem}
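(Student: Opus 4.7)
The plan is to reduce $\sup_{\ZZ\in\mathcal{T}_r}\specnorm{(\Aops-\IdOp)(\ZZ)}$ to the Ky Fan $r$-norm of a concrete random matrix $\NN$, and then lower-bound that Ky Fan norm via a Paley--Zygmund argument on its spectrum.

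\textbf{Step 1 (reduction to a Ky Fan norm).} Fix the rank-one probe $\WW = \mathbf{e}_1\mathbf{e}_1^\top$, so that $\nucnorm{\WW}=1$. Matrix H\"older together with the self-adjointness of $\Aops$ on $\mathcal{S}^d$ gives
\begin{equation*}
\specnorm{(\Aops-\IdOp)(\ZZ)} \;\ge\; |\langle \WW,\,(\Aops-\IdOp)(\ZZ)\rangle| \;=\; |\langle \NN,\,\ZZ\rangle|,
\qquad
\NN := \frac{1}{m}\sum_{i=1}^{m}(\AAi)_{11}\AAi - \mathbf{e}_1\mathbf{e}_1^{\top}.
\end{equation*}
Since $\NN\in\mathcal{S}^d$, the variational characterization of Ky Fan norms over symmetric rank-$r$ contractions yields $\sup_{\ZZ\in\mathcal{T}_r}|\langle\NN,\ZZ\rangle| = \sum_{j=1}^{r}|\lambda_j(\NN)|$, with eigenvalues ordered by decreasing absolute value. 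So it is enough to show $\sum_{j=1}^{r}|\lambda_j(\NN)|\ge \tfrac{1}{16}\sqrt{r^2d/m}$ with the stated probability.

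\textbf{Step 2 (two spectral moments of $\NN$).} An entrywise variance calculation gives $\EE\fronorm{\NN}^2=\Theta(d^2/m)$, and a standard Wigner-moment computation gives $\EE\Tr(\NN^4)=O(d^3/m^2)$. To obtain concentration, I would condition on the vector $\bigl((\AAi)_{11}\bigr)_{i=1}^{m}$: on that event the remaining entries of each $\AAi$ are independent Gaussians, and $\NN$ decomposes as $\alpha\,\mathbf{e}_1\mathbf{e}_1^\top$ plus a symmetric Gaussian matrix with conditional entry variance $\tfrac{1}{m^2}\sum_{i}(\AAi)_{11}^2$. Chi-square concentration of $\sum_{i}(\AAi)_{11}^{2}$ around $m$ combined with Hanson--Wright / Gaussian-chaos bounds for the conditional matrix yields
\begin{equation*}
\fronorm{\NN}^2 \ge c_1\,\frac{d^2}{m}, \qquad \Tr(\NN^4) \le c_2\,\frac{d^3}{m^2},
\end{equation*}
jointly with probability at least $1-2\exp(-m/32)-2\exp(-d/32)$. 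The $\exp(-m/32)$ tail absorbs the chi-square deviation of $\sum_i(\AAi)_{11}^{2}$; the $\exp(-d/32)$ tail comes from an $\varepsilon$-net / Gaussian tail bound on the conditional Wigner matrix.

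\textbf{Step 3 (Paley--Zygmund and conclusion).} Write $\bar\mu_2=\fronorm{\NN}^2/d$ and $\bar\mu_4=\Tr(\NN^4)/d$. Applying Paley--Zygmund to the empirical distribution of $\{\lambda_j(\NN)^2\}_{j=1}^{d}$ with parameter $\theta\in(0,1)$ gives
\begin{equation*}
\#\bigl\{j:\lambda_j(\NN)^2\ge\theta\bar\mu_2\bigr\} \;\ge\; d\cdot\frac{(1-\theta)^2\bar\mu_2^2}{\bar\mu_4} \;\ge\; d\cdot\frac{(1-\theta)^2 c_1^2}{c_2}.
\end{equation*}
Calibrating $\theta$ and the Step-2 constants so that the right-hand side is at least $d/8$, and using $r\le d/16\le d/8$, one sees that each of the top-$r$ absolute eigenvalues of $\NN$ is at least $\sqrt{\theta c_1\,d/m}$, and summing gives $\sum_{j=1}^{r}|\lambda_j(\NN)|\ge \tfrac{1}{16}\sqrt{r^2d/m}$. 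The delicate point is the fourth-moment upper bound on $\NN$, because $\NN$ is \emph{not} an honest Wigner matrix---the scalars $(\AAi)_{11}$ and the matrices $\AAi$ are correlated. The conditioning trick described above restores a genuine Wigner factor, but propagating the resulting constants through Paley--Zygmund so that the final bound achieves the explicit constant $\tfrac{1}{16}$ and matches the twin tails $\exp(-m/32)$, $\exp(-d/32)$ is where the careful bookkeeping of the proof lives.
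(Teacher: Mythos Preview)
Your Step~1 reduction and the conditioning insight in Step~2 are exactly the moves the paper makes: fix a rank-one probe $\WW=\mathbf{e}_1\mathbf{e}_1^\top$, reduce to the Ky~Fan $r$-norm of $\NN=(\Aops)(\WW)-\WW$, and observe that conditional on $g=\bigl((\AAi)_{11}\bigr)_{i=1}^m$ the matrix $\NN$ is essentially a scaled GOE. The difference lies in the endgame. The paper restricts from the outset to $\ZZ$ with $\ZZ\mathbf{e}_1=0$ (this kills the rank-one term $\alpha\,\mathbf{e}_1\mathbf{e}_1^\top$ automatically), so that conditionally the relevant matrix is a clean $(d-1)\times(d-1)$ GOE; it then lower-bounds $\sum_{j=1}^r\sigma_j$ by passing via interlacing (Lemma~\ref{lem:interlacing}) to a $\lfloor(d-1)/2\rfloor\times r$ Gaussian rectangular submatrix and invoking the standard non-asymptotic bound $\sigma_r\ge\sqrt{\lfloor(d-1)/2\rfloor}-\sqrt{r}-t$. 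Your Paley--Zygmund argument on $\fronorm{\NN}^2$ and $\Tr(\NN^4)$ is a legitimate alternative and morally equivalent---for a Wigner matrix the ratio $\bar\mu_2^2/\bar\mu_4\to 1/2$, which comfortably produces $\Theta(d)$ eigenvalues of order $\sqrt{d/m}$.

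What the paper's route buys is the explicit arithmetic: the Lipschitz bound on $\bigl(\sum_i g_i^2\bigr)^{1/2}$ and the Davidson--Szarek bound on $\sigma_r$ come with no hidden constants, so the factor $\tfrac{1}{16}$ and the tails $2\exp(-m/32)$, $2\exp(-d/32)$ fall out of two transparent inequalities. Your approach would require concentration for $\Tr(\NN^4)$---a degree-four polynomial in the conditional Gaussians, so Lipschitz concentration for the Schatten-4 norm $\lVert\NN\rVert_{S_4}$ is the right tool rather than Hanson--Wright---and then the Paley--Zygmund constants have to be propagated; matching the stated numerical values exactly would be painful, as you yourself note. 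If you want to tighten your argument, the simplest improvement is to adopt the paper's restriction to the $(d-1)\times(d-1)$ block orthogonal to $\mathbf{e}_1$, which removes the rank-one perturbation and makes the moment computations cleaner.
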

Theorem \ref{thm:lowerbound} shows that we will need to
use different proof techniques to establish a bound similar to
\eqref{ineq:goal}.
In particular, we cannot rely on uniform concentration inequalities.
These novel techniques will be introduced in Section \ref{subsec:virtualsequences} below.
Before that, we want to prove Theorem \ref{thm:lowerbound}.
\begin{proof}
First, we note that 
\begin{align*}
  \sup_{\ZZ\in \mathcal{T}_r } 
  \specnorm{ \left( \Aops - \IdOp \right) \left( \ZZ \right)  }
  =
    \underset{\ZZ \in \mathcal{T}_r}{ \sup}   \specnorm{ \frac{1}{m} \sum_{i=1}^m \innerproduct{\AAi, \ZZ} \AAi -\ZZ  }
    =
\underset{  \norm{\uu}=1}{\sup}~ \underset{\ZZ \in \mathcal{T}_r }{ \sup} \ 
\Big\vert \innerproduct{ \frac{1}{m} \sum_{i=1}^m \innerproduct{\AAi, \ZZ} \AAi -\ZZ   , \uu \uu^\top }
\Big\vert
.   
\end{align*}
Now for any fixed $\uu \in \R^{d}$  with $\twonorm{\uu} =1$, define 
\begin{equation*}
    \mathcal T_{\uu}:= \left\{ \ZZ \in \R^{d \times d} : \ZZ=\ZZ^\top, \text{rank} \left(\ZZ\right) \le r, \specnorm{\ZZ} \le 1,   \ZZ \uu =0  \right\},
\end{equation*}
i.e., the set consisting of matrices in $\mathcal{T}_r$, 
whose row space is orthogonal to $\uu$.
It follows that
\begin{align}
    \underset{\ZZ \in \mathcal{T}_r}{ \sup}   \specnorm{ \frac{1}{m} \sum_{i=1}^m \innerproduct{\AAi, \ZZ} \AAi -\ZZ  } \nonumber
&\ge   \underset{\ZZ \in \mathcal T_{\uu}}{ \sup}   \innerproduct{ \frac{1}{m} \sum_{i=1}^m \innerproduct{\AAi, \ZZ} \AAi -\ZZ   , \uu \uu^\top } \nonumber\\ 
&=   \underset{\ZZ \in \mathcal T_{\uu}}{ \sup}   \innerproduct{ \frac{1}{m} \sum_{i=1}^m \innerproduct{\AAi, \ZZ} \AAi    , \uu \uu^\top } \nonumber\\ 
&=   \underset{\ZZ \in \mathcal T_{\uu}}{ \sup} \frac{1}{m}   \sum_{i=1}^m  \innerproduct{ \innerproduct{ \AAi    , \uu \uu^\top } \AAi, \ZZ} . \label{ineq:aux1}
\end{align}

 Now note that $\innerproduct{ \AAi    , \uu \uu^\top } $ is independent of $\left(\innerproduct{\AAi , \ZZ } \right)_{\ZZ \in \mathcal{T}_{\uu}} $. 
Let $\AAf \in \R^{d \times d} $ be a matrix with the same distribution as $\AAi$ and which is independent of $(\AAi)_{i=1}^m$. 
We claim that conditional on $ \left\{ \innerproduct{ \AAi, \uu \uu^\top }  \right\}_{i=1}^m $ we have that  the following two random variables are equal in distribution:
\begin{align}\label{eq:GP}
    \underset{\ZZ \in \mathcal T_{\uu}}{ \sup} \frac{1}{ m}   \sum_{i=1}^m  \innerproduct{ \AAi    , \uu \uu^\top }\innerproduct{  \AAi, \ZZ} 
    &\stackrel{d}{=} \frac{1}{\sqrt m}    \sqrt{ \frac{1}{m} \sum_{i=1}^m \innerproduct{ \AAi, \uu \uu^\top  }^2  } \ \underset{\ZZ \in \mathcal T_{\uu}}{ \sup} \innerproduct{\AAf, \ZZ}.
\end{align}
To show \eqref{eq:GP}, one can check that conditional on $ \left\{ \innerproduct{ \AAi, \uu \uu^\top }  \right\}_{i=1}^m $, the random variables on both sides of \eqref{eq:GP} are the supremum of  Gaussian processes indexed by $\mathcal T_{\uu}$ with the same covariance structure, so they have the same distribution.

In the following, we set 
\begin{align}\label{eq:ed}
\uu:=(0,\dots, 0, 1)^\top \in \mathbb R^d.
\end{align}
It follows that 
\newcommand{\AAidd}{ \left(\AAi \right)_{d,d} }
\begin{align}
    \sum_{i=1}^m \innerproduct{ \AAi, \uu \uu^\top  }^2=\sum_{i=1}^m \AAidd^2.
\end{align}
By Lipschitz concentration for Gaussian random variables \cite[Theorem 5.6]{boucheron2013concentration}, we obtain 
\begin{align}
    \mathbb P \left(\left|\sqrt{\sum_{i=1}^m \AAidd^2}-\mathbb E \sqrt{\sum_{i=1}^m \AAidd^2 } \right| \geq \sqrt{m}/4\right)\leq 2\exp(-m/32). 
\end{align}
This shows that with probability at least $1-2\exp(-m/32)$, 
\begin{align}\label{eq:Lipchitz_bound}
\sqrt{\sum_{i=1}^m \AAidd^2}\geq \mathbb E \sqrt{\sum_{i=1}^m \AAidd^2 } -\frac{\sqrt{m}}{4}\geq \sqrt{m}/2
\end{align}
for sufficiently large $m$, where we have used that the expectation of chi-distribution with parameter $m$ has asymptotic value $\sqrt{m-\frac{1}{2}}$ (see, e.g., \cite{johnson1994chi}).
In addition, with $\uu$ given in \eqref{eq:ed}, 
all entries in the $d$-th row and $d$-th column of the matrix $\ZZ\in \mathcal T_{\uu}$ are equal to zero. 
Let $\tilde{\AAf}\in \mathbb R^{(d-1)\times (d-1)}$ 
be the submatrix $\AAf$  where the last row and column of $\AAf$ are removed, 
and define $\tilde{\ZZ}$ in the same way.
Then we have
\begin{align}
    \underset{\ZZ \in \mathcal T_{\uu}}{ \sup} \innerproduct{\AAf, \ZZ}=\underset{ \norm{\tilde\ZZ}\leq 1, \tilde{\ZZ}=\tilde{\ZZ}^\top, \ \mathrm{rank}(\tilde\ZZ)\leq r }{ \sup} \innerproduct{\tilde\AAf, \tilde \ZZ}=\sum_{i=1}^r \sigma_i(\tilde \AAf).
\end{align}
Our goal is to bound the sum of singular values on the right-hand side from below.
For that, 
we define the matrix
\begin{equation*}
\hat{\AAf}
:=
\begin{pmatrix}
    \mathbf{0}_{ (\lceil (d-1)/2 \rceil-1 )  \times r} & \mathbf{0}_{\lceil (d-1)/2 \rceil  \times (d-r)} \\
    \tilde{\AAf}_{\lceil (d-1)/2 \rceil:(d-1), 1:r} & \mathbf{0}_{(d-1-\lceil (d-1)/2 \rceil) \times (d-r)}
\end{pmatrix}
\in \R^{(d-1) \times (d-1)}.
\end{equation*}
Here, $\tilde{\AAf}_{\lceil (d-1)/2 \rceil:(d-1), 1:r}$ denotes the submatrix of $\AAf$ 
obtained by restricting $\AAf$ 
to the  $\lceil (d-1)/2 \rceil$-th to $(d-1)$-th rows
and the first $r$ columns. 
By $ \mathbf{0}_{a \times b} $ we denote the zero matrix of size $a$ times $b$.
To relate the singular values of $ \tilde{\AAf}$ with the singular values of $ \hat{\AAf} $, we will use the following lemma.
\begin{lemma}[Corollary 3.1.3 in \cite{horn1994topics}] \label{lem:interlacing}
Let $\AAf\in \mathbb R^{ (d-1) \times (d-1)}$ 
and  let $\BB \in \R^{(d-1) \times (d-1)}$ 
be a matrix which is obtained from the matrix $\AAf$
by setting the entries of one row or one column to zero.
Then it holds that $\sigma_i(\BB)\leq \sigma_i(\AAf)$ for all $i=1,\dots, d-1$.
\end{lemma}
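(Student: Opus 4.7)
The plan is to reduce the statement to the Courant--Fischer min--max characterization of singular values, which says that for any matrix $\MM \in \R^{(d-1) \times (d-1)}$,
\begin{equation*}
\sigma_i(\MM) \;=\; \max_{\substack{V \subset \R^{d-1} \\ \dim V = i}} \; \min_{\substack{\xx \in V \\ \twonorm{\xx} = 1}} \twonorm{\MM \xx}.
\end{equation*}
First I would treat the case where $\BB$ is obtained from $\AAf$ by setting the $k$-th row to zero. In that case $(\BB \xx)_j = (\AAf \xx)_j$ for $j \ne k$ and $(\BB \xx)_k = 0$, so $\twonorm{\BB \xx} \le \twonorm{\AAf \xx}$ for every $\xx \in \R^{d-1}$. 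Taking the minimum over the unit sphere of any fixed $i$-dimensional subspace $V$ preserves this pointwise inequality, and maximizing over all such $V$ yields $\sigma_i(\BB) \le \sigma_i(\AAf)$ directly from the variational formula above.

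For the case where $\BB$ is obtained by zeroing out the $k$-th column, the cleanest route is to write $\BB = \AAf \mathbf{P}$, where $\mathbf{P} \in \R^{(d-1) \times (d-1)}$ is the diagonal matrix with ones on the diagonal except for a zero in position $(k,k)$. Since $\specnorm{\mathbf{P}} = 1$, the standard submultiplicativity inequality $\sigma_i(\XX \YY) \le \sigma_i(\XX) \specnorm{\YY}$ (which itself is a one-line consequence of the min--max characterization) gives $\sigma_i(\BB) \le \sigma_i(\AAf)$. Alternatively, one can simply observe that $\BB^\top$ is obtained from $\AAf^\top$ by zeroing a row, so the column case reduces to the row case via $\sigma_i(\MM) = \sigma_i(\MM^\top)$.

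I do not expect any genuine obstacle here: the statement is essentially immediate from the variational characterization of singular values, which is why Horn and Johnson record it as a corollary rather than a theorem. The only minor bookkeeping point is to make sure the argument is phrased uniformly for rows and columns, and the transposition trick above handles this without requiring a separate calculation.
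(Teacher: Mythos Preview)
Your proof is correct. The paper does not supply its own proof of this lemma; it simply quotes it as Corollary~3.1.3 from Horn and Johnson's \emph{Topics in Matrix Analysis} and uses it as a black box. Your Courant--Fischer argument is exactly the standard route (and is essentially how Horn and Johnson derive their singular-value interlacing corollaries), so there is nothing to compare beyond noting that you have filled in what the paper left as a citation.
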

By repeatedly applying  Lemma \ref{lem:interlacing}, 
we find 
\begin{align}
    \sum_{i=1}^r \sigma_i(\hat{\AAf})\leq \sum_{i=1}^r \sigma_i(\tilde \AAf).
\end{align}
On the other hand, 
we can identify the $r$ largest singular singular values of $\hat{\AAf}$ 
with the singular values of a Gaussian matrix of size $\lfloor \frac{d-1}{2} \rfloor \times r$. 
By standard concentration inequalities for the singular values of Gaussian matrices,
see, e.g., \cite[Corollary 5.35]{vershynin2010introduction}, 
we find that with probability at least $1-2\exp(-t^2/2),$ 
\begin{align}
  \sigma_{r}(\hat \AAf)\ge   \sqrt{\Big\lfloor \frac{d-1}{2} \Big\rfloor }-\sqrt{r}-t .
\end{align}
Taking $t=\frac{\sqrt d}{8}$, and using the assumption that $r\leq \frac{d}{16}$, we find for $d\geq 6$,
\begin{align}\label{eq:RMT_bound}
    \sum_{i=1}^r \sigma_i(\tilde \AAf)\geq \frac{r\sqrt{d}}{8}
\end{align}
with probability at least $1-2\exp(-d/32)$. Combining \eqref{eq:RMT_bound} and \eqref{eq:Lipchitz_bound} finishes the proof. 
\end{proof}
Note that the key idea in this proof was to fix a vector $\uu \in \R^d$
and to pick a matrix $\ZZ \in \mathcal{T}_r$ based on eigenvectors 
corresponding to the largest eigenvalues 
(of a submatrix) of
\begin{equation*}
    \AAf
    =
    \frac{1}{m}
    \sum_{i=1}^m
    \innerproduct{\AAi, \uu \uu^\top} \AAi.
\end{equation*}
By design, this implies that the matrix $\ZZ$
was chosen in a way which strongly depends on $ \left( \innerproduct{\AAi, \uu \uu^\top} \right)_{i=1}^m $.
This observation leads to the key idea in our proof.
Namely, we will show that
our gradient descent iterates $\UUt$ 
depend, in a suitable sense, only weakly 
$\left( \innerproduct{\AAi, \uu \uu^\top} \right)_{i=1}^m  $ for fixed $\uu \in \R^d$.
This will allow us to prove stronger upper bounds  
for the term 
$ \specnorm{ \left( \Aops - \IdOp \right) \left( \XXstar - \UUt \UUtT \right)} $
than what can be achieved using uniform concentration inequalities.

\subsection{Virtual sequences}\label{subsec:virtualsequences}
As explained at the end of Section \ref{subsec:barrier},
we aim to establish that the gradient descent iterates 
$ \left( \UUt \right)_{t} $ depend only weakly on
$ \left( \innerproduct{\AAi, \ww \ww^\top} \right)_{i=1}^m $
in a suitable sense.
For this aim, we will use so-called \textit{virtual sequences} $\left(\UUtw \right)_{t \in \mathbb N} \subset \mathcal{S}^d$.
The central idea is to introduce for
$ \ww \in S^{d-1}:= \left\{ \xx \in \R^d: \twonorm{\xx} =1 \right\}$
a sequence with the following two properties.
\begin{enumerate}
    \item The sequence $\left(\UUtw \right)_{t \in \mathbb N}$ is stochastically independent of $ \left( \innerproduct{ \AAi, \ww \ww^\top } \right)_{i=1}^m $.
    \item The sequence $\left(\UUtw \right)_{t \in \mathbb N}$ stays sufficiently close to the sequence $\left( \UUt \right)_{t \in \mathbb N}$.
    More precisely, 
    we require that $ \fronorm{ \UUt \UUtT -\UUtw \UUtw^\top }$ stays sufficiently small.
\end{enumerate}
The sequences $\left( \UUtw \right)_{t\in \mathbb N}$ are called \textit{virtual}
since they are introduced solely for proof purposes.
\begin{remark}[Related work]
In the context of non-convex optimization, 
the use of virtual sequences has been pioneered in the influential works \cite{MaCongNonconvex} and \cite{DingMatrixCompletion}.
In these works, 
\textit{leave-one-out sequences},
which can be seen as a special case of virtual sequences,
were introduced to show 
that the gradient descent iterates depend only weakly on the individual samples or measurements.
These works lead to a number of follow-up works.
For example,
several works used virtual sequences to establish
convergence from random initialization for gradient descent in phase retrieval \cite{ChenGlobalPhaseRetrieval}
or for alternating minimization in rank-one matrix sensing \cite{KiryungALS}.
In \cite{ma2024convergence},
leave-one-out sequences were used to establish that 
in overparameterized matrix completion
gradient descent with small random initialization converges to the ground truth.
Similar to the paper at hand,
the virtual sequence argument was combined with an $\varepsilon$-net argument.
However, the technical details are arguably quite different.
\end{remark}
Before defining the virtual sequences
we recall the notion of an $\varepsilon$-net.
\begin{definition}[$\varepsilon$-net]
    Let $A \subset \R^d$. 
    A subset $B \subset A$ is called $\varepsilon$-net of $A$
    if for every $\xx \in A$
    there is a point $\xx_0 \in B$
    such that $ \twonorm{\xx - \xx_0} \le \varepsilon $.
\end{definition}
It is well-known that for 
$S^{d-1}= \left\{ \xx \in \R^d: \twonorm{\xx} =1 \right\}$
there exists an $\varepsilon$-net $\epscover \subset S^{d-1} $
with  cardinality $ \vert \epscover \vert \le \left( 3/\varepsilon \right)^d  $
\cite{vershynin2018high}.
In the remainder of this paper, we will assume that $\epscover$ is a fixed $\varepsilon$-net of $S^{d-1}$ 
with $\varepsilon=1/2$ such that $\vert \epscover \vert \le 6^d$.
We will define one virtual sequence $ (\UUtw)_{t} $
for each $\ww \in \epscover$.

Recall from equation \eqref{eq:def_projw} that for $\ww \in \epscover$ the orthogonal projection operators $\Projw$ and $\Projwperp$ were defined for $ \ZZ \in \mathcal{S}^d$ via
\begin{align*}
    \Projw (\ZZ) 
    = \innerproduct{\ww \ww^\top, \ZZ} \ww \ww^\top,
    \quad
    \Projwperp (\ZZ) 
    = \ZZ - \innerproduct{\ww \ww^\top, \ZZ} \ww \ww^\top.
\end{align*}
Next, for $\ww \in  \epscover$ we define the modified measurement matrices via
\begin{equation*}
    \AAiw := \Projwperp (\AAi)  = \AAi -\innerproduct{\ww \ww^\top,  \AAi} \ww \ww^\top.
\end{equation*}
Thus, the matrix $\AAiw$ is obtained from the matrix $\AAi$ by setting the generalized entry $\innerproduct{\AAi, \ww \ww^\top}$ equal to $0$. 
We observe that by definition the matrices $ \left( \AAiw \right)_{i=1}^m $
are stochastically independent of 
$ \left( \innerproduct{\AAi, \ww \ww^\top} \right)_{i=1}^m $.
We define the virtual measurement operator $\Aopw : \mathcal{S}^d \rightarrow \R^{m+1}  $
via
\begin{equation*}
    [\Aopw (\ZZ)]_i := \frac{1}{\sqrt{m}} \innerproduct{\AAiw, \ZZ}
\end{equation*}
for $i \in [m]$
and 
\begin{equation*}
    [\Aopw (\ZZ)]_{m+1} :=  \innerproduct{\ww \ww^\top, \ZZ}.
\end{equation*}
Again, we observe that by construction, the measurement operator $\Aopw$ is independent of 
$ \left( \innerproduct{\AAi, \ww \ww^\top} \right)_{i=1}^m $.
As a next step, analogously to the definition of the objective function $\mathcal{L}$,
we can define the modified objective function
$\mathcal{L}_{\ww}: \mathcal{S}^d \rightarrow \R $ via
\begin{equation*}
    \mathcal{L}_{\ww}
    \left( \UU \right)
    :=
    \frac{1}{4}
    \twonorm{
    \Aopw
    \left(
    \XXstar - \UU \UU^\top
    \right)
    }^2.
\end{equation*}
With these definitions in place,
the virtual sequence $ \left( \UUtw \right)_{t} $ can be defined analogously to the original sequence
$\left( \UUt \right)_t$.
Namely, to define the spectral initialization, we consider the eigendecomposition
\begin{align}\label{eq:defVw}
    \left( \Aopws \right) \left(\XXstar \right)&=:\widetilde\VV_{\ww}\widetilde\LLambda_{\ww} \widetilde\VV_{\ww}^\top.
\end{align}
Then, analogously as for the original spectral initialization $\UU_0$,
the matrix $\UU_{0,\ww}$ is defined as
\begin{align}
    \UU_{0,\ww}&=:\widetilde \VV_{r,\ww}{\widetilde\LLambda_{r,\ww}}^{1/2}. \label{eq:defU0w}
\end{align}
Then the virtual sequence $ \left\{ \UUtw \right\}_{t \in \N} $ via
\begin{equation*}
\UUtplusw
:= 
\UUtw 
-
\mu \nabla \mathcal{L}_{\ww} \left( \UUtw \right)
=
\UUtw + \mu \left[ \left( \Aopws \right) \left( \XXstar - \UUtw \UUtw^\top \right) \right] \UUtw.
\end{equation*}
It follows directly from the definition of $ \left( \UUtw \right)_{t}$
that this sequence is stochastically independent of
$ \left( \innerproduct{\AAi, \ww \ww^\top} \right)_{i=1}^m $.
At the end of this section, we state and prove the following lemma,
which is a direct consequence of the definition of $\Aopw$.
This lemma will be useful in the convergence analysis 
where we establish that 
$ \fronorm{ \UUt \UUtT -\UUtw \UUtw^\top } $
stays sufficiently small.
\begin{lemma}\label{lemma:AopIdentities}
For any symmetric matrix $\ZZ \in \R^{d \times d}$ it holds that
\begin{align*}
    \left(\Aopws \right) \left( \Projw (\ZZ) \right)
    &=
    \Projw (\ZZ), \\
    \left( \Aopws \right) \left( \Projwperp (\ZZ) \right)
    &=
    \left( \Aops \right) \left( \Projwperp (\ZZ) \right) 
    - \innerproduct{\Aop (\ww \ww^\top), \Aop \left( \Projwperp (\ZZ)\right) } \ww \ww^\top.
\end{align*}
\end{lemma}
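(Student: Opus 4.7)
The plan is to proceed by direct calculation, exploiting the fact that the extra $(m+1)$-st coordinate of $\Aopw$ was designed precisely to reinsert the component along $\ww\ww^\top$ that the truncated matrices $\AAiw$ discard.

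First I would compute the adjoint $\Aopw^\ast : \R^{m+1} \to \mathcal S^d$ from the definition of $\Aopw$. For any $\yy \in \R^{m+1}$ and any symmetric $\ZZ$, pairing $\innerproduct{\Aopw(\ZZ), \yy}$ with the trace inner product yields
\begin{equation*}
    \Aopw^\ast(\yy) = \sum_{i=1}^m \frac{y_i}{\sqrt m}\, \AAiw + y_{m+1}\, \ww\ww^\top,
\end{equation*}
so that for any $\ZZ \in \mathcal S^d$,
\begin{equation*}
    (\Aopws)(\ZZ) = \frac{1}{m}\sum_{i=1}^m \innerproduct{\AAiw, \ZZ}\, \AAiw + \innerproduct{\ww\ww^\top, \ZZ}\, \ww\ww^\top.
\end{equation*}

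For the first identity, I would note the orthogonality $\innerproduct{\AAiw, \ww\ww^\top} = 0$, which is immediate from the definition $\AAiw = \AAi - \innerproduct{\ww\ww^\top,\AAi}\,\ww\ww^\top$ together with $\fronorm{\ww\ww^\top}^2 = \twonorm{\ww}^4 = 1$. Applied to $\Projw(\ZZ) = \innerproduct{\ww\ww^\top,\ZZ}\,\ww\ww^\top$, every summand in the first term vanishes, while the second term reduces to $\innerproduct{\ww\ww^\top,\ZZ}\,\ww\ww^\top = \Projw(\ZZ)$.

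For the second identity, I would apply the formula with $\ZZ$ replaced by $\Projwperp(\ZZ)$. By construction $\innerproduct{\ww\ww^\top, \Projwperp(\ZZ)} = 0$, killing the second term entirely. For the first term, the identity $\innerproduct{\AAiw, \Projwperp(\ZZ)} = \innerproduct{\AAi, \Projwperp(\ZZ)}$ (again because $\AAiw$ and $\AAi$ differ only along $\ww\ww^\top$, to which $\Projwperp(\ZZ)$ is orthogonal) lets me replace $\AAiw$ by $\AAi$ in the coefficient. Then expanding the remaining $\AAiw$ on the outside as $\AAi - \innerproduct{\ww\ww^\top,\AAi}\,\ww\ww^\top$ gives
\begin{equation*}
    \frac{1}{m}\sum_{i=1}^m \innerproduct{\AAi,\Projwperp(\ZZ)}\AAi \;-\; \left(\frac{1}{m}\sum_{i=1}^m \innerproduct{\AAi,\Projwperp(\ZZ)}\innerproduct{\AAi,\ww\ww^\top}\right)\ww\ww^\top,
\end{equation*}
which is exactly $(\Aops)(\Projwperp(\ZZ)) - \innerproduct{\Aop(\ww\ww^\top),\Aop(\Projwperp(\ZZ))}\,\ww\ww^\top$.

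There is no real obstacle here: the lemma is a bookkeeping identity that follows once one keeps careful track of which inner products vanish thanks to the orthogonal decomposition $\ZZ = \Projw(\ZZ) + \Projwperp(\ZZ)$ and the fact that $\AAiw$ was defined to lie in the range of $\Projwperp$. The only minor point to get right is the normalization $\fronorm{\ww\ww^\top} = 1$, which makes $\Projw$ and $\Projwperp$ genuine orthogonal projections on $\mathcal S^d$.
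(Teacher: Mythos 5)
Your proposal is correct and follows essentially the same route as the paper: both expand $(\Aopws)(\ZZ)$ via the explicit form of the adjoint, use the orthogonality $\innerproduct{\AAiw,\ww\ww^\top}=0$ and $\innerproduct{\ww\ww^\top,\Projwperp(\ZZ)}=0$, swap $\AAiw$ for $\AAi$ in the coefficients, and expand the remaining outer $\AAiw$. The only cosmetic difference is that you write out the adjoint formula $\Aopw^{\ast}(\yy)=\frac{1}{\sqrt m}\sum_i y_i\AAiw + y_{m+1}\ww\ww^\top$ explicitly, which the paper uses implicitly.
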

\begin{proof}[Proof of Lemma \ref{lemma:AopIdentities}]
    To prove the first inequality we note first that it follows directly from the definition of $\AAiw$ that
    $\innerproduct{\AAiw, \Projw (\ZZ)}
    =0$.
    It follows that
    \begin{align*}
    \left( \Aopws \right) \left( \Projw (\ZZ) \right)
    &=
    \frac{1}{\sqrt{m}} \sum_{i=1}^m \left[\Aopw \left(\Projw (\ZZ) \right) \right]_i \AAiw
    +
    \left( \Aopw \left(\Projw (\ZZ) \right) \right)_{m+1} \ww \ww^\top \\
    &=
    \frac{1}{m} \sum_{i=1}^{m}  \innerproduct{\AAiw, \Projw (\ZZ)}\AAiw
    +\innerproduct{\ww \ww^\top, \ZZ}\ww \ww^\top\\
    &=
    \innerproduct{\ww \ww^\top, \ZZ}\ww \ww^\top.
    \end{align*}
This proves the first equation.
In order to prove the second equation, we note that
\begin{align*}
    \left( \Aopws \right) \left( \Projwperp (\ZZ) \right)
    &=
    \frac{1}{m} \sum_{i=1}^m  \innerproduct{\AAiw, \Projwperp (\ZZ)} \AAiw 
    +
    \innerproduct{\ww \ww^\top, \Projwperp (\ZZ) } \ww \ww^\top\\
    &=
    \frac{1}{m} \sum_{i=1}^m  \innerproduct{\AAiw, \Projwperp (\ZZ)}\AAiw\\
    &=
    \frac{1}{m} \sum_{i=1}^m  \innerproduct{\AAi, \Projwperp (\ZZ)}\AAiw\\
    &=
    \frac{1}{m} \sum_{i=1}^m  \innerproduct{\AAi, \Projwperp (\ZZ)}\AAi
    -
    \frac{1}{m} \sum_{i=1}^m  \innerproduct{\AAi, \Projwperp (\ZZ)} \innerproduct{\ww \ww^\top, \AAi} \ww \ww^\top\\
    &=
    (\Aops) \left( \Projwperp (\ZZ)\right) 
    - \innerproduct{ \Aop (\ww \ww^\top), \Aop ( \Projwperp (\XX) ) } \ww \ww^\top.
\end{align*}
This proves the second equation.
\end{proof}

\subsection{Upper bounds for the spectral norm of the deviation term}

Recall that by construction, it holds for any $\ww \in \epscover$
that the sequence $ \left( \UUtw \right)_{t=0,1,\ldots, T} $ 
is independent of $ \left( \langle \ww \ww^\top, \AAi \rangle  \right)_{i=1}^m $.
This property allows us to establish the following key lemma
which we will use several times throughout our proof.
\begin{lemma}\label{lemma:independencebound}
    Let $\epscover$ be the $\varepsilon$-net with $\varepsilon=1/2$
    introduced in Section \ref{subsec:virtualsequences} 
    which we used to construct the virtual sequences 
    $ \left( \UUtw \right)_{t} $.
    Assume that for the cardinality of $\epscover$, we have that
    $\vert \epscover \vert \le 6^d$.
    Moreover, let $T \in \mathbb{N}$ such that $2T \le 6^d$.
    Then, with probability at least $1-2\exp \left(-10d \right)$,
    it holds
    for all $\ww \in \epscover$ and all $1\le t \le T$ that
    \begin{equation*}
    \vert   \innerproduct{\ww \ww^\top, \left(\Aops \right) \left( \Projwperp \left( \XXstar - \UUtw \UUtw^\top  \right)  \right)  }\vert \\
    \le 
    4 \sqrt{\frac{d}{m}} \twonorm{ \Aop \left( \Projwperp \left( \XXstar - \UUtw \UUtw^\top \right) \right) }.
    \end{equation*}
\end{lemma}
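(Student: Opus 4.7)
The plan is to exploit the independence of the virtual sequence from the generalized entry $\langle\AAi,\ww\ww^\top\rangle$, reducing the quantity of interest to a conditionally Gaussian sum whose variance is exactly what appears on the right-hand side.

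First I would rewrite the inner product in the suggestive form
\begin{equation*}
\langle \ww\ww^\top, (\Aops)\left(\Projwperp(\XXstar-\UUtw\UUtw^\top)\right)\rangle
= \langle \Aop(\ww\ww^\top),\Aop(\Projwperp(\XXstar-\UUtw\UUtw^\top))\rangle.
\end{equation*}
Setting $\xi_i := \langle \AAi,\ww\ww^\top\rangle/\sqrt{m}$, the Gaussianity of $\AAi$ implies $\xi_i\sim\mathcal{N}(0,1/m)$ (the weighting on diagonal vs.\ off-diagonal entries is arranged precisely so that $\mathrm{Var}(\langle\AAi,\ww\ww^\top\rangle)=\|\ww\|_2^4=1$), and we have the orthogonal decomposition $\AAi=\AAiw+\sqrt{m}\,\xi_i\,\ww\ww^\top$, where $\AAiw$ is independent of $\xi_i$.

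The key algebraic step is that $\langle \ww\ww^\top,\Projwperp(\cdot)\rangle=0$, so
\begin{equation*}
\langle \AAi,\Projwperp(\XXstar-\UUtw\UUtw^\top)\rangle
= \langle \AAiw,\Projwperp(\XXstar-\UUtw\UUtw^\top)\rangle,
\end{equation*}
which shows in particular that $\twonorm{\Aop(\Projwperp(\XXstar-\UUtw\UUtw^\top))}$ depends only on $(\AAiw)_i$ and $\UUtw$. Writing $\eta_i := [\Aop(\Projwperp(\XXstar-\UUtw\UUtw^\top))]_i$, the quantity of interest becomes
\begin{equation*}
\sum_{i=1}^m \xi_i\,\eta_i.
\end{equation*}
By construction $\UUtw$ is a measurable function of $\Aopw$, and $\Aopw$ is independent of $(\xi_i)_{i=1}^m$. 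Therefore, conditionally on $\Aopw$, the $\eta_i$ are deterministic while the $\xi_i$ remain i.i.d.\ $\mathcal{N}(0,1/m)$, and $\sum_i\xi_i\eta_i$ is a centered Gaussian with variance
\begin{equation*}
\frac{1}{m}\sum_{i=1}^m\eta_i^2 \;=\; \frac{1}{m}\,\twonorm{\Aop(\Projwperp(\XXstar-\UUtw\UUtw^\top))}^2.
\end{equation*}

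It then remains to apply the standard Gaussian tail bound at level $4\sqrt{d}$ standard deviations, giving failure probability at most $2\exp(-8d)$ for a fixed pair $(\ww,t)$, and take a union bound over $\ww\in\epscover$ (with $|\epscover|\le 6^d$) and $1\le t\le T$ (with $T\le 6^d/2$). The potential obstacle I anticipate is a purely bookkeeping one: making the exponent in the union bound come out cleanly as the claimed $-10d$; this is controlled entirely by the chosen numerical constant in the Gaussian tail (tightening $4$ to a somewhat larger constant, or equivalently absorbing slack elsewhere). The substantive content of the lemma — and the essential payoff of the virtual sequence construction — is the conditional Gaussianity argument, which upgrades a worst-case bound of the form $\twonorm{\Aop(\cdot)}\cdot\twonorm{\Aop(\ww\ww^\top)}$ (which would be of order $\sqrt{m}/\sqrt{m}$ times the target) to the much sharper $\sqrt{d/m}$ factor.
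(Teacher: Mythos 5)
Your argument is essentially the paper's own proof: you use that $(\innerproduct{\AAi,\ww\ww^\top})_{i=1}^m$ is independent of the virtual sequence and hence of $(\innerproduct{\AAi,\Projwperp(\XXstar-\UUtw\UUtw^\top)})_{i=1}^m$, condition to get a centered Gaussian with variance $\frac{1}{m}\twonorm{\Aop(\Projwperp(\XXstar-\UUtw\UUtw^\top))}^2$, apply the Gaussian tail at level $\asymp\sqrt{d}$, and union bound over $\ww\in\epscover$ and $t\le T$ using $2T\vert\epscover\vert\le 6^{2d}$. The constant-versus-exponent bookkeeping you flag is real but immaterial: the paper's own accounting with the constant $4$ is similarly loose about the precise exponent (it drops the factor $1/2$ in $\exp(-x^2/2)$), and all downstream uses only require some absolute constant in place of $4$.
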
 
\begin{proof}
We introduce the shorthand
\begin{equation*}
\DDeltatw := \XXstar - \UUtw \UUtw^\top.
\end{equation*}
% to make the presentation of the proof more compact.
Due to the definition of $\AAiw $ and
due to the rotation invariance of the Gaussian distribution,
$\{\AAiw\}_{i=1}^m$ and  $\{\innerproduct{\ww \ww^\top, \AAi }\}_{i=1}^m$ are independent.
Moreover, note that by construction $\DDeltatw$ is independent of $ \left\{ \innerproduct{\ww \ww^\top, \AAi } \right\}_{i=1}^m$.
Thus, it follows that $ \left\{ \innerproduct{\ww \ww^\top, \AAi } \right\}_{i=1}^m$ 
is independent of $ \left\{ \innerproduct{\AAi, \Projwperp \left( \DDeltatw \right) } \right\}_{i=1}^m$. 
Moreover, the vector $ \left( \innerproduct{\ww \ww^\top, \AAi} \right)_{i=1}^m $ has i.i.d. entries with distribution $\mathcal{N}(0,1)$.
Thus, 
we have for all $x>0$ with probability at least $ 1 - 2\exp \left( -x^2/2 \right)$ (see \cite[Proposition 2.1.2]{vershynin2018high}) that 
\begin{align}
 \big\vert   \innerproduct{\ww \ww^\top, (\Aops ) \left(\Projwperp (\DDeltatw) \right) }  \big\vert 
&= \big\vert  \frac{1}{m} \sum_{i=1}^m \innerproduct{\ww \ww^\top, \AAi } \innerproduct{\AAi, \Projwperp (\DDeltatw) } \big\vert \\
&\le
\frac{x}{m} \sqrt{ \sum_{i=1}^m \innerproduct{ \AAi, \Projwperp \left( \DDeltatw \right) }^2 } \\
&=
\frac{x}{\sqrt{m}} \twonorm{\Aop \left(\Projwperp (\DDeltatw)\right)}.\label{ineq:intern1}
\end{align}

Then, by applying inequality \eqref{ineq:intern1} with $x=C \sqrt{d}$ and by taking a union bound, it follows that with probability at least $1-\xi$ (over the whole probability space), we have for all $\ww \in \epscover$ and all $t \in [T]$ that
\begin{equation*}
 \big\vert   \innerproduct{\ww \ww^\top, (\Aops) \left( \Projwperp (\DDeltatw) \right) } \big\vert \\
 \le 
\frac{C \sqrt{d}}{\sqrt{m}} 
\twonorm{ \Aop \left( \Projwperp (\DDeltatw) \right) },
\end{equation*}
where
\begin{align*}
    \xi 
    \le 2 T \vert \epscover \vert \exp \left( - C^2 d \right)
    \le  6^{2d} \exp \left( - C^2 d \right)
    =   \exp \left( 2d \log(6) - C^2 d \right).
\end{align*}
The claim follows from choosing $C=4$.
\end{proof}
Recall that our goal was to derive an upper bound for the expression
$\specnorm{\left(\Aops - \IdOp \right) \left( \XXstar - \UUt \UUtT \right)}$.
The following lemma provides such a bound for $1\le t \le T$.
Here, $T \in \mathbb{N}$ is some fixed number of iterations,
which will be specified later in the proof of our main result.

\begin{proposition}\label{proposition:key}
    Let $\epscover$ be the $\varepsilon$-net from above with $\varepsilon = 1/2$
    which we used to construct the virtual sequences $ \left( \UUtw \right)_{t=0,1,\ldots,T} $.
    Assume that the conclusion of Lemma \ref{lemma:independencebound} holds.
    Moreover, assume that the linear measurement operator $\mathcal{A}$
    has the Restricted Isometry Property of order $2r+2$ with constant $\delta =\delta_{2r+2} \le 1$.
    Then it holds that for all $0\le t\le T$,
    \begin{equation*}
        \begin{split}
        \specnorm{ \left( \Aops -\IdOp \right) \left( \XXstar - \UUt \UUtT \right)  }
        \le 
        &\left( 16 \sqrt{\frac{2rd}{m}} + 2\delta  \right)  \specnorm{ \XXstar - \UUt \UUtT }\\
        &+
        4\left( \delta + 4 \sqrt{\frac{d}{m}} \right)  \supw  \fronorm{\UUt \UUtT - \UUtw \UUtw^\top}.
        \end{split}
    \end{equation*}
\end{proposition}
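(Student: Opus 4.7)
\medskip
\noindent\textbf{Proof plan.}
The strategy is to reduce the spectral norm to a supremum over the $\varepsilon$-net $\epscover$, then for each fixed $\ww \in \epscover$ decompose $\XXstar - \UUt \UUtT$ using both the projection $\Projw$ and the virtual iterate $\UUtw$, so that the sharp concentration from Lemma~\ref{lemma:independencebound} can be applied on the piece that is independent of $\innerproduct{\ww\ww^\top,\AAi}$.

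First, since $(\Aops-\IdOp)(\XXstar-\UUt\UUtT)$ is symmetric, standard $\varepsilon$-net arguments give a bound of the form $\specnorm{(\Aops-\IdOp)(\DDeltat)}\le 2\sup_{\ww\in\epscover}\bigl|\innerproduct{\ww\ww^\top,(\Aops-\IdOp)(\DDeltat)}\bigr|$, where I write $\DDeltat:=\XXstar-\UUt\UUtT$. Fix $\ww\in\epscover$ and split $\DDeltat=\Projw(\DDeltat)+\Projwperp(\DDeltat)$. For the $\Projw$ component, note that $\Projw(\DDeltat)=c\,\ww\ww^\top$ with $|c|\le\specnorm{\DDeltat}$, so that $\innerproduct{\ww\ww^\top,(\Aops-\IdOp)(c\,\ww\ww^\top)}=c\bigl(\twonorm{\Aop(\ww\ww^\top)}^2-1\bigr)$, which is bounded by $\delta\,\specnorm{\DDeltat}$ by RIP. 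For the $\Projwperp$ component, since $\innerproduct{\ww\ww^\top,\Projwperp(\DDeltat)}=0$, only the term $\innerproduct{\ww\ww^\top,\Aops(\Projwperp(\DDeltat))}$ remains.

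Here I bring in the virtual sequence and write
\[
\Projwperp(\DDeltat)=\Projwperp(\DDeltatw)+\Projwperp(\DDeltat-\DDeltatw),
\]
with $\DDeltatw:=\XXstar-\UUtw\UUtw^\top$. The difference $\DDeltat-\DDeltatw=\UUtw\UUtw^\top-\UUt\UUtT$ has rank at most $2r$, so Lemma~\ref{lemma: RIP}, inequality~\eqref{ineq:RIPlemma3}, immediately gives
\[
\bigl|\innerproduct{\ww\ww^\top,\Aops(\Projwperp(\DDeltat-\DDeltatw))}\bigr|\le \delta\,\fronorm{\DDeltat-\DDeltatw}.
\]
For the piece involving $\DDeltatw$, the virtual sequence was constructed precisely so that $\DDeltatw$ is independent of $\{\innerproduct{\ww\ww^\top,\AAi}\}_{i=1}^m$, so Lemma~\ref{lemma:independencebound} applies and yields
\[
\bigl|\innerproduct{\ww\ww^\top,\Aops(\Projwperp(\DDeltatw))}\bigr|\le 4\sqrt{d/m}\,\twonorm{\Aop(\Projwperp(\DDeltatw))}.
\]
Now I use RIP on the rank-$(2r{+}1)$ matrix $\Projwperp(\DDeltatw)$ to replace $\twonorm{\Aop(\cdot)}$ by $\sqrt{1+\delta}\,\fronorm{\cdot}$, then estimate $\fronorm{\Projwperp(\DDeltatw)}\le\fronorm{\DDeltatw}\le\fronorm{\DDeltat}+\fronorm{\DDeltat-\DDeltatw}\le\sqrt{2r}\,\specnorm{\DDeltat}+\fronorm{\DDeltat-\DDeltatw}$, where the last step uses that $\DDeltat$ has rank at most $2r$.

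Combining the three bounds for a fixed $\ww$, taking the supremum over $\ww\in\epscover$ (so that every occurrence of $\fronorm{\DDeltat-\DDeltatw}$ gets absorbed into $\supw\fronorm{\UUt\UUtT-\UUtw\UUtw^\top}$), and finally applying the factor-$2$ $\varepsilon$-net bound gives the claimed inequality after collecting constants (using $\delta\le 1$ to replace $\sqrt{1+\delta}$ by $\sqrt{2}$). The only conceptually delicate point is the bookkeeping in step~(b): one has to ensure that after inserting the virtual sequence the ranks of the matrices on which RIP is invoked stay bounded by a constant multiple of $r$, so that the RIP constant $\delta_{2r+2}$ suffices; all other steps are routine applications of results already recorded in Sections~\ref{sec:preliminaries} and~\ref{subsec:virtualsequences}.
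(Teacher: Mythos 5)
Your proposal is correct and takes essentially the same route as the paper's proof: reduce the spectral norm to the $\varepsilon$-net, peel off the rank-one $\Projw$-component using the RIP, insert the virtual iterate so that Lemma~\ref{lemma:independencebound} applies to the term involving $\Projwperp(\DDeltatw)$, control the discrepancy $\DDeltat-\DDeltatw$ via Lemma~\ref{lemma: RIP}, and finish with the RIP and the rank bound $\fronorm{\DDeltat}\le\sqrt{2r}\,\specnorm{\DDeltat}$. The only (harmless) difference is the order of the two splittings — you decompose $\DDeltat$ into $\Projw/\Projwperp$ parts before bringing in $\UUtw$ and bound the difference term with \eqref{ineq:RIPlemma3} rather than \eqref{ineq:RIPlemma2} as in the paper, which in fact yields slightly smaller constants than the stated bound, so the claimed inequality follows.
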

As already mentioned, in previous literature, the quantity $ \specnorm{ \left( \Aops -\IdOp \right) \left( \XXstar - \UUt \UUtT \right)  }$ was controlled via an upper bound of $\sup_{\ZZ\in \mathcal T_{2r}}\specnorm{ \left( \Aops -\IdOp \right) \left( \ZZ \right)  }$, where $\mathcal{T}_{2r}$ is a set of all rank-$2r$ matrices with bounded operator norm. 
This requires a uniform concentration bound for all matrices of rank at most $2r$ with bounded spectral norm.
As we have seen in Theorem~\ref{thm:lowerbound}, this argument necessarily leads to a multiplicative factor of $ \sqrt{r^2d/m} $.

In contrast, Proposition~\ref{proposition:key} bounds $ \specnorm{ \left( \Aops -\IdOp \right) \left( \XXstar - \UUt \UUtT \right)  }$ by a sum of two terms.
The first term can be controlled with sample complexity $m \gtrsim rd \kappa^2$ 
since we also have $\delta \lesssim \sqrt{rd/ m }$, see Lemma \ref{lem:rank_RIP}.
The second term is a uniform bound on the deviation of the ``true'' sequence from the ``virtual'' sequences.
This term can be interpreted as a measure of how stable the sequence 
$(\UUt)_t$ are under perturbation of the generalized entries $(\innerproduct{\AAi, \ww \ww^\top})_{i=1}^m$ of the symmetric measurement matrices.

\begin{proof}[Proof of Proposition~\ref{proposition:key}]
We use the shorthand notation
\begin{align*}
    \DDeltat &:= \XXstar - \UUt \UUtT,\\
    \DDeltatw &:= \XXstar - \UUtw \UUtw^\top.
\end{align*}
Since $\epscover$ is an $\varepsilon$-net of $S^{d-1}$ with $\varepsilon = 1/2$
we obtain that 
\begin{equation}\label{ineq:intern124}
    \specnorm{ (\Aops - \IdOp ) (\DDeltat)}
    \le 
    2 \supw \vert \innerproduct{\ww \ww^\top, (\Aops - \IdOp) (\DDeltat)} \vert,
\end{equation}
(see, e.g. \cite[Lemma 4.4.1]{vershynin2018high}).
Then, for every $\ww \in \epscover$ using the triangle inequality we obtain that
\begin{align}
   \vert \innerproduct{\ww \ww^\top, (\Aops - \IdOp) (\DDeltat) } \vert
   \le
   &\vert \innerproduct{\ww \ww^\top, (\Aops - \IdOp) (\DDeltatw) }\vert 
   + \vert \innerproduct{\ww \ww^\top, (\Aops - \IdOp) (\DDeltatw -\DDeltat) }\vert \\
   \le
   &\vert \innerproduct{\ww \ww^\top, (\Aops - \IdOp) (\DDeltatw)}\vert 
   + \specnorm{ (\Aops - \IdOp ) (\DDeltatw - \DDeltat)  } \\
   \le
   & \vert \innerproduct{\ww \ww^\top, (\Aops - \IdOp) (\DDeltatw) }\vert
   +
   \delta  \fronorm{\DDeltat-\DDeltatw}. \label{ineq:intern125}
\end{align}
The last line is a consequence of the Restricted Isometry Property
and Lemma \ref{lemma: RIP}, see inequality \eqref{ineq:RIPlemma2}.
To estimate the first summand further, we use the triangle inequality again, and we obtain that
\begin{align*}
    &\vert \innerproduct{\ww \ww^\top, (\Aops - \IdOp) (\DDeltatw) }\vert\\
    \le
    &\vert \innerproduct{\ww \ww^\top, (\Aops - \IdOp) \left( \Projwperp (\DDeltatw) \right) } \vert 
    +
    \vert \innerproduct{\ww \ww^\top, (\Aops - \IdOp) \left( \Projw (\DDeltatw) \right) } \vert \\
    \overeq{(a)}
    &\vert \innerproduct{\ww \ww^\top, (\Aops ) \left( \Projwperp (\DDeltatw) \right) } \vert 
    +
    \Big\vert \left( \twonorm{\Aop \left(\ww \ww^\top \right)}^2 -1 \right) \innerproduct{\ww \ww^\top, \DDeltatw} \Big\vert\\
    \overleq{(b)} 
    &\vert \innerproduct{\ww \ww^\top, (\Aops ) \left( \Projwperp (\DDeltatw) \right) } \vert 
    +
    \delta  \vert  \innerproduct{\ww \ww^\top, \DDeltatw} \vert\\
    \le 
    &\vert \innerproduct{\ww \ww^\top, (\Aops ) \left( \Projwperp (\DDeltatw) \right) } \vert 
    +
    \delta  \specnorm{\DDeltatw}.
\end{align*}
Equation $(a)$ follows from the definition of $ \Projw$ and $\Projwperp$
and in inequality $(b)$ 
we used the Restricted Isometry Property; see Definition \ref{definition:RIP}.
Thus, 
by combining the last estimate with inequalities \eqref{ineq:intern124} and \eqref{ineq:intern125}
and
taking the supremum over all $\ww \in \epscover$ 
we obtain that 
\begin{align}
    &\specnorm{ (\Aops-\IdOp) ( \DDeltat )  } \nonumber \\
    \le 
    &2 \supw \vert \innerproduct{ \ww \ww^\top, (\Aops ) \left( \Projwperp ( \DDeltatw) \right) }  \vert 
    +
    2 \delta \supw  \fronorm{\DDeltat-\DDeltatw}
    +
    2 \delta \supw  \specnorm{\DDeltatw} \nonumber \\
    \le
    &2 \supw \vert \innerproduct{ \ww \ww^\top, (\Aops ) \left( \Projwperp ( \DDeltatw) \right) }  \vert 
    +
    4 \delta  \supw  \fronorm{\DDeltat-\DDeltatw}
    +
    2 \delta  \specnorm{\DDeltat}.
    \label{ineq:ref1}
\end{align}
Since we assumed that the conclusion of Lemma \ref{lemma:independencebound} holds 
we obtain for the first summand that
\begin{align*}
    \supw \vert \innerproduct{\ww \ww^\top, (\Aops) \left( \Projwperp \left(\DDeltatw \right) \right)}\vert
    &\le 
    4 \sqrt{\frac{d}{m}} \ \supw \twonorm{ \Aop \left( \Projwperp (\DDeltatw) \right)}\\
    &\overleq{(a)}
    8 \sqrt{\frac{d}{m}} \ \supw \fronorm{\Projwperp (\DDeltatw)}\\
    &\le 
    8 \sqrt{\frac{d}{m}} \ \supw \ \fronorm{\DDeltatw}\\
    &\le 
    8 \sqrt{\frac{d}{m}}  \fronorm{\DDeltat}
    +
    8 \sqrt{\frac{d}{m}} \ \supw \fronorm{\DDeltat - \DDeltatw}
    \\
    &\overleq{(b)}
    8 \sqrt{\frac{2rd}{m}}  \specnorm{\DDeltat}
    +
    8 \sqrt{\frac{d}{m}} \ \supw \fronorm{\DDeltat - \DDeltatw}.
\end{align*}
Inequality $(a)$ follows from the assumption that the operator $\mathcal{A}$ has 
the Restricted Isometry Property of order $2r+2$ 
with an RIP-constant $\delta \le 1$.
To obtain inequality $(b)$, we have used that the rank of $\DDeltat$ is at most $2r$.
Inserting the last estimate into \eqref{ineq:ref1}, we obtain 
\begin{align*}
\specnorm{ \left( \Aops -\IdOp \right) (\DDeltat)  }
    \le 
    &\left( 16 \sqrt{\frac{2rd}{m}} + 2\delta  \right)  \specnorm{\DDeltat} 
    +
    4\left( \delta + 4 \sqrt{\frac{d}{m}} \right) \supw  \fronorm{\DDeltat-\DDeltatw}.
\end{align*}
Inserting the definition of $\DDeltat$ and $\DDeltatw$ yields the claim.
\end{proof}

\section{Proof of the main result}\label{sec:proof}

\subsection{Spectral Initialization}

We provide the following lemma to show that both the original sequence and the virtual sequences are close to the ground truth $\XXstar$ at the spectral initialization. 
Moreover, this lemma guarantees that
$ \fronorm{ \UU_0 \UU_0^\top - \UU_{0,\ww} \UU_{0,\ww}^\top } $
is sufficiently small.
The proof of Lemma~\ref{lemma:spectralinitialization} is deferred to Appendix~\ref{appendix:spectral}.

\begin{lemma}\label{lemma:spectralinitialization}
  There exists an absolute constant $C>0$  such that the following holds:
\begin{enumerate}
    \item  With probability at least $1-\exp(-4d)$, 
 if $m > C^2\kappa^2rd$ is satisfied, it holds that
 \begin{equation} \label{eq:XstartU0U0}
        \specnorm{
        \XXstar - \UU_0 \UU_0^\top
        }
    \le C\kappa\sigma_{\min}(\XXstar) \sqrt{\frac{rd}{m}}  .
    \end{equation}
\item  With probability at least $1-\exp(-2d)$, if $m > 4C^2\kappa^2rd$ is satisfied, it holds for every $\ww\in \mathcal N_{\varepsilon}$ that
 \begin{equation}\label{eq:XstarUwUw}
        \specnorm{
        \XXstar - \UU_{0,\ww} \UU_{0,\ww}^\top
        }
    \le 2C\kappa\sigma_{\min}(\XXstar)\sqrt{\frac{rd}{m}}.
    \end{equation}
    Consequently, if $m > 4C^2\kappa^2rd$, with probability at least $1-2\exp(-2d)$,
    it holds for every $\ww\in \mathcal N_{\varepsilon}$ that
    \begin{align}\label{eq:diff_U0_U0w}
        \specnorm{\UU_0 \UU_0^\top-\UU_{0,\ww} \UU_{0,\ww}^\top}\leq 3C\kappa\sigma_{\min}(\XXstar) \sqrt{\frac{rd}{m}}.
    \end{align}    
    \item   For any $\alpha\in (0,1)$, assume $m\geq \left(51C^2+\frac{C_1}{\alpha^2}\right)\kappa^2 rd$ for an absolute constant $C_1>0$. With probability at least $1-4\exp(-d)$, for every $\ww\in \mathcal N_{\varepsilon}$,  
    \begin{align}\label{eq:diff_r_Awops_Aops}
        \fronorm{
        \UU_{0} \UU_{0}^\top
        -
        \UU_{0,\ww} \UU_{0,\ww}^\top
        }
        \le  \left(  2\alpha +C\kappa\sqrt{\frac{rd}{m}}\right)\left(2\sigma_{\min}(\XXstar)+3\sqrt 2 C\kappa \sqrt{\frac{rd}{m}}\sigma_{\min}(\XXstar)\right).
    \end{align}
\end{enumerate}
 \end{lemma}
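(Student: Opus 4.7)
The plan is to reduce each of the three parts to a controllable spectral or Frobenius perturbation of the data matrices underlying the spectral initializations.

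For Part~1, I would first bound $\specnorm{\Aop^{*}(\yy) - \XXstar} \le C'\kappa\sigma_{\min}(\XXstar)\sqrt{rd/m}$ with the required probability. A direct application of \eqref{ineq:RIPlemma2} to the rank-$r$ matrix $\XXstar$ only gives $\delta_{r+2}\fronorm{\XXstar} \le \delta_{r+2}\sqrt{r}\specnorm{\XXstar}$, which costs an unwanted factor of $\sqrt{r}$; I would remove it by combining a $1/2$-net of $S^{d-1}$ with \eqref{ineq:RIPlemma1} used with $r' = 1$ on each element of the net (so that $\fronorm{\XXstar}$ is never further enlarged by a $\sqrt{r}$). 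Once this spectral bound is in place, Weyl's inequality shows that the top-$r$ eigenvalues of $\Aop^{*}(\yy)$ are positive with high probability, so $\UU_0\UU_0^\top$ is the best rank-$r$ positive semidefinite approximation of $\Aop^{*}(\yy)$. A triangle inequality together with the optimality $\specnorm{\UU_0\UU_0^\top - \Aop^{*}(\yy)} \le \specnorm{\XXstar - \Aop^{*}(\yy)}$ then yields \eqref{eq:XstartU0U0}.

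For Part~2, I would invoke Lemma~\ref{lemma:AopIdentities} to write
\begin{align*}
\Aopws(\XXstar) - \XXstar = (\Aops - \IdOp)(\Projwperp\XXstar) - \langle \Aop(\ww\ww^\top), \Aop(\Projwperp\XXstar)\rangle \ww\ww^\top.
\end{align*}
The first summand is controlled exactly as in Part~1 (with $\Projwperp\XXstar$ in place of $\XXstar$), and the scalar coefficient of the second, rank-one summand is bounded by \eqref{ineq:RIPlemma3}. A union bound over the $\le 6^d$ elements of $\epscover$ is absorbed by choosing the RIP failure probability in Lemma~\ref{lem:rank_RIP} to be exponentially small in $d$ with a sufficiently large constant. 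The Weyl argument from Part~1 then gives \eqref{eq:XstarUwUw}, and \eqref{eq:diff_U0_U0w} follows immediately by the triangle inequality.

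Part~3 is the main obstacle: the trivial estimate $\fronorm{\UU_0\UU_0^\top - \UU_{0,\ww}\UU_{0,\ww}^\top} \le \sqrt{2r}\,\specnorm{\UU_0\UU_0^\top - \UU_{0,\ww}\UU_{0,\ww}^\top}$ loses the forbidden factor $\sqrt{r}$ and would force a suboptimal sample complexity. To avoid it, I would apply Davis-Kahan in Frobenius norm (Lemma~\ref{lem:DavisKahan}) to the pair $(\Aop^{*}(\yy), \Aopws(\XXstar))$, using the eigenvalue gap $|\lambda_r(\Aop^{*}(\yy))| - |\lambda_{r+1}(\Aop^{*}(\yy))| \gtrsim \sigma_{\min}(\XXstar)$ supplied by Part~1, together with the identity from Lemma~\ref{lemma:AopIdentities},
\begin{align*}
\Aop^{*}(\yy) - \Aopws(\XXstar) = (\Aops - \IdOp)(\Projw\XXstar) + \langle \Aop(\ww\ww^\top), \Aop(\Projwperp\XXstar)\rangle \ww\ww^\top.
\end{align*}
Since $\Projw\XXstar$ is rank one, \eqref{ineq:RIPlemma1} bounds $\fronorm{(\Aops - \IdOp)(\Projw\XXstar)\widetilde\VV_r}$ by $\delta_{2r+1}\specnorm{\XXstar}$ without any $\sqrt{r}$ penalty, while the rank-one second summand is handled by \eqref{ineq:RIPlemma3}. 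Writing $\Pi = \widetilde\VV_r\widetilde\VV_r^\top$ and $\Pi_\ww = \widetilde\VV_{r,\ww}\widetilde\VV_{r,\ww}^\top$ and decomposing
\begin{align*}
\UU_0\UU_0^\top - \UU_{0,\ww}\UU_{0,\ww}^\top = (\Pi - \Pi_\ww)\Aop^{*}(\yy)\Pi + \Pi_\ww(\Aop^{*}(\yy) - \Aopws(\XXstar))\Pi + \Pi_\ww\Aopws(\XXstar)(\Pi - \Pi_\ww)
\end{align*}
separates the Frobenius norm into an eigenspace-perturbation contribution (bounded by the Davis-Kahan estimate above) and an eigenvalue-magnitude contribution (bounded by $\specnorm{\Aopws(\XXstar)} \lesssim \sigma_{\min}(\XXstar)(1 + \kappa\sqrt{rd/m})$), which reproduces the product structure of the claimed estimate. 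The parameter $\alpha$ then enters through a further $\alpha$-net argument used to dualise the Frobenius norm of the rank-$\le 2r$ difference against a covering of the rank-$r$ directions; this accounts both for the extra $C_1/\alpha^2$ term in the sample-complexity condition and for the multiplicative factor $(2\alpha + C\kappa\sqrt{rd/m})$ in \eqref{eq:diff_r_Awops_Aops}.
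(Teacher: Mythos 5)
There is a genuine gap, and it sits exactly at the point this lemma is designed to overcome. The bounds \eqref{eq:XstartU0U0}, \eqref{eq:XstarUwUw}, \eqref{eq:diff_r_Awops_Aops} are at the scale $\fronorm{\XXstar}\sqrt{d/m}\le \kappa\sigma_{\min}(\XXstar)\sqrt{rd/m}$, and this scale cannot be reached by the deterministic RIP consequences of Lemma~\ref{lemma: RIP}: inequalities \eqref{ineq:RIPlemma1}--\eqref{ineq:RIPlemma3} all give $\delta\,\fronorm{\XXstar}$, and by Lemma~\ref{lem:rank_RIP} the best available RIP constant is $\delta\asymp\sqrt{rd/m}$, so $\delta\fronorm{\XXstar}\asymp\kappa\sigma_{\min}(\XXstar)\,r\sqrt{d/m}$ -- off by $\sqrt{r}$ from the target. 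Your proposed fix for Part~1 (a $1/2$-net combined with \eqref{ineq:RIPlemma1} for $r'=1$ at each net point) does not remove this factor: \eqref{ineq:RIPlemma1} is a uniform, deterministic inequality, so applying it per net point still yields $\delta_{r+2}\fronorm{\XXstar}$, the same bound as \eqref{ineq:RIPlemma2}. The $\sqrt{r}$ saving in the paper comes from a different mechanism: for each \emph{fixed} net vector $\xx$ the quantity $\frac1m\sum_i(\langle\AAi,\XXstar\rangle\,\xx^\top\AAi\xx-\xx^\top\XXstar\xx)$ is a sum of independent sub-exponential variables with norm $\lesssim\fronorm{\XXstar}$, so Bernstein's inequality plus a union bound over the $6^d$ net gives deviation $\fronorm{\XXstar}\sqrt{d/m}$ (note $\sqrt{d}$, not $\sqrt{rd}$), i.e.\ fresh probabilistic concentration for the fixed matrix $\XXstar$ rather than a uniform RIP statement. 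The same issue recurs in Parts~2 and~3: you bound the cross term $\langle\Aop(\ww\ww^\top),\Aop(\Projwperp(\XXstar))\rangle$ by \eqref{ineq:RIPlemma3}, which gives $\delta\fronorm{\XXstar}\le\alpha\sqrt{r}\,\sigma_{\min}(\XXstar)$ under $\delta\le\alpha/\kappa$; to make this small you would need $\alpha\lesssim 1/\sqrt r$, i.e.\ $m\gtrsim\kappa^2r^2d$, which reinstates the quadratic rank dependence the paper is eliminating. The paper instead bounds this term by Bernstein (using rotation invariance to get independence of $\langle\AAi,\ww\ww^\top\rangle$ and $\langle\AAi,\Projwperp(\XXstar)\rangle$) as in \eqref{eq:AwwB}, again at scale $\fronorm{\XXstar}\sqrt{d/m}$.

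Part~3 has an additional quantitative flaw. In your three-term decomposition the blocks containing $\Pi-\Pi_\ww$ are multiplied by $\specnorm{\Aop^*(\yy)}$ or $\specnorm{\Aopws(\XXstar)}$, and you assert $\specnorm{\Aopws(\XXstar)}\lesssim\sigma_{\min}(\XXstar)(1+\kappa\sqrt{rd/m})$; this is false, since $\Aopws(\XXstar)\approx\XXstar$ and hence its spectral norm is $\approx\kappa\sigma_{\min}(\XXstar)$. Pairing the Davis--Kahan factor $\fronorm{\widetilde\VV_{r,\ww,\perp}^\top\widetilde\VV_r}\lesssim\fronorm{(\ZZ_1-\ZZ_2)\widetilde\VV_r}/\sigma_{\min}(\XXstar)$ with the full spectral norm therefore produces an extra factor $\kappa$, which is not present in \eqref{eq:diff_r_Awops_Aops} and would degrade the sample complexity in the main argument. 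The paper avoids this by arranging the decomposition so that the Davis--Kahan factor is multiplied only by small quantities: the residual $\sigma_{r+1}(\ZZ_2)\lesssim\kappa\sigma_{\min}(\XXstar)\sqrt{rd/m}$ (via $(\ZZ_2-\ZZ_{2,r})\widetilde\VV_r$) on the $\widetilde\VV_r$ block, and $\specnorm{\UU_0\UU_0^\top-\UU_{0,\ww}\UU_{0,\ww}^\top}$ (using $\widetilde\VV_{r,\perp}^\top\UU_0=0$) on the $\widetilde\VV_{r,\perp}$ block. Finally, the closing ``further $\alpha$-net argument used to dualise the Frobenius norm'' is not a concrete step; in the paper $\alpha$ enters only through choosing the RIP constant $\delta\le\alpha/\kappa$, which is what costs the $C_1\kappa^2rd/\alpha^2$ samples.
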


\subsection{Convergence Analysis}

\subsubsection{Outline of proof strategy}

Before we explain our proof strategy,
we want to recall the following convergence lemma
which was proven in \cite[Theorem 3.2]{tu2016low} and \cite{zheng2015convergent}.
It states that as soon as $\dist (\UUt, \UUstar)$ is small enough
then $\dist(\UUt, \UUstar)$ converges to zero with linear rate.
We state it in the version of the overview article \cite[Theorem 4]{chi2019nonconvex}.
\begin{lemma}\label{lemma:Phase2}
   Assume that the measurement operator $\Aop$ satisfies the Restricted 
   Isometry Property for all matrices of rank at most $6r$ 
   with constant $\delta_{6r}<1/10$.
   Let $\UU_0, \UU_1, \UU_2, \ldots$ 
   be a sequence of gradient descent iterates
   defined via equation \eqref{equ:gradientdescentdefinition}.
   Assume that the step size satisfies $\mu \le \frac{c_1}{\specnorm{\XXstar}}$
   and 
   \begin{equation}\label{ineq:closenesscondition}
    \dist^2 \left( \UU_T, \UUstar \right)
    \le 
    \frac{1}{16}  \sigma_{\min} (\XXstar)
   \end{equation} 
   for some iteration number $T$.
   Then it holds for all $t \ge T$ that 
   \begin{equation*}
    \dist^2 \left( \UUt, \UUstar \right)
    \le 
    \left(1- c_2 \mu \sigma_{\min} (\XXstar) \right)^{t-T}
    \dist^2 (\UU_T, \UUstar).
   \end{equation*}
   Here, $c_1, c_2 >0$  are absolute numerical constants chosen small enough.
\end{lemma}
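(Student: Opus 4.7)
The plan is to induct on $t$, establishing a one-step linear contraction
\begin{equation*}
\dist^2(\UU_{t+1}, \UUstar) \;\le\; \bigl(1-c_2\mu\sigma_{\min}(\XXstar)\bigr)\,\dist^2(\UUt, \UUstar)
\end{equation*}
for every $t \ge T$. Since the contraction factor lies strictly in $(0,1)$, the closeness hypothesis \eqref{ineq:closenesscondition} is automatically preserved along the trajectory, so the induction closes and iterating yields the geometric rate. To set up the step, let $\RR_t$ be an orthogonal matrix attaining the minimum in the definition \eqref{eq:def_dist} of $\dist(\UUt, \UUstar)$, and set $\tilde{\UU}_t := \UUt\RR_t$; a first-order optimality argument shows that $\tilde{\UU}_t^\top\UUstar$ is symmetric and positive semidefinite. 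Because $\nabla\mathcal{L}(\UUt\RR_t) = \nabla\mathcal{L}(\UUt)\RR_t$ and $\dist$ is defined as a minimum over orthogonal transformations, one has
\begin{equation*}
\dist^2(\UU_{t+1},\UUstar)
\;\le\;
\fronorm{\tilde{\UU}_t - \mu\nabla\mathcal{L}(\tilde{\UU}_t) - \UUstar}^2
\;=\;
\dist^2(\UUt,\UUstar) - 2\mu A_t + \mu^2 B_t,
\end{equation*}
where $A_t := \innerproduct{\nabla\mathcal{L}(\tilde{\UU}_t),\,\tilde{\UU}_t - \UUstar}$ and $B_t := \fronorm{\nabla\mathcal{L}(\tilde{\UU}_t)}^2$.

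The central work is to prove a restricted strong convexity bound $A_t \ge c\sigma_{\min}(\XXstar)\,\dist^2(\UUt,\UUstar)$ together with a restricted smoothness bound $B_t \le C\specnorm{\XXstar}\,A_t$. Writing $\DDeltat := \tilde{\UU}_t\tilde{\UU}_t^\top - \XXstar$, the gradient reads $\nabla\mathcal{L}(\tilde{\UU}_t) = (\Aops)(\DDeltat)\,\tilde{\UU}_t$. Since $\DDeltat$ and $(\tilde{\UU}_t - \UUstar)\tilde{\UU}_t^\top$ both have rank at most $2r$, Lemma~\ref{lemma: RIP} with $\delta_{6r} < 1/10$ allows one to replace $\Aops$ by $\IdOp$ in both $A_t$ and $B_t$ up to a small multiplicative slack, reducing the problem to the population quantities $\innerproduct{\DDeltat\tilde{\UU}_t,\tilde{\UU}_t-\UUstar}$ and $\fronorm{\DDeltat\tilde{\UU}_t}^2$. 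Using the algebraic decomposition
\begin{equation*}
\DDeltat \;=\; (\tilde{\UU}_t-\UUstar)\UUstar^\top + \UUstar(\tilde{\UU}_t-\UUstar)^\top + (\tilde{\UU}_t-\UUstar)(\tilde{\UU}_t-\UUstar)^\top
\end{equation*}
and exploiting $\tilde{\UU}_t^\top\UUstar \succeq 0$, the inner product with $(\tilde{\UU}_t-\UUstar)\tilde{\UU}_t^\top$ expands into a leading term bounded below by $c\sigma_{\min}(\XXstar)\fronorm{\tilde{\UU}_t-\UUstar}^2$, together with cubic and quartic remainders whose Frobenius sizes scale like $\fronorm{\tilde{\UU}_t-\UUstar}^3$ and $\fronorm{\tilde{\UU}_t-\UUstar}^4$; the closeness hypothesis \eqref{ineq:closenesscondition} is precisely what allows these remainders to be absorbed into the leading term. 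The smoothness bound $B_t \lesssim \specnorm{\XXstar}\,A_t$ follows along similar lines from $\specnorm{\tilde{\UU}_t}^2 \le 2\specnorm{\XXstar}$, itself a consequence of the closeness assumption.

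Combining the two bounds in the expansion and using $\mu \le c_1/\specnorm{\XXstar}$ to ensure the $\mu^2 B_t$ term absorbs at most half of the $2\mu A_t$ term produces the one-step contraction with an explicit constant $c_2$, closing the induction. The principal obstacle is the careful bookkeeping required in the second step: the cubic and quartic contributions arising from the $(\tilde{\UU}_t-\UUstar)(\tilde{\UU}_t-\UUstar)^\top$ piece of $\DDeltat$ would otherwise dominate and ruin the linear rate, so the explicit numerical threshold $1/16$ in \eqref{ineq:closenesscondition} and the RIP bound $\delta_{6r}<1/10$ must be tuned in tandem with the step size to ensure these higher-order terms remain strictly smaller than the linear contraction contribution proportional to $\sigma_{\min}(\XXstar)\fronorm{\tilde{\UU}_t-\UUstar}^2$.
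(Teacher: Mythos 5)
Two contextual remarks first: the paper does not prove this lemma at all — it imports it from \cite{tu2016low} (see also \cite{zheng2015convergent}), in the form given in \cite{chi2019nonconvex} — so the benchmark is that standard local-convergence argument. Your outline (Procrustes-optimal rotation $\RR_t$, gradient equivariance, the expansion $\dist^2(\UU_{t+1},\UUstar)\le\dist^2(\UUt,\UUstar)-2\mu A_t+\mu^2 B_t$, then restricted strong convexity plus smoothness) is exactly that route in spirit. However, as written your quantitative bookkeeping has a genuine gap, and it is precisely a condition-number gap. You measure both the RIP slack and the higher-order remainders against the final lower bound $c\,\sigma_{\min}(\XXstar)\,\dist^2(\UUt,\UUstar)$, and under the stated hypotheses this comparison does not close when $\kappa$ is large: replacing $\Aops$ by $\IdOp$ in $A_t$ costs an additive error of order $\delta\,\fronorm{\DDeltat}\,\fronorm{(\tilde{\UU}_t-\UUstar)\tilde{\UU}_t^\top}\asymp\delta\,\specnorm{\XXstar}\dist^2$, and the cubic remainder in your expansion carries one factor of $\UUstar$, hence is of size $\specnorm{\XXstar}^{1/2}\dist^3\le\tfrac14\sqrt{\kappa}\,\sigma_{\min}(\XXstar)\dist^2$ under \eqref{ineq:closenesscondition}; with $\delta_{6r}=1/10$ a fixed constant and $\kappa\gg 1$, neither term is dominated by $c\,\sigma_{\min}(\XXstar)\dist^2$. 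The same issue undermines your smoothness claim: from $\specnorm{\tilde{\UU}_t}^2\le 2\specnorm{\XXstar}$ alone you only get $B_t\lesssim\specnorm{\XXstar}\fronorm{\DDeltat}^2$, which can exceed $\specnorm{\XXstar}\sigma_{\min}(\XXstar)\dist^2$ by a factor $\kappa$, so pairing it with $A_t\ge c\,\sigma_{\min}(\XXstar)\dist^2$ would force $\mu\lesssim 1/(\kappa\specnorm{\XXstar})$ rather than the claimed $\mu\le c_1/\specnorm{\XXstar}$.

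The missing idea (which is what the cited proof uses) is to keep $\fronorm{\DDeltat}^2$ — equivalently $\twonorm{\Aop(\DDeltat)}^2$ — as the leading positive quantity rather than $\sigma_{\min}(\XXstar)\dist^2$. Writing $D:=\tilde{\UU}_t-\UUstar$ and using the identity $\tfrac12\bigl(D\tilde{\UU}_t^\top+\tilde{\UU}_t D^\top\bigr)=\tfrac12\bigl(\DDeltat+DD^\top\bigr)$, one gets $A_t=\tfrac12\twonorm{\Aop(\DDeltat)}^2+\tfrac12\innerproduct{\Aop(\DDeltat),\Aop(DD^\top)}$, and since \eqref{ineq:closenesscondition} together with Lemma~\ref{lemma:procrustebound} gives $\fronorm{DD^\top}\le\fronorm{D}^2\lesssim\fronorm{\DDeltat}$, Cauchy--Schwarz in measurement space plus the RIP (Lemma~\ref{lemma: RIP}) yields $A_t\ge c\,\fronorm{\DDeltat}^2$ with constants independent of $\kappa$. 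All RIP slacks and cubic/quartic remainders are now small multiples of $\fronorm{\DDeltat}^2$, the smoothness bound $B_t\lesssim\specnorm{\XXstar}\twonorm{\Aop(\DDeltat)}^2\lesssim\specnorm{\XXstar}A_t$ follows, so $\mu\le c_1/\specnorm{\XXstar}$ suffices, and only at the very end does one convert $\fronorm{\DDeltat}^2\gtrsim\sigma_{\min}(\XXstar)\dist^2$ via Lemma~\ref{lemma:procrustebound} to obtain the stated contraction factor $1-c_2\mu\sigma_{\min}(\XXstar)$. Your sketch can be repaired along these lines, but as stated the absorption of the $\specnorm{\XXstar}$-weighted error terms into $\sigma_{\min}(\XXstar)\dist^2$, and the derivation of $B_t\le C\specnorm{\XXstar}A_t$, do not go through uniformly in $\kappa$.
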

Note that the condition $\delta_{6r} < 1/10$ holds with high probability
if the sample size satisfies $ m \gtrsim rd $. 
However, condition \eqref{ineq:closenesscondition} cannot be guaranteed
for the spectral initialization, i.e., for $T=0$, when $m \asymp rd \kappa^2$.
For this reason, Lemma \ref{lemma:Phase2} is not directly applicable in our proof.
To deal with this, we consider two different phases in our convergence analysis.
Namely, we set
\begin{equation*}
    T
    :=  
    \Big\lceil \frac{8}{\mu \sigma_{\min} \left( \XXstar \right) } 
    \log \left( 16r  \right)      
    \Big\rceil.
\end{equation*}
We will show that at the end of the first phase, 
which consists of the iterations $t=0,1,\ldots,T$,
condition \eqref{ineq:closenesscondition} holds.
The second phase starts at iteration $T$.
For the second phase, we have established 
that condition \eqref{ineq:closenesscondition} already holds
we can directly apply Lemma \ref{lemma:Phase2}
and we obtain linear convergence.
Thus, our main focus in this section
will be to analyze the first convergence phase.

In the following, we will give an outline of the analysis of this first phase.
As is typical in the analysis of non-convex optimization algorithms,
we will control several quantities simultaneously in each iteration via an induction argument.
The following list contains an overview of these.
\begin{enumerate}
    \item[a)] We will show that $\fronorm{\UUt \UUtT - \UUtw \UUtw^\top}$
    and $ \fronorm{ \VXXT \left(\UUt \UUtT - \UUtw \UUtw^\top \right) } $
    stay sufficiently small for each $\ww \in \epscover$.
    Together with Proposition \ref{proposition:key}, this allows us to control 
    the deviation term 
    $ \Vert  \left(\IdOp - \Aops\right) \left( \XXstar - \UUt \UUtT \right) \Vert .$ %\label{quantity3}
    \item[b)] We will show that for each iteration $t \in [T]$ it holds that
    $\specnorm{\XXstar - \UUt \UUtT} \leq c \sigma_{\min} (\XXstar)$ for some small constant $c>0$. \label{quantity1}
    This ensures that the gradient descent iterates stay in the basin of attraction,
    in which we can establish linear convergence.
    \item[c)] We will establish that $\fronorm{ \VXXT \left( \XXstar - \UUt \UUtT \right)}$ 
    decays linearly in each iteration. 
    Combined with the result from b) 
    this will allow us to establish linear convergence of 
    $\dist \left(\UUt, \UU_{\star} \right) $.
\end{enumerate} 

The remainder of this section is structured as follows.
In Section \ref{subsection:distanceauxsequence} we will provide the technical lemmas to control 
$\fronorm{\UUt \UUtT - \UUtw \UUtw^\top}$
and $ \fronorm{ \VXXT \left(\UUt \UUtT - \UUtw \UUtw^\top \right) } $
as described in a) above.
In Section \ref{subsection:localconvergence}, we will provide the technical lemmas  
which allow us to control the quantities described above in b) and c).
In Section \ref{subsection:mainconvergencelemma}, 
we will combine these ingredients to prove Proposition \ref{lemma:phase1},
which is our main result describing the convergence of the iterates $ (\UUt)_{0 \le t \le T} $ in the first convergence phase.
\subsubsection{Lemmas for controlling the distance between the virtual sequences and the original sequence}\label{subsection:distanceauxsequence}

The goal of this section is to show that the virtual sequence iterates 
$(\UUtw)_{t }$ stay sufficiently close to the original sequence $(\UUt)_{t}$.
This will be established via induction.
In the following, we will state all key lemmas.
To keep the presentation concise, we have moved the proofs, 
which may be of independent interest,
to Section \ref{sec:auxsequenceproofs}.

The first lemma in this section provides an a priori estimate.
Its proof can be found in Section \ref{subsec:proofauxdistanceweak}.
\begin{lemma}\label{lemma:auxdistanceweak}
    For absolute constants $\constone, \consttwo, \constthree >0$ chosen small enough
    the following statement is true.
    Let $ \ww \in \epscover $ and assume that
     \begin{align}
      \specnorm{\UUt} &\le \sqrt{2 \specnorm{\XXstar}}, \label{ineq:weakboundintern1}\\
      \specnorm{ \left( \Aops - \IdOp \right) \left( \XXstar - \UUt \UUtT \right) }
      &\le
      \constone \sigma_{\min} \left( \XXstar \right) ,
      \label{ineq:weakboundintern2}\\
      \specnorm{\XXstar - \UUt \UUtT}
      &\le 
      \sigma_{\min} (\XXstar),
      \label{ineq:weakboundintern3}\\
      \fronorm{\UUt \UUtT - \UUtw \UUtw^\top}
      &\le 
      \frac{\sigma_{\min} \left(\XXstar \right)}{80},
      \label{ineq:weakboundintern4}
     \end{align}
     and that the step size $\mu>0$ satisfies $\mu \le \frac{\consttwo}{\kappa \specnorm{\XXstar}}$.
     In addition, assume  that the conclusions of Lemma \ref{lemma:independencebound} hold
     and that
     \begin{align}
      \max
      \left\{ 
        \delta; 8 \sqrt{\frac{rd}{m}}
      \right\}  
      &\le \frac{\constthree}{\kappa}, \label{ineq:weakboundintern11}
     \end{align}
     where $\delta= \delta_{4r+1}$ denotes the Restricted Isometry Property of rank $4r+1$.
     Then it holds that
     \begin{equation*}
      \fronorm{\UUtplus \UUtplus^\top - \UUtplusw \UUtplusw^\top}
      \le 
     \frac{\sqrt{\sqrt{2} - 1}}{40} \sigma_{\min} (\XXstar).
     \end{equation*} 
  \end{lemma}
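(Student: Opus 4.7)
The plan is to write the gradient updates in the factored form $\UUtplus\UUtplus^\top = (\Id + \mu G_t)\,\UUt\UUtT\,(\Id + \mu G_t)$, where $G_t := (\Aops)(\XXstar - \UUt\UUtT)$, and analogously for the virtual sequence with $G_{t,\ww} := (\Aopws)(\XXstar - \UUtw\UUtw^\top)$. Subtracting and expanding, the difference $\UUtplus\UUtplus^\top - \UUtplusw\UUtplusw^\top$ equals $\UUt\UUtT - \UUtw\UUtw^\top$ plus two first-order cross terms of the form $\mu(G_t\UUt\UUtT - G_{t,\ww}\UUtw\UUtw^\top)$ and its transpose, plus a second-order remainder $\mu^2(G_t\UUt\UUtT G_t - G_{t,\ww}\UUtw\UUtw^\top G_{t,\ww})$.

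The first-order cross term I would telescope as
\begin{equation*}
G_t\UUt\UUtT - G_{t,\ww}\UUtw\UUtw^\top = (G_t - G_{t,\ww})\UUt\UUtT + G_{t,\ww}(\UUt\UUtT - \UUtw\UUtw^\top),
\end{equation*}
and then split
\begin{equation*}
G_t - G_{t,\ww} = (\Aops)(\UUtw\UUtw^\top - \UUt\UUtT) + \bigl[(\Aops) - (\Aopws)\bigr](\XXstar - \UUtw\UUtw^\top).
\end{equation*}
The first summand on the right is controlled in operator norm by $\delta\fronorm{\UUt\UUtT - \UUtw\UUtw^\top}$ via inequality \eqref{ineq:RIPlemma2}. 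For the second summand, Lemma \ref{lemma:AopIdentities} identifies it as a sum of two rank-one corrections: one of the form $[(\Aops) - \IdOp](\Projw(\XXstar - \UUtw\UUtw^\top))$, controlled by RIP, and one of the form $\innerproduct{\Aop(\ww\ww^\top), \Aop(\Projwperp(\XXstar - \UUtw\UUtw^\top))}\,\ww\ww^\top$, controlled by the decoupling bound of Lemma \ref{lemma:independencebound}, which gives a factor of $\sqrt{d/m}\,\fronorm{\XXstar - \UUtw\UUtw^\top}$ instead of the $\sqrt{rd/m}$ that a naive RIP argument would produce.

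For $G_{t,\ww}(\UUt\UUtT - \UUtw\UUtw^\top)$ and the second-order remainder, I would use $\specnorm{G_t} \le (1+\constone)\sigma_{\min}(\XXstar)$, which follows from writing $G_t = (\XXstar - \UUt\UUtT) + (\Aops - \IdOp)(\XXstar - \UUt\UUtT)$ and combining \eqref{ineq:weakboundintern2} with \eqref{ineq:weakboundintern3}; a similar bound on $\specnorm{G_{t,\ww}}$ follows by first replacing $G_{t,\ww}$ with $(\Aops)(\XXstar - \UUtw\UUtw^\top)$ via Lemma \ref{lemma:AopIdentities} and then applying \eqref{ineq:weakboundintern3} and \eqref{ineq:weakboundintern4}. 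Combined with $\specnorm{\UUt}\le\sqrt{2\specnorm{\XXstar}}$ from \eqref{ineq:weakboundintern1} and the step-size bound $\mu\le \consttwo/(\kappa\specnorm{\XXstar})$, each correction term carries an extra small factor of either $\delta$, $\sqrt{d/m}$, or $\mu\sigma_{\min}(\XXstar)$, all of which are at most $O(1/\kappa)$ times an absolute constant by \eqref{ineq:weakboundintern11}. Adding these contributions to the leading term $\sigma_{\min}(\XXstar)/80$ from \eqref{ineq:weakboundintern4}, the constants $\constone, \consttwo, \constthree$ can be chosen small enough so that the total is bounded by $\sqrt{\sqrt{2}-1}\,\sigma_{\min}(\XXstar)/40$.

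The main obstacle is ensuring that the term $[(\Aops) - (\Aopws)](\XXstar - \UUtw\UUtw^\top)$ does not introduce a $\sqrt{r}$ factor; by Theorem \ref{thm:lowerbound}, a uniform RIP-only argument for $\Aops$ alone necessarily forces $m\gtrsim r^2 d$. The virtual-sequence construction, which decouples $\UUtw$ from the generalized entries $\{\innerproduct{\AAi,\ww\ww^\top}\}_{i=1}^m$, is essential precisely so that Lemma \ref{lemma:independencebound} can replace this loss by the much better $\sqrt{d/m}$ dependence. A secondary difficulty is bookkeeping the numerical constants carefully enough to land below the prescribed threshold $\sqrt{\sqrt{2}-1}\,\sigma_{\min}(\XXstar)/40$; in particular the quadratic-in-$\mu$ term has to be controlled via the spectral bound on $G_t$ coming from \eqref{ineq:weakboundintern2}-\eqref{ineq:weakboundintern3} rather than any Frobenius-norm estimate.
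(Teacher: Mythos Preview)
Your approach mirrors the paper's: both write $\UUtplus\UUtplus^\top - \UUtplusw\UUtplusw^\top$ via the factored updates $(\Id+\mu G_t)\UUt\UUtT(\Id+\mu G_t)$ and its virtual counterpart, telescope the cross terms, and isolate the piece $(\Aops-\Aopws)$ acting on an argument involving $\UUtw$ so that Lemma~\ref{lemma:independencebound} delivers the $\sqrt{d/m}$ factor rather than $\sqrt{rd/m}$. The paper telescopes in the opposite order (it pairs $\MM_t$ with the difference $\UUt\UUtT-\UUtw\UUtw^\top$ and $\MM_t-\MM_{t,\ww}$ with $\UUtw\UUtw^\top$, so the projection onto $\VV_{\UUtw}$ appears naturally), but your variant works equally well.

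One slip to fix: your claim that $(\Aops)(\UUtw\UUtw^\top-\UUt\UUtT)$ is bounded in operator norm by $\delta\fronorm{\UUt\UUtT-\UUtw\UUtw^\top}$ via \eqref{ineq:RIPlemma2} is not quite right---that inequality controls $(\Aops-\IdOp)$, and the identity part contributes an additional $\specnorm{\UUt\UUtT-\UUtw\UUtw^\top}$. This does not break the argument, since after multiplication by $\mu\specnorm{\UUt}^2\le 2\consttwo/\kappa$ the extra contribution is still absorbed by choosing $\consttwo$ small; but you should account for it explicitly when tracking constants.
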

Under the assumption
that this a priori estimate holds,
the next lemma
shows that the quantity
$ \fronorm{ \UUt \UUt -\UUtw \UUtw^\top }  $ 
can be bounded from above
by the quantity
$\fronorm{\VXXT \left( \UUt \UUtT -\UUtw \UUtw^\top \right) } $.
The proof of this lemma has been deferred to Section \ref{subsec:proofauxcloseness1}.
\begin{lemma}\label{lemma:auxcloseness1}
Let $\ww \in \epscover$ and assume that   
\begin{align}
    \specnorm{ \UUt \UUt^\top - \XXstar}
    &\le
    \frac{ \sigma_{\min} \left( \XXstar \right)}{1600},\label{assump:cloneness3}\\
    \fronorm{ \UUt \UUt^\top - \UUtw \UUtw^\top }
    &\le 
    \frac{\sqrt{ 3 \left(\sqrt{2}-1 \right)} \cdot \sigma_{\min} \left( \XXstar \right)}
    {40  }.\label{assump:cloneness4}
\end{align}
Then it holds that 
\begin{align}\label{ineq:lemmaequ1}
    \fronorm{ \VXXPT \left(\UUt \UUt^\top - \UUtw \UUtw^\top \right) \VXXP}
    \le
    \frac{3 \fronorm{ \VXXT \left(\UUt \UUt^\top - \UUtw \UUtw^\top \right)}}{5}.
\end{align}
Moreover, it holds that 
\begin{align}\label{ineq:lemmaequ2}
    \fronorm{\UUt \UUt^\top - \UUtw \UUtw^\top}
    \le
    3
    \fronorm{\VXXT \left(\UUt \UUt^\top - \UUtw \UUtw^\top \right)}.
\end{align}
\end{lemma}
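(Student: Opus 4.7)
The plan is to work in the basis induced by $\XXstar$: write $\UUt = \VXX \mathbf{P}_t + \VXXP \mathbf{Q}_t$ with $\mathbf{P}_t \in \R^{r \times r}$ and $\mathbf{Q}_t \in \R^{(d-r) \times r}$, and analogously $\UUtw = \VXX \mathbf{P}_{t,\ww} + \VXXP \mathbf{Q}_{t,\ww}$. In this block form the symmetric matrix $\DDelta := \UUt \UUtT - \UUtw \UUtw^\top$ is encoded by $\SSigma_1 := \mathbf{P}_t \mathbf{P}_t^\top - \mathbf{P}_{t,\ww} \mathbf{P}_{t,\ww}^\top$, $\SSigma_2 := \mathbf{P}_t \mathbf{Q}_t^\top - \mathbf{P}_{t,\ww} \mathbf{Q}_{t,\ww}^\top$, and $\SSigma_3 := \mathbf{Q}_t \mathbf{Q}_t^\top - \mathbf{Q}_{t,\ww} \mathbf{Q}_{t,\ww}^\top$, with the identifications $\fronorm{\VXXT \DDelta}^2 = \fronorm{\SSigma_1}^2 + \fronorm{\SSigma_2}^2$ and $\fronorm{\VXXPT \DDelta \VXXP} = \fronorm{\SSigma_3}$. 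Inequality \eqref{ineq:lemmaequ2} will follow from \eqref{ineq:lemmaequ1} by a Pythagorean calculation exploiting the symmetry of $\DDelta$:
\begin{equation*}
\fronorm{\DDelta}^2 = \fronorm{\VXXT \DDelta}^2 + \fronorm{\VXXT \DDelta \VXXP}^2 + \fronorm{\VXXPT \DDelta \VXXP}^2 \le \bigl(2 + \tfrac{9}{25}\bigr) \fronorm{\VXXT \DDelta}^2 \le 9\, \fronorm{\VXXT \DDelta}^2,
\end{equation*}
so the whole task reduces to establishing \eqref{ineq:lemmaequ1}.

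\paragraph{Schur-complement expansion of $\SSigma_3$.} To control $\SSigma_3$, I would first observe that assumption \eqref{assump:cloneness3} and Weyl's inequality give $\sigma_{\min}(\mathbf{P}_t \mathbf{P}_t^\top) \ge (1 - 1/1600)\sigma_{\min}(\XXstar)$, and assumption \eqref{assump:cloneness4} forces $\specnorm{\SSigma_1} \le \fronorm{\DDelta}$ to be small enough that $\sigma_{\min}(\mathbf{P}_{t,\ww} \mathbf{P}_{t,\ww}^\top) \ge 0.97\,\sigma_{\min}(\XXstar)$; in particular, the $r \times r$ matrices $\mathbf{P}_t$ and $\mathbf{P}_{t,\ww}$ are both invertible. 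Setting $\AAf := \mathbf{P}_t \mathbf{Q}_t^\top$, $\mathbf{M} := \mathbf{P}_t \mathbf{P}_t^\top$, and $\mathbf{M}_\ww := \mathbf{M} - \SSigma_1 = \mathbf{P}_{t,\ww} \mathbf{P}_{t,\ww}^\top$, the algebraic identity $\mathbf{Q}_t \mathbf{Q}_t^\top = \AAf^\top \mathbf{M}^{-1} \AAf$ (and its counterpart $\mathbf{Q}_{t,\ww} \mathbf{Q}_{t,\ww}^\top = (\AAf - \SSigma_2)^\top \mathbf{M}_\ww^{-1} (\AAf - \SSigma_2)$) combined with the resolvent identity $\mathbf{M}^{-1} - \mathbf{M}_\ww^{-1} = -\mathbf{M}^{-1} \SSigma_1 \mathbf{M}_\ww^{-1}$ yield, after expansion,
\begin{equation*}
\SSigma_3 = -\AAf^\top \mathbf{M}^{-1} \SSigma_1 \mathbf{M}_\ww^{-1} \AAf + \AAf^\top \mathbf{M}_\ww^{-1} \SSigma_2 + \SSigma_2^\top \mathbf{M}_\ww^{-1} \AAf - \SSigma_2^\top \mathbf{M}_\ww^{-1} \SSigma_2.
\end{equation*}
The crucial point is that $\specnorm{\AAf} = \specnorm{\VXXT (\UUt \UUtT - \XXstar) \VXXP} \le \sigma_{\min}(\XXstar)/1600$ thanks to assumption \eqref{assump:cloneness3}, so the first (Woodbury-type) summand is suppressed by a factor of order $1/1600^2$ and contributes negligibly.

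\paragraph{Main obstacle: absorbing the quadratic term.} Applying $\fronorm{ABC} \le \specnorm{A}\fronorm{B}\specnorm{C}$ termwise gives
\begin{equation*}
\fronorm{\SSigma_3} \le \specnorm{\AAf}^2 \specnorm{\mathbf{M}^{-1}} \specnorm{\mathbf{M}_\ww^{-1}} \fronorm{\SSigma_1} + 2\, \specnorm{\AAf} \specnorm{\mathbf{M}_\ww^{-1}} \fronorm{\SSigma_2} + \specnorm{\mathbf{M}_\ww^{-1}} \fronorm{\SSigma_2}^2.
\end{equation*}
The hard step is the last, quadratic-in-$\SSigma_2$ term, which is of size $\fronorm{\SSigma_2}^2/(0.97\,\sigma_{\min}(\XXstar))$; to absorb it into a target of the form $(3/5)\fronorm{\SSigma_2}$ one needs $\fronorm{\SSigma_2}/\sigma_{\min}(\XXstar)$ to be uniformly small, and this is exactly what the peculiar constant $\sqrt{3(\sqrt{2}-1)}/40$ in \eqref{assump:cloneness4} is tuned to deliver. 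Plugging in $\specnorm{\AAf} \le \sigma_{\min}(\XXstar)/1600$, $\specnorm{\mathbf{M}^{-1}}, \specnorm{\mathbf{M}_\ww^{-1}} \le 1/(0.97\,\sigma_{\min}(\XXstar))$, and $\fronorm{\SSigma_2} \le \fronorm{\DDelta} \le \sqrt{3(\sqrt 2-1)}\,\sigma_{\min}(\XXstar)/40$, and then using $\fronorm{\SSigma_1}, \fronorm{\SSigma_2} \le \sqrt{\fronorm{\SSigma_1}^2 + \fronorm{\SSigma_2}^2} = \fronorm{\VXXT \DDelta}$, yields $\fronorm{\SSigma_3} \le (3/5) \fronorm{\VXXT \DDelta}$ with room to spare, which is \eqref{ineq:lemmaequ1}.
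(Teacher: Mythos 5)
Your proposal is correct, but it takes a genuinely different route from the paper's proof, so let me compare the two. The paper factorizes the difference as $\UUt \UUt^\top - \UUtw \UUtw^\top = \UUt \RR\left(\UUt \RR - \UUtw\right)^\top - \left(\UUtw - \UUt\RR\right)\UUtw^\top$ with $\RR$ the Procrustes-optimal rotation, uses $\specnorm{\VXXPT \UUt} = \sqrt{\specnorm{\VXXPT(\UUt\UUt^\top - \XXstar)\VXXP}}$ (small by \eqref{assump:cloneness3}) and Lemma \ref{lemma:procrustebound} together with \eqref{assump:cloneness4} to control $\fronorm{\UUt\RR - \UUtw}$; this gives $\fronorm{\VXXPT(\UUt\UUt^\top - \UUtw\UUtw^\top)\VXXP} \le \tfrac{1}{5}\fronorm{\UUt\UUt^\top - \UUtw\UUtw^\top}$, from which \eqref{ineq:lemmaequ2} follows by rearranging a triangle-inequality decomposition, and \eqref{ineq:lemmaequ1} is then read off. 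You instead work blockwise in the $(\VXX,\VXXP)$ basis and exploit the Schur-complement identity $\mathbf{Q}\mathbf{Q}^\top = (\mathbf{P}\mathbf{Q}^\top)^\top(\mathbf{P}\mathbf{P}^\top)^{-1}(\mathbf{P}\mathbf{Q}^\top)$, which is legitimate here because \eqref{assump:cloneness3}--\eqref{assump:cloneness4} and Weyl's inequality make the $r\times r$ blocks $\mathbf{P}_t$ and $\mathbf{P}_{t,\ww}$ invertible with $\lambda_{\min}(\mathbf{P}\mathbf{P}^\top)\gtrsim 0.97\,\sigma_{\min}(\XXstar)$; combined with the resolvent identity and the facts that $\VXXT\XXstar\VXXP = 0$ (so $\specnorm{\AAf}\le\sigma_{\min}(\XXstar)/1600$) and $\fronorm{\SSigma_2}\le\fronorm{\UUt\UUt^\top-\UUtw\UUtw^\top}$, your termwise bounds give \eqref{ineq:lemmaequ1} directly, indeed with a constant near $0.03$ rather than $3/5$, and \eqref{ineq:lemmaequ2} then follows from the symmetric Pythagorean decomposition. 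I checked the identities and the numerics and they hold, so there is no gap. What each approach buys: yours is self-contained (no appeal to the Procrustes lemma of \cite{tu2016low}), proves the two inequalities in the opposite order, and yields sharper constants; the paper's argument avoids explicit matrix inverses and reuses Lemma \ref{lemma:procrustebound}, which is needed elsewhere in the paper anyway, so it is shorter given the surrounding infrastructure. One cosmetic remark: your reading of the constant $\sqrt{3(\sqrt{2}-1)}/40$ in \eqref{assump:cloneness4} as being tuned for your quadratic term is not how it arises in the paper (there it comes from the Procrustes bound via $\sigma_{\min}^2(\UUt)\ge\tfrac{3}{4}\sigma_{\min}(\XXstar)$), but this does not affect correctness.
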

The following key lemma allows us to control 
$\fronorm{\VXXT \left( \UUt \UUtT -\UUtw \UUtw^{\top} \right)}$ iteratively.
Its proof can be found in Section \ref{subsec:proofauxsequencecloseness}.
\begin{lemma}\label{lemma:auxsequencecloseness}
For sufficiently small absolute constants 
$\constone, \consttwo, \constthree, \constfour, \constfive, \constsix >0$ the following statement holds.
Let $\ww \in \epscover$ and assume that
\begin{align}
    \specnorm{ \VXXPT \VV_{\UUt} }
    &\le 
   \constone, \label{assump:closeness5} \\
    \norm{\UUt} &\le \sqrt{2 \norm{\XXstar}},\label{assump:closeness6}\\
    \specnorm{ \UUt \UUt^\top - \XXstar}
    &\le
    \consttwo \sigma_{\min} (\XXstar),\label{assump:closeness7}\\
    \fronorm{ \UUt \UUt^\top - \UUtw \UUtw^\top }
    &\le 
    \constthree \sigma_{\min} \left(\XXstar \right).\label{assump:closeness8}
\end{align}
Moreover, assume that the step size satisfies $ \mu \le \frac{\constfour}{ \kappa \specnorm{\XXstar}} $.
Furthermore, assume that the conclusion of Lemma \ref{lemma:independencebound} holds
and that
\begin{align}
    \specnorm{  \left(  \Aops - \IdOp \right) \left(\XXstar- \UUt \UUt^\top \right) }
    &\le 
    \constfive  \sigma_{\min} (\XXstar), \label{assump:closenessRIP} \\
    \max \left\{ 
        \delta;
        8 \sqrt{\frac{2rd}{m}}
    \right\}
    &\le
    \frac{\constsix}{\kappa}, \label{assump:closenessDelta}
\end{align}
where $\delta=\delta_{4r+2}$ denotes the Restricted Isometry Constant of rank $4r+2$.
Then, it holds that
\begin{align*}
    &\fronorm{\VXXT \left( \UUtplus \UUtplus^\top - \UUtplusw \UUtplusw^\top \right)}\\
    \le 
    &\left( 1- \frac{\mu \sigma_{\min} (\XXstar)}{16} \right)
    \fronorm{ \VXXT \left( \UUt \UUt^\top - \UUtw \UUtw^\top \right) }
    +
    \mu   \sigma_{\min}(\XXstar)\specnorm{\XXstar - \UUt \UUtT}.
\end{align*}
\end{lemma}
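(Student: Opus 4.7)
The plan is to expand $\MM_{t+1} := \UUtplus\UUtplus^\top - \UUtplusw\UUtplusw^\top$ via the gradient descent rule. Writing $\mathbf{G}_t := (\Aops)(\DDeltat)$ and $\mathbf{G}_{t,\ww} := (\Aopws)(\DDeltatw)$,
\begin{align*}
\MM_{t+1} = \MM_t + \mu\bigl[\mathbf{G}_t \UUt\UUtT + \UUt\UUtT \mathbf{G}_t - \mathbf{G}_{t,\ww}\UUtw\UUtw^\top - \UUtw\UUtw^\top\mathbf{G}_{t,\ww}\bigr] + \mu^2(\cdots).
\end{align*}
Splitting $\mathbf{G}_t = \DDeltat + \NN_t$ with $\NN_t := (\Aops-\IdOp)(\DDeltat)$ and $\mathbf{G}_{t,\ww} = \DDeltatw + \NN_{t,\ww}$ separates the linear-in-$\mu$ increment into a \emph{population} piece and a \emph{noise} piece; we bound the $\VXXT$-projection of each separately.

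\textbf{Population contraction.} Using $\DDeltat - \DDeltatw = -\MM_t$ and $\DDeltatw = \DDeltat + \MM_t$, the population linear part collapses to $-\XXstar\MM_t - \MM_t\XXstar + 2(\DDeltat\MM_t + \MM_t\DDeltat) + 2\MM_t^2$. Decomposing $\VXXT\MM_t = \AAf_t\VXXT + \BB_t\VXXPT$ with $\AAf_t := \VXXT\MM_t\VXX$ (symmetric, since $\MM_t$ is) and $\BB_t := \VXXT\MM_t\VXXP$, the operator $\MM \mapsto \MM - \mu(\XXstar\MM + \MM\XXstar)$ acts block-diagonally: on $\AAf_t$, the inequality $\innerproduct{\AAf, \SSigma\AAf + \AAf\SSigma} \ge 2\sigma_{\min}(\SSigma)\fronorm{\AAf}^2$, together with $\mu \le \constfour/(\kappa\specnorm{\XXstar})$ absorbing the $\mu^2$ cross term, yields $(1-\mu\sigma_{\min}(\XXstar))$ contraction; on $\BB_t$, $\specnorm{\Id_r - \mu\SSigma_{\XXstar}} \le 1 - \mu\sigma_{\min}(\XXstar)$ gives the same rate. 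Combining, the population step maps $\fronorm{\VXXT\MM_t}$ to at most $(1-\mu\sigma_{\min}(\XXstar)/2)\fronorm{\VXXT\MM_t}$. The residuals $2(\DDeltat\MM_t + \MM_t\DDeltat) + 2\MM_t^2$ are bounded in Frobenius norm by $O((\specnorm{\DDeltat}+\specnorm{\MM_t})\fronorm{\MM_t})$ and, via inequality \eqref{ineq:lemmaequ2} of Lemma~\ref{lemma:auxcloseness1} combined with \eqref{assump:closeness7}--\eqref{assump:closeness8}, contribute only $O(\mu\sigma_{\min}(\XXstar)\fronorm{\VXXT\MM_t})$ with a coefficient that can be made small by choosing $\consttwo, \constthree$ small.

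\textbf{Noise and quadratic terms.} Writing $\NN_t\UUt\UUtT - \NN_{t,\ww}\UUtw\UUtw^\top = \NN_t\MM_t + (\NN_t - \NN_{t,\ww})\UUtw\UUtw^\top$, the first summand is controlled by $\specnorm{\NN_t}\fronorm{\MM_t} \le 3\constfive\sigma_{\min}(\XXstar)\fronorm{\VXXT\MM_t}$ via \eqref{assump:closenessRIP} and \eqref{ineq:lemmaequ2}. For the second, Lemma~\ref{lemma:AopIdentities} gives
\begin{align*}
\NN_t - \NN_{t,\ww} = -(\Aops-\IdOp)(\MM_t) + (\Aops-\IdOp)(\Projw(\DDeltatw)) + \innerproduct{\Aop(\ww\ww^\top), \Aop(\Projwperp(\DDeltatw))}\ww\ww^\top.
\end{align*}
These three terms are bounded via \eqref{ineq:RIPlemma2} (for the first, with $\MM_t$ of rank $\le 4r$, and for the second, applied to the rank-one $\Projw(\DDeltatw)$) and Lemma~\ref{lemma:independencebound} combined with RIP (for the third). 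After multiplying by $\UUtw\UUtw^\top$ (with $\specnorm{\cdot}\le 2\specnorm{\XXstar}$) and using $\specnorm{\DDeltatw} \le \specnorm{\DDeltat} + 3\fronorm{\VXXT\MM_t}$ together with \eqref{assump:closenessDelta}, the latter two contributions split into the desired driving term $\mu\sigma_{\min}(\XXstar)\specnorm{\DDeltat}$ plus additional contraction-absorbed terms, while the first is purely contraction-absorbed. The $\mu^2$ quadratic remainder is controlled via $\specnorm{\mathbf{G}_t}, \specnorm{\mathbf{G}_{t,\ww}} \lesssim \sigma_{\min}(\XXstar)$ and $\mu \le \constfour/(\kappa\specnorm{\XXstar})$.

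\textbf{Main obstacle.} The principal challenge is the population contraction: the Lyapunov-type operator $\MM \mapsto \XXstar\MM + \MM\XXstar$ is not a scalar multiple of the identity on the space of interest, so the symmetric $r\times r$ block $\AAf_t$ and the off-diagonal $\BB_t$ block must be treated with separate arguments. Moreover, the constants $\constone$--$\constsix$ must be chosen compatibly so that the many contraction-absorbed residuals together consume at most $7\mu\sigma_{\min}(\XXstar)/16$ of the $\mu\sigma_{\min}(\XXstar)/2$ population budget, leaving the stated $1 - \mu\sigma_{\min}(\XXstar)/16$ rate.
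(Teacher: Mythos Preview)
Your proposal is correct and reaches the same conclusion, but the organization differs from the paper's in a meaningful way. The paper isolates the single term
\[
\MM_1 := \bigl(\Id + \mu(\XXstar - \UUt\UUtT - \UUtw\UUtw^\top)\bigr)\,(\UUt\UUtT - \UUtw\UUtw^\top)\,\bigl(\Id + \mu(\XXstar - \UUt\UUtT - \UUtw\UUtw^\top)\bigr)
\]
and proves contraction for $\fronorm{\VXXT\MM_1}$ via a dedicated eigenvalue lemma showing that $\Id + \mu(\XXstar - \UUt\UUtT - \UUtw\UUtw^\top)$ is positive semidefinite with spectral norm at most $1 + \mu\sigma_{\min}(\XXstar)/128$, while its $\VXX$-block has largest eigenvalue at most $1 - \mu\sigma_{\min}(\XXstar)/2$; this packages the population dynamics and the $\DDeltat,\MM_t$ residuals together. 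You instead expand linearly in $\mu$, extract the pure Lyapunov map $\MM \mapsto \XXstar\MM + \MM\XXstar$, and handle it by the block decomposition $\VXXT\MM_t = \AAf_t\VXXT + \BB_t\VXXPT$, treating the $\DDeltat\MM_t$, $\MM_t^2$ residuals as separate perturbations absorbed via Lemma~\ref{lemma:auxcloseness1}. Your route avoids the auxiliary eigenvalue lemma and is arguably more elementary; the paper's route is cleaner in that the multiplicative operator absorbs all population terms at once, so fewer residuals need individual bookkeeping. On the noise side, your direct computation of $\NN_t - \NN_{t,\ww}$ via Lemma~\ref{lemma:AopIdentities} is equivalent to what the paper does inside its terms $\MM_5,\MM_6,\MM_7$ (through the auxiliary estimates of Lemma~\ref{lemma:auxestimates}); just be sure when you multiply by $\UUtw\UUtw^\top$ that you pass through $\VV_{\UUtw}$ (i.e.\ bound $\fronorm{(\cdot)\VV_{\UUtw}}\specnorm{\UUtw}^2$) rather than $\specnorm{\cdot}\fronorm{\UUtw\UUtw^\top}$, or you will pick up a spurious $\sqrt{r}$.
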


\subsubsection{Lemmas controlling the distance between $\XXstar$ and $\UUt \UUtT$}\label{subsection:localconvergence}
In the following, let $\triplenorm{\cdot}$ denote any matrix norm, which satisfies the inequality
\begin{equation}\label{ineq:triplenorm}
   \triplenorm{ \mathbf{X} \mathbf{Y} \mathbf{Z}  } 
   \le 
   \specnorm{\mathbf{X}}
   \triplenorm{ \mathbf{Y} }
   \specnorm{ \mathbf{Z} }
\end{equation}
for all matrices $\mathbf{X}$, $\mathbf{Y}$, and $\mathbf{Z}$ 
with dimensions such that the matrix product $  \mathbf{X} \mathbf{Y} \mathbf{Z}$ is well-defined.
Note that all Schatten-$p$ norms
have this property. 
In particular, this includes the spectral norm $\specnorm{\cdot}$
and the Frobenius norm $\fronorm{\cdot}$.

In the following, we are interested
in establishing upper bounds for
$\triplenorm{ \XXstar - \UUt \UUtT }$, 
where either $\triplenorm{\cdot}=\fronorm{\cdot} $ or $ \triplenorm{\cdot}=\specnorm{\cdot} $.
Instead of estimating these quantities directly,
we will instead derive upper bounds for the quantity
\begin{equation}\label{ineq:intern123}
    \triplenorm{\VXXT \left( \XXstar - \UUt \UUtT \right)}.
\end{equation}
To be able to relate this quantity with
$\triplenorm{  \XXstar - \UUt \UUtT } $
one can then use the following lemma.
\begin{lemma}\label{lemma:localconvaux}
Let $\triplenorm{\cdot}$ be a norm for which inequality \eqref{ineq:triplenorm} holds.
Assume that 
\begin{align}
    \specnorm{\VXXPT \VUUt} \le \frac{1}{\sqrt{2}}.\label{ineq:localconv1}
\end{align}    
Then the following inequalities hold: 
\begin{align}
    \triplenorm{\VXXPT \UUt \UUtT \VXXP}
    &\le 
    2 \specnorm{\VXXPT \VUUt} 
    \triplenorm{ \VXXT  \left( \UUt \UUtT - \XXstar \right) \VXXP},\label{ineq:localconv2}\\
    \triplenorm{ \UUt \UUtT - \XXstar }
    &\le 
    2 \left( 1+ \specnorm{ \VXXPT \VUUt } \right)
    \triplenorm{ \VXXT \left( \UUt \UUtT - \XXstar \right) }.\label{ineq:localconv3}
\end{align}
\end{lemma}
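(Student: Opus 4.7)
The plan is to exploit the block structure of $\UUt \UUtT - \XXstar$ in the orthonormal basis $(\VXX, \VXXP)$, reducing everything to algebraic identities among the low-dimensional blocks of $\VUUt$. Writing the SVD $\UUt = \VUUt \SSigma \WW^\top$ with $\SSigma = \SSigma_{\UUt}$, I would introduce the shorthand
\begin{equation*}
P := \VXXT \VUUt \in \R^{r \times r}, \qquad Q := \VXXPT \VUUt \in \R^{(d-r) \times r}.
\end{equation*}
Because $\VUUt$ has orthonormal columns and $[\VXX, \VXXP]$ is an orthogonal basis of $\R^d$, we have the identity $P^\top P + Q^\top Q = \Id_r$. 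Combined with assumption \eqref{ineq:localconv1}, this gives $P^\top P \succeq \frac{1}{2} \Id_r$, so $P$ is invertible with $\specnorm{P^{-1}} \le \sqrt{2}$. This well-conditioning of $P$ is the only analytic input beyond pure linear algebra.

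For \eqref{ineq:localconv2}, I would first note that $\UUt \UUtT = \VUUt \SSigma^2 \VUUt^\top$, which gives the clean block expressions $\VXXPT \UUt \UUtT \VXXP = Q \SSigma^2 Q^\top$ and $\VXXT \UUt \UUtT \VXXP = P \SSigma^2 Q^\top$. Since $\XXstar$ is supported on the column span of $\VXX$, we have $\VXXT \XXstar \VXXP = 0$, so the right-hand side of \eqref{ineq:localconv2} equals $\triplenorm{P \SSigma^2 Q^\top}$ up to the prefactor. The key step is the factorization $Q \SSigma^2 Q^\top = (Q P^{-1})(P \SSigma^2 Q^\top)$, after which the submultiplicativity assumption \eqref{ineq:triplenorm} yields
\begin{equation*}
\triplenorm{Q \SSigma^2 Q^\top} \le \specnorm{Q P^{-1}} \triplenorm{P \SSigma^2 Q^\top} \le \sqrt{2}\,\specnorm{Q} \triplenorm{P \SSigma^2 Q^\top},
\end{equation*}
and $\sqrt{2} \le 2$ closes the first claim.

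For \eqref{ineq:localconv3}, I would set $M := \UUt \UUtT - \XXstar$ and decompose $M = \VXX \VXXT M + \VXXP \VXXPT M$ via $\Id = \VXX \VXXT + \VXXP \VXXPT$. Using the submultiplicative inequality (with $\specnorm{\VXX} = \specnorm{\VXXP} = 1$) produces $\triplenorm{M} \le \triplenorm{\VXXT M} + \triplenorm{\VXXPT M}$. To handle the second summand I would further split $\VXXPT M = (\VXXPT M \VXX)\VXXT + (\VXXPT M \VXXP)\VXXPT$. The diagonal block $\VXXPT M \VXXP = Q \SSigma^2 Q^\top$ has already been bounded in the first part by $\sqrt{2}\,\specnorm{Q} \triplenorm{\VXXT M \VXXP} \le \sqrt{2}\,\specnorm{Q} \triplenorm{\VXXT M}$. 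The off-diagonal block $\VXXPT M \VXX$ is the transpose of $\VXXT M \VXXP$; invoking the transpose invariance of the Schatten-$p$ norms (in particular $\fronorm{\cdot}$ and $\specnorm{\cdot}$, which are the only relevant instances in the paper) and the trivial bound $\triplenorm{\VXXT M \VXXP} \le \triplenorm{\VXXT M}$ finishes it. Assembling these pieces gives $\triplenorm{\VXXPT M} \le (1 + \sqrt{2}\,\specnorm{Q}) \triplenorm{\VXXT M}$, so $\triplenorm{M} \le (2 + \sqrt{2}\,\specnorm{Q}) \triplenorm{\VXXT M} \le 2(1 + \specnorm{Q}) \triplenorm{\VXXT M}$.

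There is no real analytic obstacle here; the entire argument is essentially linear algebra once one recognizes that both the diagonal block $\VXXPT \UUt \UUtT \VXXP$ and the off-diagonal block $\VXXT \UUt \UUtT \VXXP$ factor through the common matrix $\SSigma^2 Q^\top$, with $P$ on one side and $Q$ on the other. The only bookkeeping to watch is that the factor $\sqrt{2}$ coming from $\specnorm{P^{-1}}$ is harmlessly absorbed into the prefactors $2$ and $2(1 + \specnorm{\VXXPT \VUUt})$ of the stated bounds, and that the off-diagonal block is compared to $\VXXT M$ through a transpose, which is legitimate for the norms in question.
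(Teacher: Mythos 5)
Your proposal is correct and follows essentially the same route as the paper: the factorization $Q\SSigma^2 Q^\top = (QP^{-1})(P\SSigma^2 Q^\top)$ is exactly the paper's insertion of $(\VXXT\VUUt)^{-1}\VXXT\VUUt$ together with the bound $\sigma_{\min}(\VXXT\VUUt)\ge 1/\sqrt{2}$, and the second inequality is obtained in both cases by the same block decomposition, triangle inequality, and (implicit in the paper as well) transpose invariance of the Schatten norms, with matching constants. The SVD-block notation versus operator-identity notation is a purely cosmetic difference.
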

A comparable lemma was proven in \cite{stoger2021small}
in a more general setting but with less explicit constants.
For the sake of completeness, we included in Appendix \ref{sec:prooflocalconv_a}.

The following lemma allows us to control the quantity
$\triplenorm{\VXXT \left(\XXstar - \UUt \UUtT\right)}$ iteratively.
We note that  
a similar lemma has already been proven in \cite{stoger2021small}
in a more general setting with less explicit constants.
For the sake of completeness, we again included a proof in Appendix \ref{sec:prooflocalconv_b}.
\begin{lemma}\label{lemma:localconv}
    Let $\triplenorm{\cdot}$ be a norm
    which is submultiplicative in the sense of inequality \eqref{ineq:triplenorm}.
    Assume that
    \begin{align}
        \specnorm{\VXXPT \VUUt}
        &\le 
        \frac{1}{2}, \label{ineq:localconv8} \\
        \specnorm{\UUt} 
        &\le \sqrt{2 \specnorm{\XXstar}}, \nonumber \\
        \specnorm{\XXstar - \UUt \UUtT} 
        &\le \frac{\sigma_{\min} (\XXstar)}{48}, \label{ineq:localconv5}\\
        \specnorm{\left(\Aops - \IdOp \right) \left( \XXstar - \UUt \UUtT \right)}
        &\le 
        \frac{1}{48}
        \sigma_{\min} \left( \XXstar \right),\label{ineq:localconv4}
    \end{align}
    and that the step size satisfies 
    $ \mu \le \frac{1}{1024 \kappa \specnorm{\XXstar}} $.
    % Moreover, assume that 
    % \begin{equation}
    % \end{equation}
    Then it holds that
    \begin{align*}
        &\triplenorm{\VXXT \left( \UUtplus \UUtplus^\top - \XXstar \right)}\\
        \le 
        &\left(1 - \frac{\mu}{8} \sigma_{\min} \left( \XXstar \right) \right) 
        \triplenorm{\VXXT \left( \XXstar - \UUt \UUtT \right) }
        +
        5 \mu \specnorm{\XXstar}
        \triplenorm{
            \left[ \left(\Aops - \IdOp \right)
            \left(\XXstar - \UUt \UUtT\right)
            \right]
            \VUUt
        }
        .
    \end{align*}
\end{lemma}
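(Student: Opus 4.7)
The plan is to perform a careful algebraic expansion of $\UUtplus \UUtplus^\top - \XXstar$, extract a dominant contractive piece, and then bound the residual terms using the hypotheses. Setting $M := (\Aops)(\XXstar - \UUt \UUtT)$, so that $\UUtplus = (\Id + \mu M) \UUt$, I would write $\mathbf{H}_t := \UUt \UUtT - \XXstar$ and $\mathbf{N} := (\Aops - \IdOp)(\XXstar - \UUt \UUtT)$, so that $M = -\mathbf{H}_t + \mathbf{N}$. Expanding $(\Id + \mu M)\UUt\UUtT(\Id + \mu M)$ and substituting $\UUt \UUtT = \XXstar + \mathbf{H}_t$ yields the key identity
\begin{align*}
\UUtplus \UUtplus^\top - \XXstar
= (\Id - \mu \XXstar) \mathbf{H}_t (\Id - \mu \XXstar) - \mu^2 \XXstar \mathbf{H}_t \XXstar - 2\mu \mathbf{H}_t^2 + \mu \mathbf{N} \UUt \UUtT + \mu \UUt \UUtT \mathbf{N} + \mu^2 M \UUt \UUtT M.
\end{align*}
Multiplying from the left by $\VXXT$ and using $\VXXT \XXstar = \SSigma_{\XXstar} \VXXT$ transforms the leading term into $(\Id - \mu \SSigma_{\XXstar}) \VXXT \mathbf{H}_t (\Id - \mu \XXstar)$, which by submultiplicativity \eqref{ineq:triplenorm} has $\triplenorm{\cdot}$-norm at most $(1 - \mu \sigma_{\min}(\XXstar)) \triplenorm{\VXXT \mathbf{H}_t}$, since $\specnorm{\Id - \mu \XXstar} \le 1$ (because $\XXstar$ is positive semidefinite with $\mu \specnorm{\XXstar} \le 1$) and $\specnorm{\Id - \mu \SSigma_{\XXstar}} \le 1 - \mu \sigma_{\min}(\XXstar)$. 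This is the contractive backbone of the bound.

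Next I would absorb the remaining five terms. The $\mu^2 \XXstar \mathbf{H}_t \XXstar$ contribution is at most $\mu^2 \specnorm{\XXstar}^2 \triplenorm{\VXXT \mathbf{H}_t}$, which under $\mu \specnorm{\XXstar} \le 1/(1024 \kappa)$ is bounded by $\mu \sigma_{\min}(\XXstar)/1024 \cdot \triplenorm{\VXXT \mathbf{H}_t}$; the $2\mu \mathbf{H}_t^2$ term is at most $\mu \sigma_{\min}(\XXstar)/24 \cdot \triplenorm{\VXXT \mathbf{H}_t}$ via $\specnorm{\mathbf{H}_t} \le \sigma_{\min}(\XXstar)/48$. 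The crucial step for the two noise-type terms is the singular value decomposition trick: writing $\UUt = \VUUt \SSigma_{\UUt} \WW_{\UUt}^\top$, we obtain $\mathbf{N} \UUt \UUtT = (\mathbf{N} \VUUt) \SSigma_{\UUt}^2 \VUUt^\top$, so $\triplenorm{\mathbf{N} \UUt \UUtT} \le \specnorm{\UUt}^2 \triplenorm{\mathbf{N} \VUUt} \le 2 \specnorm{\XXstar} \triplenorm{\mathbf{N} \VUUt}$; the $\UUt \UUtT \mathbf{N}$ term is handled by transposition, using that $\mathbf{N}$ is symmetric and $\triplenorm{\cdot}$ (a Schatten norm) is transpose-invariant. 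Together these contribute at most $4 \mu \specnorm{\XXstar} \triplenorm{\mathbf{N} \VUUt}$. The final $\mu^2 M \UUt \UUtT M$ term, expanded via $M = -\mathbf{H}_t + \mathbf{N}$ and bounded using $\specnorm{\mathbf{H}_t}, \specnorm{\mathbf{N}} \le \sigma_{\min}(\XXstar)/48$ together with $\mu \specnorm{\XXstar} \le 1/(1024 \kappa)$, contributes at most a further $\mu \specnorm{\XXstar} \triplenorm{\mathbf{N} \VUUt}$ after routine estimates, giving the promised coefficient of $5 \mu \specnorm{\XXstar}$.

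The main obstacle is the careful bookkeeping: the cumulative non-contractive contributions must total at most $(7/8) \mu \sigma_{\min}(\XXstar) \triplenorm{\VXXT \mathbf{H}_t}$ so that a contraction factor of at least $(1 - \mu \sigma_{\min}(\XXstar)/8)$ is preserved. The small step size bound $\mu \le 1/(1024 \kappa \specnorm{\XXstar})$ is precisely what makes the $O(\mu^2)$ contributions strictly dominated; the structural heart of the argument is the clean contractive form $(\Id - \mu \XXstar) \mathbf{H}_t (\Id - \mu \XXstar)$, without which contraction at rate $\Theta(\mu \sigma_{\min}(\XXstar))$ for the projected error could not be obtained.
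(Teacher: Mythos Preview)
Your approach is correct and follows a genuinely different route from the paper's. The paper centers the sandwich at $\UUt\UUtT$, writing $\XXstar-\UUtplus\UUtplus^\top=(\Id-\mu\UUt\UUtT)(\XXstar-\UUt\UUtT)(\Id-\mu\UUt\UUtT)+\ldots$, and then after left-multiplication by $\VXXT$ it splits further via $\VXX\VXXT+\VXXP\VXXPT=\Id$ into three pieces; the contraction factor comes from $\specnorm{\Id-\mu\VXXT\UUt\UUtT\VXX}\le 1-\mu\sigma_{\min}^2(\VXXT\UUt)$, which in turn requires the angle hypothesis $\specnorm{\VXXPT\VUUt}\le 1/2$ together with Weyl's inequality, and the off-diagonal piece (their term~(II)) is handled via Lemma~\ref{lemma:localconvaux}. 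You instead center at $\XXstar$, obtaining the cleaner leading term $(\Id-\mu\SSigma_{\XXstar})\VXXT\mathbf{H}_t(\Id-\mu\XXstar)$ which contracts at the full rate $1-\mu\sigma_{\min}(\XXstar)$ without any subspace-angle analysis; the price is the explicit $-2\mu\mathbf{H}_t^2$ term, but this is immediately dominated by $\specnorm{\mathbf{H}_t}\le\sigma_{\min}(\XXstar)/48$. In effect your decomposition makes hypothesis~\eqref{ineq:localconv8} and Lemma~\ref{lemma:localconvaux} unnecessary for this step, which is a mild simplification.

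Two small points. First, in bounding $\triplenorm{\VXXT\UUt\UUtT\mathbf{N}}$ you invoke transpose-invariance of $\triplenorm{\cdot}$; the lemma as stated does not assume this, but the paper's own proof uses the same device when bounding $\triplenorm{\VXXT\UUt\UUtT\EEb_t}$ by $\specnorm{\UUt}^2\triplenorm{\EEb_t\VUUt}$, and the lemma is only ever applied with the Frobenius and spectral norms, so this is not a defect relative to the paper. Second, your bookkeeping for $\mu^2 M\UUt\UUtT M$ should note that the pure $\mathbf{H}_t\UUt\UUtT\mathbf{H}_t$ piece feeds the $\triplenorm{\VXXT\mathbf{H}_t}$ budget rather than the $\triplenorm{\mathbf{N}\VUUt}$ budget; both contributions are $O(\mu^2)$ and easily absorbed, so the conclusion stands, but the sentence as written is slightly imprecise.
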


Given an upper bound for $ \fronorm{\XXstar - \UUt \UUtT}  $
we can obtain an estimate for $\dist \left(\UUt, \UU_{\star}\right) $ 
by using the following technical lemma.
\begin{lemma}[Lemma 5.4 in \cite{tu2016low}]\label{lemma:procrustebound}
Let $\UU, \VV \in \R^{d \times r}$ be two matrices 
and assume that $\text{rank} (\UU)= \min \left\{ r;d \right\}$.
% with $\UU$ being a full-rank matrix.
Then it holds that
\begin{align*}
    %  \underset{\RR \in \R^{r \times r}, \RR^\top \RR= \Id  }{\min}
    %  \fronorm{\UU \RR - \VV}^2
    \dist^2 \left( \UU, \VV \right)
    \le
    \frac{1}{2(\sqrt{2}-1) \sigma_{\min}^2 (\UU)} 
    \fronorm{\UU \UU^\top - \VV \VV^\top}^2,
\end{align*}
where $\dist \left( \UU, \VV \right)$ is defined in \eqref{eq:def_dist}.
\end{lemma}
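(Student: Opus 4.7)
\emph{Proof plan for Lemma~\ref{lemma:procrustebound}.} The statement is a Procrustes-type inequality originally established in \cite{tu2016low}. The plan is a three-step argument.

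First, I would \emph{reduce to an aligned configuration}. Taking a thin SVD $\VV^\top \UU = \mathbf{A}\SSigma \mathbf{B}^\top$, the orthogonal matrix $\RR^\star := \mathbf{B}\mathbf{A}^\top \in \R^{r\times r}$ attains the minimum in \eqref{eq:def_dist}; this is the classical orthogonal Procrustes solution and is a consequence of Von Neumann's trace inequality. A direct computation shows that $(\UU \RR^\star)^\top \VV = \mathbf{A}\SSigma\mathbf{A}^\top$, which is symmetric and positive semidefinite. Because $\UU \UU^\top$ and $\sigma_{\min}(\UU)$ are invariant under $\UU \mapsto \UU \RR^\star$, I may assume without loss of generality that $\RR^\star = \Id_r$, so that $\dist(\UU,\VV) = \fronorm{\UU - \VV}$ and $\UU^\top \VV$ is symmetric and PSD.

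Second, I would use a \emph{symmetric decomposition}. Setting $\DDelta := \UU - \VV$, an algebraic expansion yields
\begin{equation*}
    \UU\UU^\top - \VV\VV^\top
    = \tfrac{1}{2}\bigl[(\UU+\VV)\DDelta^\top + \DDelta(\UU+\VV)^\top\bigr].
\end{equation*}
Defining $\mathbf{M} := \DDelta^\top(\UU+\VV) = (\UU^\top\UU - \VV^\top\VV) + (\UU^\top\VV - \VV^\top\UU)$, the second summand vanishes because $\UU^\top \VV$ is symmetric, so $\mathbf{M}$ is symmetric. Expanding the Frobenius norm then gives
\begin{equation*}
    2\fronorm{\UU\UU^\top - \VV\VV^\top}^2
    = \fronorm{(\UU+\VV)\DDelta^\top}^2 + \fronorm{\mathbf{M}}^2
    \ge \sigma_{\min}^2(\UU+\VV)\,\fronorm{\DDelta}^2,
\end{equation*}
where I used $\Tr(A^2) = \fronorm{A}^2$ for symmetric $A$, and the Frobenius lower bound $\fronorm{(\UU+\VV)\DDelta^\top}^2 \ge \sigma_{\min}^2(\UU+\VV)\fronorm{\DDelta}^2$.

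Third, I would obtain the \emph{sharp constant via case analysis}. Since $\UU + \VV = 2\UU - \DDelta$, Weyl's inequality yields $\sigma_{\min}(\UU+\VV) \ge 2\sigma_{\min}(\UU) - \specnorm{\DDelta}$. If $\specnorm{\DDelta} \le (2-\sqrt{2(\sqrt 2 - 1)}\cdot\sqrt{2})\sigma_{\min}(\UU)$, this bound together with Step 2 already delivers the required inequality. In the complementary regime, where $\DDelta$ is large relative to $\sigma_{\min}(\UU)$, I would instead use the alternative identity $\UU\UU^\top - \VV\VV^\top = \UU\DDelta^\top + \DDelta\UU^\top - \DDelta\DDelta^\top$: the dominant term $\DDelta\DDelta^\top$ makes $\fronorm{\UU\UU^\top - \VV\VV^\top}$ comparable to $\fronorm{\DDelta}^2$, which already exceeds the target bound. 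The precise constant $2(\sqrt{2}-1)$ is the crossover value at which the two regimes balance, obtained by optimizing over the split point. Matching the two cases so that the constant is sharp is the main technical obstacle; everything else is bookkeeping around the two identities in Step~2 and the PSD structure of $\UU^\top \VV$ coming from Step~1.
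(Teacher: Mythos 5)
The paper does not prove this lemma at all — it imports it verbatim as Lemma 5.4 of \cite{tu2016low} — so the comparison is with that cited proof. Your Steps 1 and 2 are correct and clean: the Procrustes alignment making $\UU^\top\VV$ symmetric PSD, the identity $\UU\UU^\top-\VV\VV^\top=\tfrac12\bigl[(\UU+\VV)\DDelta^\top+\DDelta(\UU+\VV)^\top\bigr]$, and the resulting exact expansion $2\fronorm{\UU\UU^\top-\VV\VV^\top}^2=\fronorm{(\UU+\VV)\DDelta^\top}^2+\fronorm{\UU^\top\UU-\VV^\top\VV}^2$ all check out, and the first half of Step 3 correctly yields the claim whenever $\specnorm{\DDelta}\le\bigl(2-2\sqrt{\sqrt2-1}\bigr)\sigma_{\min}(\UU)\approx 0.713\,\sigma_{\min}(\UU)$.

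The genuine gap is the complementary regime, and it is not mere bookkeeping: the mechanism you propose there is false as stated, and its constants cannot be made to meet those of the first case. First, "$\fronorm{\UU\UU^\top-\VV\VV^\top}$ comparable to $\fronorm{\DDelta}^2$" fails with any absolute constant: take $\VV=0$ and $\UU$ with $r$ orthonormal columns (alignment holds trivially); then $\fronorm{\UU\UU^\top-\VV\VV^\top}=\sqrt r$ while $\fronorm{\DDelta}^2=r$, a loss of $\sqrt r$. Second, even for $r=1$, where the best comparability constant is exactly $c=1$ (again at $\VV=0$), the split cannot close: in your second regime you only know $\specnorm{\DDelta}>0.713\,\sigma_{\min}(\UU)$, so the chain $\fronorm{\UU\UU^\top-\VV\VV^\top}\ge c\,\fronorm{\DDelta}^2\ge c\,\specnorm{\DDelta}\,\fronorm{\DDelta}$ delivers at most $0.713\,c\,\sigma_{\min}(\UU)\fronorm{\DDelta}$, whereas the target is $\sqrt{2(\sqrt2-1)}\,\sigma_{\min}(\UU)\fronorm{\DDelta}\approx 0.910\,\sigma_{\min}(\UU)\fronorm{\DDelta}$, forcing $c\ge 1.28>1$; raising the split point instead breaks case 1, which needs $\sigma_{\min}(\UU+\VV)\ge 2\sqrt{\sqrt2-1}\,\sigma_{\min}(\UU)$. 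The configurations where the lemma is nearly tight ($\VV=0$, or $\VV$ losing a singular value in the direction achieving $\sigma_{\min}(\UU)$) all sit in this second regime, so it requires a sharp argument, not a domination heuristic, and this is exactly where the PSD alignment must be used quantitatively. The proof in \cite{tu2016low} does this uniformly, with no case split: writing $\UU\UU^\top-\VV\VV^\top=\VV\DDelta^\top+\DDelta\VV^\top+\DDelta\DDelta^\top$, the alignment makes $\VV^\top\DDelta$ symmetric and bounded below by $-\VV^\top\VV$, which controls the sign of the cross term $\langle \VV\DDelta^\top+\DDelta\VV^\top,\DDelta\DDelta^\top\rangle$, and a weighted AM--GM optimized over the weight produces the constant $2(\sqrt2-1)$. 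Alternatively, within your own Step-2 identity you would have to keep and exploit the second term $\fronorm{\UU^\top\UU-\VV^\top\VV}^2$ (which you discard when passing to $\sigma_{\min}^2(\UU+\VV)\fronorm{\DDelta}^2$), since it is precisely what carries the bound when $\sigma_{\min}(\UU+\VV)$ is small, e.g.\ at $\VV=0$. As it stands, the second case — i.e.\ the actual content of the lemma with the stated constant, which the paper then uses explicitly (for instance in Lemma \ref{lemma:auxcloseness1}) — is missing.
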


To check the prerequisite of the Davis-Kahan inequality (Lemma~\ref{lem:DavisKahan}) in our proof,
we will also need the following auxiliary lemma,
which provides us with an a priori bound for 
$ \specnorm{\XXstar - \UUtplus \UUtplus^\top} $.
Its proof can be found in Appendix \ref{sec:convaprioribound}.

\begin{lemma}\label{lemma:convaprioribound}
There are absolute constants $c_1,c_2,c_3>0$ such that the following holds.
Assume that  $  \mu \le \frac{c_1}{ \specnorm{\XXstar}}$
and
\begin{align}
    \specnorm{\UUt} &\le \sqrt{2\specnorm{\XXstar}}, \label{eq:UUt1}\\
    \specnorm{\XXstar - \UUt \UUtT}
    &\le 
    c_2 \sigma_{\min} (\XXstar),\label{eq:UUt3}\\
        \specnorm{\left( \Aops - \IdOp \right)
    \left( \XXstar - \UUt\UUtT \right)}
    &\le 
    c_3 \sigma_{\min} \left( \XXstar \right). \label{eq:UUt2}\\
\end{align}
Then it holds that
\begin{align*}
    \specnorm{ \XXstar - \UUtplus \UUtplus^\top  }
    &\le 
    \left(
    1-\frac{1}{\sqrt{2}}
    \right)
    \sigma_{\min} \left( \XXstar \right).
\end{align*}   
\end{lemma}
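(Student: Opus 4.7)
The plan is to derive the bound by a direct algebraic expansion of the gradient-descent update and a careful triangle-inequality estimate. Set $\DDeltat := \XXstar - \UUt\UUtT$ and
\[
\HH_t := (\Aops)(\DDeltat) = \DDeltat + \bigl[(\Aops-\IdOp)(\DDeltat)\bigr],
\]
so that the recursion \eqref{equ:gradientdescentdefinition} reads $\UUtplus = \UUt + \mu\,\HH_t\UUt$. Since $\HH_t$ is symmetric, expanding $\UUtplus\UUtplus^\top$ yields the exact identity
\[
\XXstar - \UUtplus\UUtplus^\top = \DDeltat - \mu\bigl(\HH_t\UUt\UUtT + \UUt\UUtT\HH_t\bigr) - \mu^2\,\HH_t\UUt\UUtT\HH_t .
\]

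Next, I take spectral norms of both sides and apply the triangle inequality together with submultiplicativity of $\specnorm{\cdot}$. Assumption \eqref{eq:UUt1} gives $\specnorm{\UUt\UUtT}\le 2\specnorm{\XXstar}=2\kappa\,\sigma_{\min}(\XXstar)$, while combining \eqref{eq:UUt3} and \eqref{eq:UUt2} through $\specnorm{\HH_t}\le\specnorm{\DDeltat}+\specnorm{(\Aops-\IdOp)(\DDeltat)}$ bounds $\specnorm{\HH_t}\le (c_2+c_3)\sigma_{\min}(\XXstar)$. Plugging these into the identity above, and using the step-size hypothesis $\mu\le c_1/\specnorm{\XXstar}$ to cancel the $\kappa$ that appears when bounding $\specnorm{\UUt\UUtT}$, produces
\[
\specnorm{\XXstar-\UUtplus\UUtplus^\top}\le \Bigl(c_2 + 4c_1(c_2+c_3) + 2c_1^2(c_2+c_3)^2\Bigr)\sigma_{\min}(\XXstar) .
\]

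The proof concludes by choosing the absolute constants $c_1,c_2,c_3>0$ small enough that this prefactor is bounded by $1-1/\sqrt{2}\approx 0.293$. For example, fixing $c_2$ well below $1-1/\sqrt{2}$ and then taking $c_1,c_3$ correspondingly small renders the cross and quadratic terms negligible. There is no genuine obstacle in this argument; the step is essentially algebraic, and the only point requiring attention is a careful bookkeeping of the three constants so that their combined contribution lands strictly beneath $(1-1/\sqrt{2})\sigma_{\min}(\XXstar)$.
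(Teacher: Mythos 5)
Your proposal is correct and follows essentially the same route as the paper's proof: expand $\UUtplus\UUtplus^\top=(\Id+\mu\MM_t)\UUt\UUtT(\Id+\mu\MM_t)$, bound each resulting term by the spectral-norm triangle inequality using \eqref{eq:UUt1}--\eqref{eq:UUt2}, and exploit $\mu\specnorm{\XXstar}\le c_1$ so that all condition-number factors cancel, then choose $c_1,c_2,c_3$ small. The only (immaterial) difference is that the paper splits $\MM_t=\DDeltat+\EEb_t$ and groups the population part as $(\Id-\mu\UUt\UUtT)\DDeltat(\Id-\mu\UUt\UUtT)$, which yields the prefactor $c_2+4c_1^2c_2+4c_1c_3+2(c_2+c_3)^2c_1^2$ instead of your slightly cruder $c_2+4c_1(c_2+c_3)+2c_1^2(c_2+c_3)^2$; both are below $1-1/\sqrt{2}$ for sufficiently small constants, so your argument is complete.
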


\subsubsection{Statement and proof of the main convergence lemma}\label{subsection:mainconvergencelemma}

We now have all the ingredients in place to prove the main lemma in this section,
which is stated below. 
\begin{lemma}\label{lemma:phase1}  
    There are absolute constants $c_1,c_2,c_3, c_4 >0$ 
    chosen sufficiently small
    such that the following statement holds.
    Assume that the spectral initialization $\UU_0$ satisfies 
    \begin{align}
        \specnorm{
        \XXstar - \UU_0 \UU_0^\top
        }
    \le 
     c_1 \sigma_{\min} \left( \XXstar \right)
    \label{assump:localconv1}
    \end{align}
    and that for every $\ww \in \epscover$ we have that
    \begin{align}
        \fronorm{
        \UU_{0} \UU_{0}^\top
        -
        \UU_{0,\ww} \UU_{0,\ww}^\top
        }
        \le c_2 \sigma_{\min} \left( \XXstar \right)
        .
    \label{assump:localconv2}
    \end{align}
    Moreover, we assume that the conclusion of Lemma \ref{lemma:independencebound} 
    holds for
    \begin{equation*}
        T=  \Big\lceil \frac{8}{\mu \sigma_{\min} \left( \XXstar \right) } 
        \log \left( 16r  \right)      
        \Big\rceil.
    \end{equation*}
    %for some $\varepsilon >0$.
    Furthermore, we assume that 
   \begin{align}
    \max 
    \left\{
        \delta;  8 \sqrt{\frac{2rd}{m}}
    \right\}
    &\le \frac{\constthree}{\kappa}, 
    \label{assump:localconv4}
   \end{align}
   where $\delta= \delta_{4r+2}$ denotes the Restricted Isometry Property of order $4r+2$.
   In addition, assume that $\mu \le \frac{c_4}{\kappa \specnorm{\XXstar}}$.
 Then for every iteration $t$ with $0 \le t \le T$ it holds that
    \begin{equation}\label{ineq:localconv10}
        \dist^2 \left( \UUt, \UUstar \right)
        \le
        r \left(1 - \frac{\mu  \sigma_{\min} (\XXstar) }{16}\right)^{2t}
        \specnorm{\XXstar - \UU_0 \UU_0^{\top}}.
    \end{equation}
    In particular, we have that
    \begin{equation}\label{ineq:localconvT}
        \dist^2 \left( \UU_T, \UUstar \right)
        \le 
        \frac{1}{16} \sigma_{\min} (\XXstar),
    \end{equation}
   where $\UUstar \in \R^{n \times r}$ denotes a matrix which satisfies $\UUstar \UUstar^\top =\XXstar$.
\end{lemma}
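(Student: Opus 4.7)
The plan is to set up a simultaneous induction on $t \in \{0,1,\dots,T\}$ that maintains four invariants:
(i) $\specnorm{\UUt} \le \sqrt{2\specnorm{\XXstar}}$;
(ii) $\specnorm{\XXstar - \UUt\UUtT} \le (1-1/\sqrt{2})\sigma_{\min}(\XXstar)$;
(iii) $\specnorm{\VXXPT \VUUt} \le 1/2$; and the two quantitative decay estimates
(iv) $\fronorm{\VXXT(\XXstar - \UUt\UUtT)} \le \alpha_t := (1-\mu\sigma_{\min}(\XXstar)/16)^t \sqrt{r\,\sigma_{\min}(\XXstar)\specnorm{\XXstar - \UU_0\UU_0^\top}}$ and
(v) $\sup_{\ww\in\epscover}\fronorm{\VXXT(\UUt\UUtT - \UUtw\UUtw^\top)} \le \beta_t$
for a sequence $(\beta_t)$ of the same decay type. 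The base case $t=0$ follows from the spectral initialization assumptions \eqref{assump:localconv1}--\eqref{assump:localconv2}, using the rank-$2r$ bound $\fronorm{\cdot} \le \sqrt{2r}\specnorm{\cdot}$ on $\XXstar-\UU_0\UU_0^\top$, the Davis--Kahan inequality (Lemma~\ref{lem:DavisKahan}) applied to $\UU_0\UU_0^\top$ and $\XXstar$ to get (iii), and the trivial bound $\specnorm{\UU_0} \le \sqrt{\specnorm{\XXstar} + \specnorm{\XXstar - \UU_0\UU_0^\top}}$ for (i).

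For the inductive step, assume the invariants hold through iteration $t$. Lemma~\ref{lemma:auxcloseness1} upgrades invariant (v) to a bound on the full Frobenius distance:
\[
\sup_{\ww}\fronorm{\UUt\UUtT - \UUtw\UUtw^\top} \le 3\beta_t,
\]
and Lemma~\ref{lemma:localconvaux} similarly gives $\fronorm{\XXstar - \UUt\UUtT} \lesssim \alpha_t$. Feeding these into Proposition~\ref{proposition:key} together with $m \gtrsim rd\kappa^2$ (so that $\sqrt{rd/m} + \delta \lesssim 1/\kappa$) yields the crucial deviation bound
\[
\specnorm{(\Aops - \IdOp)(\XXstar - \UUt\UUtT)} \lesssim \frac{1}{\kappa}\specnorm{\XXstar - \UUt\UUtT} + \frac{1}{\kappa}\beta_t,
\]
which in turn is much smaller than $\sigma_{\min}(\XXstar)$ by invariant (ii) and the smallness of $\beta_t$. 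Now invariants (i)--(iii) together with this deviation bound verify the hypotheses of Lemmas \ref{lemma:localconv}, \ref{lemma:auxsequencecloseness}, \ref{lemma:auxdistanceweak} and \ref{lemma:convaprioribound}: Lemma~\ref{lemma:localconv} gives the $\alpha_{t+1}$ recursion
\[
\alpha_{t+1} \le \big(1 - \tfrac{\mu}{8}\sigma_{\min}(\XXstar)\big)\alpha_t + 5\mu\specnorm{\XXstar}\cdot(\text{deviation}\cdot\specnorm{\VUUt}),
\]
Lemma~\ref{lemma:auxsequencecloseness} gives the $\beta_{t+1}$ recursion with the same type of contraction factor and forcing term $\mu\sigma_{\min}(\XXstar)\specnorm{\XXstar - \UUt\UUtT}$, and Lemma~\ref{lemma:convaprioribound} preserves invariant (ii). Invariant (iii) at $t+1$ is re-established by applying Davis--Kahan to the pair $(\XXstar,\UUtplus\UUtplus^\top)$, using the spectral gap $\sigma_{\min}(\XXstar) - 0 = \sigma_{\min}(\XXstar)$ and the updated bound on $\specnorm{\XXstar - \UUtplus\UUtplus^\top}$; invariant (i) follows from (ii) via $\specnorm{\UUtplus} \le \sqrt{\specnorm{\XXstar} + \specnorm{\XXstar - \UUtplus\UUtplus^\top}}$.

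The main obstacle is the coupling between $\alpha_t$ and $\beta_t$: each recursion feeds the other through the deviation bound of Proposition~\ref{proposition:key}. The key accounting step is that the cross-coupling coefficients carry a factor $\sqrt{rd/m} + \delta \lesssim 1/\kappa$, while the ``self-coupling'' comes with the contraction factor $1 - c\mu\sigma_{\min}(\XXstar)$. Because the step-size budget $\mu \lesssim 1/(\kappa\specnorm{\XXstar})$ implies $\mu\specnorm{\XXstar}\cdot \tfrac{1}{\kappa} \lesssim \mu\sigma_{\min}(\XXstar)$, the forcing terms can be absorbed into the contraction and a joint geometric bound of the form $\alpha_t,\beta_t \lesssim (1-\mu\sigma_{\min}(\XXstar)/16)^t \cdot \alpha_0$ follows by a straightforward induction (propagating the ``product'' of two contractions and using the geometric series). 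This is precisely the point at which the linear-in-$r$ (rather than quadratic) sample complexity enters, via Proposition~\ref{proposition:key} and the virtual-sequence construction.

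Finally, combining invariant (iv) with Lemma~\ref{lemma:localconvaux} yields
\[
\fronorm{\UUt\UUtT - \XXstar} \le 3\alpha_t,
\]
and Lemma~\ref{lemma:procrustebound} converts this into the desired bound on $\dist^2(\UUt,\UUstar)$ after noting $\sigma_{\min}^2(\UUt) \ge \sigma_{\min}(\XXstar)/2$ (a consequence of invariants (ii)--(iii)). Specializing to $t=T = \lceil 8\log(16r)/(\mu\sigma_{\min}(\XXstar))\rceil$ and using $(1-\mu\sigma_{\min}(\XXstar)/16)^{2T} \le 1/(16r)$ gives inequality~\eqref{ineq:localconvT}, which is exactly the basin-of-attraction condition required to hand off to Lemma~\ref{lemma:Phase2} for the second phase.
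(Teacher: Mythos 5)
Your overall architecture (simultaneous induction over $0\le t\le T$, using Proposition~\ref{proposition:key}, Lemmas~\ref{lemma:auxcloseness1}, \ref{lemma:auxdistanceweak}, \ref{lemma:auxsequencecloseness}, \ref{lemma:localconvaux}, \ref{lemma:localconv}, \ref{lemma:convaprioribound}, Davis--Kahan and Lemma~\ref{lemma:procrustebound}) matches the paper, but the set of induction hypotheses you carry is not strong enough to close the induction. The missing ingredient is a \emph{separate spectral-norm recursion}: the paper maintains $\specnorm{\VXXT(\XXstar-\UUt\UUtT)}\le c_1\sigma_{\min}(\XXstar)$ and hence $\specnorm{\XXstar-\UUt\UUtT}\le 3c_1\sigma_{\min}(\XXstar)$ at every step, obtained by applying Lemma~\ref{lemma:localconv} a second time with $\triplenorm{\cdot}=\specnorm{\cdot}$ (this is precisely why that lemma is stated for a general submultiplicative norm). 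In your scheme the only spectral control is invariant (ii), $\specnorm{\XXstar-\UUt\UUtT}\le(1-1/\sqrt2)\sigma_{\min}(\XXstar)$, preserved via Lemma~\ref{lemma:convaprioribound}; but that constant ($\approx 0.29$) is far too large to verify the hypotheses you need at step $t+1$: Lemma~\ref{lemma:localconv} requires $\specnorm{\XXstar-\UUt\UUtT}\le\sigma_{\min}(\XXstar)/48$, Lemma~\ref{lemma:auxcloseness1} requires $\le\sigma_{\min}(\XXstar)/1600$, and Lemmas~\ref{lemma:auxsequencecloseness} and \ref{lemma:convaprioribound} require a small-constant bound as well. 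You cannot substitute your decaying Frobenius bound $\alpha_t$ either, because $\alpha_0\asymp\sqrt{r}\,\sigma_{\min}(\XXstar)$ is $\sqrt{r}$-inflated, so for the first $\sim\log(r)/(\mu\sigma_{\min}(\XXstar))$ iterations (which is the entire difficulty of Phase~1) it gives no small spectral bound. Lemma~\ref{lemma:convaprioribound} only serves to check the Davis--Kahan prerequisite; the sharp bound at $t+1$ must come from the spectral-norm run of Lemma~\ref{lemma:localconv}.

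The second problem is your claim that $\beta_t=\sup_{\ww\in\epscover}\fronorm{\VXXT(\UUt\UUtT-\UUtw\UUtw^\top)}$ decays geometrically ``of the same type'' as $\alpha_t$. The forcing term in the recursion of Lemma~\ref{lemma:auxsequencecloseness} is $\mu\,\sigma_{\min}(\XXstar)\,\specnorm{\XXstar-\UUt\UUtT}$. If you bound it by your invariant (ii) the fixed point of the recursion is of order $\sigma_{\min}(\XXstar)$ with a large constant, violating assumption \eqref{assump:closeness8}; if instead you bound it by $3\alpha_t$ to force decay, then since the forcing decays at the same rate as the contraction you get $\beta_t\lesssim t\,\mu\sigma_{\min}(\XXstar)(1-\mu\sigma_{\min}(\XXstar)/16)^{t-1}\alpha_0$, which at intermediate times $t\asymp1/(\mu\sigma_{\min}(\XXstar))$ is of order $\alpha_0\asymp\sqrt{r}\,\sigma_{\min}(\XXstar)$. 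That breaks the smallness requirements \eqref{ineq:weakboundintern4}, \eqref{assump:cloneness4}, \eqref{assump:closeness8}, and makes the second term in Proposition~\ref{proposition:key} of order $\sqrt{r}\,\sigma_{\min}(\XXstar)/\kappa$, which is not $\ll\sigma_{\min}(\XXstar)$ in general. The correct bookkeeping (as in the paper) is the opposite of joint decay: keep $\beta_t$ at the \emph{constant} level $c_2\sigma_{\min}(\XXstar)$ using the constant-level spectral bound $3c_1\sigma_{\min}(\XXstar)$ as forcing (with $c_1\le c_2/48$), and let only the $\VXXT$-projected Frobenius error decay. So while your high-level route is the paper's, the induction as you set it up would not close; you need to add the spectral-norm hypotheses and drop the geometric-decay claim for $\beta_t$.
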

\begin{proof}[Proof of Lemma \ref{lemma:phase1}]
We prove by induction that for all iterations $t$ with $ 0 \le t \le T$ the following inequalities hold:    
\begin{align}
    \fronorm{ \VXXT \left( \XXstar - \UUt \UUtT \right)} 
    \le
    &\left(1 - \frac{\mu}{16} \sigma_{\min} (\XXstar) \right)^{t} 
    \fronorm{\VXXT \left( \XXstar - \UU_0 \UU_0^\top  \right)}
    ,\label{ineq:induction3}\\
    \specnorm{\VXXT \left(\XXstar - \UUt \UUtT \right)} 
    \le
    & c_1 \sigma_{\min} \left( \XXstar \right)  ,\label{ineq:induction4}\\
    \specnorm{\VXXPT \VUUt}
    \le 
    &   \sqrt{2} c_1 ,
    \label{ineq:induction6}\\
    \specnorm{\XXstar - \UUt \UUtT} 
    \le
    & 3 c_1 \sigma_{\min} \left( \XXstar \right) ,\label{ineq:induction5}
\end{align}
and, for every $\ww \in \epscover$,
\begin{align}
    \fronorm{
        \VXXT
        \left(\UUt \UUtT
        -
        \UUtw \UUtw^\top\right)
    }
    \le
    &  c_2 \sigma_{\min} \left( \XXstar \right) , \label{ineq:induction1}\\
    \fronorm{ \UUt \UUtT - \UUtw \UUtw^\top }
    \le 
    &  3 c_2 \sigma_{\min} (\XXstar) .\label{ineq:induction2}
\end{align}
The constants $c_1, c_2 >0$ are the same as in assumptions \eqref{assump:localconv1}
and \eqref{assump:localconv2}
and are thus, in particular, independent of the iteration number $t$.

First, we check that these inequalities hold for $t=0$.
Inequality \eqref{ineq:induction3} is immediate.
Inequalities \eqref{ineq:induction4} and \eqref{ineq:induction5} follow from assumption \eqref{assump:localconv1}.
Inequalities \eqref{ineq:induction1} and \eqref{ineq:induction2} are due to assumption \eqref{assump:localconv2}.
It remains to establish inequality \eqref{ineq:induction6} for $t=0$.
Using the Davis-Kahan inequality (see Lemma \ref{lem:DavisKahan})
and assumption \eqref{assump:localconv1} it follows that 
\begin{align*}
    \specnorm{\VXXT \VV_{\UU_0}}
    \le
    \frac{\sqrt{2} \specnorm{ \VXXT \left( \XXstar - \UU_0 \UU_0^\top \right)}  }{\sigma_{\min} \left( \XXstar \right)}
    \le \sqrt{2}  c_1.
\end{align*}
This shows that the above inequalities hold for $t=0$.\\

For the induction step, assume now that these inequalities hold for some $t$.
First, we observe that it follows from the induction assumption \eqref{ineq:induction5} and Weyl's inequalities that 
$\specnorm{\UUt} \le \sqrt{2 \specnorm{\XXstar}}$ for $c_1<1/3$.
Moreover, note that since we assumed that the conclusion of Lemma \ref{lemma:independencebound} holds
we obtain from Proposition \ref{proposition:key} that
\begin{align}
    & \specnorm{\left(\Aops - \IdOp \right) \left(\XXstar - \UUt \UUtT \right) }\\
    \le
    &\left( 16 \sqrt{\frac{2rd}{m}} +2 \delta \right) \specnorm{\XXstar - \UUt \UUtT}
    +
    4 \left( \delta + 4 \sqrt{\frac{d}{m}} \right)
    \fronorm{ \UUt \UUtT - \UUtw \UUtw^\top} \nonumber \\
    \overleq{(a)}
    & \frac{4 c_3}{\kappa} \specnorm{\XXstar - \UUt \UUtT}
    +
    \frac{6 c_3}{\kappa} \fronorm{\UUt \UUtT - \UUtw \UUtw^{\top} }\nonumber \\
    \overleq{(b)} 
    & \frac{10 c_3}{\kappa} \sigma_{\min} \left( \XXstar \right), \label{ineq:intern87}
\end{align}
where inequality $(a)$ follows from assumption \eqref{assump:localconv4}.
Inequality $(b)$ is due to the induction hypotheses \eqref{ineq:induction5} and \eqref{ineq:induction2} with $c_1 \le 1/3$ and $c_2 \le 1/3$.
Next, we note that from Lemma \ref{lemma:localconv} applied with 
$\triplenorm{\cdot}= \fronorm{\cdot} $ it follows that 
\begin{align*}
    &\fronorm{\VXXT \left(
        \UUtplus \UUtplus^\top - \XXstar
    \right)}\\
    \le
    &\left(1 - \frac{\mu}{8} \sigma_{\min} \left( \XXstar \right) \right) 
        \fronorm{\VXXT \left( \XXstar - \UUt \UUtT \right) }
        +
        5 \mu \specnorm{\XXstar}
        \fronorm{
            \left[\left(\Aops - \IdOp \right)
            \left(\XXstar - \UUt \UUtT\right)\right]
            \VUUt
        } \\
    \overleq{(a)}
    &\left(1 - \frac{\mu}{8} \sigma_{\min} \left( \XXstar \right) \right) 
        \fronorm{\VXXT \left( \XXstar - \UUt \UUtT \right) }
        +
        5 \mu \delta \specnorm{\XXstar}
        \fronorm{
            \XXstar - \UUt \UUtT
        }\\
    \overleq{(b)}
    &\left(1 - \frac{\mu}{8} \sigma_{\min} \left( \XXstar \right) \right) 
        \fronorm{\VXXT \left( \XXstar - \UUt \UUtT \right) }
        +
        15 \mu \delta \specnorm{\XXstar}
        \fronorm{
          \VXXT \left( \XXstar - \UUt \UUtT \right)
        }\\
    \overleq{(c)}
    &\left(1 - \frac{\mu}{8} \sigma_{\min} \left( \XXstar \right) \right) 
        \fronorm{\VXXT \left( \XXstar - \UUt \UUtT \right) }
        +
        \frac{15  \mu c_3 \specnorm{\XXstar}}{\kappa}
        \fronorm{
          \VXXT \left( \XXstar - \UUt \UUtT \right)
        }\\
    \overleq{(d)}
    &\left(1 - \frac{\mu}{16} \sigma_{\min} \left( \XXstar \right) \right) 
        \fronorm{\VXXT \left( \XXstar - \UUt \UUtT \right) }.
\end{align*}
Inequality $(a)$ follows from the Restricted Isometry Property combined 
with Lemma \ref{lemma: RIP}.
Inequality $(b)$ is due to Lemma \ref{lemma:localconvaux} and
inequality \eqref{ineq:induction6}.
Inequality $(c)$ follows from assumption \eqref{assump:localconv4}
and inequality $(d)$ is due to the fact we can choose $c_3 \le \frac{1}{240}$.
Thus, using the induction assumption, 
we see that inequality \eqref{ineq:induction3} holds for $t+1$.

Next, our goal is to prove inequality \eqref{ineq:induction4} for $t+1$.
For that, we note that it follows from Lemma \ref{lemma:localconv}
with $ \triplenorm{\cdot} = \specnorm{\cdot} $ that
\begin{align}
    &\specnorm{\VXXT \left(
        \UUtplus \UUtplus^\top - \XXstar
    \right)}\\
    \le
    &\left(1 - \frac{\mu}{8} \sigma_{\min} \left( \XXstar \right) \right) 
    \specnorm{\VXXT \left( \XXstar - \UUt \UUtT \right) }
    +
     5 \mu \specnorm{\XXstar}
    \specnorm{
         \left(\Aops - \IdOp \right)
            \left(\XXstar - \UUt \UUtT\right)
        }\\
    \overleq{(a)}
    &\left(1 - \frac{\mu}{8} \sigma_{\min} \left( \XXstar \right) \right) 
    c_1 \sigma_{\min} (\XXstar) 
    +
    50 c_3 \mu  \sigma_{\min}^2 \left( \XXstar \right)\\
    \overleq{(b)}
    &c_1 \sigma_{\min} \left( \XXstar \right),
    \label{ineq:inter789}
\end{align}
where inequality $(a)$ follows from the induction hypothesis \eqref{ineq:induction4} and inequality \eqref{ineq:intern87}.
Inequality $(b)$ holds
since we can choose $c_1$ and $c_3$ in such a way that $c_3 \le \frac{c_1}{400} $.
This proves inequality \eqref{ineq:induction4} for $t+1$.

We observe that Lemma \ref{lemma:convaprioribound}  yields the 
a-priori bound
\begin{equation*}
    \specnorm{\XXstar- \UUtplus \UUtplus^\top}
    \le
    \left( 1 - \frac{1}{\sqrt{2}} \right)
    \sigma_{\min} \left( \XXstar \right).
\end{equation*}
Thus, we can apply the Davis-Kahan inequality 
(see Lemma \ref{lem:DavisKahan})
which together with
inequality \eqref{ineq:inter789}
yields that
\begin{align}\label{ineq:intern67}
    \specnorm{\VXXT \VV_{\UUtplus}}
    \le
    \frac{\sqrt{2} \specnorm{ \VXXT \left( \UUtplus \UUtplus^\top -\XXstar   \right)}  }{\sigma_{\min} \left( \XXstar \right)}
    \le 
      \sqrt{2} c_1 .
\end{align}
This proves inequality \eqref{ineq:induction6} for $t+1$.
Next, we apply Lemma \ref{lemma:localconvaux}
and \eqref{ineq:inter789} to obtain that
\begin{align*}
    \specnorm{\XXstar - \UUtplus \UUtplus^\top}
    &\le 
    2 \left(1 + \specnorm{ \VXXPT \VUUtplus } \right)
    \specnorm{\VXXT \left( \XXstar - \UUtplus \UUtplus^\top \right)}\\
    &\le 
    3 \specnorm{\VXXT \left(\XXstar - \UUtplus \UUtplus^\top \right)}
    \le  3c_1 \sigma_{\min} (\XXstar)  ,
\end{align*}
which proves inequality \eqref{ineq:induction5} for $t+1$.

Next, we can apply Lemma \ref{lemma:auxsequencecloseness} since all assumptions are satisfied 
and it follows that
\begin{align}
    &\fronorm{
        \VXXT
        \left(\UUtplus \UUtplus^\top
        -
        \UUtplusw \UUtplusw^\top\right)
    }\\
    \le
    &\left( 1- \frac{\mu \sigma_{\min} (\XXstar)}{16} \right)
    \fronorm{ \VXXT \left( \UUt \UUt^\top - \UUtw \UUtw^\top \right) }
    +
    \mu  
    \sigma_{\min}(\XXstar)\specnorm{\XXstar - \UUt \UUtT}\\
    \overleq{(a)} 
    &\left( 1- \frac{\mu \sigma_{\min} (\XXstar)}{16} \right)
   c_2 \sigma_{\min} \left( \XXstar \right)
    +
    3 c_1 \mu   \sigma^2_{\min}(\XXstar)\\
    \overleq{(b)} 
    & c_2 \sigma_{\min} \left( \XXstar \right) .
    \label{ineq:intern786}
\end{align}
Inequality $(a)$ is due to inequalities \eqref{ineq:induction5} and \eqref{ineq:induction1}.
Inequality $(b)$ holds since we can choose that $ c_1 \le  \frac{c_2}{48} $.
This proves inequality \eqref{ineq:induction1}.

Next, we want to prove inequality \eqref{ineq:induction2} for $t+1$.
First, we apply Lemma \ref{lemma:auxdistanceweak} and we obtain 
for all $ \ww \in \epscover $ the a-priori bound
\begin{align*}
\fronorm{
    \UUtplus \UUtplus^\top - \UUtplusw \UUtplusw^\top
}    
\le 
\frac{\sqrt{\sqrt{2}-1}}{40}
\cdot
\sigma_{\min} \left(\XXstar \right).
\end{align*}
This allows us to apply Lemma \ref{lemma:auxcloseness1}
and we obtain for all $ \ww \in \epscover $ the sharper bound
\begin{align*}
\fronorm{
    \UUtplus \UUtplus^\top - \UUtplusw \UUtplusw^\top
}    
\le 
3 \fronorm{
    \VXXT \left(\UUtplus \UUtplus^\top - \UUtplusw \UUtplusw^\top \right)
}    
\overleq{\eqref{ineq:intern786}} 
3c_2 \sigma_{\min} \left( \XXstar \right) ,
\end{align*}
which shows inequality \eqref{ineq:induction2} for $t+1$.
This completes the induction step.
\\

To complete the proof of Lemma \ref{lemma:phase1}
it remains to prove inequalities \eqref{ineq:localconv10} and \eqref{ineq:localconvT}.
For that, we first observe that
\begin{align*}
    \fronorm{\XXstar - \UUt \UUtT}
    \overleq{(a)} 
    & 3 \fronorm{\VXXT \left( \XXstar - \UUt \UUtT \right)}\\
    \overleq{(b)}
    &3\left(1 - \frac{\mu  \sigma_{\min} (\XXstar) }{16}\right)^t
    \fronorm{
       \VXXT \left( \XXstar - \UU_0 \UU_0^\top \right)
    }\\
    \overleq{(c)} 
    &3 \sqrt{2r} \left(1 - \frac{\mu  \sigma_{\min} (\XXstar) }{16}\right)^t
    \specnorm{
       \XXstar - \UU_0 \UU_0^\top
    }.
\end{align*}
Inequality $(a)$ follows from Lemma \ref{lemma:localconvaux} 
with $ \triplenorm{\cdot} = \fronorm{\cdot}  $
which is applicable since we have shown by induction that \eqref{ineq:induction6} holds for $0 \le t \le T$.
Inequality $(b)$ holds since we have proven \eqref{ineq:induction3} for all $0 \le t \le T$.
Inequality $(c)$ holds since $\XXstar - \UUt \UUtT$ has rank at most $2r$.
Thus, we can apply Lemma \ref{lemma:procrustebound} and obtain that
\begin{align*}
    \dist^2 \left( \UUt, \UUstar \right)
    &\le 
    \frac{\fronorm{\XXstar - \UU_t \UU_t^\top}^2}{ 2 \left(\sqrt{2} -1 \right) \sigma_{\min} \left(\XXstar \right) }\\
    &\le 
    18 r \left(1 - \frac{\mu  \sigma_{\min} (\XXstar) }{16}\right)^{2t}
    \cdot \frac{\specnorm{\XXstar - \UU_0 \UU_0^\top}^2}{ 2 \left(\sqrt{2}-1 \right) \sigma_{\min} (\XXstar) }\\
    &\le 
    \frac{9 c_{1} r }{ \left(\sqrt{2}-1 \right)}  \left(1 - \frac{\mu  \sigma_{\min} (\XXstar) }{16}\right)^{2t}
    \specnorm{\XXstar - \UU_0 \UU_0^{\top}},
\end{align*}
where in the last inequality, we have used assumption \eqref{assump:localconv1}. 
This proves inequality \eqref{ineq:localconv10}
since $c_1 \le \frac{\sqrt{2}-1}{9} $.
Next, we note that for $t=T$, the above inequality yields that
\begin{align*}
    \dist^2 \left( \UU_T, \UUstar \right)
    \overleq{(a)}
    &\frac{9 c^2_{1} r}{ \left( \sqrt{2}-1 \right)}  \left(1 - \frac{\mu  \sigma_{\min} (\XXstar) }{16}\right)^{2T}
    \sigma_{\min} (\XXstar) \\
    \overleq{(b)}
    &\frac{9 c^2_{1} r}{  \left(\sqrt{2}-1 \right)} 
    \exp \left( \frac{- T\mu  \sigma_{\min} (\XXstar) }{8}\right)
    \sigma_{\min} (\XXstar) \\
    \overleq{(c)}
    & \frac{ \sigma_{\min} (\XXstar)}{16}.
\end{align*}
In inequality $(a)$, we have used again assumption \eqref{assump:localconv1}.
Inequality $(b)$ is due to the elementary inequality $ \ln (1+x) \le x $ for $-1<x$ and the assumption $\mu<\frac{c_4}{\kappa \specnorm{\XXstar}}$ for sufficiently small $c_4>0$.
Inequality $(c)$ follows from 
$T = \Big\lceil \frac{8}{\mu \sigma_{\min} \left( \XXstar \right) } \log \left(  16r  \right)  \Big\rceil $
(and from the fact that we can choose $c_1 \le \frac{\sqrt{ \sqrt{2}-1 }}{3}$).
This proves inequality \eqref{ineq:localconvT}.
Thus, the proof of Lemma \ref{lemma:phase1} is complete.
\end{proof}

\subsection{Proof of Theorem \ref{thm:main}}\label{secproofmainresult}
Now we have all the ingredients in place to prove the main result of this paper,
Theorem \ref{thm:main}.
\begin{proof}[Proof of Theorem \ref{thm:main}]
In the following 
$c>0$ denotes a sufficiently small absolute constant.
First, by Lemma \ref{lem:rank_RIP} we know that due to our assumption $m \gtrsim rd \kappa^2$, with probability $1-\exp(-d)$
the measurement operator $\mathcal{A}$ satisfies the Restricted Isometry Property
of order $6r$ with a constant $\delta=\delta_{6r} \le \frac{c}{\kappa}$,
where $c>0$ is a sufficiently small absolute constant.

Set
\begin{equation*}
    T:=
  \bigg\lceil
    \frac{8}{\mu \sigma_{\min} (\XXstar)}
    \log
    \left(16r
    \right)
    \bigg\rceil
.
\end{equation*} 
Note that since $r\geq 1$ and the assumption $\mu\leq \frac{c_1}{\sigma_{\min}(\XXstar)}$ for small $c_1>0$, we have $T\geq 1$.
Let $\epscover$ be an $\varepsilon$-net  of the unit sphere in $\R^d$
with $\varepsilon=1/2$
such that $\vert \mathcal{N}_{\varepsilon} \vert \le 6^d$. Now note that $2 T\leq 6^d$,
where we have used the assumption 
$\mu \ge \frac{32}{ \sigma_{\min} (\XXstar) 6^d } \log \left( 16 r\right)$.
Thus, it follows from Lemma \ref{lemma:independencebound}
that
with probability at least $1-2\exp (-10d)$
it holds that
\begin{equation*}
    \vert   \innerproduct{\ww \ww^\top, \left(\Aops \right) \left( \Projwperp \left( \XXstar - \UUtw \UUtw^\top  \right)  \right)  }\vert \\
    \le 
    4 \sqrt{\frac{d}{m}} \twonorm{ \Aop \left( \Projwperp \left( \XXstar - \UUtw \UUtw^\top \right) \right) }
    \end{equation*}
    for all $\ww \in \epscover$
    and for all $ 0 \le t \le T $.
    Next, we know from Lemma \ref{lemma:spectralinitialization}
    and due to our assumption $m \gtrsim rd \kappa^2$ 
    that 
    with probability at least $1- 5\exp(-d)$, the inequalities 
    \begin{align}
        \specnorm{\XXstar - \UU_0 \UU_0^\top} 
        &\le  c \sigma_{\min} \left(\XXstar \right)  , \label{ineq:proofmain3}\\
        \fronorm{\UU_0 \UU_0^{\top}- \UU_{0,\ww} \UU_{0,\ww}^\top }
        &\le  c \sigma_{\min} \left(\XXstar \right)   \nonumber
    \end{align}
    hold for a sufficiently small constant $c>0$.
    Thus, all the assumptions of Lemma \ref{lemma:phase1}
    are fulfilled.
    It follows that
    \begin{equation}\label{ineq:proofmain1}
        \dist^2 \left( \UUt, \UUstar \right)
        \le
        r
        \left(
            1- \frac{\mu \sigma_{\min} (\XXstar) }{16}
        \right)^{2t}
        \specnorm{\XXstar - \UU_0 \UU_0^\top}
    \end{equation}
    for all $0 \le t \le T$ 
    and
    \begin{equation}\label{ineq:proofmain2}
        \dist
        \left( \UU_T, \UU_{\star} \right)
        \le
        \frac{\sigma_{\min} (\XXstar)}{16}.
    \end{equation}
    Due to inequality \eqref{ineq:proofmain2} and since $\delta_{6r}<1/10$
    we can apply Lemma \ref{lemma:Phase2}
    which yields that for $t\geq T$,
    \begin{align}\label{ineq:proofmain4}
     \dist^2 \left( \UUt, \UUstar \right)
    \le
    \left( 1 - c \mu \sigma_{\min} \left(\XXstar \right) \right)^{t-T} 
    \dist^2 \left( \UU_T, \UU_{\star} \right).
    \end{align}
    Thus, by combining  \eqref{ineq:proofmain3}, 
    \eqref{ineq:proofmain1}, and \eqref{ineq:proofmain4} 
    we obtain the conclusion of Theorem \ref{thm:main}.\end{proof}

\section{Discussions}\label{sec:discussions}

In this paper,
we have shown that for symmetric matrix sensing, factorized gradient descent can recover the ground truth matrix 
as soon as the number of samples satisfies $m \gtrsim rd \kappa^2$.
This improves over previous results in the literature with a quadratic rank dependence.
The key ingredient in our proof is a combination of a virtual sequence argument
with an $\varepsilon$-net argument.

Going forward, our work opens up a number of exciting research directions.
In the following, we highlight a few of these.

\paragraph{Breaking the quadratic rank barrier in related non-convex
matrix sensing problems:}
We expect that our novel proof technique will pave the way
to break the quadratic rank barrier in the sample complexity in various related non-convex matrix sensing problems.
This includes matrix sensing with an asymmetric ground truth matrix 
or overparameterized matrix sensing with small random initialization \cite{li2018algorithmic}.
One might also examine whether our new proof technique can be used to remove the additional rank factor in the sample complexity
in related algorithms such as \textit{scaled gradient descent} \cite{TongScaledGD} or \textit{GSMR} \cite{ZilberNadlerNonConvex}.

\paragraph{Convergence from random initialization:}
While our paper analyzes spectral initialization,
practitioners typically prefer random initialization.
To the best of our knowledge, 
establishing convergence from random initialization
remains an open problem in low-rank matrix recovery, 
even when allowing for polynomial rank-dependency in the sample complexity.
A notable exception is the rank-one case,
where in \cite{ChenGlobalPhaseRetrieval}
convergence of gradient descent without sample splitting
from random initialization was established
in the phase retrieval setting.
We believe that our proof techniques might
be helpful in solving this problem for the case with the rank greater than one.

\paragraph{Beyond Gaussian measurement matrices:}
Our proof crucially uses that the generalized entry 
$\langle \AAi, \ww \ww^{\top} \rangle$ of the measurement matrix $\AAi$
is independent of the matrix $\AAiw$,
i.e., the matrix which is obtained by deleting the generalized entry $\AAiw$. To satisfy this independence property,  the Darmois–Skitovich theorem \cite{darmois1953analyse}  implies that $\AAi$ has to have  Gaussian entries.

It would also be interesting to examine whether our argument can be
adapted to scenarios where the measurement matrices are no longer Gaussian, e.g., the matrix completion problem. 
Since as we mentioned the proof presented in this paper heavily relies on
the orthogonal invariance of the Gaussian distribution,
new insights are likely required to handle scenarios where this property is no longer available.
We believe that this is an exciting research direction.

\subsection*{Acknowledgements}
D.S. is grateful to Mahdi Soltanolkotabi for fruitful discussions,
in particular regarding Theorem \ref{thm:lowerbound},
and to Felix Krahmer for helpful comments.
Y.Z. was partially supported by NSF-Simons Research Collaborations on the Mathematical and Scientific
Foundations of Deep Learning and an AMS-Simons Travel Grant.
\bibliographystyle{plain}
\bibliography{ref.bib}

\begin{thebibliography}{10}

\bibitem{ahmedblinddeconv}
Ali Ahmed, Benjamin Recht, and Justin Romberg.
\newblock Blind deconvolution using convex programming.
\newblock {\em IEEE Trans. Inf. Theory}, 60(3):1711--1732, 2014.

\bibitem{anderson2010introduction}
Greg~W Anderson, Alice Guionnet, and Ofer Zeitouni.
\newblock {\em An introduction to random matrices}.
\newblock Number 118. Cambridge university press, 2010.

\bibitem{bhojanapalli2016global}
Srinadh Bhojanapalli, Behnam Neyshabur, and Nati Srebro.
\newblock Global optimality of local search for low rank matrix recovery.
\newblock {\em Advances in Neural Information Processing Systems}, 29, 2016.

\bibitem{boucheron2013concentration}
St{\'e}phane Boucheron, G{\'a}bor Lugosi, and Pascal Massart.
\newblock {\em Concentration inequalities: A nonasymptotic theory of
  independence}.
\newblock Oxford university press, 2013.

\bibitem{candes2012exact}
Emmanuel Candes and Benjamin Recht.
\newblock Exact matrix completion via convex optimization.
\newblock {\em Communications of the ACM}, 55(6):111--119, 2012.

\bibitem{candes2011robust}
Emmanuel~J Cand{\`e}s, Xiaodong Li, Yi~Ma, and John Wright.
\newblock Robust principal component analysis?
\newblock {\em Journal of the ACM (JACM)}, 58(3):1--37, 2011.

\bibitem{candes2011tight}
Emmanuel~J. Cand{\`e}s and Yaniv Plan.
\newblock Tight oracle inequalities for low-rank matrix recovery from a minimal
  number of noisy random measurements.
\newblock {\em IEEE Trans. Inf. Theory}, 57(4):2342--2359, 2011.

\bibitem{candesPhaseLift}
Emmanuel~J. Cand{\`e}s, Thomas Strohmer, and Vladislav Voroninski.
\newblock Phaselift: exact and stable signal recovery from magnitude
  measurements via convex programming.
\newblock {\em Commun. Pure Appl. Math.}, 66(8):1241--1274, 2013.

\bibitem{candesPower2010}
Emmanuel~J. Cand{\`e}s and Terence Tao.
\newblock The power of convex relaxation: near-optimal matrix completion.
\newblock {\em IEEE Trans. Inf. Theory}, 56(5):2053--2080, 2010.

\bibitem{CharisopoulosNonconvex}
Vasileios Charisopoulos, Yudong Chen, Damek Davis, Mateo D{\'{\i}}az, Lijun
  Ding, and Dmitriy Drusvyatskiy.
\newblock Low-rank matrix recovery with composite optimization: good
  conditioning and rapid convergence.
\newblock {\em Found. Comput. Math.}, 21(6):1505--1593, 2021.

\bibitem{ChenLiuLiNonconvex}
Ji~Chen, Dekai Liu, and Xiaodong Li.
\newblock Nonconvex rectangular matrix completion via gradient descent without
  {{\(\ell_2,\infty\)}} regularization.
\newblock {\em IEEE Trans. Inf. Theory}, 66(9):5806--5841, 2020.

\bibitem{ChenGlobalPhaseRetrieval}
Yuxin Chen, Yuejie Chi, Jianqing Fan, and Cong Ma.
\newblock Gradient descent with random initialization: fast global convergence
  for nonconvex phase retrieval.
\newblock {\em Math. Program.}, 176(1-2 (B)):5--37, 2019.

\bibitem{chen_spectralMethods}
Yuxin Chen, Yuejie Chi, Jianqing Fan, and Cong Ma.
\newblock Spectral methods for data science: a statistical perspective.
\newblock {\em Found. Trends Mach. Learn.}, 14(5):1--246, 2021.

\bibitem{chen2020noisy}
Yuxin Chen, Yuejie Chi, Jianqing Fan, Cong Ma, and Yuling Yan.
\newblock Noisy matrix completion: Understanding statistical guarantees for
  convex relaxation via nonconvex optimization.
\newblock {\em SIAM journal on optimization}, 30(4):3098--3121, 2020.

\bibitem{chi2019nonconvex}
Yuejie Chi, Yue~M. Lu, and Yuxin Chen.
\newblock Nonconvex optimization meets low-rank matrix factorization: an
  overview.
\newblock {\em IEEE Trans. Signal Process.}, 67(20):5239--5269, 2019.

\bibitem{darmois1953analyse}
George Darmois.
\newblock Analyse g{\'e}n{\'e}rale des liaisons stochastiques: etude
  particuli{\`e}re de l'analyse factorielle lin{\'e}aire.
\newblock {\em Revue de l'Institut international de statistique}, pages 2--8,
  1953.

\bibitem{daviskahanbound}
Chandler Davis and W.~M. Kahan.
\newblock The rotation of eigenvectors by a perturbation. {III}.
\newblock {\em SIAM J. Numer. Anal.}, 7:1--46, 1970.

\bibitem{DingMatrixCompletion}
Lijun Ding and Yudong Chen.
\newblock Leave-one-out approach for matrix completion: primal and dual
  analysis.
\newblock {\em IEEE Trans. Inf. Theory}, 66(11):7274--7301, 2020.

\bibitem{DongShiDemixing}
Jialin Dong and Yuanming Shi.
\newblock Nonconvex demixing from bilinear measurements.
\newblock {\em IEEE Trans. Signal Process.}, 66(19):5152--5166, 2018.

\bibitem{du2017gradient}
Simon~S Du, Chi Jin, Jason~D Lee, Michael~I Jordan, Aarti Singh, and Barnabas
  Poczos.
\newblock Gradient descent can take exponential time to escape saddle points.
\newblock {\em Advances in neural information processing systems}, 30, 2017.

\bibitem{IRLS_Fornasier}
Massimo Fornasier, Holger Rauhut, and Rachel Ward.
\newblock Low-rank matrix recovery via iteratively reweighted least squares
  minimization.
\newblock {\em SIAM J. Optim.}, 21(4):1614--1640, 2011.

\bibitem{ge2016matrix}
Rong Ge, Jason~D Lee, and Tengyu Ma.
\newblock Matrix completion has no spurious local minimum.
\newblock {\em Advances in Neural Information Processing Systems}, 29, 2016.

\bibitem{grossmatrixcompletion}
David Gross.
\newblock Recovering low-rank matrices from few coefficients in any basis.
\newblock {\em IEEE Trans. Inf. Theory}, 57(3):1548--1566, 2011.

\bibitem{hardt_alternatingMin}
Moritz Hardt.
\newblock Understanding alternating minimization for matrix completion.
\newblock In {\em 55th {IEEE} Annual Symposium on Foundations of Computer
  Science, {FOCS} 2014, Philadelphia, PA, USA, October 18-21, 2014}, pages
  651--660. {IEEE} Computer Society, 2014.

\bibitem{horn1994topics}
Roger~A Horn and Charles~R Johnson.
\newblock {\em Topics in matrix analysis}.
\newblock Cambridge university press, 1994.

\bibitem{jain2010guaranteed}
Prateek Jain, Raghu Meka, and Inderjit Dhillon.
\newblock Guaranteed rank minimization via singular value projection.
\newblock {\em Advances in Neural Information Processing Systems}, 23, 2010.

\bibitem{jain2013low}
Prateek Jain, Praneeth Netrapalli, and Sujay Sanghavi.
\newblock Low-rank matrix completion using alternating minimization.
\newblock In {\em Proceedings of the forty-fifth annual ACM symposium on Theory
  of computing}, pages 665--674, 2013.

\bibitem{jin2023understanding}
Jikai Jin, Zhiyuan Li, Kaifeng Lyu, Simon~Shaolei Du, and Jason~D Lee.
\newblock Understanding incremental learning of gradient descent: A
  fine-grained analysis of matrix sensing.
\newblock In {\em International Conference on Machine Learning}, pages
  15200--15238. PMLR, 2023.

\bibitem{johnson1994chi}
NL~Johnson, S~Kotz, and N~Balakrishnan.
\newblock Chi-squared distributions including {C}hi and {R}ayleigh.
\newblock {\em Continuous univariate distributions}, pages 415--493, 1994.

\bibitem{jungblindemixing}
Peter Jung, Felix Krahmer, and Dominik St{\"o}ger.
\newblock Blind demixing and deconvolution at near-optimal rate.
\newblock {\em IEEE Trans. Inf. Theory}, 64(2):704--727, 2018.

\bibitem{keshavanMatrixCompletion}
Raghunandan~H. Keshavan, Andrea Montanari, and Sewoong Oh.
\newblock Matrix completion from a few entries.
\newblock {\em IEEE Trans. Inf. Theory}, 56(6):2980--2998, 2010.

\bibitem{krahmer2014suprema}
Felix Krahmer, Shahar Mendelson, and Holger Rauhut.
\newblock Suprema of chaos processes and the restricted isometry property.
\newblock {\em Commun. Pure Appl. Math.}, 67(11):1877--1904, 2014.

\bibitem{IRLS_Kuemmerle1}
Christian K{\"u}mmerle and Juliane Sigl.
\newblock Harmonic mean iteratively reweighted least squares for low-rank
  matrix recovery.
\newblock {\em J. Mach. Learn. Res.}, 19:49, 2018.
\newblock Id/No 47.

\bibitem{kummerle2021scalable}
Christian K{\"u}mmerle and Claudio~M Verdun.
\newblock A scalable second order method for ill-conditioned matrix completion
  from few samples.
\newblock In {\em International Conference on Machine Learning}, pages
  5872--5883. PMLR, 2021.

\bibitem{LeeSaddlePoints}
Jason~D. Lee, Ioannis Panageas, Georgios Piliouras, Max Simchowitz, Michael~I.
  Jordan, and Benjamin Recht.
\newblock First-order methods almost always avoid strict saddle points.
\newblock {\em Math. Program.}, 176(1-2 (B)):311--337, 2019.

\bibitem{Admira_Kiryung}
Kiryung Lee and Yoram Bresler.
\newblock {ADMiRA}: atomic decomposition for minimum rank approximation.
\newblock {\em IEEE Trans. Inf. Theory}, 56(9):4402--4416, 2010.

\bibitem{KiryungALS}
Kiryung Lee and Dominik St\"{o}ger.
\newblock Randomly initialized alternating least squares: Fast convergence for
  matrix sensing.
\newblock {\em SIAM Journal on Mathematics of Data Science}, 5(3):774--799,
  2023.

\bibitem{LiZhuSoVidalNonconvex}
Xiao Li, Zhihui Zhu, Anthony Man-Cho~So, and Ren{\'e} Vidal.
\newblock Nonconvex robust low-rank matrix recovery.
\newblock {\em SIAM J. Optim.}, 30(1):660--686, 2020.

\bibitem{LiMaChenChiNonconvex}
Yuanxin Li, Cong Ma, Yuxin Chen, and Yuejie Chi.
\newblock Nonconvex matrix factorization from rank-one measurements.
\newblock {\em IEEE Trans. Inf. Theory}, 67(3):1928--1950, 2021.

\bibitem{li2018algorithmic}
Yuanzhi Li, Tengyu Ma, and Hongyang Zhang.
\newblock Algorithmic regularization in over-parameterized matrix sensing and
  neural networks with quadratic activations.
\newblock In {\em Conference On Learning Theory}, pages 2--47. PMLR, 2018.

\bibitem{lingblindemixing}
Shuyang Ling and Thomas Strohmer.
\newblock Blind deconvolution meets blind demixing: algorithms and performance
  bounds.
\newblock {\em IEEE Trans. Inf. Theory}, 63(7):4497--4520, 2017.

\bibitem{LingDemixingNonconvex}
Shuyang Ling and Thomas Strohmer.
\newblock Regularized gradient descent: a non-convex recipe for fast joint
  blind deconvolution and demixing.
\newblock {\em Inf. Inference}, 8(1):1--49, 2019.

\bibitem{MaCongNonconvex}
Cong Ma, Kaizheng Wang, Yuejie Chi, and Yuxin Chen.
\newblock Implicit regularization in nonconvex statistical estimation: gradient
  descent converges linearly for phase retrieval, matrix completion, and blind
  deconvolution.
\newblock {\em Found. Comput. Math.}, 20(3):451--632, 2020.

\bibitem{ma2024convergence}
Jianhao Ma and Salar Fattahi.
\newblock Convergence of gradient descent with small initialization for
  unregularized matrix completion.
\newblock {\em arXiv preprint arXiv:2402.06756}, 2024.

\bibitem{IRLS_fazel}
Karthik Mohan and Maryam Fazel.
\newblock Iterative reweighted algorithms for matrix rank minimization.
\newblock {\em J. Mach. Learn. Res.}, 13:3441--3473, 2012.

\bibitem{Riemannian_olikier2023}
Guillaume Olikier, Andr{\'e} Uschmajew, and Bart Vandereycken.
\newblock Gauss-southwell type descent methods for low-rank matrix
  optimization.
\newblock {\em arXiv preprint arXiv:2306.00897}, 2023.

\bibitem{park2017non}
Dohyung Park, Anastasios Kyrillidis, Constantine Carmanis, and Sujay Sanghavi.
\newblock Non-square matrix sensing without spurious local minima via the
  burer-monteiro approach.
\newblock In {\em Artificial Intelligence and Statistics}, pages 65--74. PMLR,
  2017.

\bibitem{rechtfazelnuclearnormmin}
Benjamin Recht, Maryam Fazel, and Pablo~A. Parrilo.
\newblock Guaranteed minimum-rank solutions of linear matrix equations via
  nuclear norm minimization.
\newblock {\em SIAM Rev.}, 52(3):471--501, 2010.

\bibitem{soltanolkotabi2023implicit}
Mahdi Soltanolkotabi, Dominik St{\"o}ger, and Changzhi Xie.
\newblock Implicit balancing and regularization: Generalization and convergence
  guarantees for overparameterized asymmetric matrix sensing.
\newblock In {\em The Thirty Sixth Annual Conference on Learning Theory}, pages
  5140--5142. PMLR, 2023.

\bibitem{stoger2021small}
Dominik St{\"o}ger and Mahdi Soltanolkotabi.
\newblock Small random initialization is akin to spectral learning:
  Optimization and generalization guarantees for overparameterized low-rank
  matrix reconstruction.
\newblock {\em Advances in Neural Information Processing Systems},
  34:23831--23843, 2021.

\bibitem{SunMatrixCompletion}
Ruoyu Sun and Zhi-Quan Luo.
\newblock Guaranteed matrix completion via non-convex factorization.
\newblock {\em IEEE Trans. Inf. Theory}, 62(11):6535--6579, 2016.

\bibitem{GenericChaining}
Michel Talagrand.
\newblock {\em The generic chaining. {Upper} and lower bounds of stochastic
  processes}.
\newblock Springer Monogr. Math. Berlin: Springer, 2005.

\bibitem{NIHT_Tanner}
Jared Tanner and Ke~Wei.
\newblock Normalized iterative hard thresholding for matrix completion.
\newblock {\em SIAM J. Sci. Comput.}, 35(5):s104--s125, 2013.

\bibitem{TongScaledGD}
Tian Tong, Cong Ma, and Yuejie Chi.
\newblock Accelerating ill-conditioned low-rank matrix estimation via scaled
  gradient descent.
\newblock {\em J. Mach. Learn. Res.}, 22:63, 2021.
\newblock Id/No 150.

\bibitem{tu2016low}
Stephen Tu, Ross Boczar, Max Simchowitz, Mahdi Soltanolkotabi, and Ben Recht.
\newblock Low-rank solutions of linear matrix equations via procrustes flow.
\newblock In {\em International Conference on Machine Learning}, pages
  964--973. PMLR, 2016.

\bibitem{UschmajewSaddlePoints}
Andr{\'e} Uschmajew and Bart Vandereycken.
\newblock On critical points of quadratic low-rank matrix optimization
  problems.
\newblock {\em IMA J. Numer. Anal.}, 40(4):2626--2651, 2020.

\bibitem{Riemannian_Vandereycken}
Bart Vandereycken.
\newblock Low-rank matrix completion by {Riemannian} optimization.
\newblock {\em SIAM J. Optim.}, 23(2):1214--1236, 2013.

\bibitem{vershynin2010introduction}
Roman Vershynin.
\newblock Introduction to the non-asymptotic analysis of random matrices.
\newblock {\em arXiv preprint arXiv:1011.3027}, 2010.

\bibitem{vershynin2018high}
Roman Vershynin.
\newblock {\em High-dimensional probability. {An} introduction with
  applications in data science}, volume~47 of {\em Camb. Ser. Stat. Probab.
  Math.}
\newblock Cambridge: Cambridge University Press, 2018.

\bibitem{wei2016guarantees}
Ke~Wei, Jian-Feng Cai, Tony~F Chan, and Shingyu Leung.
\newblock Guarantees of {R}iemannian optimization for low rank matrix recovery.
\newblock {\em SIAM J. Matrix Anal. Appl.}, 37(3):1198--1222, 2016.

\bibitem{wind2023asymmetric}
Johan~S Wind.
\newblock Asymmetric matrix sensing by gradient descent with small random
  initialization.
\newblock {\em arXiv preprint arXiv:2309.01796}, 2023.

\bibitem{xu2023powerOverparameterized}
Xingyu Xu, Yandi Shen, Yuejie Chi, and Cong Ma.
\newblock The power of preconditioning in overparameterized low-rank matrix
  sensing.
\newblock In {\em International Conference on Machine Learning}, pages
  38611--38654. PMLR, 2023.

\bibitem{zhangRIP}
Richard~Y. Zhang, Somayeh Sojoudi, and Javad Lavaei.
\newblock Sharp restricted isometry bounds for the inexistence of spurious
  local minima in nonconvex matrix recovery.
\newblock {\em J. Mach. Learn. Res.}, 20:34, 2019.
\newblock Id/No 114.

\bibitem{zheng2015convergent}
Qinqing Zheng and John Lafferty.
\newblock A convergent gradient descent algorithm for rank minimization and
  semidefinite programming from random linear measurements.
\newblock {\em Advances in Neural Information Processing Systems}, 28, 2015.

\bibitem{zheng2016convergenceNonconvex}
Qinqing Zheng and John Lafferty.
\newblock Convergence analysis for rectangular matrix completion using
  burer-monteiro factorization and gradient descent.
\newblock {\em arXiv preprint arXiv:1605.07051}, 2016.

\bibitem{zhuo2024computational}
Jiacheng Zhuo, Jeongyeol Kwon, Nhat Ho, and Constantine Caramanis.
\newblock On the computational and statistical complexity of over-parameterized
  matrix sensing.
\newblock {\em Journal of Machine Learning Research}, 25(169):1--47, 2024.

\bibitem{ZilberNadlerNonConvex}
Pini Zilber and Boaz Nadler.
\newblock {GNMR}: a provable one-line algorithm for low rank matrix recovery.
\newblock {\em SIAM J. Math. Data Sci.}, 4(2):909--934, 2022.

\end{thebibliography}

\newpage

\appendix

\section{Proof for the Spectral Initialization
(Proof of Lemma \ref{lemma:spectralinitialization})}\label{appendix:spectral}

 \begin{proof}[Proof of Lemma \ref{lemma:spectralinitialization}]
(1) We write \begin{align} \left(\Aops \right)(\XXstar)-\XXstar=\frac{1}{m}\sum_{i=1}^m \left( \innerproduct{\AAi, \XXstar} \AAi  -  \XXstar\right). 
\end{align}
Let $\widetilde{\mathcal N_{\varepsilon}}$ be any $\varepsilon$-net on $S^{d-1}$ with $\varepsilon=\frac{1}{2}$ of size at most $6^d$. 
Then we have 
\begin{align}
    \specnorm{\left(\Aops \right)(\XXstar)-\XXstar}\leq &2\sup_{\xx\in \widetilde{\mathcal N_{\varepsilon}}} \frac{1}{m}\sum_{i=1}^m \xx^\top \left( \innerproduct{\AAi, \XXstar} \AAi  -  \XXstar\right)\xx\\
    =&2\sup_{\xx\in \widetilde{\mathcal N_{\varepsilon}}} \frac{1}{m}\sum_{i=1}^m  \left( \innerproduct{\AAi, \XXstar}\xx^\top \AAi \xx  -  \xx^\top \XXstar \xx\right).
\end{align}
For each $i\in [m]$, 
we have that $\EE \left[  \innerproduct{\AAi, \XXstar}\xx^\top \AAi \xx   \right] = \xx^\top \XXstar \xx$.
Moreover,
the inner product $\innerproduct{\AAi, \XXstar}$ is a centered Gaussian random variable with variance $\|\XXstar\|_F^2$ and $\xx^\top \AAi \xx$ is a centered Gaussian random variable with variance $1$.
Thus, for each fixed $\xx$, 
$\sum_{i=1}^m  \left( \innerproduct{\AAi, \XXstar}\xx^\top \AAi \xx  -  \xx^\top \XXstar \xx\right)$ 
is a sum of $m$ independent and centered sub-exponential random variables with subexponential norm bounded by  $K\|\XXstar\|_F$, 
where $K$ is an absolute constant (see \cite[Lemma 2.7.7]{vershynin2018high}). 
Therefore, by Bernstein's inequality (see, for example, \cite[Theorem 2.8.1]{vershynin2018high}),
it holds that 
\begin{align}
\mathbb P\left(\left| \frac{1}{m}\sum_{i=1}^m  \left( \innerproduct{\AAi, \XXstar}\xx^\top \AAi \xx  -  \xx^\top \XXstar \xx\right) \right| \geq  t \right)\leq \exp \left( -C' \min \left\{ \frac{mt^2}{\|\XXstar\|_F^2}, \frac{mt}{\|\XXstar\|_F}\right\} \right),
\end{align}
where $C'>0$ is some absolute constant.
Taking $t=\frac{1}{8}C\|\XXstar\|_F\left(\sqrt{\frac{d}{m}}+\frac{d}{m}\right)$ and a union bound over all points $\xx$ on $\widetilde{\mathcal N_{\varepsilon}}$, we obtain 
\begin{align}\label{eq:bernsteinAA}
\specnorm{(\Aops)(\XXstar)-\XXstar}\leq \frac{1}{4} C \|\XXstar\|_F \left(\sqrt{\frac{d}{m}}+\frac{d}{m}\right)\leq \frac{1}{4}C\kappa\sigma_{\min}(\XXstar) \sqrt{r}\left(\sqrt{\frac{d}{m}}+\frac{d}{m}\right)
\end{align}
with probability at least $1-\exp(d\log(6)-C'C^2d)\geq 1-\exp(-4d)$  for some sufficiently large constant $C>0$.  

We assume that \eqref{eq:bernsteinAA} holds and that $m>  C^2\kappa^2rd$.
Then Weyl's inequalities imply that
\begin{align}
\lambda_r((\Aops)(\XXstar))>\frac{1}{2} \sigma_{\min}(\XXstar), \quad  |\lambda_{r+1}((\Aops)(\XXstar))|<  \frac{1}{2} \sigma_{\min}(\XXstar).
\end{align}
Since $\widetilde{\LLambda}_r$ is a diagonal matrix with entries $\lambda_1((\Aops)(\XXstar)),\dots, \lambda_r((\Aops)(\XXstar))$,
it follows from the definition of $\UU_0= \widetilde{\VV}_r \widetilde{\LLambda}_r^{1/2}$
that $ \UU_0 \UU_0^\top $ is the best rank-$r$ approximation of $(\Aops) (\XXstar)$. Consequently, we obtain that
\begin{align}
    \specnorm{
        \XXstar - \UU_0 \UU_0^\top
        } &\leq \specnorm{
        \XXstar - (\Aops)(\XXstar)
        }  +\specnorm{(\Aops)(\XXstar)- \UU_0 \UU_0^\top}\\
        &\leq \specnorm{
        \XXstar - (\Aops)(\XXstar)
        }  +\specnorm{(\Aops) (\XXstar)- \XXstar} \leq  C\kappa\sigma_{\min}(\XXstar)\sqrt{\frac{rd}{m}},
\end{align}
where in the second inequality, we used the Eckart-Young-Mirsky theorem.

(2)  Due to Lemma~\ref{lemma:AopIdentities} we have
\begin{align}\label{eq:Awops_X}
    (\Aopws-\IdOp)(\XXstar)=(\Aops-\IdOp)(\Projwperp (\XXstar))-  \innerproduct{ \Aop (\ww \ww^\top), \Aop \left( \Projwperp (\XXstar) \right)  } \ww \ww^\top.
\end{align}
It follows that
\begin{align}\label{eq:AwAwbound}
  \norm{(\Aopws-\IdOp)(\XXstar)}\leq  \norm{(\Aops-\IdOp)(\Projwperp (\XXstar))}+|\innerproduct{ \Aop (\ww \ww^\top), \Aop \left( \Projwperp (\XXstar) \right)  } |.
\end{align}
For a fixed $\ww\in \mathcal N_{\varepsilon}$,
we obtain with an analogous argument as for \eqref{eq:bernsteinAA} that with probability at least $1-\exp(-4d)$,
\begin{align}
    \norm{(\Aops-\IdOp)(\Projwperp (\XXstar))}\leq C \fronorm{ \Projwperp (\XXstar)}\left(\sqrt{\frac{d}{m}}+\frac{d}{m}\right)\leq \frac{1}{4}C\kappa\sigma_{\min}(\XXstar) \sqrt{r}\left(\sqrt{\frac{d}{m}}+\frac{d}{m}\right).
\end{align}
The second term in \eqref{eq:AwAwbound} can be rewritten as 
\begin{align}
    \innerproduct{ \Aop (\ww \ww^\top), \Aop \left( \Projwperp (\XXstar) \right)  } 
    &= \frac{1}{m} \sum_{i=1}^m  \innerproduct{\ww\ww^\top, \AAi} \innerproduct{\AAi,\Projwperp (\XXstar)}.
\end{align}
Here, $\sum_{i=1}^m  \innerproduct{\ww\ww^\top, \AAi} \innerproduct{\AAi,\Projwperp (\XXstar)}$
 is a sum of $m$  independent sub-exponential random variables with mean zero due to the rotation invariance of the Gaussian measure. 
Moreover, each term has sub-exponential norm $K\fronorm{\XXstar}$.
Applying Bernstein's inequality as in the proof of  \eqref{eq:bernsteinAA},
we obtain that for each fixed $\ww$ with probability at least $1-\exp(-4d)$,
\begin{align}\label{eq:AwwB}
     \innerproduct{ \Aop (\ww \ww^\top), \Aop \left( \Projwperp (\XXstar) \right)  }\leq \frac{1}{4}C\kappa\sigma_{\min}(\XXstar) \sqrt{r}\left(\sqrt{\frac{d}{m}}+\frac{d}{m}\right). 
\end{align}
Then, by taking a union bound over $\ww\in \mathcal N_{\varepsilon}$,  
it follows from \eqref{eq:AwAwbound}
that with probability at least $1-\exp(-2d)$ 
that for all $\ww\in \mathcal N_{\varepsilon}$ it holds that
\begin{align}\label{eq:Aopsw_I}
    \specnorm{(\Aopws-\IdOp)(\XXstar)}\leq \frac{1}{2}C\kappa\sigma_{\min}(\XXstar) \sqrt{r}\left(\sqrt{\frac{d}{m}}+\frac{d}{m}\right).
\end{align}

We now assume that \eqref{eq:Aopsw_I} holds and that $m> 4C^2\kappa^2rd$. 
Then it follows from Weyl's inequalities that
\begin{align}
\lambda_r((\Aopws)(\XXstar))>\frac{1}{2} \sigma_{\min}(\XXstar), \quad  |\lambda_{r+1}((\Aopws)(\XXstar))|<  \frac{1}{2} \sigma_{\min}(\XXstar).
\end{align}
It follows from the Eckart-Mirsky-Young theorem and the definition of $\UU_{0,\ww}$ 
that $ \UU_{0,\ww} \UU_{0,\ww}^\top $ is the best rank-$r$ approximation of $(\Aopws) (\XXstar)$. Therefore,
\begin{align}
    \specnorm{
        \XXstar - \UU_{0,\ww} \UU_{0,\ww}^\top
        } &\leq \specnorm{
        \XXstar - (\Aopws)(\XXstar)
        }  +\specnorm{(\Aopws)(\XXstar)- \UU_{0,\ww} \UU_{0,\ww}^\top}\\
        &\leq 2\specnorm{
        \XXstar - (\Aopws)(\XXstar)
        } \leq 2C\kappa\sigma_{\min}(\XXstar) \sqrt{\frac{rd}{m}} .
\end{align}
This finishes the proof of inequality \eqref{eq:XstarUwUw}.
Finally,   \eqref{eq:diff_U0_U0w} follows from \eqref{eq:XstartU0U0}  and \eqref{eq:XstarUwUw}  via the triangle inequality.

(3)  From \eqref{eq:Awops_X}, we have 
\begin{align}
 \left(\Aops \right)(\XXstar)
 - \left(\Aopws \right)(\XXstar)
 &=(\Aops-\IdOp)(\XXstar)- (\Aopws-\IdOp)(\XXstar)\\
 &=\langle \ww\ww^\top,\XXstar\rangle (\Aops-\IdOp)(\ww\ww^\top)+ \innerproduct{\Aop(\ww\ww^\top), \Aop(\Projwperp(\XXstar))}\ww\ww^\top.
\end{align}

It follows from Lemma~\ref{lem:rank_RIP}
that there exists an absolute constant $C_1>0$ 
such that for any $\alpha\in (0,1)$ and $m\geq \frac{C_1}{\alpha^2} \kappa^2 rd$, with probability at least $1-\exp(-d)$,
the measurement operator $\mathcal{A}$ satisfies the 
Restricted Isometry Property of order $6r$ with constant 
\begin{align}\label{eq:delta_c}
\delta:=\delta_{6r}\leq \frac{\alpha}{\kappa}.
\end{align}
Then for any $\VV\in \mathbb R^{d\times r}$ with orthonormal columns and for all $\ww\in\mathcal N_{\varepsilon}$, when $m\geq  \frac{C_1}{\alpha^2} \kappa^2 rd$, with probability at least $1-2\exp(-d)$,
\begin{align}
 &\fronorm{ (\Aops- \Aopws)(\XXstar)\VV}\\
 \leq & |\langle \ww\ww^\top,\XXstar\rangle|  \fronorm{(\Aops-\IdOp)(\ww\ww^\top)\VV}+|\innerproduct{\Aop(\ww\ww^\top), \Aop(\Projwperp(\XXstar))}| \fronorm{\ww\ww^\top\VV}\\
 \overleq{(a)} & \delta\specnorm{\XXstar}  \fronorm{\ww\ww^\top}+ |\innerproduct{\Aop(\ww\ww^\top), \Aop(\Projwperp(\XXstar))}| \\
 \overleq{(b)} & \alpha \sigma_{\min}(\XXstar)+ \frac{1}{2}C\kappa\sigma_{\min}(\XXstar) \sqrt{\frac{rd}{m}}. \label{eq:diff_Awops_Aops}
\end{align}
Here in (a) we use property \eqref{ineq:RIPlemma1} in Lemma \ref{lemma: RIP} and the fact that $\ww\ww^\top \VV$ is of rank 1, 
and in (b) we use  \eqref{eq:delta_c} and, moreover,  \eqref{eq:AwwB} with a union bound over $\ww\in\mathcal N_{\varepsilon}$.

We now proceed under the assumption that the inequalities in parts (1) and (2) hold. We use the following notations for spectral initialization: 
\begin{align}
    \left(\Aops \right)(\XXstar)
    &= \widetilde\VV\widetilde\LLambda \widetilde\VV^\top, \quad 
    \UU_{0}=\widetilde \VV_{r}{\widetilde\LLambda_{r}}^{1/2}, \label{eq:defU0}\\
    \left(\Aopws \right)(\XXstar)&= \widetilde\VV_{\ww}\widetilde\LLambda_{\ww} \widetilde\VV_{\ww}^\top, \quad 
    \UU_{0,\ww}=\widetilde \VV_{r,\ww}{\widetilde\LLambda_{r,\ww}}^{1/2}.
\end{align}
Denote  
\[\ZZ_{1}:= (\Aops)(\XXstar), \quad \ZZ_2:= (\Aopws)(\XXstar),\]
and 
\[ \ZZ_{1,r}:=\UU_0\UU_0^\top, \quad \ZZ_{2,r}:=\UU_{0,\ww}\UU_{0,\ww}^\top.
\]
Recall the definition of $\widetilde{\VV}_r$ and $\widetilde{\VV}_{r,\ww}$ in \eqref{eq:defU0} and \eqref{eq:defVw}.
We have 
\begin{align}
    \|\ZZ_{1,r}-\ZZ_{2,r}\|_F&=\|\UU_0\UU_0^\top-\UU_{0,\ww}\UU_{0,\ww}^\top\|_F\\
    &\leq \| \left( \UU_0\UU_0^\top-\UU_{0,\ww}\UU_{0,\ww}^\top\right) \widetilde{\VV}_{r}\|_F+\| \left( \UU_0\UU_0^\top-\UU_{0,\ww}\UU_{0,\ww}^\top\right) \widetilde{\VV}_{r,\perp}\|_F .\label{eq:twotermsZ1Z2}
\end{align}
For the first term in \eqref{eq:twotermsZ1Z2}, we have 
\begin{align}
     &\| \left( \UU_0\UU_0^\top-\UU_{0,\ww}\UU_{0,\ww}^\top\right) \widetilde{\VV}_{r}\|_F\\
     =&\| (\ZZ_1-\ZZ_{2,r})\widetilde{\VV}_r\|_F\\
     \leq & \| (\ZZ_1-\ZZ_{2})\widetilde{\VV}_r\|_F+\|(\ZZ_2-\ZZ_{2,r})\widetilde{\VV}_r\|_F\\
     = &\| (\ZZ_1-\ZZ_{2})\widetilde{\VV}_r\|_F+\|(\widetilde{\VV}_{r,\ww,\perp}\LLambda_{r,\ww,\perp}\widetilde{\VV}_{r,\ww,\perp}^\top)\widetilde{\VV}_r\|_F\\
     \leq & \| (\ZZ_1-\ZZ_{2})\widetilde{\VV}_r\|_F+\sigma_{r+1}(\ZZ_2) \| \widetilde{\VV}_{r,\ww,\perp}^\top\widetilde{\VV}_r\|_F\\
     \leq & \| (\ZZ_1-\ZZ_{2})\widetilde{\VV}_r\|_F+C\kappa\sigma_{\min}(\XXstar)  \sqrt{\frac{rd}{m}} \| \widetilde{\VV}_{r,\ww,\perp}^\top\widetilde{\VV}_r\|_F, \label{eq:UUUwUw}
\end{align}
where in the last inequality we used Weyl's inequality and \eqref{eq:bernsteinAA}, 
which implies 
\begin{align}\label{eq:sigma_r_Z2}
    \sigma_{r+1}(\ZZ_2) = |\sigma_{r+1}(\ZZ_2)-\sigma_{r+1}(\XXstar)|\leq \| \ZZ_2-\XXstar\| \leq C\kappa\sigma_{\min}(\XXstar)  \sqrt{\frac{rd}{m}}.
\end{align}
From \eqref{eq:Aopsw_I}  and \eqref{eq:bernsteinAA},
it follows that when $m\geq C^2\kappa^2 rd$,
\begin{align}\label{eq:Z1Z2}
    \|\ZZ_1-\ZZ_2\| \leq \frac{3C}{2}\kappa\sigma_{\min}(\XXstar) \sqrt{\frac{rd}{m}}.
\end{align}
Similar to \eqref{eq:sigma_r_Z2}, using \eqref{eq:bernsteinAA} and Weyl's inequalities
we obtain that
\begin{align}
    |\sigma_r(\ZZ_1)-\sigma_{\min}(\XXstar)| &\leq C\kappa\sigma_{\min}(\XXstar) \sqrt{\frac{rd}{m}},\\
   \sigma_{r+1}(\ZZ_1) &\leq C\kappa\sigma_{\min}(\XXstar) \sqrt{\frac{rd}{m}}.
\end{align}
Therefore, if $m>16C^2\kappa^2rd$, 
the spectral gap between $\sigma_r (\ZZ_1)$ and $\sigma_{r+1} (\ZZ_1)$ 
can be bounded from below by
\begin{align}
    \sigma_r(\ZZ_1)-\sigma_{r+1}(\ZZ_1) \geq \left(1-2C\kappa \sqrt{\frac{rd}{m}}\right)\sigma_{\min}(\XXstar)\geq \frac{1}{2}\sigma_{\min}(\XXstar). \label{eq:Z1Z22}
\end{align}
When  $m\geq 51C^2\kappa^2 rd$,
we have from \eqref{eq:Z1Z2} and \eqref{eq:Z1Z22},
\begin{align}
    \specnorm{\ZZ_1-\ZZ_2}&\leq \frac{3C}{2}\kappa \sqrt{\frac{rd}{m}}\sigma_{\min}(\XXstar) \\
   & \leq \bigg(1-\frac{1}{\sqrt 2}\bigg)\left(1-2C\kappa  \sqrt{\frac{rd}{m}}  \right)\sigma_{\min}(\XXstar)\\
   &\leq 
   \bigg(1-\frac{1}{\sqrt 2}\bigg) (\sigma_r(\ZZ_1)-\sigma_{r+1}(\ZZ_1)).
\end{align}
Thus, the prerequisites of Lemma \ref{lem:DavisKahan} (Davis-Kahan inequality) are satisfied. 
It follows that when $m\geq 51C^2\kappa^2 rd$,
\begin{align}\label{eq:subspace_angle}
    \| \widetilde{\VV}_{r,\ww,\perp}^\top\widetilde{\VV}_r\|_F
    \leq 
    \frac{2\sqrt{2}\| (\ZZ_1-\ZZ_{2})\widetilde{\VV}_r\|_F }{\sigma_{\min}(\XXstar)}.
\end{align}
Hence,
when $m\geq \left(51C^2+\frac{C_1}{\alpha^2}\right)\kappa^2 rd$,
we obtain from \eqref{eq:UUUwUw} and \eqref{eq:diff_Awops_Aops} that
\begin{align}
     \| \left( \UU_0\UU_0^\top-\UU_{0,\ww}\UU_{0,\ww}^\top\right) \widetilde{\VV}_{r}\|_F&
     \leq 
     \left( 1+2\sqrt{2}C\kappa \sqrt{\frac{rd}{m}}\right)
     \fronorm{(\ZZ_1-\ZZ_2)\widetilde{\VV
     }_r}\\
     &\leq 2\fronorm{(\ZZ_1-\ZZ_2)\widetilde{\VV
     }_r}\leq \left( 2\alpha +C\kappa\sqrt{\frac{rd}{m}}\right)\sigma_{\min}(\XXstar).\label{eq:rdm}
\end{align}
   For the second term in \eqref{eq:twotermsZ1Z2}, we have when $m\geq \left(51C^2+\frac{C_1}{\alpha^2}\right)\kappa^2 rd$,
\begin{align}
    &\| \left( \UU_0\UU_0^\top-\UU_{0,\ww}\UU_{0,\ww}^\top\right) \widetilde{\VV}_{r,\perp}\|_F\\
    \leq & \| \widetilde{\VV}_r^\top\left( \UU_0\UU_0^\top-\UU_{0,\ww}\UU_{0,\ww}^\top\right) \widetilde{\VV}_{r,\perp}\|_F+\| \widetilde{\VV}_{r,\perp}^\top\left( \UU_0\UU_0^\top-\UU_{0,\ww}\UU_{0,\ww}^\top\right) \widetilde{\VV}_{r,\perp}\|_F \\
    \leq & \| \widetilde{\VV}_r^\top\left( \UU_0\UU_0^\top-\UU_{0,\ww}\UU_{0,\ww}^\top\right) \|_F+\| \widetilde{\VV}_{r,\perp}^\top \UU_{0,\ww}\UU_{0,\ww}^\top  \widetilde{\VV}_{r,\perp}\|_F\\
    \leq & \left( 2\alpha +C\kappa\sqrt{\frac{rd}{m}}\right)\sigma_{\min}(\XXstar)+\| \widetilde{\VV}_{r,\perp}^\top \UU_{0,\ww}\UU_{0,\ww}^\top  \widetilde{\VV}_{r,\perp}\|_F, \label{eq:28second1}
\end{align}
where the last inequality is due to \eqref{eq:rdm}.

We now consider the second term in \eqref{eq:28second1}. Recall the definition of $\UU_{0,\ww}$ in \eqref{eq:defU0w}. We have for $m\geq \left(51C^2+\frac{C_1}{\alpha^2}\right)\kappa^2 rd$,
\begin{align}
\| \widetilde{\VV}_{r,\perp}^\top \UU_{0,\ww}\UU_{0,\ww}^\top  \widetilde{\VV}_{r,\perp}\|_F&=\fronorm{\widetilde{\VV}_{r,\perp}^\top\widetilde{\VV}_{r,\ww}\LLambda_{r,\ww}\widetilde{\VV}_{r,\ww}^\top\widetilde{\VV}_{r,\perp}}\\
&\leq \specnorm{\widetilde{\VV}_{r,\perp}^\top\widetilde{\VV}_{r,\ww}\LLambda_{r,\ww}}\fronorm{\widetilde{\VV}_{r,\ww}^\top\widetilde{\VV}_{r,\perp}}\\
&=\sqrt{\specnorm{\widetilde{\VV}_{r,\perp}^\top\widetilde{\VV}_{r,\ww}\LLambda^2_{r,\ww}\widetilde{\VV}_{r,\ww}^\top\widetilde{\VV}_{r,\perp}}}\fronorm{\widetilde{\VV}_{r,\ww}^\top\widetilde{\VV}_{r,\perp}}\\
&=\sqrt{\specnorm{\widetilde{\VV}_{r,\perp}^\top(\UU_{0,\ww}\UU_{0,\ww}^\top)^2\widetilde{\VV}_{r,\perp}}}\fronorm{\widetilde{\VV}_{r,\ww}^\top\widetilde{\VV}_{r,\perp}}\\
&=\specnorm{\widetilde{\VV}_{r,\perp}^\top\UU_{0,\ww}\UU_{0,\ww}^\top}\fronorm{\widetilde{\VV}_{r,\ww}^\top\widetilde{\VV}_{r,\perp}}\\
&=\specnorm{\widetilde{\VV}_{r,\perp}^\top(\UU_{0,\ww}\UU_{0,\ww}^\top-\UU_0\UU_0^\top)}\fronorm{\widetilde{\VV}_{r,\ww}^\top\widetilde{\VV}_{r,\perp}}\\
&\leq \specnorm{\UU_{0,\ww}\UU_{0,\ww}^\top-\UU_0\UU_0^\top}\fronorm{\widetilde{\VV}_{r,\ww}^\top\widetilde{\VV}_{r,\perp}}\\
&\overleq{(a)} 3C\kappa\sigma_{\min}(\XXstar)\sqrt{\frac{rd}{m}} \cdot \frac{2\sqrt{2}\| (\ZZ_1-\ZZ_{2})\widetilde{\VV}_r\|_F }{\sigma_{\min}(\XXstar)}\\
&\overleq{(b)} 6\sqrt{2} C\kappa \left(\alpha+\frac {1}{2}C \kappa \sqrt{\frac{rd}{m}} \right)\sqrt{\frac{rd}{m}}\sigma_{\min}(\XXstar), \label{eq:28second2}
\end{align}
where (a) is due to \eqref{eq:diff_U0_U0w} and \eqref{eq:subspace_angle}, and (b) is due to \eqref{eq:diff_Awops_Aops}.  Therefore from \eqref{eq:28second1} and \eqref{eq:28second2}, we obtain for $m\geq \left(51C^2+\frac{C_1}{\alpha^2}\right)\kappa^2 rd$,
\begin{align}\label{eq:rdm2}
    &\| \left( \UU_0\UU_0^\top-\UU_{0,\ww}\UU_{0,\ww}^\top\right) \widetilde{\VV}_{r,\perp}\|_F\\
    \leq & \left( 2\alpha +C\kappa\sqrt{\frac{rd}{m}}\right)\sigma_{\min}(\XXstar)
    +6\sqrt{2} C\kappa \left( \alpha +\frac{1}{2}C \kappa \sqrt{\frac{rd}{m}} \right)\sqrt{\frac{rd}{m}}\sigma_{\min}(\XXstar). 
\end{align}
From \eqref{eq:rdm}, \eqref{eq:rdm2}, and \eqref{eq:twotermsZ1Z2}, we conclude that if $m\geq \left(51C^2+\frac{C_1}{\alpha^2}\right)\kappa^2 rd$,
\begin{align}
     \fronorm{
        \UU_{0} \UU_{0}^\top
        -
        \UU_{0,\ww} \UU_{0,\ww}^\top
        }\leq \left(  2\alpha +C\kappa\sqrt{\frac{rd}{m}}\right)\left(2\sigma_{\min}(\XXstar)+3\sqrt 2 C\kappa \sqrt{\frac{rd}{m}}\sigma_{\min}(\XXstar)\right).
\end{align}
This finishes the proof of \eqref{eq:diff_r_Awops_Aops}.
 \end{proof}

\section{Proofs of lemmas concerning the distance 
between the virtual sequences and the original sequence}\label{sec:auxsequenceproofs}

\subsection{Some auxiliary estimates}
In order to prove Lemma \ref{lemma:auxdistanceweak} and Lemma \ref{lemma:auxsequencecloseness}
we will need several auxiliary estimates.
These are summarized in the following lemma.

\begin{lemma}\label{lemma:auxestimates}
Assume that the measurement operator $\mathcal{A}$
has the Restricted Isometry Property with constant $\delta=\delta_{4r+1} \le 1$.
Moreover, assume that the conclusion of Lemma \ref{lemma:independencebound} holds. 
Then, the following inequalities hold.
\begin{enumerate}
\item
\begin{align}
    &\fronorm{\left[ \left(  \Aops - \Aopws \right) \left(\XXstar- \UUt \UUt^\top \right) \right]\VV_{\UUtw}}\\
    \le
    &\left(\delta +  \frac{8 \sqrt{rd}}{\sqrt{m}}  \right) 
    \specnorm{ \XXstar - \UUt \UUt^\top }
    +\left( \delta  +  \frac{4 \sqrt{ 2 d}}{\sqrt{m}}  \right) \fronorm{\UUt \UUt^\top - \UUtw \UUtw^\top},
    \label{ineq:intern4}
\end{align}
\item 
\begin{align}
    \fronorm{
        \left[\left(   \Aopws - \IdOp \right) \left( \UUtw \UUtw^\top - \UUt \UUt^\top \right) \right] \VV_{\UUtw}
    } 
    \le 2 \delta \fronorm{  \UUtw \UUtw^\top - \UUt \UUt^\top  },
    \label{ineq:intern5}
\end{align}
\item
\begin{equation}
    \begin{split}
    \fronorm{\left[\left(  \Aops - \Aopws \right) \left(\XXstar- \UUtw \UUtw^\top \right) \right] \VV_{\UUtw}}
    \le
    \left( \delta+ 8 \sqrt{\frac{rd}{m}} \right) \specnorm{\XXstar - \UUtw \UUtw^\top},
    \end{split}
    \label{ineq:intern7}
\end{equation}
\item and
\begin{align}
   \specnorm{ \left( \Aopws -\IdOp \right) \left( \XXstar - \UUtw \UUtw^\top \right) }\nonumber
   \le
   &\specnorm{ \left( \Aops - \IdOp \right) \left(\XXstar - \UUt \UUt^\top \right) }
   +\left(  \delta +8 \sqrt{\frac{rd}{m}} \right)\specnorm{\XXstar - \UUt \UUt^\top }\nonumber \\
   &+\left(  2\delta + 4 \sqrt{\frac{ 2d}{m}}  \right)\fronorm{\UUtw \UUtw^\top -\UUt \UUt^\top}
   \label{ineq:intern9}.
\end{align}
\end{enumerate}
\end{lemma}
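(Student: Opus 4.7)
My plan rests on two algebraic identities derived from Lemma \ref{lemma:AopIdentities}. Writing any symmetric $\ZZ$ as $\Projw(\ZZ) + \Projwperp(\ZZ)$ and applying that lemma to each piece yields
\[
(\Aops - \Aopws)(\ZZ) = \langle \ww\ww^\top,\ZZ\rangle\, (\Aops - \IdOp)(\ww\ww^\top) + \langle \Aop(\ww\ww^\top),\Aop(\Projwperp(\ZZ))\rangle\, \ww\ww^\top
\]
and, using that $(\Aopws - \IdOp)(\Projw(\ZZ)) = 0$,
\[
(\Aopws - \IdOp)(\ZZ) = (\Aops - \IdOp)(\Projwperp(\ZZ)) - \langle \Aop(\ww\ww^\top),\Aop(\Projwperp(\ZZ))\rangle\, \ww\ww^\top.
\]
For each of the four claims I would substitute the appropriate $\ZZ$, right-multiply by $\VV_{\UUtw}$, and bound every summand either by the RIP via Lemma \ref{lemma: RIP} or, whenever the argument is independent of $\{\langle\AAi,\ww\ww^\top\rangle\}_{i=1}^m$, by Lemma \ref{lemma:independencebound}. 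The recurring building blocks are $\fronorm{(\Aops-\IdOp)(\ww\ww^\top)\VV_{\UUtw}}\leq \delta$ from \eqref{ineq:RIPlemma1}, $|\langle\Aop(\ww\ww^\top),\Aop(\Projwperp(\ZZ))\rangle|\leq \delta\fronorm{\ZZ}$ from \eqref{ineq:RIPlemma3}, and $\fronorm{\ww\ww^\top\VV_{\UUtw}}\leq 1$.

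Inequality \eqref{ineq:intern7} is the cleanest: take $\ZZ = \XXstar - \UUtw\UUtw^\top$ in the first identity. The first summand contributes at most $\delta\specnorm{\XXstar-\UUtw\UUtw^\top}$. For the second, Lemma \ref{lemma:independencebound} applies because $\UUtw$ is independent of $\{\langle\AAi,\ww\ww^\top\rangle\}_i$ by construction; combining its conclusion with $\twonorm{\Aop(\Projwperp(\XXstar-\UUtw\UUtw^\top))} \leq \sqrt{1+\delta}\,\fronorm{\XXstar-\UUtw\UUtw^\top}$ and the rank-$2r$ bound $\fronorm{\cdot}\leq\sqrt{2r}\specnorm{\cdot}$ produces the $8\sqrt{rd/m}\specnorm{\XXstar-\UUtw\UUtw^\top}$ term. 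Inequality \eqref{ineq:intern5} is similarly direct: apply the second identity with $\ZZ = \UUtw\UUtw^\top - \UUt\UUt^\top$. Both summands are bounded by $\delta\fronorm{\ZZ}$ — the first by \eqref{ineq:RIPlemma1} (noting $\Projwperp(\ZZ)$ has rank at most $4r+1$), the second by \eqref{ineq:RIPlemma3} — yielding the factor $2\delta$.

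The main obstacle is \eqref{ineq:intern4}, where $\ZZ = \XXstar - \UUt\UUt^\top$ is no longer independent of $\{\langle\AAi,\ww\ww^\top\rangle\}_i$, yet the target keeps the sharp coefficient $4\sqrt{2d/m}$ on $\fronorm{\UUt\UUt^\top-\UUtw\UUtw^\top}$ rather than the worse $8\sqrt{rd/m}$ that a naive argument produces. My approach is to decompose $\XXstar-\UUt\UUt^\top = (\XXstar-\UUtw\UUtw^\top) - (\UUt\UUt^\top-\UUtw\UUtw^\top)$ and apply Lemma \ref{lemma:independencebound} \emph{only} to the first piece, giving $4\sqrt{2d/m}\,\fronorm{\XXstar-\UUtw\UUtw^\top}$. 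The key point is that the triangle inequality $\fronorm{\XXstar-\UUtw\UUtw^\top}\leq\fronorm{\XXstar-\UUt\UUt^\top}+\fronorm{\UUt\UUt^\top-\UUtw\UUtw^\top}$ is invoked \emph{after} Lemma \ref{lemma:independencebound}, and the rank-$2r$ conversion $\sqrt{2r}\specnorm{\cdot}$ is used only on $\fronorm{\XXstar-\UUt\UUt^\top}$; this preserves the clean $4\sqrt{2d/m}$ factor on the Frobenius term while producing the $8\sqrt{rd/m}$ factor on the spectral term. The leftover piece $(\Aops-\Aopws)(\UUt\UUt^\top-\UUtw\UUtw^\top)$ is handled by RIP alone (via \eqref{ineq:RIPlemma1} and \eqref{ineq:RIPlemma3}), contributing the two $\delta$-terms.

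Finally, for \eqref{ineq:intern9} I would write $(\Aopws-\IdOp)(\XXstar-\UUtw\UUtw^\top) = (\Aops-\IdOp)(\XXstar-\UUtw\UUtw^\top) - (\Aops-\Aopws)(\XXstar-\UUtw\UUtw^\top)$ and take spectral norms. The second term is bounded exactly as in \eqref{ineq:intern7}, with $\fronorm{\cdot}$ replaced by $\specnorm{\cdot}$ where appropriate (using $\specnorm{(\Aops-\IdOp)(\ww\ww^\top)}\leq \delta$ from \eqref{ineq:RIPlemma2} and $\specnorm{\ww\ww^\top}=1$). For the first term I split $\XXstar-\UUtw\UUtw^\top = (\XXstar-\UUt\UUt^\top)+(\UUt\UUt^\top-\UUtw\UUtw^\top)$, control the extra piece by $\delta\fronorm{\UUt\UUt^\top-\UUtw\UUtw^\top}$ via \eqref{ineq:RIPlemma2}, and use the triangle inequality $\specnorm{\XXstar-\UUtw\UUtw^\top}\leq \specnorm{\XXstar-\UUt\UUt^\top}+\fronorm{\UUt\UUt^\top-\UUtw\UUtw^\top}$ to convert the spectral bound. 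The sharp $4\sqrt{2d/m}$ coefficient on the Frobenius term is again preserved by the same ``split after Lemma \ref{lemma:independencebound}'' trick as in \eqref{ineq:intern4}.
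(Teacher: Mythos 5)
Your proposal follows essentially the same route as the paper: the two identities from Lemma \ref{lemma:AopIdentities}, the RIP consequences of Lemma \ref{lemma: RIP}, and Lemma \ref{lemma:independencebound} applied only to quantities built from the virtual iterate $\UUtw$, with the triangle inequality and the rank-$2r$ conversion $\fronorm{\cdot}\le\sqrt{2r}\,\specnorm{\cdot}$ placed exactly where the paper places them; items (2)--(4) come out with the stated constants. The one deviation is in \eqref{ineq:intern4}: the paper does \emph{not} split the whole argument of $\Aops-\Aopws$, but keeps $(\Aops-\IdOp)\left(\Projw\left(\XXstar-\UUt\UUtT\right)\right)$ intact (which directly yields $\delta\,\specnorm{\XXstar-\UUt\UUtT}$, since $\fronorm{\Projw(\XXstar-\UUt\UUtT)}=\vert\innerproduct{\ww\ww^\top,\XXstar-\UUt\UUtT}\vert$), and decouples only the scalar $\innerproduct{\Aop(\ww\ww^\top),\Aop(\Projwperp(\XXstar-\UUt\UUtT))}$ into the $\UUtw$-part (independence bound) and the difference part, the latter bounded by $\delta\fronorm{\UUt\UUtT-\UUtw\UUtw^\top}$ via \eqref{ineq:RIPlemma3}. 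Your operator-level split into $(\Aops-\Aopws)(\XXstar-\UUtw\UUtw^\top)$ and $(\Aops-\Aopws)(\UUt\UUtT-\UUtw\UUtw^\top)$ pays $\delta\,\specnorm{\XXstar-\UUtw\UUtw^\top}$ on the first piece plus $2\delta\,\fronorm{\UUt\UUtT-\UUtw\UUtw^\top}$ on the second, so after the triangle inequality the coefficient of $\fronorm{\UUt\UUtT-\UUtw\UUtw^\top}$ becomes $3\delta+4\sqrt{2d/m}$ rather than the stated $\delta+4\sqrt{2d/m}$ (and, relatedly, the $\delta\,\specnorm{\XXstar-\UUt\UUtT}$ term actually arises from the $\Projw$-component of the first piece, not from the leftover piece as you attribute it). This is only a constant-factor loss and is harmless for every downstream use of the lemma, but to obtain \eqref{ineq:intern4} verbatim you should perform the decoupling inside the inner product only, as the paper does.
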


\begin{proof}[Proof of Lemma \ref{lemma:auxestimates}]
To prove inequality \eqref{ineq:intern4}, we compute that
\begin{align*}
    (\Aopws) \left(\XXstar- \UUt \UUt^\top \right)
    =
    &(\Aopws) \left( \Projw \left( \XXstar- \UUt \UUt^\top \right) \right) 
    + (\Aopws) \left( \Projwperp \left(\XXstar- \UUt \UUt^\top \right) \right)\\
    \overeq{(a)}
    &(\Aops) \left(  \Projwperp \left( \XXstar- \UUt \UUt^\top \right) \right)
    +\Projw \left( \XXstar- \UUt \UUt^\top \right)\\
    &-\innerproduct{\Aop \left(\ww \ww^\top \right), \Aop \left( \Projwperp \left(\XXstar - \UUt \UUt^\top \right) \right)} \ww \ww^\top,
\end{align*}
where in equation $(a)$ we used Lemma \ref{lemma:AopIdentities}.
It follows that
\begin{align}
    \left(  \Aops - \Aopws \right) \left(\XXstar- \UUt \UUt^\top \right)
    =&
    \left( \Aops - \IdOp \right)
    \left( \Projw \left( \XXstar- \UUt \UUt^\top \right) \right) \nonumber \\
    &+\innerproduct{\Aop (\ww \ww^\top), \Aop \left( \Projwperp (\XXstar - \UUt \UUt^\top) \right)} \ww \ww^\top. \label{ineq:intern6}
\end{align}
By using the triangle inequality, we obtain the estimate
\begin{align}
    &\fronorm{\left(  \Aops - \Aopws \right) \left(\XXstar- \UUt \UUt^\top \right) \VV_{\UUtw}} \nonumber \\
    \le&
    \fronorm{ \left( \Aops - \IdOp \right) \left( \Projw \left( \XXstar- \UUt \UUt^\top \right) \right)\VV_{\UUtw}} 
    +
    \fronorm{\innerproduct{\Aop (\ww \ww^\top), \Aop \left( \Projwperp (\XXstar - \UUt \UUt^\top) \right)} \ww \ww^\top} \nonumber \\
    \overleq{(a)} &
    \delta \fronorm{ \Projw \left( \XXstar- \UUt \UUt^\top \right) }
    + \big\vert \innerproduct{\Aop (\ww \ww^\top), \Aop \left( \Projwperp (\XXstar - \UUt \UUt^\top) \right)} \big\vert \nonumber\\
    \overleq{(b)} &
    \delta \specnorm{ \XXstar - \UUt \UUt^\top }
    + \big\vert \innerproduct{ \Aop (\ww \ww^\top), \Aop \left( \Projwperp (\XXstar - \UUtw \UUtw^\top)  \right) } \big\vert \nonumber \\ 
    +& \big\vert \innerproduct{ \Aop (\ww \ww^\top), \Aop \left( \Projwperp (\UUt \UUt^\top - \UUtw \UUtw^\top)  \right) } \big\vert \nonumber \\
    \overleq{(c)} &
    \delta \specnorm{ \XXstar - \UUt \UUt^\top }
    + \frac{4 \sqrt{d}}{\sqrt{m}} \twonorm{\Aop \left( \Projwperp (\XXstar - \UUtw \UUtw^\top ) \right)}
    + \delta \fronorm{\UUt \UUt^\top - \UUtw \UUtw^\top} \nonumber \\
    \overleq{(d)} &
    \delta \specnorm{ \XXstar - \UUt \UUt^\top }
    + \frac{ 4 \sqrt{2d}}{\sqrt{m}} \fronorm{ \Projwperp (\XXstar - \UUtw \UUtw^\top ) }
    + \delta \fronorm{\UUt \UUt^\top - \UUtw \UUtw^\top} \nonumber \\
    \overleq{(e)} &
    \delta \specnorm{ \XXstar - \UUt \UUt^\top }
    + \frac{ 4 \sqrt{2d}}{\sqrt{m}} \fronorm{ \XXstar - \UUt \UUt^\top }
    + \left( \delta + \frac{4 \sqrt{2d} }{\sqrt{m}} \right) \fronorm{\UUt \UUt^\top - \UUtw \UUtw^\top} \nonumber \\
    \overleq{(f)} &
    \left(\delta +  \frac{ 8 \sqrt{rd}}{\sqrt{m}}  \right) \specnorm{ \XXstar - \UUt \UUt^\top }
    +  \left( \delta  +  \frac{ 4 \sqrt{2d}}{\sqrt{m}}  \right) \fronorm{\UUt \UUt^\top - \UUtw \UUtw^\top}.
    \nonumber 
\end{align}
Inequality $(a)$ follows from the RIP-assumption combined with Lemma \ref{lemma: RIP} 
and from the fact that $\twonorm{\ww}=1$.
Inequality $(b)$ is a consequence of the fact that $\Projw$ is a rank-one projection and of the triangle inequality.
In inequality $(c)$, we used that the conclusion of Lemma \ref{lemma:independencebound} holds and Lemma \ref{lemma: RIP}.
In inequality $(d)$, we used the RIP of rank $2r+1$.
Inequality $(e)$ is due to the fact that $\Projwperp$ is an orthogonal projection and due to the triangle inequality.
In inequality $(f)$, we used that $\XXstar - \UUt \UUt^\top$ has rank at most $2r$.
This proves inequality \eqref{ineq:intern4}.

To prove inequality \eqref{ineq:intern5} we compute first that 
\begin{align*}
   &\left(  \Aopws - \IdOp \right) \left( \UUtw \UUtw^\top - \UUt \UUt^\top \right)\\
   =
   &\left(  \Aops - \IdOp \right) \left( \Projwperp \left( \UUtw \UUtw^\top - \UUt \UUt^\top \right) \right)
   -
   \innerproduct{ \Aop (\ww \ww^\top), \Aop \left( \Projwperp (\UUtw \UUtw^\top - \UUt \UUt^\top) \right)  } \ww \ww^\top.
\end{align*}
It follows that 
\begin{align}
    &\fronorm{
        \left[\left(   \Aopws - \IdOp \right) \left( \UUtw \UUtw^\top - \UUt \UUt^\top \right) \right] \VV_{\UUtw}
    } \nonumber \\
    \overleq{(a)}& 
    \delta \fronorm{ \Projwperp \left( \UUtw \UUtw^\top - \UUt \UUt^\top \right) } 
    +
    \big\vert \innerproduct{  \Aop (\ww \ww^\top), \Aop \left( \Projwperp (\UUtw \UUtw^\top - \UUt \UUt^\top) \right) } \big\vert \nonumber \\
    \overleq{(b)}& 
    2\delta \fronorm{ \Projwperp ( \UUtw \UUtw^\top - \UUt \UUt^\top ) }\nonumber\\
    \le& 
    2\delta \fronorm{  \UUtw \UUtw^\top - \UUt \UUt^\top  }. \nonumber
\end{align}
In inequalities $(a)$ and $(b)$ we used Lemma \ref{lemma: RIP}.
This proves inequality \eqref{ineq:intern5}.

Next, we prove the third inequality.
For that, we observe that using Lemma \ref{lemma:AopIdentities} it holds that
\begin{align*}
    \left(  \Aops - \Aopws \right) \left(\XXstar- \UUtw \UUtw^\top \right)
    =
    &\left( \Aops - \IdOp \right)
    \left( \Projw \left( \XXstar- \UUtw \UUtw^\top \right) \right) \nonumber \\
    &+\innerproduct{\Aop (\ww \ww^\top), \Aop \left( \Projwperp (\XXstar - \UUtw \UUtw^\top) \right)}\ww\ww^\top.
\end{align*}
Then it follows that 
\begin{align}
    &\fronorm{\left(  \Aops - \Aopws \right) \left(\XXstar- \UUtw \UUtw^\top \right) \VV_{\UUtw}} \nonumber\\
    \le
    &\fronorm{\left( \Aops - \IdOp \right)
    \left( \Projw \left( \XXstar- \UUtw \UUtw^\top \right) \right) \VV_{\UUtw} }
    +
    \vert \innerproduct{\Aop (\ww \ww^\top), \Aop \left( \Projwperp (\XXstar - \UUtw \UUtw^\top) \right)} \vert \nonumber \\
    \overleq{(a)}
    &\delta \specnorm{\XXstar - \UUtw \UUtw^\top}
    + 4 \sqrt{\frac{d}{m}} \twonorm{    \Aop \left( \Projwperp (\XXstar - \UUtw \UUtw^\top) \right)  }
    \nonumber\\
    \overleq{(b)}
    &\delta \specnorm{\XXstar - \UUtw \UUtw^\top}
    + 4 \sqrt{\frac{ 2 d}{m}} \fronorm{   \XXstar - \UUtw \UUtw^\top  } \nonumber \\
    \le 
    & \left( \delta + 8 \sqrt{\frac{rd}{m}} \right) \specnorm{\XXstar - \UUtw \UUtw^\top},
    \nonumber 
\end{align}
where  inequality $(a)$ holds due to Lemma ~\ref{lemma: RIP},
since $  \Projwperp$ is a rank-one projection,
and since we assumed that the conclusion of Lemma \ref{lemma:independencebound} holds.
Inequality $(b)$ is again due to Lemma~\ref{lemma: RIP}
and since $\Projwperp$ is an orthogonal projection.
This proves inequality \eqref{ineq:intern7}.

It remains to prove inequality \eqref{ineq:intern9}.
We note that it holds that
\begin{align}\label{eq:Awops_minus_I}
&\left(  \Aopws - \IdOp \right) \left(\XXstar- \UUtw \UUtw^\top \right) \\
=
&\left(  \Aops - \IdOp \right) \left( \Projwperp \left(\XXstar- \UUtw \UUtw^\top \right) \right) 
- 
\innerproduct{\Aop(\ww \ww^\top ), \Aop ( \Projwperp \left(\XXstar- \UUtw \UUtw^\top \right) )} \ww \ww^\top,
\end{align}
where in the last line we applied Lemma \ref{lemma:AopIdentities}.
It follows from the triangle inequality that
\begin{align}
&\specnorm{ \left(  \Aopws - \IdOp \right) \left(\XXstar- \UUtw \UUtw^\top \right)  }\nonumber\\
\le 
&\specnorm{\left(  \Aops - \IdOp \right) \left( \XXstar- \UUt \UUt^\top \right)  }
+
\specnorm{\left(  \Aops - \IdOp \right) \left( \Projw \left(\XXstar- \UUtw \UUtw^\top \right) \right)  }\nonumber\\
&+
\specnorm{\left(  \Aops - \IdOp \right) \left(  \UUt \UUt^\top - \UUtw \UUtw^\top  \right)  }
+\big\vert  \innerproduct{\Aop \left(\ww \ww^\top \right), \Aop \left( \Projwperp \left(\XXstar- \UUtw \UUtw^\top \right) \right)} \big\vert
\nonumber\\
\overleq{(a)}
&  \specnorm{ \left( \Aops - \IdOp \right) \left(\XXstar - \UUt \UUt^\top \right) } 
+ \delta \fronorm{\Projw \left(\XXstar- \UUtw \UUtw^\top \right)}
+\delta \fronorm{\UUtw \UUtw^\top -\UUt \UUt^\top} \nonumber \\
&+ 4\sqrt{\frac{2 d}{m}} \fronorm{\XXstar- \UUtw \UUtw^\top}
\nonumber\\
\le
&  \specnorm{ \left( \Aops - \IdOp \right) \left(\XXstar - \UUt \UUt^\top \right) } 
+ \delta \specnorm{\XXstar - \UUt \UUt^\top }\\
%+ \delta \specnorm{\XXstar- \UUt \UUt^\top }
&+ 2\delta \fronorm{\UUtw \UUtw^\top -\UUt \UUt^\top} 
+4 \sqrt{\frac{2 d}{m}} \fronorm{\XXstar- \UUtw \UUtw^\top}
\nonumber \\
\le
&  \specnorm{ \left( \Aops - \IdOp \right) \left(\XXstar - \UUt \UUt^\top \right) } 
+\left(  \delta + 8 \sqrt{\frac{rd}{m}} \right)\specnorm{\XXstar - \UUt \UUt^\top } \\
&+\left(  2\delta+ 4 \sqrt{\frac{ 2 d}{m}}  \right)\fronorm{\UUtw \UUtw^\top -\UUt \UUt^\top}\nonumber.
\end{align}
In inequality $(a)$ we applied Lemma \ref{lemma: RIP}
and that the conclusion of Lemma \ref{lemma:independencebound} holds.
This proves inequality \eqref{ineq:intern9}.
Thus, the proof of Lemma \ref{lemma:auxestimates} is complete.
\end{proof}

\subsection{Proof of Lemma \ref{lemma:auxdistanceweak}}\label{subsec:proofauxdistanceweak}

\begin{proof}[Proof of Lemma \ref{lemma:auxdistanceweak}]
    We define the shorthand notation
    \begin{align*}
     \MM_t     &:= \left( \Aops \right) \left( \XXstar - \UUt \UUtT \right), \\
     \MM_{t,\ww} &:= \left( \Aopws \right) \left( \XXstar - \UUtw \UUtw^\top \right).
    \end{align*} 
    It follows that 
    \begin{align*}
     \UUtplus&= \left( \Id + \mu \MM_t \right) \UUt, \\
     \UUtplusw&= \left( \Id + \mu \MM_{t,\ww} \right) \UUtw.
    \end{align*}
    We compute that 
    \begin{align*}
     \UUtplus \UUtplus^\top - \UUtw \UUtplusw
     =
    &\left( \Id +\mu \MM_t \right) \UUt \UUt^\top (\Id + \mu \MM_t)
    -
    \left( \Id +\mu \MM_{t,\ww} \right) \UUtw \UUtw^\top (\Id + \mu \MM_{t,\ww})\\
    =
    &\UUt \UUtT - \UUtw \UUtw^\top
    +\mu \bracing{=:(i)}{ \MM_t (\UUt \UUtT -\UUtw \UUtw^\top)}
    +\mu \bracing{=:(ii)}{ \left( \MM_t - \MM_{t,\ww} \right) \UUtw \UUtw^\top}\\
    &+\mu \bracing{=:(iii)}{(\UUt \UUtT -\UUtw \UUtw^\top) \MM_t}
    +\mu  \bracing{=:(iv)}{ \UUtw \UUtw^\top \left( \MM_t - \MM_{t,\ww} \right)}\\
    &+\mu^2 
    \bracing{=:(v)}{\left( \MM_t \UUt \UUtT \MM_t - \MM_{t,\ww} \UUtw \UUtw^\top \MM_{t,\ww} \right)}.
    \end{align*}
    We want to estimate the spectral norm of these terms individually.
    Before that, we note that
    \begin{align}
     \specnorm{\MM_t}
     \overleq{(a)}
     &\specnorm{\XXstar - \UUt \UUt}
     +
     \specnorm{ \left(\Aops - \IdOp \right) (\XXstar - \UUt \UUtT)}\nonumber \\
     \overleq{(b)}
     &  \specnorm{\XXstar - \UUt \UUtT}
     + \constone \sigma_{\min} \left( \XXstar \right)
     \label{ineq:weakboundintern5}\\
     \overleq{(c)}
     & 2 \sigma_{\min} \left( \XXstar \right).
     \label{ineq:weakboundintern9}
    \end{align}
   Inequality $(a)$ follows from the triangle inequality and
   inequality $(b)$ follows from assumption \eqref{ineq:weakboundintern2}. 
   Inequality $(c)$ is a consequence of assumption \eqref{ineq:weakboundintern3}.
   Moreover, we note that 
   \begin{align*}
     \MM_t - \MM_{t,\ww}
     =
     \left( \Aops - \Aopws \right) \left( \XXstar - \UUt \UUtT \right)
     -
     \left( \Aopws \right) \left(\UUt \UUtT - \UUtw \UUtw^\top \right).
   \end{align*}
   It follows that
   \begin{align}
     &\fronorm{ \left( \MM_t - \MM_{t,\ww} \right) \VV_{\UUtw} }\\
     \le
     &\fronorm{ \left[\left( \Aops - \Aopws \right) \left( \XXstar - \UUt \UUtT \right)\right] \VV_{\UUtw}} 
     +
     \fronorm{\left[ \left( \Aopws -\IdOp \right) \left(\UUt \UUtT - \UUtw \UUtw^\top \right) \right] \VV_{\UUtw}} \nonumber \\
     &+
     \fronorm{\UUt \UUtT - \UUtw \UUtw^\top} \nonumber \\
     \overleq{(a)}
     &\left( \delta + \frac{8 \sqrt{rd} }{\sqrt{m}} \right) 
     \specnorm{\XXstar - \UUt \UUtT}
     +
     \left( 3\delta+ \frac{4 \sqrt{2d}}{\sqrt{m}}+1 \right) \fronorm{ \UUt \UUtT - \UUtw \UUtw^\top } \nonumber\\
     \overleq{(b)}
     &\frac{2\constthree}{\kappa} \specnorm{\XXstar - \UUt \UUtT}
     + \left( \frac{4 \constthree}{\kappa} +1 \right) \fronorm{\UUt \UUtT - \UUtw \UUtw^\top},
     \label{ineq:weakboundintern7}
   \end{align}
   where in inequality $(a)$ we used inequalities \eqref{ineq:intern4} and \eqref{ineq:intern5} from Lemma \ref{lemma:auxestimates}.
   Inequality (b) is due to assumption \eqref{ineq:weakboundintern11}.
   Note that it also follows from these estimates that 
   \begin{align}
     \specnorm{\MM_{t,\ww} \VV_{\UUtw}}
     &\le 
     \specnorm{\MM_t}
     +
     \fronorm{\left(\MM_t -\MM_{t,\ww} \right) \VV_{\UUtw}} \nonumber \\
     &\overleq{(a)} 
     2 \sigma_{\min} (\XXstar)
     +
     \frac{2 \constthree}{\kappa} \specnorm{\XXstar - \UUt \UUtT} 
     +\left( \frac{4 \constthree }{\kappa}+1 \right) \fronorm{\UUt \UUtT - \UUtw \UUtw^\top} \nonumber \\
     &\overleq{(b)} 
     3 \sigma_{\min} (\XXstar), \label{ineq:weakboundintern8}
   \end{align}
   where inequality $(a)$ follows from \eqref{ineq:weakboundintern7}.
   Inequality $(b)$ is a consequence of the 
   assumptions \eqref{ineq:weakboundintern3}
   and \eqref{ineq:weakboundintern4}
   (and by choosing the absolute constant $\constthree >0$ small enough).
 
    Now we are in a position to estimate the Frobenius norms of the terms $(i)$-$(v)$.
    \paragraph*{Estimating term (i):}
    We compute that that 
    \begin{align*}
     \fronorm{\MM_t (\UUt \UUtT -\UUtw \UUtw^\top)}
     \le 
     &\specnorm{\MM_t}
     \fronorm{\UUt \UUtT -\UUtw \UUtw^\top}\\
     \overleq{\eqref{ineq:weakboundintern5}}
     & 
     \left( \specnorm{\XXstar - \UUt \UUtT} + \constone \sigma_{\min} (\XXstar) \right)
     \fronorm{\UUt \UUtT -\UUtw \UUtw^\top}.
    \end{align*}
 
    \paragraph*{Estimating term (ii):}
    We compute that 
    \begin{align*}
     \fronorm{\left( \MM_t - \MM_{t,\ww} \right) \UUtw \UUtw^\top}
     \le 
     &\fronorm{\left(\MM_t - \MM_{t,\ww} \right) \VV_{\UUtw} } \specnorm{\UUtw \UUtw^\top}\\
     \le 
     &\fronorm{\left( \MM_t - \MM_{t,\ww} \right) \VV_{\UUtw} } 
     \left(\specnorm{\UUt \UUt^\top} + \specnorm{\UUt \UUtT - \UUtw \UUtw^\top} \right)\\
     \le 
     &3 \specnorm{\XXstar} \fronorm{ \left(\MM_t - \MM_{t,\ww} \right) \VV_{\UUtw}} ,
    \end{align*}
    where in the last inequality we used assumptions \eqref{ineq:weakboundintern1} 
    and \eqref{ineq:weakboundintern3}.
    \paragraph*{Estimating term (iii):}
    With the same argument as for term $(i)$ we observe that 
    \begin{align*}
     \fronorm{(\UUt \UUtT -\UUtw \UUtw^\top)\MM_t }
     \le 
     % &6 \specnorm{\XXstar} 
     & 
     \left( \specnorm{\XXstar - \UUt \UUtT} 
     + \constone \sigma_{\min} \left(\XXstar \right) \right)
     \fronorm{\UUt \UUtT -\UUtw \UUtw^\top}.
    \end{align*}
 
    \paragraph*{Estimating term (iv):}
    With the same argument as for term $(ii)$ we compute that
    \begin{align*}
     \fronorm{ \UUtw \UUtw^\top\left( \MM_t - \MM_{t,\ww} \right)}
     \le 3 \specnorm{\XXstar} \fronorm{\left(\MM_t - \MM_{t,\ww}\right) \VV_{\UUtw}}.
    \end{align*}
   \paragraph*{Estimating term (v):}
   First, we compute that 
   \begin{align*}
     \MM_t \UUt \UUtT \MM_t - \MM_{t,\ww} \UUtw \UUtw^\top \MM_{t,\ww} 
     =
     &\MM_t \left( \UUt \UUtT - \UUtw \UUtw^\top \right) \MM_t  
     +
     \left( \MM_t - \MM_{t,\ww}  \right) \UUtw \UUtw^\top \MM_t\\
     &+
     \MM_{t,\ww} \UUtw \UUtw^\top \left( \MM_t - \MM_{t,\ww}  \right).
   \end{align*}
   It follows that 
   \begin{align*}
     &\fronorm{\MM_t \UUt \UUtT \MM_t - \MM_{t,\ww} \UUtw \UUtw^\top \MM_{t,\ww}}\\
     \le 
     &\specnorm{\MM_t}^2 \fronorm{\UUt \UUtT - \UUtw \UUtw^\top}
     +
     \left(\specnorm{\UUt}^2 + \specnorm{\UUt \UUtT -\UUtw \UUtw^\top} \right) \specnorm{\MM_t} \fronorm{\left(\MM_t- \MM_{t,\ww}\right) \VV_{\UUtw}}\\
     &+
     \specnorm{\MM_{t,\ww} \VV_{\UU_{t,\ww}}} \left( \specnorm{\UUt}^2 + \specnorm{\UUt \UUt^\top -\UUtw \UUtw^\top} \right) \fronorm{\left( \MM_t - \MM_{t,\ww} \right) \VV_{\UUtw} }\\
     \overleq{(a)}
     &\specnorm{\MM_t}^2 \fronorm{\UUt \UUtT - \UUtw \UUtw^\top}
     +
     3 \specnorm{\XXstar} \specnorm{\MM_t} \fronorm{\left(\MM_t- \MM_{t,\ww}\right) \VV_{\UUtw}}\\
     &+
     3 \specnorm{ \XXstar } \specnorm{\MM_{t,\ww} \VV_{ \UUtw }}  \fronorm{ \left(\MM_t - \MM_{t,\ww} \right) \VV_{\UUtw} }\\
     \overleq{(b)}
     &4 \sigma^2_{\min} (\XXstar) \fronorm{\UUt \UUtT - \UUtw \UUtw^\top}
     +
     15 \sigma_{\min} (\XXstar) \specnorm{\XXstar} \fronorm{\left(\MM_t - \MM_{t,\ww} \right) \VV_{\UUtw}}.
   \end{align*}
   For inequality $(a)$ we used the assumptions \eqref{ineq:weakboundintern1} and
   \eqref{ineq:weakboundintern4}.
   Inequality $(b)$ is a consequence of inequalities %\eqref{ineq:weakboundintern5},
   \eqref{ineq:weakboundintern9} and \eqref{ineq:weakboundintern8}.

   \paragraph*{Conclusion:}
   By summing up all terms we obtain that 
   \begin{align*}
     &\fronorm{\UUtplus \UUtplus^\top - \UUtplusw \UUtplusw^\top}\\
     \le
     &\fronorm{\UUt \UUtT - \UUtw \UUtw^\top}
     +
     2 \mu \left( \specnorm{\XXstar - \UUt \UUtT} + \constone \sigma_{\min} (\XXstar) \right) 
      \fronorm{\UUt \UUtT - \UUtw \UUtw^\top}\\
     &+
     6\mu \specnorm{\XXstar} \fronorm{ \left(\MM_t - \MM_{t,\ww} \right) \VV_{\UUtw} }\\
     &+
     \mu^2 \left( 4  \sigma^2_{\min} (\XXstar)  \fronorm{\UUt \UUtT - \UUtw \UUtw^\top}
     +
     15  \sigma_{\min} (\XXstar) \specnorm{\XXstar} \fronorm{\left(\MM_t - \MM_{t,\ww} \right) \VV_{\UUtw}}  \right)\\
     \overleq{(a)}
     &\left(1+2\mu \specnorm{\XXstar - \UUt \UUtT} + 2c_1 \sigma_{\min} (\XXstar) \right)
     \fronorm{\UUt \UUtT - \UUtw \UUtw^\top}\\
     &+12 \mu \sigma_{\min} (\XXstar ) \constthree \specnorm{\XXstar - \UUt \UUtT}
     + 6 \mu \specnorm{\XXstar} \left( \frac{4 \constthree}{\kappa} +1 \right) \fronorm{\UUt \UUtT - \UUtw\UUtw^\top} \\
     &+
     4\mu^2  \sigma^2_{\min} (\XXstar)  \fronorm{\UUt \UUtT - \UUtw \UUtw^\top}
     +
     30c_3 \mu^2 \sigma_{\min}^2 (\XXstar) \specnorm{\XXstar - \UUt \UUtT}\\
     &+ 60 c_3 \mu^2  \sigma_{\min}^2 (\XXstar) \fronorm{\UUt \UUtT - \UUtw \UUtw^\top } 
     +15 \mu^2 \sigma_{\min} (\XXstar) \specnorm{\XXstar} \fronorm{\UUt \UUtT - \UUtw \UUtw^\top } 
     \\
     %\overleq{(a)} 
     =
     & \left( 1 + 2 \mu \specnorm{\XXstar-\UUt \UUtT} 
     +\left( 2 \constone + 24 \constthree  \right) \mu \sigma_{\min} (\XXstar)+6\mu \specnorm{\XXstar}
     + 4\mu^2 \sigma_{\min}^2 (\XXstar)+ 60 \constthree \mu^2 \sigma^2_{\min} (\XXstar) \right)\\
     & \cdot \fronorm{\UUt \UUtT - \UUtw \UUtw^\top}+ 
     \left( 12 \constthree \mu \sigma_{\min} (\XXstar)+ 30 \constthree \mu^2 \sigma_{\min}^2 (\XXstar)  \right)
     \specnorm{\XXstar - \UUt \UUtT}\\
     \overleq{(b)} 
     &\frac{\sqrt{\sqrt{2}-1}}{40} \sigma_{\min} (\XXstar).
   \end{align*}
   Inequality $(a)$ follows from inequality \eqref{ineq:weakboundintern7}.
   Inequality $(b)$ is due to assumptions \eqref{ineq:weakboundintern3}, \eqref{ineq:weakboundintern4},
   and the assumption $\mu \le \frac{ \consttwo}{ \kappa \specnorm{\XXstar}}$
    for a sufficiently small absolute constant $\consttwo>0$.
   This completes the proof of Lemma \ref{lemma:auxdistanceweak}.
 \end{proof}

\subsection{Proof of Lemma \ref{lemma:auxcloseness1}}\label{subsec:proofauxcloseness1}

\begin{proof}[Proof of Lemma \ref{lemma:auxcloseness1}]
    Let $\RR \in \R^{r \times r}$ be an orthogonal matrix.
    We compute that 
    \begin{align*}
        \UUt \UUt^\top - \UUtw \UUtw^\top
        &=
        \UUt \RR \left( \UUt \RR \right)^\top - \UUtw \UUtw 
        =
        \UUt \RR \left(\UUt \RR - \UUtw\right)^\top - (\UUtw - \UUt \RR)\UUtw^\top.
    \end{align*}
    It follows that 
    \begin{align}
        &\fronorm{\VXXPT \left(\UUt \UUt^\top - \UUtw \UUtw^\top \right) \VXXP} \\
        \le 
        &\specnorm{ \VXXPT \UUt \RR  } \fronorm{ \UUt \RR - \UUtw  }
        +
        \fronorm{\UUtw - \UUt \RR} \specnorm{ \UUtw^\top \VXXP }
        \nonumber \\
        \le
        &\left( \specnorm{\VXXPT \UUt \RR} + \specnorm{\VXXPT \UUtw}  \right) \fronorm{\UUt \RR - \UUtw}\nonumber \\
        \le
        &\left( 2\specnorm{\VXXPT \UUt  } + \specnorm{\UUt \RR - \UUtw}  \right) \fronorm{\UUt \RR - \UUtw} \nonumber\\
        =& \left( 2\sqrt{\specnorm{\VXXPT \UUt\UUt^\top \VXXP  }} + \specnorm{\UUt \RR - \UUtw}  \right) \fronorm{\UUt \RR - \UUtw}\\
        =& \left( 2\sqrt{\specnorm{\VXXPT (\UUt\UUt^\top-\XXstar) \VXXP  }} + \specnorm{\UUt \RR - \UUtw}  \right) \fronorm{\UUt \RR - \UUtw}\\
        \overleq{(a)}
        &\left( \frac{1}{20} \sqrt{\sigma_{\min}(\XXstar)} +  \fronorm{\UUt \RR - \UUtw} \right)
        \fronorm{\UUt \RR - \UUtw}.
        \label{ineq:lemmaequ4} 
    \end{align}
    In inequality $(a)$ we used Assumption \eqref{assump:cloneness3}.
    By choosing the orthogonal matrix $\RR$ as the minimizer of Procruste's problem,
    i.e., such that $ \fronorm{\UUt \RR - \UUtw} $ is minimal,
    we obtain by Lemma \ref{lemma:procrustebound} that 
    \begin{align*}
        \fronorm{\UUt \RR - \UUtw}
        &\le 
        \frac{ \fronorm{\UUt \UUt^\top - \UUtw \UUtw^\top}}{\sqrt{2 \left(\sqrt{2}-1\right)  \sigma_{\min}^2 \left(\UUt\right)}}
        \overleq{(a)}
        \frac{\fronorm{\UUt \UUt^\top - \UUtw \UUtw^\top}}{ \sqrt{ \left( \sqrt{2}-1 \right)  \frac{3}{2}\sigma_{\min} \left( \XXstar \right) }  } 
        \overleq{(b)}
        % \frac{c \sqrt{\sigma_r (\XXstar)}}{ 2\sqrt{\sqrt{2}-1 }} \le c \sqrt{\sigma_r (\XXstar)}.
        \frac{ \sqrt{\sigma_{\min} (\XXstar)}}{ 20}.
    \end{align*}
    Inequality $(a)$ follows from Assumption \eqref{assump:cloneness3}  and Weyl's inequalities for singular values.
    For inequality $(b)$ we used Assumption \eqref{assump:cloneness4}.
    Inequality \eqref{ineq:lemmaequ4} combined with this inequality chain yields that  
    \begin{align}
        \fronorm{\VXXPT \left(\UUt \UUt^\top - \UUtw \UUtw^\top \right) \VXXP}
        \le  
        &\frac{\sqrt{\sigma_{\min}(\XXstar)}}{10}
        \cdot
        \frac{ \fronorm{\UUt \UUt^\top - \UUtw \UUtw^\top}}{\sqrt{ \left( \sqrt{2}-1\right) \cdot \frac{3}{2}  \sigma_{\min} \left( \XXstar \right)}}\nonumber \\
        \le
        &\frac{\fronorm{\UUt \UUt^\top - \UUtw \UUtw^\top}}{5}
        .\label{ineq:lemmaequ3} 
    \end{align}
    In order to proceed we note that
    \begin{align*}
        \fronorm{\UUt \UUt^\top - \UUtw \UUtw^\top}
        \le
        &\fronorm{\VXXT \left( \UUt \UUt^\top - \UUtw \UUtw^\top \right)}
        +
        \fronorm{ \VXXPT \left( \UUt \UUt^\top - \UUtw \UUtw^\top \right) \VXX}\\
        &+
        \fronorm{\VXXPT \left(\UUt \UUt^\top - \UUtw \UUtw^\top \right) \VXXP}\\
        \le
        &2\fronorm{\VXXT \left( \UUt \UUt^\top - \UUtw \UUtw^\top \right)}
        +\fronorm{ \VXXPT \left( \UUt \UUt^\top - \UUtw \UUtw^\top \right) \VXXP}\\
        \overleq{(a)}
        &2\fronorm{\VXXT \left( \UUt \UUt^\top - \UUtw \UUtw^\top \right)}
        + \frac{1}{5} \fronorm{  \UUt \UUt^\top - \UUtw \UUtw^\top }.
    \end{align*}
    In inequality $(a)$ we have used inequality \eqref{ineq:lemmaequ3}. 
    By rearranging terms we obtain that 
    \begin{align*}
        \fronorm{\UUt \UUt^\top - \UUtw \UUtw^\top}
        &\le 
        \frac{2}{1-\frac{1}{5}}
        \fronorm{ \VXXT \left(  \UUt \UUt^\top - \UUtw \UUtw^\top \right) }\\
        &\le 
        3  \fronorm{  \VXXT \left( \UUt \UUt^\top - \UUtw \UUtw^\top \right) }.
    \end{align*}
    This shows inequality \eqref{ineq:lemmaequ2}.
    Then  \eqref{ineq:lemmaequ1} follows directly from inserting the above inequality into \eqref{ineq:lemmaequ3}.
    \end{proof}

\subsection{Proof of Lemma \ref{lemma:auxsequencecloseness}}\label{subsec:proofauxsequencecloseness}

The key idea in the proof of Lemma \ref{lemma:auxsequencecloseness}
is to decompose 
$\VXXT \left(\UUtplus \UUtplus^\top - \UUtplusw \UUtplusw^\top \right)$
into a sum of the form
\begin{equation}\label{keydecomposition}
\begin{split}
    & \VXXT \left( \UUtplus \UUtplus^\top - \UUtplusw \UUtplusw^\top \right)\\
    =
    &\VXXT \left(
        1 + \mu \left( \XXstar - \UUt \UUtT - \UUtw \UUtw^\top \right)
    \right)
    \left( \UUt \UUt^\top - \UUtw \UUtw^\top \right)
    \left(
        1 + \mu \left( \XXstar - \UUt \UUtT - \UUtw \UUtw^\top \right)
    \right)\\
    &+ \VXXT \DDelta.
\end{split}
\end{equation}
The first summand can be interpreted as a contraction mapping applied to the matrix
$\UUt \UUtT - \UUtw \UUtw^\top$ 
and thus can be expected to have a smaller Frobenius norm than
$\Vert \VXXT \left(\UUt \UUtT - \UUtw \UUtw^\top\right)\Vert_F $.
In contrast, the term $\DDelta$,
which will be determined explicitly in the proof of Lemma \ref{lemma:auxsequencecloseness},
can be interpreted as an additive error term
which, as we will show, has relatively small Frobenius norm.

To deal with the first summand we need the following auxiliary lemma.
\begin{lemma}\label{lemma:aux2}
    Denote by $\lambda_{\max} (\AAf)$ the largest eigenvalue of a symmetric matrix $\AAf$ 
   and by $\lambda_{\min} (\AAf)$ the smallest eigenvalue of $\AAf$.
   Assume that the assumptions of Lemma \ref{lemma:auxsequencecloseness} are satisfied.
   Then it holds that
   \begin{align}
    \lambda_{\min}
    \left(
         \Id + \mu \left(\XXstar -\UUt \UUt^\top - \UUtw \UUtw^\top \right)
    \right)
    &\ge 0, \label{ineq:intern3}\\
    \lambda_{\max} \left(
        \VXXT \left( \XXstar -  \UUt \UUt^\top - \UUtw \UUtw^\top  \right)\VXX
        \right)
    &\le 
    - \frac{\sigma_{\min} \left( \XXstar \right) }{2},\label{ineq:interndistance3}\\
    \specnorm{
         \Id +\mu  (\XXstar -  \UUt \UUt^\top -  \UUtw \UUtw^\top ) 
    }
    &\le
    1+\frac{\mu \sigma_{\min} (\XXstar) }{128}.
    \label{ineq:interndistance1}
    \end{align}
\end{lemma}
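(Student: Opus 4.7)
The plan is to establish the three bounds in order, with the common thread being that one needs crude control on the full spectrum of $M := \XXstar - \UUt\UUt^\top - \UUtw\UUtw^\top$ for (1), the sharper control $\lambda_{\max}(M) \ll \sigma_{\min}(\XXstar)$ for (3), and a PSD–sandwich argument on the rank-$r$ range of $\XXstar$ for (2).

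For \eqref{ineq:intern3}, I would use the triangle inequality together with $\specnorm{\UUtw \UUtw^\top} \le \specnorm{\UUt \UUt^\top} + \specnorm{\UUt \UUt^\top - \UUtw \UUtw^\top} \le 2\specnorm{\XXstar} + \constthree \sigma_{\min}(\XXstar)$ (invoking \eqref{assump:closeness6} and \eqref{assump:closeness8}) and $\specnorm{\XXstar - \UUt \UUt^\top} \le \consttwo \sigma_{\min}(\XXstar)$ from \eqref{assump:closeness7}, yielding $\specnorm{M} \le 3\specnorm{\XXstar}$ for small constants. Combined with $\mu \le \constfour/(\kappa \specnorm{\XXstar})$, this gives $\mu \specnorm{M} \le 3\constfour/\kappa \le 1$ for $\constfour$ sufficiently small, and hence $\lambda_{\min}(\Id + \mu M) \ge 1 - \mu \specnorm{M} \ge 0$.

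For \eqref{ineq:interndistance3}, the key observation is that $\VXXT \XXstar \VXX \succeq \sigma_{\min}(\XXstar)\, \Id_r$ because the column span of $\VXX$ coincides with the range of $\XXstar$. Then \eqref{assump:closeness7} gives $\VXXT \UUt \UUt^\top \VXX \succeq (1 - \consttwo)\sigma_{\min}(\XXstar)\, \Id_r$, and the Frobenius (hence spectral) bound \eqref{assump:closeness8} gives $\VXXT \UUtw \UUtw^\top \VXX \succeq \VXXT \UUt \UUt^\top \VXX - \constthree \sigma_{\min}(\XXstar)\, \Id_r \succeq (1 - \consttwo - \constthree)\sigma_{\min}(\XXstar)\, \Id_r$. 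Summing these two PSD lower bounds and subtracting from $\VXXT \XXstar \VXX \preceq \specnorm{\XXstar}\Id_r$ yields $\VXXT(\XXstar - \UUt\UUt^\top - \UUtw\UUtw^\top)\VXX \preceq -(1 - 2\consttwo - \constthree)\sigma_{\min}(\XXstar)\, \Id_r$, which is $\preceq -\sigma_{\min}(\XXstar)/2 \cdot \Id_r$ once $\consttwo, \constthree$ are chosen small enough.

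For \eqref{ineq:interndistance1}, I combine \eqref{ineq:intern3} with an upper bound on $\lambda_{\max}(M)$. Since $\UUtw \UUtw^\top \succeq 0$, we have $M \preceq \XXstar - \UUt \UUt^\top$, hence $\lambda_{\max}(M) \le \specnorm{\XXstar - \UUt \UUt^\top} \le \consttwo \sigma_{\min}(\XXstar)$. Under \eqref{ineq:intern3} all eigenvalues of $\Id + \mu M$ are nonnegative, so
\begin{equation*}
  \specnorm{\Id + \mu M} = 1 + \mu \lambda_{\max}(M) \le 1 + \mu \consttwo \sigma_{\min}(\XXstar),
\end{equation*}
and choosing $\consttwo \le 1/128$ finishes the proof. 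The main subtlety (and the one step that requires care in choosing constants) is verifying \eqref{ineq:interndistance3}: one must be careful that the Frobenius-norm bound \eqref{assump:closeness8} is used to control the difference $\UUt\UUt^\top - \UUtw\UUtw^\top$ in the Loewner order (via $\specnorm{\cdot} \le \fronorm{\cdot}$), and that all the small constants $\consttwo, \constthree$ are chosen a priori so that $2\consttwo + \constthree \le 1/2$ and $\consttwo \le 1/128$.
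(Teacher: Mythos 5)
Your overall strategy is sound, and parts \eqref{ineq:intern3} and \eqref{ineq:interndistance1} are correct as written; in fact your argument for \eqref{ineq:interndistance1} is cleaner than the paper's (the paper runs an $\xx_{\parallel}/\xx_{\perp}$ decomposition, whereas you simply drop $-\UUtw\UUtw^\top \preceq 0$, use \eqref{assump:closeness7}, and take $\consttwo \le 1/128$, which is legitimate since the lemma is stated for sufficiently small absolute constants). The one step you flag as requiring care, \eqref{ineq:interndistance3}, is however exactly where your written derivation breaks. You sum the two PSD lower bounds $\VXXT\UUt\UUt^\top\VXX \succeq (1-\consttwo)\sigma_{\min}(\XXstar)\Id_r$ and $\VXXT\UUtw\UUtw^\top\VXX \succeq (1-\consttwo-\constthree)\sigma_{\min}(\XXstar)\Id_r$ and subtract them from the upper bound $\VXXT\XXstar\VXX \preceq \specnorm{\XXstar}\Id_r$; this only yields $\VXXT(\XXstar-\UUt\UUt^\top-\UUtw\UUtw^\top)\VXX \preceq \big(\specnorm{\XXstar}-(2-2\consttwo-\constthree)\sigma_{\min}(\XXstar)\big)\Id_r$, which is not even negative once $\kappa \ge 2$, so the stated conclusion $\preceq -(1-2\consttwo-\constthree)\sigma_{\min}(\XXstar)\Id_r$ does not follow from the steps as described. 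The repair is immediate and uses only facts you already invoke elsewhere: group $\XXstar$ with $\UUt\UUt^\top$ and write $\VXXT(\XXstar-\UUt\UUt^\top-\UUtw\UUtw^\top)\VXX = \VXXT(\XXstar-\UUt\UUt^\top)\VXX - \VXXT\UUtw\UUtw^\top\VXX \preceq \consttwo\sigma_{\min}(\XXstar)\Id_r - (1-\consttwo-\constthree)\sigma_{\min}(\XXstar)\Id_r$, which is precisely the bound you claim; the crude bound $\VXXT\XXstar\VXX \preceq \specnorm{\XXstar}\Id_r$ should not appear at all.

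With that one line repaired, your route to \eqref{ineq:interndistance3} is genuinely different from the paper's and somewhat more economical: the paper lower-bounds $\lambda_{\min}(\VXXT\UUt\UUt^\top\VXX)$ via $\sigma_{\min}^2(\VXXT\VUUt)\,\sigma_{\min}^2(\UUt)$, which is why it needs the angle assumption \eqref{assump:closeness5}, whereas you obtain the same lower bound directly from $\VXXT\XXstar\VXX \succeq \sigma_{\min}(\XXstar)\Id_r$ together with \eqref{assump:closeness7} and \eqref{assump:closeness8} (via $\specnorm{\cdot}\le\fronorm{\cdot}$), making no use of \eqref{assump:closeness5} in this lemma.
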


\begin{proof}[Proof of Lemma \ref{lemma:aux2}]
    Note that the assumptions $ \mu \le \frac{ \constfour }{ \kappa \specnorm{\XXstar}} $, \eqref{assump:closeness6}, and \eqref{assump:closeness8} 
    together with Weyl's inequalities imply
    \begin{align*}
        &\lambda_{\min} \left(
         \Id + \mu \left( \XXstar -\UUt \UUt^\top - \UUtw \UUtw^\top \right)
    \right)\\
    =&
            \lambda_{\min} \left(
         \Id + \mu \left(
          (\XXstar -\UUt \UUt^\top) -\UUt \UUt^\top  + \UUt\UUt^\top- \UUtw \UUtw^\top  \right)
    \right)\\
    \geq & 
    1- \mu \specnorm{\XXstar - \UUt \UUtT}
    - \mu \specnorm{\UUt \UUtT}
    -\mu \specnorm{\UUt\UUt^\top- \UUtw \UUtw^\top }\\
    \ge & 
    0.
    \end{align*} for sufficiently small $c_2,c_3,c_4>0$.
    This shows inequality \eqref{ineq:intern3}.

    We observe that
    \begin{align}
        &\lambda_{\max} \left(
        \VXXT \left( \XXstar -  \UUt \UUt^\top - \UUtw \UUtw^\top  \right)\VXX
        \right)\nonumber\\
        \overleq{(a)}
        &\lambda_{\max} \left(- \VXXT \UUt \UUt^\top \VXX \right)
        +
        \specnorm{\XXstar -  \UUt \UUt^\top} 
        +
        \specnorm{\UUt \UUt^\top - \UUtw \UUtw^\top}
        \nonumber\\
        \overleq{(b)}
        &\lambda_{\max} \left( - \VXXT \UUt \UUt^\top \VXX \right)
        +
        \left( \consttwo + \constthree \right) \sigma_{\min} \left( \XXstar \right)\nonumber \\
        =
        &-\lambda_{\min} \left( \VXXT \VV_{\UUt}\VV_{\UUt}^\top \UUt \UUt^\top \VV_{\UUt}\VV_{\UUt}^\top\VXX \right)
        +
        \left(\consttwo + \constthree \right) \sigma_{\min} \left( \XXstar \right)\nonumber \\
        \le
        &-\sigma_{\min} \left( \VXXT \VV_{\UUt} \right)^2 \lambda_{\min} \left( \UUt \UUt^\top  \right)
        +
        \left(\consttwo + \constthree\right) \sigma_{\min} \left( \XXstar \right)\nonumber \\
        \overleq{(c)}
        &- \frac{ \sigma_{\min} \left(  \XXstar\right)}{2}.
    \end{align}
    Inequality $(a)$ follows from Weyl's inequalities.
    Inequality $(b)$ follows from assumption \eqref{assump:closeness7} and \eqref{assump:closeness8}.
    For inequality $(c)$ we used assumptions \eqref{assump:closeness5}, \eqref{assump:closeness7} for suffciently small $c_1,c_2,c_3$,
    and Weyl's inequalities.
    This proves inequality \eqref{ineq:interndistance3}.

    To prove inequality \eqref{ineq:interndistance1}, 
    we first establish an upper bound for the largest eigenvalue of 
    $  \XXstar -  \UUt \UUt^\top - \UUtw \UUtw^\top$.
    For that let $\xx \in \mathbb{R}^d$ be arbitrary.
    We use the orthogonal decomposition $\xx=\xx_{\parallel}+\xx_{\perp}$,
    where $\xx_{\parallel}$ is the orthogonal projection of $\xx$ onto the column span of $\XXstar$. 
    We compute that
    \begin{align}
        &\xx^\top \left( \XXstar -  \UUt \UUt^\top - \UUtw \UUtw^\top \right) \xx \nonumber \\
        =
        &\xx_{\parallel}^\top \left( \XXstar -  \UUt \UUt^\top - \UUtw \UUtw^\top \right) \xx_{\parallel}
        -
        \xx_{\bot}^\top \left(  \UUt \UUt^\top + \UUtw \UUtw^\top \right) \xx_{\bot}
        -
        2\xx_{\bot}^\top \left(  \UUt \UUt^\top + \UUtw \UUtw^\top \right) \xx_{\parallel} \nonumber \\
        \overleq{\eqref{ineq:interndistance3}}
        &- \frac{ \sigma_{\min} \left(  \XXstar\right)}{2}  \twonorm{\xx_{\parallel}}^2
        -
        2\xx_{\bot}^\top \left(  \UUt \UUt^\top + \UUtw \UUtw^\top \right) \xx_{\parallel}.
        % \nonumber \\ 
        % \le 
        % &-
        % 2\xx_{\bot}^\top \left(  \UUt \UUt^\top + \UUtw \UUtw^\top \right) \xx_{\parallel}.
        \label{ineq:interndistance2}
    \end{align}
    Next, we observe that 
    \begin{align*}
        -\xx_{\bot}^\top \left(  \UUt \UUt^\top + \UUtw \UUtw^\top \right) \xx_{\parallel}
        &\le 
        \specnorm{ \VXXPT \left(  \UUt \UUt^\top + \UUtw \UUtw^\top \right) \VXX}
        \twonorm{\xx_{\parallel}} \twonorm{\xx_{\perp}}
        \\
        &\le 
        \left(
        2\specnorm{ \VXXPT  \UUt \UUt^\top  \VXX}
        +
        \specnorm{ \UUt \UUt^\top - \UUtw \UUtw^\top}
        \right)
        \twonorm{\xx_{\parallel}} \twonorm{\xx_{\perp}}
        \\
        &= 
        \left(
        2\specnorm{ \VXXPT   \left( \UUt \UUt^\top - \XXstar \right)  \VXX}
        +
        \specnorm{ \UUt \UUt^\top - \UUtw \UUtw^\top}
        \right)
        \twonorm{\xx_{\parallel}} \twonorm{\xx_{\perp}}
        \\
        &\le 
        \left(
        2\specnorm{ \XXstar - \UUt \UUtT}
        +
        \specnorm{ \UUt \UUt^\top - \UUtw \UUtw^\top }
        \right)
        \twonorm{\xx_{\parallel}} \twonorm{\xx_{\perp}}
        \\
        &\le \frac{\sigma_{\min} (\XXstar)\twonorm{\xx_{\parallel}} \twonorm{\xx_{\perp}}}{16}.
    \end{align*}
    In the last inequality we have used the assumptions \eqref{assump:closeness7} and \eqref{assump:closeness8} for sufficiently small $c_2,c_3 >0$.
    Combining this estimate with \eqref{ineq:interndistance2} we obtain that 
    \begin{align}
        \xx^\top \left( \XXstar -  \UUt \UUt^\top - \UUtw \UUtw^\top \right) \xx 
        &\le 
        \sigma_{\min} \left(\XXstar\right)
        \left(
        \frac{\twonorm{\xx_{\parallel}} \twonorm{\xx_{\perp}}}{8}- \frac{\twonorm{\xx_{\parallel}}^2}{2}
        \right)\nonumber\\
        &\le
        \frac{\sigma_{\min} \left(\XXstar\right) \twonorm{\xx_{\perp}}^2}
        {128}
        \le
        \frac{\sigma_{\min} (\XXstar)  \twonorm{\xx}^2 }{128}.
        \nonumber
    \end{align}
    This implies that
    \begin{equation}\label{ineq:intern235}
        \lambda_{\max} 
        \left(
         \Id + \mu \left(\XXstar -\UUt \UUt^\top - \UUtw \UUtw^\top \right)
        \right)
        \le 1 + \frac{\mu \sigma_{\min} (\XXstar)}{128}.
    \end{equation}
    % \begin{equation}\label{ineq:intern3}
    % \lambda_{\min}
    %     \left(
    %      \Id + \mu \left(\XXstar -\UUt \UUt^\top - \UUtw \UUtw^\top \right)
    %     \right)
    %     \ge 0.
    % \end{equation}
    This inequality, together with inequality  \eqref{ineq:intern3}, yields inequality \eqref{ineq:interndistance1}.
    Thus, the proof of Lemma \ref{lemma:aux2} is complete.
\end{proof}
With Lemma \ref{lemma:aux2} in place,
we can show that the first term in the decomposition \eqref{keydecomposition}
indeed has a smaller Frobenius norm than the term 
$\VXXT \left( \UUt \UUtT-\UUtw \UUtw^\top  \right) $.
\begin{lemma}\label{lemma:aux3}
Assume that the assumptions of Lemma \ref{lemma:auxsequencecloseness} are satisfied.
Then, it holds that
\begin{equation}
\begin{split}
        &\fronorm{
        \VXXT \left( \Id +\mu  (\XXstar -  \UUt \UUt^\top - \UUtw \UUtw^\top ) \right) 
        \left(\UUt \UUt^\top - \UUtw \UUtw^\top \right)
         \left( \Id +\mu  (\XXstar -  \UUt \UUt^\top -  \UUtw \UUtw^\top ) \right)
        } \nonumber \\
        &\le
        \left( 1- \frac{ \mu \sigma_{\min} (\XXstar)}{8} \right)
        \fronorm{
            \VXXT  \left(  \UUt \UUt^\top - \UUtw \UUtw^\top \right)
        }.
\end{split}    
\end{equation}
\end{lemma}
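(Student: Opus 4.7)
\textbf{Proof proposal for Lemma \ref{lemma:aux3}.} The plan is to introduce the shorthand $\MM := \Id + \mu(\XXstar - \UUt \UUtT - \UUtw \UUtw^\top)$ and $\NN := \UUt \UUtT - \UUtw \UUtw^\top$, which are both symmetric. Using submultiplicativity of the Frobenius norm with the spectral norm,
\begin{align*}
\fronorm{\VXXT \MM \NN \MM}
\;=\; \fronorm{(\VXXT \MM \NN)\MM}
\;\le\; \fronorm{\VXXT \MM \NN}\,\specnorm{\MM},
\end{align*}
and inequality \eqref{ineq:interndistance1} of Lemma \ref{lemma:aux2} gives $\specnorm{\MM} \le 1 + \mu\sigma_{\min}(\XXstar)/128$, which contributes only a harmless multiplicative factor. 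The heart of the argument is therefore to show $\fronorm{\VXXT \MM \NN} \le (1 - 3\mu\sigma_{\min}(\XXstar)/8) \fronorm{\VXXT \NN}$.

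By symmetry of $\NN$ and $\MM$, $\fronorm{\VXXT \MM \NN} = \fronorm{\NN \MM \VXX}$, and I would split $\MM \VXX$ along the column spans of $\VXX$ and $\VXXP$:
\begin{align*}
\MM \VXX = \VXX \AAf + \VXXP \BB^\top, \qquad \AAf := \VXXT \MM \VXX,\; \BB := \VXXT \MM \VXXP.
\end{align*}
This yields the estimate $\fronorm{\NN \MM \VXX} \le \specnorm{\AAf}\,\fronorm{\NN \VXX} + \specnorm{\BB}\,\fronorm{\NN \VXXP}$. The positivity $\MM \succeq 0$ from \eqref{ineq:intern3} passes to the compression $\AAf$, so $\specnorm{\AAf} = \lambda_{\max}(\AAf) \le 1 - \mu\sigma_{\min}(\XXstar)/2$ by \eqref{ineq:interndistance3}. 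For $\BB$, the identities $\VXXT \VXXP = 0$ and $\VXXT \XXstar \VXXP = 0$ imply
\begin{align*}
\BB = -\mu \VXXT (\UUt \UUtT - \XXstar) \VXXP - \mu \VXXT (\UUtw \UUtw^\top - \XXstar) \VXXP,
\end{align*}
whose spectral norm is bounded by $\mu(2\consttwo + \constthree)\sigma_{\min}(\XXstar)$ via the triangle inequality together with \eqref{assump:closeness7} and \eqref{assump:closeness8}.

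To close the loop I need to compare $\fronorm{\NN \VXXP}$ to $\fronorm{\VXXT \NN}$. Symmetry of $\NN$ gives $\fronorm{\NN \VXX} = \fronorm{\VXXT \NN}$, while
\begin{align*}
\fronorm{\NN \VXXP}^2 = \fronorm{\VXXT \NN \VXXP}^2 + \fronorm{\VXXPT \NN \VXXP}^2 \le \bigl(1 + (3/5)^2\bigr) \fronorm{\VXXT \NN}^2,
\end{align*}
where the bound on $\fronorm{\VXXPT \NN \VXXP}$ uses inequality \eqref{ineq:lemmaequ1} of Lemma \ref{lemma:auxcloseness1} (whose hypotheses \eqref{assump:cloneness3}, \eqref{assump:cloneness4} follow from the present assumptions \eqref{assump:closeness7}, \eqref{assump:closeness8} once the absolute constants $\consttwo, \constthree$ are chosen small enough). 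Combining everything,
\begin{align*}
\fronorm{\NN \MM \VXX} \le \Bigl[\bigl(1 - \tfrac{\mu\sigma_{\min}(\XXstar)}{2}\bigr) + \sqrt{\tfrac{34}{25}}\,\mu(2\consttwo + \constthree)\sigma_{\min}(\XXstar)\Bigr] \fronorm{\VXXT \NN},
\end{align*}
and shrinking $\consttwo, \constthree$ brings the bracket below $1 - 3\mu\sigma_{\min}(\XXstar)/8$. Multiplying by the $\specnorm{\MM}$ bound then yields the claimed contraction factor $1 - \mu\sigma_{\min}(\XXstar)/8$ after elementary algebra.

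The main obstacle is bookkeeping: the cross term $\specnorm{\BB}\,\fronorm{\NN \VXXP}$ is of order $\mu\sigma_{\min}(\XXstar)\,\fronorm{\VXXT \NN}$ — exactly the same order as the contraction gained from $\AAf$ — so unless $\consttwo$ and $\constthree$ are small enough, it would overwhelm that contraction. This is precisely why Lemma \ref{lemma:auxsequencecloseness} demands its constants be chosen sufficiently small, and it is also the reason the detour through Lemma \ref{lemma:auxcloseness1} (which controls the $\VXXP$-$\VXXP$ block of $\NN$ by its $\VXX$-block) is indispensable.
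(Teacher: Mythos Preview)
Your proposal is correct and follows essentially the same approach as the paper: both first peel off the right factor $\specnorm{\MM}$ via \eqref{ineq:interndistance1}, then insert $\Id = \VXX\VXXT + \VXXP\VXXPT$ between $\MM$ and $\NN$, use \eqref{ineq:intern3} and \eqref{ineq:interndistance3} of Lemma~\ref{lemma:aux2} for the parallel block, bound the cross block via \eqref{assump:closeness7}--\eqref{assump:closeness8}, and invoke Lemma~\ref{lemma:auxcloseness1} to control $\fronorm{\VXXPT \NN \VXXP}$. The only cosmetic difference is that the paper splits the cross contribution further into two pieces $\NN_2, \NN_3$ (according to whether $\VXXPT\NN$ lands in $\VXX$ or $\VXXP$), whereas you bound $\fronorm{\NN\VXXP}$ in one shot via the Pythagorean identity; this is a slightly more compact packaging of the same estimate.
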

\begin{proof}[Proof of Lemma \ref{lemma:aux3}]
    We first compute that \small 
    \begin{align}
       &\fronorm{
        \VXXT \left( \Id +\mu  (\XXstar -  \UUt \UUt^\top - \UUtw \UUtw^\top ) \right) 
        \left(\UUt \UUt^\top - \UUtw \UUtw^\top \right)
         \left( \Id +\mu  (\XXstar -  \UUt \UUt^\top -  \UUtw \UUtw^\top ) \right)
        } \nonumber \\
        \le &
        \fronorm{
        \VXXT \left( \Id +\mu  (\XXstar -  \UUt \UUt^\top - \UUtw \UUtw^\top ) \right) 
        \left(\UUt \UUt^\top - \UUtw \UUtw^\top \right)}
        \specnorm{\Id +\mu  (\XXstar -  \UUt \UUt^\top -  \UUtw \UUtw^\top ) } \nonumber \\
       \le & 
        \left(
            1+\frac{\mu \sigma_{\min} (\XXstar) }{128}
        \right)
        \fronorm{
        \VXXT \left( \Id +\mu  (\XXstar -  \UUt \UUt^\top - \UUtw \UUtw^\top ) \right) 
        \left(\UUt \UUt^\top - \UUtw \UUtw^\top \right)},
        \label{ineq:intern2}
    \end{align}
    \normalsize
    where in the last line we used inequality \eqref{ineq:interndistance1} from Lemma \ref{lemma:aux2}.
    In order to proceed, we consider the decomposition
    \begin{align*}
        &\VXXT \left( \Id +\mu  (\XXstar -  \UUt \UUt^\top - \UUtw \UUtw^\top ) \right) 
        \left(\UUt \UUt^\top - \UUtw \UUtw^\top \right) \\
        &\bracing{=:\NN_1}{\VXXT \left( \Id +\mu  (\XXstar -  \UUt \UUt^\top - \UUtw \UUtw^\top ) \right) 
        \VXX \VXXT\left(\UUt \UUt^\top - \UUtw \UUtw^\top \right)}\\
        & - \mu \bracing{=:\NN_2}{ \VXXT   ( \UUt \UUt^\top + \UUtw \UUtw^\top )  
        \VXXP \VXXPT\left(\UUt \UUt^\top - \UUtw \UUtw^\top \right) \VXX \VXXT}\\
        & - \mu  \bracing{=:\NN_3}{\VXXT   ( \UUt \UUt^\top + \UUtw \UUtw^\top )  
        \VXXP \VXXPT\left(\UUt \UUt^\top - \UUtw \UUtw^\top \right)\VXXP \VXXPT}.
    \end{align*}
    We estimate the Frobenius norm of the three terms individually.
    For the first term we obtain that 
    \begin{align*}
        \fronorm{\NN_1}
        &\le 
        \specnorm{\VXXT \left( \Id +\mu  (\XXstar -  \UUt \UUt^\top - \UUtw \UUtw^\top ) \right) \VXX}
        \fronorm{\VXXT\left(\UUt \UUt^\top - \UUtw \UUtw^\top \right)}\\
        &= 
        \specnorm{ \Id +\mu  \VXXT (\XXstar -  \UUt \UUt^\top - \UUtw \UUtw^\top ) \VXX}
        \fronorm{\VXXT\left(\UUt \UUt^\top - \UUtw \UUtw^\top \right)}\\
        &\overleq{(a)} 
        % \specnorm{ \Id +\mu  \VXXT (\XXstar -  \UUt \UUt^\top - \UUtw \UUtw^\top ) \VXX}
        \left( 1 +\mu  \lambda_{\max} \left( \VXXT (\XXstar -  \UUt \UUt^\top - \UUtw \UUtw^\top ) \VXX \right) \right)
        \fronorm{\VXXT\left(\UUt \UUt^\top - \UUtw \UUtw^\top \right)}\\
        &\overleq{(b)}
        \left( 1 - \frac{\mu \sigma_{\min} (\XXstar)}{2} \right)
        \fronorm{\VXXT\left(\UUt \UUt^\top - \UUtw \UUtw^\top \right)},
    \end{align*}
    where in inequality $(a)$ we have used \eqref{ineq:intern3} and in $(b)$ we have used inequality \eqref{ineq:interndistance3}
    from Lemma \ref{lemma:aux2}.
    The Frobenius norm of the term $\NN_2$ can be estimated by 
    \begin{align*}
        &\fronorm{\NN_2}\\
        &\le 
        \specnorm{\VXXT   ( \UUt \UUt^\top + \UUtw \UUtw^\top )  \VXXP}
        \fronorm{ \VXXPT\left(\UUt \UUt^\top - \UUtw \UUtw^\top \right)\VXX}\\
        &=
        \left( \specnorm{\VXXPT \left[ 2 \left( \UUt \UUtT -\XXstar  \right) 
        + \left( \UUtw \UUtw^\top - \UUt \UUtT \right) \right] } \right)
        \fronorm{ \VXXT \left(\UUt \UUt^\top - \UUtw \UUtw^\top \right)}\\
        &\le
        \left(2 \specnorm{\VXXPT (\UUt \UUt^\top-\XXstar)}+ \specnorm{  \UUt \UUt^\top - \UUtw \UUtw^\top} \right)
        \fronorm{ \VXXT \left(\UUt \UUt^\top - \UUtw \UUtw^\top \right)}\\
        &\le
        \left( 2 \consttwo \sigma_{\min}(\XXstar)+ \fronorm{  \UUt \UUt^\top - \UUtw \UUtw^\top} \right)
        \fronorm{ \VXXT \left(\UUt \UUt^\top - \UUtw \UUtw^\top \right)}\\
        &\le
        \left(  2 \consttwo + \constthree \right) \sigma_{\min} (\XXstar) \fronorm{ \VXXT \left(\UUt \UUt^\top - \UUtw \UUtw^\top \right)},
    \end{align*}
    where we have used Assumptions \eqref{assump:closeness7}
    %  \eqref{assump:closeness5}, \eqref{assump:closeness6}, 
    and \eqref{assump:closeness8}.
    %  in the last inequality.
    With similar arguments, we can estimate the Frobenius norm of the term $\NN_3$ by
    \begin{align*}
        \fronorm{\NN_3}
        &\le \left( 2 \consttwo + \constthree \right) \sigma_{\min} (\XXstar)
        \fronorm{ \VXXPT \left(\UUt \UUt^\top - \UUtw \UUtw^\top \right) \VXXP}.
    \end{align*}
    By using Lemma \ref{lemma:auxcloseness1} we obtain that 
    \begin{equation*}
        \fronorm{ \VXXPT \left(\UUt \UUt^\top - \UUtw \UUtw^\top \right) \VXXP}
        \le 
        \frac{3
        \fronorm{
         \VXXT  \left(  \UUt \UUt^\top - \UUtw \UUtw^\top \right)
        }}
        {5}.
    \end{equation*}
    It follows that 
    \begin{equation*}
    \fronorm{\NN_3}
    \le  \frac{3 \left(  2 \consttwo + \constthree \right) \sigma_{\min} (\XXstar)
    \fronorm{ \VXXT \left(\UUt \UUt^\top - \UUtw \UUtw^\top \right) }}{5}.
    \end{equation*}
    By summing up our estimates for $\fronorm{\NN_1}$, $\fronorm{\NN_2}$, and $\fronorm{\NN_3}$
    and choosing the constants $ \constone ,  \consttwo >0$ small enough 
    we obtain that 
    \begin{align*}
        &\fronorm{
        \VXXT \left( \Id +\mu  (\XXstar -  \UUt \UUt^\top - \UUtw \UUtw^\top ) \right) 
        \left(\UUt \UUt^\top - \UUtw \UUtw^\top \right)  
        }\\
        \le
        &\left( 
            1- \frac{ \mu \sigma_{\min} (\XXstar)}{4}
        \right)
        \fronorm{
            \VXXT  \left(  \UUt \UUt^\top - \UUtw \UUtw^\top \right)
        }.
    \end{align*}
    Inserting this estimate into \eqref{ineq:intern2} yields that
    \small 
    \begin{align*}
        % &\specnorm{\VXXT \MM_1}\\
        % \le
        &\fronorm{
        \VXXT \left( \Id +\mu  (\XXstar -  \UUt \UUt^\top - \UUtw \UUtw^\top ) \right) 
        \left(\UUt \UUt^\top - \UUtw \UUtw^\top \right)
         \left( \Id +\mu  (\XXstar -  \UUt \UUt^\top -  \UUtw \UUtw^\top ) \right)
        }\\
        \le
        &\left(
            1+\frac{\mu \sigma_{\min} (\XXstar) }{128}
        \right)
        \left( 
            1- \frac{ \mu \sigma_{\min} (\XXstar)}{4}
        \right)
        \fronorm{
            \VXXT  \left(  \UUt \UUt^\top - \UUtw \UUtw^\top \right)
        }\\
        \le 
        &
        \left( 1- \frac{ \mu \sigma_{\min} (\XXstar)}{8} \right)
        \fronorm{
            \VXXT  \left(  \UUt \UUt^\top - \UUtw \UUtw^\top \right)
        },
    \end{align*}
    \normalsize
    where in the last line, we used our assumption on the step size $\mu$.
    This completes the proof of Lemma \ref{lemma:aux3}.
\end{proof}
With the auxiliary estimates in Lemma \ref{lemma:aux3}
we can give a proof of Lemma \ref{lemma:auxsequencecloseness}.
\begin{proof}[Proof of Lemma \ref{lemma:auxsequencecloseness}]
    First, we compute that
    \begin{align*}
        \UUtplus \UUtplus^\top
        =
        &\left( \Id +\mu \left[ \left( \Aops \right) \left(\XXstar- \UUt \UUt^\top \right) \right]  \right)  \UUt \UUt^\top   \left( \Id +\mu \left[ \left( \Aops \right) \left(\XXstar- \UUt \UUt^\top \right) \right]  \right) \\
        =
        &\left( \Id +\mu  (\XXstar -  \UUt \UUt^\top - \UUtw \UUtw^\top ) \right) \UUt \UUt^\top \left( \Id +\mu  (\XXstar -  \UUt \UUt^\top -  \UUtw \UUtw^\top ) \right)\\
        &+\mu \UUtw \UUtw^\top  \UUt \UUt^\top + \mu \UUt \UUt^\top  \UUtw \UUtw^\top\\
        &+\mu^2 \UUtw \UUtw^\top  \UUt \UUt^\top \left( \XXstar - \UUt \UUt^\top \right)
        +\mu^2\left( \XXstar - \UUt \UUt^\top \right) \UUt \UUt^\top  \UUtw \UUtw^\top \\
        &- \mu^2  \UUtw \UUtw^\top  \UUt \UUt^\top  \UUtw \UUtw^\top\\ 
        &+\mu \left[ \left(  \Aops - \IdOp \right) \left(\XXstar- \UUt \UUt^\top \right) \right] \UUt \UUt^\top \left( \Id + \mu \XXstar - \mu \UUt \UUt^\top \right)\\
        &+\mu \left( \Id + \mu \XXstar - \mu \UUt \UUt^\top \right) \UUt \UUt^\top \left[ \left(  \Aops - \IdOp \right) \left(\XXstar- \UUt \UUt^\top \right) \right]\\
        &+ \mu^2 \left[ \left(  \Aops - \IdOp \right) \left(\XXstar- \UUt \UUt^\top \right) \right] \UUt \UUt^\top  \left[ \left(  \Aops - \IdOp \right) \left(\XXstar- \UUt \UUt^\top \right) \right].
    \end{align*}
    Analogously, we can compute that 
    \begin{align*}
        &\UUtplusw \UUtplusw^\top\\
        =
        &\left( \Id +\mu  (\XXstar -  \UUt \UUt^\top - \UUtw \UUtw^\top ) \right) \UUtw \UUtw^\top \left( \Id +\mu  (\XXstar -  \UUt \UUt^\top -  \UUtw \UUtw^\top ) \right)\\
        &+\mu \UUtw \UUtw^\top  \UUt \UUt^\top + \mu \UUt \UUt^\top  \UUtw \UUtw^\top\\
        &+\mu^2 \UUt \UUt^\top  \UUtw \UUtw^\top \left( \XXstar - \UUtw \UUtw^\top \right)
        +\mu^2\left( \XXstar - \UUtw \UUtw^\top \right) \UUtw \UUtw^\top  \UUt \UUt^\top \\
        &- \mu^2  \UUt \UUt^\top  \UUtw \UUtw^\top  \UUt \UUt^\top\\ 
        &+\mu \left[ \left(  \Aopws - \IdOp \right) \left(\XXstar- \UUtw \UUtw^\top \right) \right] \UUtw \UUtw^\top \left( \Id + \mu \XXstar - \mu \UUtw \UUtw^\top \right)\\
        &+\mu \left( \Id + \mu \XXstar - \mu \UUtw \UUtw^\top \right) \UUtw \UUtw^\top \left[ \left(  \Aopws - \IdOp \right) \left(\XXstar- \UUtw \UUtw^\top \right) \right]\\
        &+ \mu^2 \left[ \left(  \Aopws - \IdOp \right) \left(\XXstar- \UUtw \UUtw^\top \right) \right] \UUtw \UUtw^\top  \left[ \left(  \Aopws - \IdOp \right) \left(\XXstar- \UUtw \UUtw^\top \right) \right].
    \end{align*}
    Thus, we obtain that
    \begin{align}
        &\UUtplus \UUtplus^\top-\UUtplusw \UUtplusw^\top
        \\
        = &
        \MM_1+\mu^2 \MM_2 + \mu^2 \MM_3 +\mu^2 \MM_4 + \mu^2 \MM_4 + \mu \MM_5 + \mu \MM_6 + \mu^2 \MM_7, \label{ineq:intern42}
    \end{align}
    where
    \begin{align*}
        \MM_1:=& \left( \Id +\mu  (\XXstar -  \UUt \UUt^\top - \UUtw \UUtw^\top ) \right) \left(\UUt \UUt^\top - \UUtw \UUtw^\top \right) \left( \Id +\mu  (\XXstar -  \UUt \UUt^\top -  \UUtw \UUtw^\top ) \right)\\
        \MM_2
        :=& 
         \UUtw \UUtw^\top  \UUt \UUt^\top \left( \XXstar - \UUt \UUt^\top \right)
        - \UUt \UUt^\top  \UUtw \UUtw^\top \left( \XXstar - \UUtw \UUtw^\top \right),\\
        \MM_3:=& 
        \left( \XXstar - \UUt \UUt^\top \right) \UUt \UUt^\top  \UUtw \UUtw^\top 
        -\left( \XXstar - \UUtw \UUtw^\top \right) \UUtw \UUtw^\top  \UUt \UUt^\top,\\
        \MM_4:=& 
         \UUt \UUt^\top  \UUtw \UUtw^\top  \UUt \UUt^\top
        - \UUtw \UUtw^\top  \UUt \UUt^\top  \UUtw \UUtw^\top,\\
        \MM_5:=& 
        \left[ \left(  \Aops - \IdOp \right) \left(\XXstar- \UUt \UUt^\top \right) \right] \UUt \UUt^\top \left( \Id + \mu \XXstar - \mu \UUt \UUt^\top \right)\\
        &-\left[ \left(  \Aopws - \IdOp \right) \left(\XXstar- \UUtw \UUtw^\top \right) \right] \UUtw \UUtw^\top \left( \Id + \mu \XXstar - \mu \UUtw \UUtw^\top \right),\\
        \MM_6:=& 
         \left( \Id + \mu \XXstar - \mu \UUt \UUt^\top \right) \UUt \UUt^\top \left[ \left(  \Aops - \IdOp \right) \left(\XXstar- \UUt \UUt^\top \right) \right]\\
        &- \left( \Id + \mu \XXstar - \mu \UUtw \UUtw^\top \right) \UUtw \UUtw^\top \left[ \left(  \Aopws - \IdOp \right) \left(\XXstar- \UUtw \UUtw^\top \right) \right],\\
        \MM_7:=& 
        \left[ \left(  \Aops - \IdOp \right) \left(\XXstar- \UUt \UUt^\top \right) \right] \UUt \UUt^\top  \left[ \left(  \Aops - \IdOp \right) \left(\XXstar- \UUt \UUt^\top \right) \right]\\
        &-  \left[ \left(  \Aopws - \IdOp \right) \left(\XXstar- \UUtw \UUtw^\top \right) \right] \UUtw \UUtw^\top  \left[ \left(  \Aopws - \IdOp \right) \left(\XXstar- \UUtw \UUtw^\top \right) \right].
    \end{align*}
    Recall that Lemma \ref{lemma:aux3} shows that
    \begin{equation*}
        \fronorm{\VXXT \MM_1}
        \le
        \left( 1- \frac{ \mu \sigma_{\min} (\XXstar)}{8} \right)
        \fronorm{
            \VXXT  \left(  \UUt \UUt^\top - \UUtw \UUtw^\top \right)
        }.
    \end{equation*}
    To complete the proof, we need to derive upper bounds for $ \fronorm{ \MM_i} $, where $i=2,3,\ldots, 7$.\\

    \noindent\textbf{Estimating $ \fronorm{\MM_2} $:}
    We compute that
    \begin{align*}
        \MM_2
        =& 
         \UUtw \UUtw^\top  \UUt \UUt^\top \left( \XXstar - \UUt \UUt^\top \right)
        - \UUt \UUt^\top  \UUtw \UUtw^\top \left( \XXstar - \UUtw \UUtw^\top \right)\\
         =&
         \left( \UUtw \UUtw^\top - \UUt \UUt^\top \right)  \UUt \UUt^\top \left( \XXstar - \UUt \UUt^\top \right)
         +\UUt \UUt^\top  \left( \UUt \UUt^\top - \UUtw \UUtw^\top \right) \left( \XXstar - \UUt \UUt^\top \right)\\
         &+\UUt \UUt^\top  \UUtw \UUtw^\top \left( \UUtw \UUtw^\top - \UUt \UUt^\top \right).
    \end{align*}
    Thus, we obtain that 
    \begin{align*}
       &\fronorm{\MM_2} \\
       \le
        &2\fronorm{ \UUtw \UUtw^\top - \UUt \UUt^\top  } \specnorm{ \UUt \UUt^\top} \specnorm{ \XXstar - \UUt \UUt^\top }
         + \specnorm{ \UUt \UUt^\top} \specnorm{  \UUtw \UUtw^\top } \fronorm{ \UUtw \UUtw^\top - \UUt \UUt^\top }\\
        \le
        &2\fronorm{ \UUtw \UUtw^\top - \UUt \UUt^\top  } \specnorm{ \UUt \UUt^\top} \specnorm{ \XXstar - \UUt \UUt^\top }\\
        &+ \specnorm{ \UUt \UUt^\top} \left( \specnorm{  \UUt \UUt^\top } + \specnorm{\UUt \UUt^\top - \UUtw \UUtw^\top } \right) \fronorm{ \UUtw \UUtw^\top - \UUt \UUt^\top }\\
        \le 
        &5  \specnorm{\XXstar}^2 \fronorm{ \UUtw \UUtw^\top - \UUt \UUt^\top  }.
    \end{align*}
    In the last inequality we used assumptions \eqref{assump:closeness6}, \eqref{assump:closeness7},
    and \eqref{assump:closeness8} for sufficiently small $\consttwo, \constthree >0$.\\
    
    \noindent\textbf{Estimating $ \fronorm{\MM_3} $:} 
    Since $\MM_3=\MM_2^\top$ it follows that
    \begin{align*}
       \fronorm{\MM_3} 
       \le  
       5 \norm{\XXstar}^2 \fronorm{\UUtw \UUtw^\top - \UUt \UUt^\top}.
    \end{align*}
    \noindent\textbf{Estimating $ \fronorm{\MM_4} $:}
    We compute that
    \begin{align*}
        \MM_4
        =
        &\left( \UUt \UUt^\top -\UUtw \UUtw^\top \right)  \UUtw \UUtw^\top  \UUt \UUt^\top
        + \UUtw \UUtw^\top   \left( \UUtw \UUtw^\top - \UUt \UUt^\top \right) \UUt \UUt^\top\\
        + &\UUtw \UUtw^\top  \UUt \UUt^\top \left(  \UUt \UUt^\top- \ \UUtw \UUtw^\top \right).
    \end{align*}
    Again, using the assumptions \eqref{assump:closeness6} and \eqref{assump:closeness8},
    and the triangle inequality we obtain that 
    \begin{align*}
       \fronorm{\MM_4} 
       \le 
       20 \specnorm{\XXstar}^2 \fronorm{\UUt \UUt^\top- \ \UUtw \UUtw^\top}.
    \end{align*}
    
    \noindent\textbf{Estimating $ \fronorm{\MM_5} $:}
    We compute
    \begin{align}
       \MM_5
        =
        &\bracing{=: \OO_1}{\left[ \left(  \Aops - \IdOp \right) \left(\XXstar- \UUt \UUt^\top \right) \right] \left( \UUt \UUt^\top - \UUtw \UUtw^\top \right) \left( \Id + \mu \XXstar - \mu \UUt \UUt^\top \right)}\nonumber\\
        &+\mu \bracing{=: \OO_2}{ \left[ \left(  \Aops - \IdOp \right) \left(\XXstar- \UUt \UUt^\top \right) \right] \UUtw \UUtw^\top \left( \UUtw \UUtw^\top -\UUt \UUt^\top \right)}\nonumber\\
        &+ \bracing{=: \OO_3}{\left[ \left(  \Aops - \Aopws \right) \left(\XXstar- \UUt \UUt^\top \right) \right] \UUtw \UUtw^\top \left( \Id + \mu \XXstar - \mu \UUtw \UUtw^\top \right)} \nonumber\\
        &+ \bracing{=: \OO_4}{\left[ \left(  \Aopws - \IdOp \right) \left( \UUtw \UUtw^\top - \UUt \UUt^\top \right) \right] \UUtw \UUtw^\top \left( \Id + \mu \XXstar - \mu \UUtw \UUtw^\top \right)}. 
        \label{equ:intern1}
    \end{align}
    We estimate the Frobenius norm of these summands individually.
    For the first term we observe that
    \begin{align*}
        \fronorm{\OO_1}
        \le
        & \specnorm{  \left(  \Aops - \IdOp \right) \left(\XXstar- \UUt \UUt^\top \right)  } 
        \fronorm{  \UUt \UUt^\top - \UUtw \UUtw^\top} \left( 1 + \mu \specnorm{\XXstar} + \mu \specnorm{\UUtw \UUtw^\top} \right)\\
        \overleq{(a)}
        & 2\specnorm{  \left(  \Aops - \IdOp \right) \left(\XXstar- \UUt \UUt^\top \right)  } 
        \fronorm{  \UUt \UUt^\top - \UUtw \UUtw^\top} \\
        \overleq{(b)}
        &2 \constfive
        \sigma_{\min} (\XXstar)
        \fronorm{  \UUt \UUt^\top - \UUtw \UUtw^\top} ,
    \end{align*} 
    where in inequality $(a)$ we have used assumptions \eqref{assump:closeness6}, \eqref{assump:closeness8}, 
    and the assumption on the step size $\mu$. 
    In inequality $(b)$ we have used assumption \eqref{assump:closenessRIP}.
    
    Using again assumptions \eqref{assump:closeness6}, \eqref{assump:closeness8}, and \eqref{assump:closenessRIP} we obtain that 
    \begin{align*}
        \fronorm{\OO_2}
         \le & 3 \constfive \sigma_{\min} (\XXstar) \specnorm{\XXstar} \fronorm{\UUtw \UUtw^\top -\UUt \UUt^\top}.
    \end{align*}
    For the term $\fronorm{\OO_3}$ we obtain that
    \begin{align}
        \fronorm{\OO_3}
        \le 
        &\fronorm{\left[ \left(  \Aops - \Aopws \right) \left(\XXstar- \UUt \UUt^\top \right) \right]\VV_{\UUtw}}
        \specnorm{ \UUtw \UUtw^\top }
        \left( 1+\mu \specnorm{\XXstar} + \mu \specnorm{\UUtw \UUtw^\top} \right)\nonumber\\
        \le 
        &\fronorm{\left[ \left(  \Aops - \Aopws \right) \left(\XXstar- \UUt \UUt^\top \right) \right] \VV_{\UUtw}}
        \left( \specnorm{ \UUt \UUt^\top - \UUtw \UUtw^\top } + \specnorm{ \UUt \UUt^\top } \right)\nonumber\\
        &\left( 1+\mu \specnorm{\XXstar} + \mu \specnorm{\UUtw \UUtw^\top - \UUt \UUt^\top} + \mu \specnorm{\UUt \UUt^\top} \right)\nonumber\\
        \overleq{(a)}
        &
        4 \fronorm{\left[ \left(  \Aops - \Aopws \right) \left(\XXstar- \UUt \UUt^\top \right) \right] \VV_{\UUtw}}
        \specnorm{\XXstar} \nonumber \\
        \overleq{(b)} 
        & 
        4\left( \delta + \frac{8 \sqrt{rd} }{\sqrt{m}} \right) \specnorm{\XXstar -\UUt \UUt^\top}
        \specnorm{\XXstar}  +4 \left( \delta + \frac{8 \sqrt{2d} }{ \sqrt{m} } \right) \fronorm{\UUt \UUt^\top - \UUtw \UUtw^\top}
        \specnorm{\XXstar}
       . \nonumber
    \end{align}
    Inequality $(a)$ follows from the assumptions \eqref{assump:closeness6} and \eqref{assump:closeness8},
    and the assumption on the step size $\mu$.
    In inequality $(b)$ we used the estimate \eqref{ineq:intern4} from Lemma \ref{lemma:auxestimates}.
    
    For the term $\fronorm{\OO_4}$ we obtain that 
    \begin{align*}
      \fronorm{\OO_4}
      \le 
      &\fronorm{\left(  \Aopws - \IdOp \right) \left( \UUtw \UUtw^\top - \UUt \UUt^\top \right) \VV_{\UUtw}}
      \left(\specnorm{\UUtw \UUtw^\top - \UUt \UUt^\top} + \specnorm{\UUt \UUt^\top} \right)\\
      &\cdot\left(1 + \mu \specnorm{\XXstar - \UUt \UUt^\top} 
       + \mu \specnorm{\UUt \UUt^\top - \UUtw \UUtw^\top} \right)\\
      \overleq{(a)}
      &3 \specnorm{\XXstar} \fronorm{\left[ \left(  \Aopws - \IdOp \right) \left( \UUtw \UUtw^\top - \UUt \UUt^\top \right) \right] \VV_{\UUtw} } \nonumber \\
      \overleq{(b)}
      &6 \delta\specnorm{\XXstar} \fronorm{  \UUtw \UUtw^\top - \UUt \UUt^\top  }.
    \end{align*}
    Inequality $(a)$ follows from assumptions \eqref{assump:closeness7} and \eqref{assump:closeness8},
    and the assumption on the step size $\mu$.
    Inequality $(b)$ is due to inequality \eqref{ineq:intern5} in Lemma \ref{lemma:auxestimates}.
    By summing up all terms we obtain that 
    \begin{align*}
        &\fronorm{\MM_5} \nonumber 
        \le
        \fronorm{\OO_1}+
        \mu \fronorm{\OO_2}+
        \fronorm{\OO_3}+
        \fronorm{\OO_4}\\
        \le
        &2 \constfive \sigma_{\min} (\XXstar) \fronorm{  \UUt \UUt^\top - \UUtw \UUtw^\top}
        + 3\mu \constfive \sigma_{\min} (\XXstar)
        \specnorm{\XXstar} \fronorm{\UUtw \UUtw^\top -\UUt \UUt^\top}\\
        &+4 \left( \delta + \frac{8 \sqrt{rd} }{\sqrt{m}} \right) 
        \specnorm{\XXstar} \specnorm{\XXstar -\UUt\UUtT}
        %\left( \sigma_{\min} (\XXstar) \right)^2\\
        +4 \left( \delta + \frac{4 \sqrt{2d} }{\sqrt{m}} \right) 
        \specnorm{\XXstar} \fronorm{\UUt \UUt^\top -\UUtw \UUtw^\top}\\
        &+ 6 \delta \specnorm{\XXstar} \fronorm{  \UUtw \UUtw^\top - \UUt \UUt^\top  }\\
        =
        & \left[ \left( \left( 2 + 3 \mu \right) \constfive + 6 \kappa \delta \right) \sigma_{\min} (\XXstar) 
        +4 \left( \delta + \frac{4 \sqrt{2d}}{\sqrt{m}} \right) \specnorm{\XXstar}   \right] \fronorm{  \UUt \UUt^\top - \UUtw \UUtw^\top}\\
        &+4\left(\delta+ \frac{8 \sqrt{rd}}{\sqrt{m}}  \right)
        \specnorm{\XXstar} \specnorm{\XXstar -\UUt\UUtT}\\
        \overleq{(a)}
        & \left( \left( \left( 2 + 3 \mu \right) \constfive + 6\constsix \right) \sigma_{\min} (\XXstar) 
        + 8 \constsix \sigma_{\min} (\XXstar)    \right) \fronorm{  \UUt \UUt^\top - \UUtw \UUtw^\top}
        +8 \constsix \sigma_{\min} (\XXstar) \specnorm{\XXstar -\UUt\UUtT}\\
        \overleq{(b)}
        & \frac{  \sigma_{\min} (\XXstar)}{100} \cdot \fronorm{  \UUt \UUt^\top - \UUtw \UUtw^\top}
        +8 \constsix \sigma_{\min} (\XXstar) \specnorm{\XXstar -\UUt\UUtT}\\
        \overleq{(c)}
        & \frac{ 3 \sigma_{\min} (\XXstar)}{100} \cdot \fronorm{ \VXXT \left( \UUt \UUt^\top - \UUtw \UUtw^\top \right)}
        +8 \constsix \sigma_{\min} (\XXstar) \specnorm{\XXstar -\UUt\UUtT},
    \end{align*}
    where in inequality $(a)$ we used the assumption \eqref{assump:closenessDelta}.
    Inequality $(b)$ follows from choosing the constants $\constfive$ and $\constsix$ small enough. 
    To obtain inequality $(c)$ we applied Lemma \ref{lemma:auxcloseness1}.
    \\

    \noindent\textbf{Estimating $\fronorm{\MM_6 }$:}\\
    Since $\MM_6 = \MM_5^\top $ we obtain that 
    \begin{align*}
        \fronorm{\MM_6}
        \le 
        \frac{ 3 \sigma_{\min} (\XXstar)}{100} \cdot \fronorm{ \VXXT \left( \UUt \UUt^\top - \UUtw \UUtw^\top \right)}
        +8 \constsix \sigma_{\min} (\XXstar) \specnorm{\XXstar -\UUt\UUtT}.
    \end{align*}
    
    \noindent\textbf{Estimating $ \fronorm{ \MM_7} $:}
    To deal with the term $\MM_7$ we first compute that 
    \begin{align*}
        \MM_7=
        &\bracing{=:\LL_1}{\left[ \left(  \Aops - \IdOp \right) \left(\XXstar- \UUt \UUt^\top \right) \right] \left( \UUt \UUt^\top - \UUtw \UUtw^\top \right)  \left[ \left(  \Aops - \IdOp \right) \left(\XXstar- \UUt \UUt^\top \right) \right]} \\
        &+\bracing{=:\LL_2}{\left[ \left(  \Aops - \IdOp \right) \left(\UUtw \UUtw^\top- \UUt \UUt^\top \right) \right] \UUtw \UUtw^\top  \left[ \left(  \Aops - \IdOp \right) \left(\XXstar- \UUt \UUt^\top \right) \right]}\\
        &+\bracing{=:\LL_3}{\left[ \left(  \Aops - \IdOp \right) \left(\XXstar- \UUtw \UUtw^\top \right) \right] \UUtw \UUtw^\top  \left[ \left(  \Aops - \IdOp \right) \left(\UUtw \UUtw^\top- \UUt \UUt^\top \right) \right]}\\
        &+\bracing{=:\LL_4}{\left[ \left(  \Aops - \Aopws \right) \left(\XXstar- \UUtw \UUtw^\top \right) \right] \UUtw \UUtw^\top  \left[ \left(  \Aops - \IdOp \right) \left(\XXstar- \UUtw \UUtw^\top \right) \right]}\\
        &+\bracing{=:\LL_5}{\left[ \left(  \Aopws - \IdOp \right) \left(\XXstar- \UUtw \UUtw^\top \right) \right] \UUtw \UUtw^\top  \left[ \left(  \Aops - \Aopws \right) \left(\XXstar- \UUtw \UUtw^\top \right) \right]}.
    \end{align*}
    We estimate the Frobenius norm of the summands individually.
    For $\fronorm{\LL_1}$ we obtain that
    \begin{align*}
    \fronorm{\LL_1}
    &\le 
    \specnorm{\left(  \Aops - \IdOp \right) \left(\XXstar- \UUt \UUt^\top \right)}
    \fronorm{\UUtw \UUtw^\top- \UUt \UUt^\top}
    \specnorm{\left(  \Aops - \IdOp \right) \left(\XXstar- \UUt \UUt^\top \right)}\\
    &\le 
    \constfive^2 \sigma_{\min} (\XXstar)^2 
    \fronorm{\UUtw \UUtw^\top- \UUt \UUt^\top},
    \end{align*}
    where we have used assumption \eqref{assump:closenessRIP}.
    Next, we note that
    \begin{align*}
        \fronorm{\LL_2}
        \le
        &\fronorm{\left(  \Aops - \IdOp \right) \left(\UUtw \UUtw^\top- \UUt \UUt^\top  \right)\VV_{\UUtw}}
        \left( \specnorm{\UUt \UUt^\top} + \specnorm{ \UUt \UUt^\top -\UUtw \UUtw^\top} \right)\\
        &\cdot 
        \specnorm{\left(  \Aops - \IdOp \right) \left(\XXstar- \UUt \UUt^\top \right)}\\
        \overleq{(a)}
        &3\constfive \sigma_{\min} \left( \XXstar \right) \specnorm{\XXstar}
        \fronorm{\left(  \Aops - \IdOp \right) \left(\UUtw \UUtw^\top- \UUt \UUt^\top  \right)\VV_{\UUtw}} \nonumber
        \\
        \overleq{(b)}
        &3 \constfive \delta \sigma_{\min} (\XXstar) \specnorm{\XXstar}
        \fronorm{\UUtw \UUtw^\top- \UUt \UUt^\top}\\
        \overleq{(c)}
        & 3 \constfive \constsix \sigma_{\min}^2 (\XXstar) 
        \fronorm{\UUtw \UUtw^\top- \UUt \UUt^\top}.
    \end{align*}
    Inequality $(a)$ follows from 
    assumptions \eqref{assump:closeness6}, \eqref{assump:closeness8}, and \eqref{assump:closenessRIP}.
    Inequality $(b)$ is due to Lemma \ref{lemma: RIP}
    and inequality $(c)$ is due to assumption \eqref{assump:closenessDelta}.
    In order to estimate $\fronorm{\LL_3}$ we note that
    \begin{align*}
        \fronorm{\LL_3}
        &\left(\specnorm{\left(  \Aops - \IdOp \right) \left(\XXstar- \UUt \UUtT \right)} 
        +\fronorm{\left[\left(  \Aops - \IdOp \right) \left(\UUt \UUt^\top- \UUtw \UUtw^\top \right) \right] \VV_{\UUtw} } \right)\\
        &\cdot \left( \specnorm{\UUt \UUt^\top} + \fronorm{\UUt \UUt^\top - \UUtw \UUtw^\top} \right)
        \fronorm{  \left[ \left(  \Aops - \IdOp \right) \left(\UUtw \UUtw^\top- \UUt \UUt^\top \right)\right] \VV_{\UUtw} }\\
        \overleq{(a)}
        &\left( \constfive \sigma_{\min} (\XXstar) + \delta \fronorm{\UUt \UUt^\top - \UUtw \UUtw^\top} \right)
        \left(  2\specnorm{\XXstar} + \constthree \sigma_{\min} (\XXstar) \right)
        \delta \fronorm{\UUtw \UUtw^\top- \UUt \UUt^\top }\\
        \overleq{(b)}
        & 3 \left( \constfive + \delta \constthree  \right) \delta \sigma_{\min} \left( \XXstar \right) \specnorm{\XXstar} \fronorm{\UUtw \UUtw^\top- \UUt \UUt^\top }\\
        \overleq{(c)}
        & 3 \constsix  \left( \constfive + \delta \constthree  \right)  \sigma_{\min}^2 \left( \XXstar \right) \fronorm{\UUtw \UUtw^\top- \UUt \UUt^\top }.
    \end{align*}
    In inequality $(a)$ we used the assumptions \eqref{assump:closeness6}, \eqref{assump:closeness8}, \eqref{assump:closenessRIP},
    and Lemma \ref{lemma: RIP}.
    Inequality $(b)$ follows from assumption \eqref{assump:closeness8} and since the constant $\constthree>0$ is chosen small enough.
    Inequality $(c)$ is due to assumption \eqref{assump:closenessDelta}.
    
    Next, we can estimate $\fronorm{\LL_4}$ by
    \begin{align*}
        \fronorm{\LL_4}
        \le 
        &\fronorm{\left[ \left(  \Aops - \Aopws \right) \left(\XXstar- \UUtw \UUtw^\top \right)  \right]\VV_{\UUtw}} 
        \left(\specnorm{\UUt \UUt^\top}+\specnorm{\UUt \UUt^\top - \UUtw \UUtw^\top} \right)\\
        & \cdot \left( \specnorm{  \left(  \Aops - \IdOp \right) \left(\XXstar- \UUt \UUt^\top \right) }
        +  \specnorm{  \left(  \Aops - \IdOp \right) \left(\UUtw \UUtw^\top - \UUt \UUt^\top \right) }\right)\\ 
        \overleq{(a)}
        &\fronorm{\left[ \left(  \Aops - \Aopws \right) \left(\XXstar- \UUtw \UUtw^\top \right)  \right]\VV_{\UUtw}} 
        \left(2\specnorm{\XXstar}+ \constthree  \sigma_{\min} (\XXstar) \right)\\
        & \cdot \left( \constfive \sigma_{\min} (\XXstar) +  \delta \fronorm{\UUtw \UUtw^\top - \UUt \UUtT}  \right)\\ 
        \overleq{(b)} 
        &3 \left( \constfive + \constthree \delta \right)
        \sigma_{\min} (\XXstar) \specnorm{\XXstar} 
        \fronorm{\left[ \left(  \Aops - \Aopws \right) \left(\XXstar- \UUtw \UUtw^\top \right)  \right]\VV_{\UUtw}}  
        \nonumber \\
        \overleq{(c)} 
        & 3 \left( \constfive + \constthree \delta \right) 
        \left( \delta  + 8 \sqrt{\frac{rd}{m}} \right)
        \sigma_{\min} (\XXstar) \specnorm{\XXstar}
        \specnorm{\XXstar - \UUtw \UUtw^\top}
        \nonumber\\
        \le
        & 3 \left( \constfive + \constthree \delta \right) 
        \left( \delta  + 8 \sqrt{\frac{rd}{m}} \right)
        \sigma_{\min} (\XXstar) \specnorm{\XXstar}
         \left(\specnorm{\XXstar - \UUt \UUtT} 
        + \specnorm{\UUtw \UUtw^\top - \UUt \UUtT}\right)
        \nonumber \\
        \overleq{(d)}
        & 6 \constsix \left( \constfive + c_3 \delta \right)
        \sigma_{\min}^2 \left(\XXstar \right) 
        \left(\specnorm{\XXstar - \UUt \UUtT} + \specnorm{\UUtw \UUtw^\top - \UUt \UUtT}\right).
    \end{align*}
    In inequality $(a)$ we used assumptions \eqref{assump:closeness6}, \eqref{assump:closeness8}, and
    \eqref{assump:closenessRIP}
    as well as Lemma \ref{lemma: RIP}.
    Inequality $(b)$ uses assumption \eqref{assump:closeness8}.
    Inequality $(c)$ follows from inequality \eqref{ineq:intern7} in Lemma \ref{lemma:auxestimates}.
    Inequality $(d)$ is due to assumption \eqref{assump:closenessDelta}.
    
    The norm $\fronorm{\LL_5}$ can be estimated by
    \begin{align}
        \fronorm{\LL_5} 
        \le 
        &\specnorm{\left(  \Aopws - \IdOp \right) \left(\XXstar- \UUtw \UUtw^\top \right)}
        \specnorm{\UUtw \UUtw^\top}
        % &\cdot 
        \fronorm{ \VV_{\UUtw}^\top \left[ \left(  \Aops -  \Aopws \right) \left(  \XXstar- \UUtw \UUtw^\top  \right) \right] }
        \nonumber \\
        \overleq{(a)}
        &3 \specnorm{\XXstar}
        \specnorm{\left(  \Aopws - \IdOp \right) \left(\XXstar- \UUtw \UUtw^\top \right) }
        \fronorm{  \left[ \left(  \Aops -  \Aopws \right) \left(  \XXstar- \UUtw \UUtw^\top  \right) \right] \VV_{\UUtw}}.
        \label{ineq:intern41}
    \end{align}
    In inequality $(a)$ we used the triangle inequality 
    and the assumptions \eqref{assump:closeness6}, \eqref{assump:closeness8}.
    In order to proceed, we note first that
    \begin{align*}
        &\specnorm{\left( \Aopws - \IdOp \right) \left( \XXstar - \UUtw \UUtw^\top \right) }\\
        \overleq{(a)}
        &\specnorm{ \left( \Aops - \IdOp \right) \left(\XXstar - \UUt \UUt^\top \right) }
        +\left(  \delta+ 8 \sqrt{\frac{rd}{m}} \right)\specnorm{\XXstar - \UUt \UUt^\top }\\
        &+\left(  2\delta + 4 \sqrt{\frac{2d}{m}}  \right)\fronorm{\UUtw \UUtw^\top -\UUt \UUt^\top}\\
        \overleq{(b)}
        &\specnorm{ \left( \Aops - \IdOp \right) \left(\XXstar - \UUt \UUt^\top \right) }
        +
        \frac{2\constsix}{\kappa} \specnorm{\XXstar - \UUt \UUtT}
        +
        \frac{3 \constsix}{\kappa} \fronorm{ \UUtw \UUtw^\top - \UUt \UUtT}\\
        \overleq{(c)} 
        & \left( \constfive + \frac{2 \consttwo \constsix }{\kappa} + \frac{3 \constthree \constsix}{\kappa} \right) \sigma_{\min} (\XXstar),
    \end{align*}
    where in inequality $(a)$ we used Lemma \ref{lemma:auxestimates}.
    Inequality $(b)$ follows from the assumptions \eqref{assump:closenessDelta}.
    Inequality $(c)$ is due to assumption \eqref{assump:closeness7}, \eqref{assump:closeness8}, 
    and \eqref{assump:closenessRIP}.
    Moreover, it holds that
    \begin{align*}
        \fronorm{  \left[ \left(  \Aops -  \Aopws \right) \left(  \XXstar- \UUtw \UUtw^\top  \right) \right] \VV_{\UUtw}}
        \overleq{(a)}
        & \left(\delta + 8 \sqrt{\frac{rd}{m}} \right)
        \specnorm{\XXstar - \UUtw \UUtw^\top}\\
        \overleq{(b)}
        & \frac{2\constsix}{\kappa}
        \left( \specnorm{\XXstar - \UUt \UUtT} 
        + \specnorm{ \UUtw \UUtw^\top - \UUt \UUtT }  \right).
    \end{align*}
    Inequality $(a)$ follows from inequality \eqref{ineq:intern7} in Lemma \ref{lemma:auxestimates}.
    Inequality $(b)$ is due to assumption \eqref{assump:closenessDelta}. 
    Inserting the last two inequality chains into inequality \eqref{ineq:intern41} we obtain that
    \begin{align*}
    \fronorm{\LL_5} 
    \le 
    6 \constsix
    \left( \constfive + \frac{2 \consttwo \constsix }{\kappa} + \frac{3 \constthree \constsix}{\kappa} \right) 
    \sigma_{\min}^2 (\XXstar) 
    \left( \specnorm{\XXstar - \UUt \UUtT} 
    + \specnorm{ \UUtw \UUtw^\top - \UUt \UUtT }  \right)
    \end{align*}

    By summing up all terms $\fronorm{\LL_i}$ for $i=1,\ldots,5$ it follows that 
    \begin{align*}
        \fronorm{\MM_7}
        % \le 
        % &\sum_{i=1}^5 \fronorm{\LL_i}\\
        \le
        &\constfive^2 \sigma^2_{\min} (\XXstar) \fronorm{\UUtw \UUtw^\top- \UUt \UUt^\top}\\
        &+
        3 \constfive \constsix \sigma_{\min}^2 (\XXstar) 
        \fronorm{\UUtw \UUtw^\top- \UUt \UUt^\top}\\
        &+
        3 \constsix  \left( \constfive + \constthree \delta   \right)  \sigma_{\min}^2 \left( \XXstar \right) \fronorm{\UUtw \UUtw^\top- \UUt \UUt^\top }\\
        &+
        6 \constsix \left( \constfive + \constthree \delta \right)
        \sigma_{\min}^2 \left(\XXstar \right) 
        \left(\specnorm{\XXstar - \UUt \UUtT} + \specnorm{\UUtw \UUtw^\top - \UUt \UUtT}\right)\\
        &+
        6 \constsix
        \left( \constfive + \frac{2 \consttwo \constsix }{\kappa} + \frac{3 \constthree \constsix}{\kappa} \right) 
        \sigma_{\min}^2 (\XXstar) 
        \left( \specnorm{\XXstar - \UUt \UUtT} 
        + \specnorm{ \UUtw \UUtw^\top - \UUt \UUtT }  \right)\\
        \le
        &\sigma_{\min}^2 (\XXstar) 
        \left( 
            \specnorm{\XXstar - \UUt \UUtT}
            +
            \fronorm{\UUtw \UUtw^\top - \UUt \UUtT}
         \right),
    \end{align*}
    where the last inequality holds since the absolute constants $\constthree, \constfive, \constsix > 0$
    are chosen small enough.

    Using the decomposition \eqref{ineq:intern42}, the triangle inequality,
    combined with our estimates for $\fronorm{\VXXT \MM_1}$ and for $\fronorm{\MM_i}$, 
    where $ 2 \le i \le 7 $,
    we obtain that
    \begin{align*}
        &\fronorm{\VXXT \left( \UUtplus \UUtplus^\top - \UUtplusw \UUtplusw^\top \right)}\\
        \le 
        &\left( 1- \frac{\mu \sigma_{\min} (\XXstar)}{8} \right)
        \fronorm{ \VXXT \left( \UUt \UUt^\top - \UUtw \UUtw^\top \right) }
        +30 \mu^2 \specnorm{\XXstar}^2 \fronorm{\UUt \UUt^\top - \UUtw \UUtw^\top}\\
        &+\frac{ 3 \mu \sigma_{\min} (\XXstar)}{50} \cdot \fronorm{ \VXXT \left( \UUt \UUt^\top - \UUtw \UUtw^\top \right)}
        +16 \mu \constsix \sigma_{\min} (\XXstar) \specnorm{\XXstar -\UUt\UUtT}
        \\
        &+\mu^2\sigma_{\min}^2 (\XXstar) \left( \specnorm{\XXstar - \UUt \UUtT}+\fronorm{\UUtw \UUtw^\top - \UUt \UUtT}\right)\\
        \overleq{(a)} 
        &\left( 1- \frac{\mu \sigma_{\min} (\XXstar)}{8} \right)
        \fronorm{ \VXXT \left( \UUt \UUt^\top - \UUtw \UUtw^\top \right) }
        +90  \mu \constfour \sigma_{\min} (\XXstar) \fronorm{\VXXT \left(\UUt \UUt^\top - \UUtw \UUtw^\top \right)}\\
        &+\frac{ 3 \mu  \sigma_{\min} (\XXstar)}{50} \cdot \fronorm{ \VXXT \left( \UUt \UUt^\top - \UUtw \UUtw^\top \right)}
        + 16 \mu \constsix \sigma_{\min} (\XXstar) \specnorm{\XXstar -\UUt\UUtT}\\
        &+\mu^2\sigma_{\min}^2 (\XXstar) \specnorm{\XXstar - \UUt \UUtT}
        + \frac{3\mu \constfour \sigma_{\min} \left( \XXstar \right)}{\kappa}  \fronorm{\VXXT \left(\UUtw \UUtw^\top - \UUt \UUtT \right)}
        \\
        \overleq{(b)}
        &\left( 1- \frac{\mu \sigma_{\min} (\XXstar)}{16} \right)
        \fronorm{ \VXXT \left( \UUt \UUt^\top - \UUtw \UUtw^\top \right) }
        +
        \mu \left( 16 \constsix   + \mu \sigma_{\min} (\XXstar) \right) \sigma_{\min}(\XXstar)\specnorm{\XXstar - \UUt \UUtT}\\
        \le
        &\left( 1- \frac{\mu \sigma_{\min} (\XXstar)}{16} \right)
        \fronorm{ \VXXT \left( \UUt \UUt^\top - \UUtw \UUtw^\top \right) }
        +
        \mu  \sigma_{\min}(\XXstar)\specnorm{\XXstar - \UUt \UUtT},
    \end{align*}
    where inequality $(a)$ is due to Lemma \ref{lemma:auxcloseness1} and the assumption on the step size $\mu$.
    Inequality $(b)$ is obtained by choosing  $\constfour < 1/2$, 
    and the last inequality is obtained by choosing $c_6<\frac{1}{32}$.
    \end{proof}
\section{Proof of the lemmas controlling the distance between $\XXstar$ and $\UUt \UUtT$
(Lemma \ref{lemma:localconvaux}, Lemma \ref{lemma:localconv}, and Lemma \ref{lemma:convaprioribound})}

\subsection{Proof of Lemma \ref{lemma:localconvaux}}\label{sec:prooflocalconv_a}

\begin{proof}[Proof of Lemma \ref{lemma:localconvaux}]
    We first note that 
    \begin{align*}
        \VXXPT \UUt \UUtT \VXXP
        =
        &\VXXPT \VUUt \VUUtP \UUt \UUtT \VXXP\\
        =
        &\VXXPT \VUUt\left(  \VXXT \VUUt \right)^{-1} \VXXT \VUUt  \VUUtP \UUt \UUtT \VXXP\\
        =
        &\VXXPT \VUUt\left(  \VXXT \VUUt \right)^{-1} \VXXT  \UUt \UUtT \VXXP\\
        =
        &\VXXPT \VUUt\left(  \VXXT \VUUt \right)^{-1} \VXXT  \left( \UUt \UUtT - \XXstar \right) \VXXP.
    \end{align*} 
    Using the submultiplicativity property of the $\triplenorm{\cdot}$-norm it follows that
    \begin{align*}
       \triplenorm{\VXXPT \UUt \UUtT \VXXP}
       \le 
       &\specnorm{\VXXPT \VUUt}  \specnorm{ \left(  \VXXT \VUUt \right)^{-1}}
       \triplenorm{ \VXXT  \left( \UUt \UUtT - \XXstar \right) \VXXP}\\
       =
       &\frac{\specnorm{\VXXPT \VUUt}}{\sigma_{\min} (\VXXT \VUUt) }
        \triplenorm{ \VXXT  \left( \UUt \UUtT - \XXstar \right) \VXXP}.
    \end{align*}
    Recall that
    \begin{align*}
       \sigma^2_{\min} \left( \VXXT \VUUt \right)
       &=
       1- \specnorm{ \VUUtP \VXXP \VXXPT \VUUtP }
       =
       1- \specnorm{\VUUtP \VXXP }^2 \ge \frac{1}{4},
    \end{align*} 
    where in the last inequality, we used assumption \eqref{ineq:localconv1}.
    It follows that 
    \begin{equation*}
       \triplenorm{\VXXPT \UUt \UUtT \VXXP}
       \le 
       2 \specnorm{\VXXPT \VUUt} 
       \triplenorm{ \VXXT  \left( \UUt \UUtT - \XXstar \right) \VXXP}.
    \end{equation*} 
    This proves inequality \eqref{ineq:localconv2}.
    To prove inequality \eqref{ineq:localconv3} we note that
    \begin{align*}
        \triplenorm{\UUt \UUtT - \XXstar }
        \le 
        &\triplenorm{ \VXXT \left( \UUt \UUtT - \XXstar \right) }
        +
        \triplenorm{ \VXXT \left( \UUt \UUtT - \XXstar \right) \VXXP }\\
        &+
        \triplenorm{ \VXXPT \left( \UUt \UUtT - \XXstar \right) \VXXP }\\
        \le
        & 2\triplenorm{ \VXXT \left( \UUt \UUtT - \XXstar \right) }
        +\triplenorm{ \VXXPT \UUt \UUtT  \VXXP }\\
        \le
        & 2 \left( 1+ \specnorm{ \VXXPT \VUUt } \right)
        \triplenorm{ \VXXT \left( \UUt \UUtT - \XXstar \right) },
    \end{align*}
    where in the last inequality we used \eqref{ineq:localconv2}. 
    This completes the proof of Lemma \ref{lemma:localconvaux}.
    \end{proof}

\subsection{Proof of Lemma \ref{lemma:localconv}}\label{sec:prooflocalconv_b}

    \begin{proof}[Proof of Lemma \ref{lemma:localconv}]
        We define the shorthand notation
        \begin{equation*}
            \MM_t 
            :=
            \left(\Aops\right) \left( \XXstar - \UUt \UUtT \right)
            =
            \XXstar - \UUt \UUtT 
            + \bracing{=: \EEb_t}{\left( \Aops - \IdOp \right) \left(\XXstar - \UUt \UUtT\right)}.
        \end{equation*}
        Thus, we have that 
        \begin{equation*}
            \UUtplus 
            = \left(\Id + \mu \MM_t \right) \UUt.
        \end{equation*}
        We compute that 
        \begin{align*}
            &\XXstar - \UUtplus \UUtplus^\top\\
            =
            &\XXstar - \UUt \UUtT 
            - \mu \MM_t \UUt \UUtT 
            - \mu \UUt \UUtT \MM_t 
            - \mu^2 \MM_t \UUt \UUtT \MM_t\\
            =
            &\XXstar - \UUt \UUtT 
            - \mu \left( \XXstar - \UUt \UUtT \right) \UUt \UUtT
            - \mu  \UUt \UUtT \left( \XXstar - \UUt \UUtT \right)
            - \mu \EEb_t \UUt \UUtT- \UUt \UUtT \EEb_t\\
            &-\mu^2\MM_t \UUt \UUtT \MM_t\\
            =
            & \left(\Id - \mu \UUt \UUtT \right) \left(\XXstar - \UUt \UUtT\right) \left( \Id -\mu \UUt \UUtT \right)
            -\mu^2 \UUt \UUtT \left( \XXstar - \UUt \UUtT \right) \UUt \UUtT\\
            &- \mu \EEb_t \UUt \UUtT - \mu \UUt \UUtT \EEb_t -\mu^2 \MM_t \UUt \UUtT \MM_t.
        \end{align*}
        It follows that
        \begin{align*}
            &\VXXT \left( \XXstar - \UUtplus \UUtplus^\top \right)\\
            =
            &\VXXT \left(\Id - \mu \UUt \UUtT \right) \VXX \VXXT \left(\XXstar - \UUt \UUtT\right) \left( \Id -\mu \UUt \UUtT \right)\\
            & + \VXXT\left(\Id - \mu \UUt \UUtT \right) \VXXP \VXXPT \left(\XXstar - \UUt \UUtT\right) \left( \Id -\mu \UUt \UUtT \right)\\
            &-\mu^2 \VXXT \UUt \UUtT \left( \XXstar - \UUt \UUtT \right) \UUt \UUtT
            - \mu \VXXT\EEb_t \UUt \UUtT - \mu \VXXT\UUt \UUtT \EEb_t -\mu^2 \VXXT\MM_t \UUt \UUtT \MM_t\\
            =
            & \bracing{=:(I)}{ \left(\Id - \mu \VXXT\UUt \UUtT \VXX\right)  \VXXT \left(\XXstar - \UUt \UUtT\right) \left( \Id -\mu \UUt \UUtT \right)}\\
            & + \mu \bracing{=:(II)}{ \VXXT \UUt \UUtT \VXXP \VXXPT  \UUt \UUtT \left( \Id -\mu \UUt \UUtT \right)}\\
            &-\bracing{=:(III)}{\left(\mu^2 \VXXT \UUt \UUtT \left( \XXstar - \UUt \UUtT \right) \UUt \UUtT
            + \mu \VXXT\EEb_t \UUt \UUtT + \mu \VXXT\UUt \UUtT \EEb_t + \mu^2 \VXXT\MM_t \UUt \UUtT \MM_t \right)}.
        \end{align*}
        We estimate the spectral norm of these terms individually.
        
        \paragraph*{Estimating term $(I)$:}
        We obtain that 
        \begin{align*}
            &\triplenorm{
            \left(\Id - \mu \VXXT\UUt \UUtT \VXX\right)  \VXXT \left(\XXstar - \UUt \UUtT\right) \left( \Id -\mu \UUt \UUtT \right)
            }\\
            \overleq{(a)}
            &\specnorm{\Id - \mu \VXXT\UUt \UUtT \VXX} 
            \triplenorm{\VXXT \left(\XXstar - \UUt \UUtT\right)}
            \specnorm{\Id -\mu \UUt \UUtT}\\
            \overleq{(b)}
            &\specnorm{\Id - \mu \VXXT\UUt \UUtT \VXX} 
            \triplenorm{\VXXT \left(\XXstar - \UUt \UUtT\right)}\\
            \overeq{(c)}
            &\left( 1 -\mu  \sigma^2_{\min} (\VXXT \UUt) \right)
            \triplenorm{\VXXT \left(\XXstar - \UUt \UUtT\right)}\\
            \le
            &\left( 1 -\mu \left( \sigma_{\min} (\VXXT \VUUt) \sigma_{\min} \left(\UUt \right) \right)^2 \right)
            \triplenorm{\VXXT \left(\XXstar - \UUt \UUtT\right)}\\
            \overleq{(d)}
            &\left( 1 - \frac{\mu}{2}  \sigma_{\min}^2 \left( \UUt \right)  \right)
            \triplenorm{\VXXT \left(\XXstar - \UUt \UUtT\right)}\\
            \overleq{(e)}
            &\left( 1 - \frac{\mu}{4}  \sigma_{\min} \left( \XXstar  \right) \right)
            \triplenorm{\VXXT \left(\XXstar - \UUt \UUtT\right)}.
        \end{align*}
        Inequality $(a)$ is due to the submultiplicativity of the $\triplenorm{\cdot}$-norm.
        In inequality $(b)$ and equality $(c)$ we used the assumptions $\specnorm{\UUt} \le \sqrt{2 \specnorm{\XXstar}}$
        and $\mu \le \frac{1}{1024 \kappa \specnorm{\XXstar}}$.
        %for a sufficiently small absolute constant $c>0$.
        In inequality $(d)$ we used assumption \eqref{ineq:localconv8}.
        Inequality $(e)$ follows from assumption \eqref{ineq:localconv5},
        which, due to Weyl's inequality, implies 
        $\sigma^2_{\min} \left(\UUt\right)  \ge \frac{1}{2} \sigma_{\min} (\XXstar)$.

        \paragraph*{Estimating term $(II)$:}
        We note that 
        \begin{align*}
            &\triplenorm{\VXXT \UUt \UUtT \VXXP \VXXPT  \UUt \UUtT \left( \Id -\mu \UUt \UUtT \right)}\\
            =
            &\triplenorm{\VXXT \left(\UUt \UUtT  -\XXstar \right) \VXXP \VXXPT \left(  \UUt \UUtT - \XXstar \right) \left( \Id -\mu \UUt \UUtT \right)}\\
            \overleq{(a)}
            &\specnorm{\VXXPT \left( \UUt \UUtT - \XXstar  \right) } 
            \triplenorm{\VXXPT \left(\UUt \UUtT -\XXstar \right)}
            \specnorm{\Id -\mu \UUt \UUtT}\\
            \overleq{(b)}
            & \specnorm{\VXXPT \left( \UUt \UUtT - \XXstar \right)} 
            \triplenorm{\VXXPT \left( \UUt \UUtT -\XXstar \right)}\\
            \le
            &\specnorm{\VXXPT \left( \UUt \UUtT - \XXstar  \right) } 
            \left( \specnorm{\VXXPT \left( \UUt \UUtT -\XXstar \right) \VXX} 
             + \triplenorm{\VXXPT \left( \UUt \UUtT -\XXstar \right) \VXXP}\right)\\
            \le
            &\specnorm{\VXXPT \left( \UUt \UUtT - \XXstar  \right) } 
            \left( \triplenorm{\VXXT \left( \UUt \UUtT -\XXstar \right)} 
             + \triplenorm{\VXXPT \left( \UUt \UUtT -\XXstar \right) \VXXP}\right)\\
            \overleq{(c)}
            &2\specnorm{\VXXPT \left( \UUt \UUtT - \XXstar  \right) } 
            \left( 1+  \specnorm{ \VXXPT \VUUt }\right)
             \triplenorm{\VXXT \left( \UUt \UUtT -\XXstar \right)}\\
            \overleq{(d)}
            &3\specnorm{ \UUt \UUtT - \XXstar   } 
             \triplenorm{\VXXT \left( \UUt \UUtT -\XXstar \right)}.
        \end{align*}
        In inequality $(a)$ we used the submultiplicativity
        of the $\triplenorm{\cdot}$-norm.
        Inequality $(b)$ follows from the assumption  $\specnorm{\UUt} \le \sqrt{2 \specnorm{\XXstar}}$
        and $\mu \le \frac{1}{1024 \kappa \specnorm{\XXstar}}$.
        In inequality $(c)$, we used Lemma \ref{lemma:localconvaux}.
        In inequality $(d)$ we used the assumption $ \specnorm{\VXXPT \VUUt} \le \frac{1}{2} $.
        Thus, by using the assumption $\specnorm{ \XXstar - \UUt \UUtT} \le \frac{\sigma_{\min} (\XXstar) }{48}$ it follows that 
        \begin{equation*}
            \triplenorm{(II)}
            \le 
            \frac{\sigma_{\min} \left(\XXstar \right) }{16}
            \triplenorm{\VXXT \left( \UUt \UUtT - \XXstar \right)}.
        \end{equation*}
        
        \paragraph*{Estimating term $(III)$:}
        We first note that 
        \begin{align}
        \triplenorm{\MM_t \VUUt}
         \overleq{(a)}
        &\triplenorm{\XXstar -\UUt \UUtT} 
        + \triplenorm{ \left[ \left( \Aops - \IdOp \right) \left( \XXstar - \UUt \UUtT \right) \right] \VUUt } 
        \nonumber \\
        \overleq{(b)} 
        &4 \triplenorm{ \VXXT \left( \XXstar - \UUt \UUtT \right)} 
        + \triplenorm{\EEb_t \VUUt},
        \label{ineq:localconv7}
        \end{align}
        where $(a)$ follows from the triangle inequality and
        % Inequality $(b)$ is due to assumption \eqref{ineq:localconv4}.
        $(b)$ follows from Lemma \ref{lemma:localconvaux}.
        Moreover, we have that
        \begin{align}
            \specnorm{\MM_t}
            \le
            \specnorm{\XXstar - \UUt \UUtT}
            +
            \specnorm{ \left(\Aops - \IdOp \right) 
            \left( \XXstar - \UUt \UUtT  \right)} 
            \overleq{(a)}
            \sigma_{\min} \left( \XXstar \right).
            \label{ineq:localconv9}
        \end{align}
        Inequality $(a)$ follows from assumptions \eqref{ineq:localconv5} and \eqref{ineq:localconv4}.
        Thus, we obtain for term $(III)$ that
        \begin{align*}
            \triplenorm{(III)}
            \le 
            &\mu^2 \specnorm{\UUt}^4 \triplenorm{\XXstar - \UUt \UUtT}
            + 2 \mu \specnorm{\UUt}^2 \triplenorm{\EEb_t \VUUt }
            +\mu^2 \specnorm{\UUt}^2 \triplenorm{\MM_t \VUUt} \specnorm{\MM_t}\\
            \overleq{(a)}
            & 16 \mu^2 \specnorm{\XXstar}^2 \triplenorm{\VXXT \left(\XXstar - \UUt \UUtT \right)}
            +4\mu \specnorm{\XXstar} \triplenorm{\EEb_t \VUUt}
            +2 \mu^2 \sigma_{\min} \left( \XXstar \right) \specnorm{\XXstar} \triplenorm{\MM_t \VUUt}  \\
            \overleq{(b)}
            & \left( 16 \mu^2 \specnorm{\XXstar}^2 + 8\mu^2 \sigma_{\min} (\XXstar) \specnorm{\XXstar} \right)
            \triplenorm{\VXXT \left(\XXstar - \UUt \UUtT \right)}\\
            &+\left( 4 \mu \specnorm{\XXstar } + 2\mu^2 \sigma_{\min} (\XXstar) \specnorm{\XXstar}  \right) \triplenorm{\EEb_t \VUUt}\\
            \overleq{(c)} 
            & \frac{ \mu  \sigma_{\min} \left( \XXstar \right)}{16} 
            \triplenorm{\VXXT \left( \XXstar -\UUt \UUtT \right)}
            +
            5\mu \specnorm{\XXstar } \triplenorm{\EEb_t \VUUt}.
        \end{align*}
        In inequality $(a)$ we used the assumption $\specnorm{\UUt} \le \sqrt{2 \specnorm{\XXstar}}$,
        Lemma \ref{lemma:localconvaux}, and inequality \eqref{ineq:localconv9}.
        Inequality $(b)$ is due to inequalities \eqref{ineq:localconv7}.
        In inequality $(c)$ we used the assumption that 
        $\mu \le \frac{1}{1024 \kappa \specnorm{\XXstar} }$.
        
        \paragraph*{Conclusion:}
        By adding up all terms, we obtain that
        \begin{align*}
            \triplenorm{
                \VXXT \left( \XXstar - \UUtplus \UUtplus^\top \right)
            }
            \le
            &\triplenorm{(I)}+\mu \triplenorm{(II)}+\triplenorm{(III)}\\
            \le
            &\left( 1 - \frac{\mu \sigma_{\min} (\XXstar) }{8} \right)
            \triplenorm{\VXXT \left( \XXstar - \UUt \UUtT \right)}
            +
            5 \mu \specnorm{\XXstar} \triplenorm{\EEb_t \VUUt}
            .
        \end{align*}
        This completes the proof.
        \end{proof}

\subsection{Proof of Lemma \ref{lemma:convaprioribound}}
\label{sec:convaprioribound}

\begin{proof}[Proof of Lemma \ref{lemma:convaprioribound}]
  Analogously, as in the proof of Lemma \ref{lemma:localconv}
  we define the shorthand notation
          \begin{equation*}
            \MM_t 
            :=
            \left(\Aops\right) \left( \XXstar - \UUt \UUtT \right)
            =
            \XXstar - \UUt \UUtT 
            + \bracing{=: \EEb_t}{\left( \Aops - \IdOp \right) \left(\XXstar - \UUt \UUtT\right)}.
        \end{equation*}
We note that 
\begin{align*}
    \specnorm{\MM_t}
    \le 
    \specnorm{ \XXstar - \UUt \UUtT  }
    +
    \specnorm{\left( \Aops - \IdOp \right) \left(\XXstar - \UUt \UUtT\right)}
    \le 
    (c_2+c_3) \sigma_{\min} (\XXstar). 
\end{align*}
With an analogous computation as in the proof of Lemma \ref{lemma:localconv}, it follows that
        \begin{align*}
            \XXstar - \UUtplus \UUtplus^\top
            =
            & \left(\Id - \mu \UUt \UUtT \right) \left(\XXstar - \UUt \UUtT\right) \left( \Id -\mu \UUt \UUtT \right)
            -\mu^2 \UUt \UUtT \left( \XXstar - \UUt \UUtT \right) \UUt \UUtT\\
            &- \mu \EEb_t \UUt \UUtT - \mu \UUt \UUtT \EEb_t -\mu^2 \MM_t \UUt \UUtT \MM_t.
        \end{align*}
When $c_1\leq 1/2$,
we have $\specnorm{\Id - \mu \UUt \UUtT}\leq 1$ 
by assumption \eqref{eq:UUt1}. It follows from the assumptions $ \mu \le \frac{c_1}{\specnorm{\XXstar}} $, \eqref{eq:UUt3}, and \eqref{eq:UUt2} that
for sufficiently small $c_1,c_2,c_3>0$
\begin{align*}
    \specnorm{\XXstar - \UUtplus \UUtplus^\top}
    \le 
    &\specnorm{\Id - \mu \UUt \UUtT} 
    \specnorm{\XXstar - \UUt \UUtT } 
    \specnorm{\Id -\mu \UUt \UUtT }
    +\mu^2 \specnorm{\UUt}^4 \specnorm{\XXstar - \UUt \UUtT}\\
    &+ 2 \mu \specnorm{\EEb_t} \specnorm{\UUt}^2
    + \mu^2 \specnorm{\MM_t}^2 \specnorm{\UUt}^2\\
    \le 
    & \specnorm{\XXstar - \UUt\UUtT}
    +  4 \mu^2 c_2 \specnorm{\XXstar}^2 \sigma_{\min} \left(\XXstar\right)  
    + 4 \mu c_3 \specnorm{\XXstar} \sigma_{\min} (\XXstar)\\
    &+ 2 (c_2+c_3)^2 \mu^2 \specnorm{\XXstar} \sigma^2_{\min} (\XXstar)  \\
    \le &  \left(c_2+ 4c_1^2c_2+4c_1c_3+2(c_2+c_3)^2c_1^2\right) \sigma_{\min}(\XXstar)\\
    \leq & \left(1-\frac{1}{\sqrt 2}\right) \sigma_{\min}(\XXstar).
\end{align*}
This completes the proof.
\end{proof}

\section{Proofs regarding the Restricted Isometry Property and its consequences}

\subsection{Proof of Lemma~\ref{lem:rank_RIP}}\label{sec:RIPproof}

\newcommand{\diamspec}{d_{\Vert \cdot \Vert}}
\newcommand{\diamfro}{d_{ \Vert \cdot \Vert_F }}
\newcommand{\diamtriple}{d_{\triplenorm{\cdot}}}

As already mentioned in Section \ref{sec:RIP},
there exist similar versions of Lemma \ref{lem:rank_RIP} in the literature
(see, e.g., \cite{candes2011tight}),
which, however, do not specify the dependence of the number of samples $m$ 
on the constant $\delta>0$.
It would be possible to trace the steps of the $\varepsilon$-net argument in \cite{candes2011tight}
and work out the $\delta$-dependence explicitly.
However, this would lead to an extra $\log(1/\delta)$-factor, which is unnecessary.
The reason is that as $\delta$ is decreased, 
a covering with smaller balls is required, leading to a larger $\varepsilon$-net.
This observation suggests a proof strategy based on generic chaining.
Indeed, we will use the following general theorem from \cite{krahmer2014suprema},
which is proven via the generic chaining technique.
To state it, we define the diameter of a set of matrices $\mathcal B$
with respect to some norm $\triplenorm{\cdot}$ as 
\begin{align*}
    \diamtriple (\mathcal B)
    &:=
    \sup_{\BB \in \mathcal{B}}
    \triplenorm{\BB}.
\end{align*}
Moreover,
we will also need Talagrand's functional $\gamma_2 \left(\mathcal B, \triplenorm{\cdot} \right)$ \cite{GenericChaining},
where for a precise definition, we refer to \cite{krahmer2014suprema}.
\begin{theorem}[Theorem 3.1 in \cite{krahmer2014suprema}]\label{lem:generic_chaining}
    Let $\mathcal B$ be a set of matrices, 
    and $\xxii$ be a random Gaussian vector,
    i.e., $\xxii$ has i.i.d. entries with distribution $\mathcal{N} (0,1)$. 
    Set 
    \begin{align} 
    E &:=\gamma_2(\mathcal B, \specnorm{\cdot} )
    \left( \gamma_2\left(\mathcal B, \specnorm{\cdot} \right)+ \diamfro \left(\mathcal B \right)\right)
    + \diamfro (\mathcal B) \diamspec(\mathcal B),\\
    V&:= \diamspec (\mathcal B)\left( \gamma_2\left(\mathcal B, \specnorm{\cdot} \right)
    + \diamfro (\mathcal B)\right),
    \quad U:= \diamspec^2(\mathcal B).
    \end{align}
    Then, for any $t>0$,
    \begin{align}
        \mathbb P\left( \sup_{\BB\in \mathcal B} \left| \twonorm{\BB\xxii}^2-\EE \twonorm{\BB\xxii}^2\right|>c_1E+t \right)
        \leq 2\exp\left(-c_2\min \left\{ \frac{t^2}{V^2}, \frac{t}{U}\right\}\right),
    \end{align}
    where $c_1,c_2 >0$ denote absolute constants.
\end{theorem}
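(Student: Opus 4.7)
The plan is to recognize $\twonorm{\BB \xxii}^2 - \EE \twonorm{\BB \xxii}^2$ as a centered Gaussian chaos of order $2$ and then apply Talagrand's generic chaining machinery in two geometries: a sub-Gaussian geometry governed by $\diamfro(\mathcal{B})$ and a sub-exponential geometry governed by $\diamspec(\mathcal{B})$. Setting $\AAf_{\BB} := \BB^\top \BB$ and $Y_\BB := \xxii^\top \AAf_\BB \xxii - \Tr(\AAf_\BB)$, we have $\twonorm{\BB \xxii}^2 - \EE \twonorm{\BB \xxii}^2 = Y_\BB$, so the task reduces to bounding $\sup_{\BB \in \mathcal{B}} |Y_\BB|$ both in expectation and in deviation. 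The key algebraic identity
\begin{equation*}
\AAf_\BB - \AAf_{\BB'} = (\BB - \BB')^\top \BB + \BB'^\top (\BB - \BB')
\end{equation*}
yields the deterministic estimates $\fronorm{\AAf_\BB - \AAf_{\BB'}} \le 2\diamfro(\mathcal{B}) \specnorm{\BB - \BB'}$ and $\specnorm{\AAf_\BB - \AAf_{\BB'}} \le 2\diamspec(\mathcal{B}) \specnorm{\BB - \BB'}$, which are crucial for translating metric quantities on $\mathcal{B}$ into tail parameters of the chaos increments.

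Next, I would establish the Hanson--Wright increment bound: since $Y_\BB - Y_{\BB'} = \xxii^\top (\AAf_\BB - \AAf_{\BB'}) \xxii - \Tr(\AAf_\BB - \AAf_{\BB'})$, the Hanson--Wright inequality for Gaussian vectors yields
\begin{equation*}
\Pr\bigl(|Y_\BB - Y_{\BB'}| > u\bigr)
\le 2 \exp\!\Bigl(-c \min\Bigl\{\tfrac{u^2}{(\diamfro(\mathcal{B}) \specnorm{\BB-\BB'})^2},\, \tfrac{u}{\diamspec(\mathcal{B}) \specnorm{\BB - \BB'}}\Bigr\}\Bigr).
\end{equation*}
Thus the increments of the process $(Y_\BB)_{\BB \in \mathcal{B}}$ are sub-Gaussian with scale proportional to $\diamfro(\mathcal{B}) \specnorm{\BB - \BB'}$ and sub-exponential with scale proportional to $\diamspec(\mathcal{B}) \specnorm{\BB - \BB'}$ — i.e., Bernstein-type mixed-tail increments indexed by the single pseudo-metric $\specnorm{\cdot}$.

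The core of the argument is then Talagrand's generic chaining for processes with Bernstein-type increments (as formalized by Dirksen). Fixing an admissible sequence $(\mathcal{T}_n)_{n \ge 0} \subset \mathcal{B}$ with $|\mathcal{T}_0| = 1$ and $|\mathcal{T}_n| \le 2^{2^n}$, and letting $\pi_n(\BB) \in \mathcal{T}_n$ be the closest point in spectral norm, the telescoping decomposition $Y_\BB - Y_{\pi_0(\BB)} = \sum_{n \ge 1} (Y_{\pi_n(\BB)} - Y_{\pi_{n-1}(\BB)})$ combined with the mixed-tail increment bound yields, after summing the sub-Gaussian and sub-exponential contributions separately and then taking the infimum over admissible sequences,
\begin{equation*}
\EE \sup_{\BB \in \mathcal{B}} |Y_\BB - Y_{\BB_0}| \lesssim \diamfro(\mathcal{B}) \gamma_2(\mathcal{B}, \specnorm{\cdot}) + \diamspec(\mathcal{B}) \gamma_1(\mathcal{B}, \specnorm{\cdot}),
\end{equation*}
together with a deviation inequality whose sub-Gaussian scale is $V$ and whose sub-exponential scale is $U$. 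The $\gamma_2^2$ term in $E$ arises from reprocessing the $\gamma_1$-functional using the standard reduction $\gamma_1(\mathcal{B}, \specnorm{\cdot}) \lesssim \gamma_2(\mathcal{B}, \specnorm{\cdot})^2/\diamspec(\mathcal{B}) + \gamma_2(\mathcal{B}, \specnorm{\cdot})$ valid for chaos processes, which exploits the quadratic structure: each dyadic level contributes at scale $2^n \cdot (2^{n/2} \cdot \text{increment})^2$ rather than linearly.

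The main obstacle will be executing the generic chaining argument with the mixed sub-Gaussian / sub-exponential tails, since one must carefully track two scale parameters along the same chain and choose the transition level between the two regimes at each scale $n$. A secondary technical point is handling the fact that $Y_{\BB_0}$ itself is nonzero (a single-point Hanson--Wright application produces a term of order $\fronorm{\AAf_{\BB_0}}$ and $\specnorm{\AAf_{\BB_0}}$, which are absorbed into $\diamfro(\mathcal{B}) \diamspec(\mathcal{B})$ in the definition of $E$). Once these ingredients are combined, re-expressing the resulting bounds in the specific form claimed for $E$, $V$, and $U$ is a matter of algebraic bookkeeping.
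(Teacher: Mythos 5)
First, a remark on scope: the paper does not prove this statement — it is imported verbatim as Theorem 3.1 of Krahmer–Mendelson–Rauhut — so there is no in-paper argument to compare against; I am judging your sketch on its own terms.

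Your architecture (write $Y_\BB=\twonorm{\BB\xxii}^2-\EE\twonorm{\BB\xxii}^2$ as an order-two Gaussian chaos, apply Hanson--Wright to the increments, then run generic chaining with mixed sub-Gaussian/sub-exponential tails in Dirksen's framework) is a legitimate starting point, and it correctly delivers
\begin{equation*}
\EE\sup_{\BB\in\mathcal B}\big|Y_\BB-Y_{\BB_0}\big|
\;\lesssim\;
\diamfro(\mathcal B)\,\gamma_2\big(\mathcal B,\specnorm{\cdot}\big)
+\diamspec(\mathcal B)\,\gamma_1\big(\mathcal B,\specnorm{\cdot}\big),
\end{equation*}
with matching deviation scales. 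The genuine gap is the last step, where you convert the $\gamma_1$ term into the $\gamma_2^2$ term of $E$ via the claimed ``standard reduction'' $\gamma_1(\mathcal B,\specnorm{\cdot})\lesssim\gamma_2(\mathcal B,\specnorm{\cdot})^2/\diamspec(\mathcal B)+\gamma_2(\mathcal B,\specnorm{\cdot})$. This is not a standard fact and it is false for general metric spaces: if the entropy numbers satisfy $e_n\asymp\Delta\,2^{-n/2}$ for $n\le N$ and vanish afterwards, then $\gamma_2\asymp N\Delta$ while $\gamma_1\asymp 2^{N/2}\Delta$, which exceeds $\gamma_2^2/\Delta+\gamma_2\asymp N^2\Delta$ by an unbounded factor. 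Nor can the quadratic term be extracted from the Hanson--Wright increment itself: since $\BB^\top\BB-\BB'^\top\BB'=(\BB-\BB')^\top(\BB-\BB')+(\BB-\BB')^\top\BB'+\BB'^\top(\BB-\BB')$, the cross terms force the sub-exponential increment scale to be of order $\diamspec(\mathcal B)\specnorm{\BB-\BB'}$, and chaining that genuinely produces $\gamma_1$, not a sum of squared level contributions. So your parenthetical ``$2^n\cdot(2^{n/2}\cdot\text{increment})^2$'' mechanism does not arise from the process you set up.

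The real content of the theorem — what improves on Talagrand's generic chaos bound $\gamma_2(\mathcal A,\fronorm{\cdot})+\gamma_1(\mathcal A,\specnorm{\cdot})$ for $\mathcal A=\{\BB^\top\BB\}$ — is exactly this replacement of $\gamma_1$ by $\gamma_2(\gamma_2+\diamfro)$, and it requires using the bilinear factorization $\twonorm{\BB\xxii}^2-\twonorm{\BB'\xxii}^2=\innerproduct{(\BB-\BB')\xxii,\,(\BB+\BB')\xxii}$ \emph{inside} the chain, not merely for the deterministic metric comparisons. Roughly: one controls $\twonorm{(\pi_n(\BB)-\pi_{n-1}(\BB))\xxii}$ by Gaussian concentration of Lipschitz functions (mean at most $\fronorm{\pi_n-\pi_{n-1}}$, fluctuation $2^{n/2}\specnorm{\pi_n-\pi_{n-1}}$) and multiplies by $\sup_{\BB}\twonorm{\BB\xxii}$, which concentrates around $\gamma_2(\mathcal B,\specnorm{\cdot})+\diamfro(\mathcal B)$; the product of these two chained quantities is the source of $\gamma_2(\gamma_2+\diamfro)$. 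Your proposal mentions the factorization only to derive the increment metrics and then abandons it, so as written it proves the weaker $\gamma_1$-bound and the stated theorem remains unestablished.
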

With this result in place, we can give a proof of Lemma \ref{lem:rank_RIP}. This proof strategy has been used in \cite[Section A.3]{krahmer2014suprema}.
\newcommand{\vecXX}{\text{vec} (\XX)}
\begin{proof}[Proof of Lemma~\ref{lem:rank_RIP}]
Since $\Aop$ is a linear operator
we can write $\Aop(\XX)=\VV_{\XX} \xxii$, 
where $\xxii$ is a Gaussian random vector with independent entries of length $m\binom{d+1}{2}$
and 
\[ \VV_{\XX}
:=\frac{1}{\sqrt m}\begin{bmatrix}
    \vecXX^\top &  & & \\
     &\vecXX^\top & & \\
     & & \ddots &\\
     & & &\vecXX^\top
\end{bmatrix}\]
is an $m\times (m\binom{d+1}{2})$ block-diagonal matrix.
Here, $\vecXX \in \mathbb R^{\binom{d+1}{2}}$ is a vector indexed by 
$\{(i,j) \in [d] \times [d]: i\leq j\}$ such that 
\begin{align}
    \vecXX (i,j)=\begin{cases}
        \sqrt{2} \XX_{ij}   & i\not=j\\
        \XX_{ii}  & i=j.
    \end{cases}
\end{align}
Let 
\begin{equation*}
    D_r :=
    % \{\XX\in \R^{d\times d}: 
    \{\XX\in \mathcal{S}^d: 
    % \XX=\XX^T, \ 
    \fronorm{\XX}=1, \ \mathrm{rank}(\XX)\leq r \}.
\end{equation*}
Then it follows from the identity $ \Aop (\XX) = \VV_{\XX} \xxii $ that 
\begin{align}
    \delta_{r}
    :=
    \sup_{\XX \in D_r}
    \left| \twonorm{\Aop \left( \XX \right) }^2 - \fronorm{\XX}^2    \right|
    =\sup_{\XX\in D_r}\left| 
        \twonorm{\VV_{\XX} \xxii }^2-\EE \twonorm{\VV_{\XX} \xxii }^2\right| .
\end{align}
Denote $\mathcal B:=\{ \VV_{\XX}: \XX\in D_r\}$.
We now estimate the parameters in Theorem~\ref{lem:generic_chaining}.
Note that it follows directly from the definition of $\vecXX$ that $\twonorm{\vecXX}=\fronorm{\XX}=1$
and hence $\fronorm{\VV_\XX}=\fronorm{\XX}$ for all $X \in \mathcal{S}^d$.
Thus, we have $d_{F}(\mathcal B)=1$ since $\|\VV_{\XX}\|_{F} =\fronorm{\XX}$  for all $\XX\in D_r$.
On the other hand,
for $\XX \in D_r$,
\begin{align}
    m \VV_{\XX}\VV_{\XX}^T= \Id_m,
\end{align}
which implies that  
\begin{align} \label{eq:metric}
\specnorm{\VV_{\XX}}
=\frac{1}{\sqrt m} \twonorm{\vecXX}
=\frac{1}{\sqrt m} \fronorm{\XX}
\end{align}
and $ \diamspec (\mathcal B)=\frac{1}{\sqrt m}$. 
From \cite [Lemma 3.1]{candes2011tight}, 
it follows that the covering number for $d\times d$ symmetric matrices with Frobenius norm $1$ and rank at most $r$ satisfies
\begin{align} \label{eq:covering_number}
    \mathcal N (D_r, \fronorm{\cdot}, \varepsilon)
    \leq 
    \left(1+6/\varepsilon \right)^{(2d+1)r}.
\end{align}
Using Dudley's integral estimate (see, e.g., \cite{GenericChaining}), combined with \eqref{eq:metric} and \eqref{eq:covering_number}, 
we obtain that 
\begin{align}
    \gamma_2\left(\mathcal B, \specnorm{\cdot} \right)
    =
    \gamma_2\left(D_r , \fronorm{\cdot} \right)
    \leq C \frac{1}{\sqrt m}\int_{0}^1 \sqrt{\log (\mathcal N(D_r, \|\cdot \|_F, u))} du\leq C'\sqrt{\frac{dr}{m}}.
\end{align}
With the notations in Theorem~\ref{lem:generic_chaining}, we have
\begin{align}
    E=C'\sqrt{\frac{dr}{m}}\left( C'\sqrt{\frac{dr}{m}}+1\right)+\frac{1}{\sqrt m}  , \quad   V=\frac{1}{\sqrt m} \left( C'\sqrt{\frac{dr}{m}}+1\right),\quad  U=\frac{1}{m}.
\end{align}
Therefore, applying Theorem~\ref{lem:generic_chaining}, 
we have $\delta_r\leq \delta$ with probability at least $1-\varepsilon$ 
when $$m\geq C\delta^{-2}(rd+\log(2\varepsilon^{-1})).$$
Here, $C>0$ denotes some universal constant.
This completes the proof of Lemma \ref{lem:rank_RIP}.
\end{proof}
\subsection{Proof of Lemma \ref{lemma: RIP}}\label{sec:proofRIPLemma}

\begin{proof}[Proof of Lemma \ref{lemma: RIP}]
    We will establish first that for all symmetric matrices $\ZZ_1, \ZZ_2 \in \R^{d \times d }$ with rank $\text{rank} (\ZZ_1)=r$ and  $\text{rank} (\ZZ_1)=r'$ it holds that 
     \begin{equation}\label{ineq:RIPintern1}
        \vert \innerproduct{ \left(\IdOp -\Aops\right) (\ZZ_1), \ZZ_2 } \vert
        \le 
        \delta_{r+r'} \fronorm{\ZZ_1} \fronorm{\ZZ_2}.
     \end{equation}
    Let us remark that in the case of $\innerproduct{\ZZ_1,\ZZ_2}=0$,
    this inequality has been proven in \cite[Lemma 3.3]{candes2011tight}.
    The following proof of this slightly more general statement is analogous.
    
    To prove inequality \eqref{ineq:RIPintern1} we assume without loss of generality 
    that $\fronorm{\ZZ_1}= \fronorm{\ZZ_2}=1$.
    We note first that from the parallelogram identity, it follows that 
    \begin{align*}
        \innerproduct{\Aop \left( \ZZ_1 \right), \Aop \left(\ZZ_2 \right) }
        =
        &\frac{1}{4} \twonorm{ \Aop \left( \ZZ_1 + \ZZ_2 \right) }^2
        -
        \frac{1}{4} \twonorm{ \Aop \left( \ZZ_1 - \ZZ_2 \right) }^2\\
        \le 
        &\frac{1+\delta_{r+r'}}{4} \fronorm{\ZZ_1+\ZZ_2}^2
        -
        \frac{1-\delta_{r+r'}}{4} \fronorm{\ZZ_1-\ZZ_2}^2\\
        =
        &\frac{\delta_{r+r'}}{2} \left( \fronorm{\ZZ_1}^2 + \fronorm{\ZZ_2}^2 \right)
        + \innerproduct{\ZZ_1, \ZZ_2}.
    \end{align*}
    By rearranging terms and using the assumption $\fronorm{\ZZ_1}=\fronorm{\ZZ_2}=1$
    we obtain that 
    \begin{equation*}
        \innerproduct{(\Aops - \IdOp) (\ZZ_1), \ZZ_2}
        =
        \innerproduct{\Aop \left( \ZZ_1 \right), \Aop \left(\ZZ_2 \right) }
        -
        \innerproduct{\ZZ_1, \ZZ_2}
        \le \delta_{r+r'}.
    \end{equation*}
    Since the reverse bound 
    \begin{align*}
        \innerproduct{(\Aops - \IdOp) (\ZZ_1), \ZZ_2}
        \ge -\delta_{r+r'}
    \end{align*}
    can be shown analogously, inequality \eqref{ineq:RIPintern1} follows.
    
    Next, we prove inequality \eqref{ineq:RIPlemma1}.
    For that, we note that
    there exists a matrix $\MM \in \R^{d \times r' }$ with $\fronorm{\MM}=1$
    such that 
    \begin{align*}
        \fronorm{\left( \IdOp - \Aops \right) (\ZZ) \VV }
        &=
        \innerproduct{ \left[ \left( \IdOp - \Aops \right) (\ZZ) \right] \VV , \MM }
        =
        \innerproduct{ \left[ \left( \IdOp - \Aops \right) (\ZZ) \right]  , \VV \MM^\top }\\
        &=
        \innerproduct{ \left( \IdOp - \Aops \right) (\ZZ)  , \frac{1}{2} \VV \MM^\top 
        + \frac{1}{2} \MM \VV^\top  }.
    \end{align*} 
    holds. 
    Using inequality \eqref{ineq:RIPintern1} we obtain that 
    \begin{align*}
        \fronorm{\left( \IdOp - \Aops \right) (\ZZ) \VV }
        \le 
        \delta_{r+2r'} \fronorm{\ZZ} \fronorm{ \frac{1}{2} \VV \MM^\top + \frac{1}{2} \MM \VV^\top}
        \le 
        \delta_{r+2r'} \fronorm{\ZZ} \specnorm{\VV} \fronorm{\MM} 
        =
        \delta_{r+2r'} \fronorm{\ZZ}.
    \end{align*}
    This proves inequality \eqref{ineq:RIPlemma1}.

    Inequality \eqref{ineq:RIPlemma2} is a direct consequence of \eqref{ineq:RIPlemma1}.
    Indeed, let $\vv \in \R^{d} $ with $\twonorm{\vv}=1$ be an eigenvector 
    of 
    $ \left( \IdOp - \Aops \right) \left(\ZZ\right) $
    corresponding to the largest eigenvalue in absolute value.
    It then follows from inequality  \eqref{ineq:RIPlemma1} that
    \begin{equation*}
        \specnorm{\left( \IdOp - \Aops \right) (\ZZ)  }
        =
        \twonorm{\left[\left( \IdOp - \Aops \right) (\ZZ) \right] \vv }
        \le 
        \delta_{r+2} \fronorm{\ZZ}.
    \end{equation*}
    It remains to prove inequality \eqref{ineq:RIPlemma3}.
    Note that using the fact 
    $\innerproduct{\ww \ww^\top, \Projwperp (\ZZ)=0}$, we have 
    \begin{align*}
        \vert \innerproduct{ \Aop (\ww \ww^\top), \Aop \left( \Projwperp (\ZZ) \right) } \vert 
        &=\vert \innerproduct{ \left(\Aops \right) (\ww \ww^\top),    \Projwperp (\ZZ)  ) } \vert\\
        &=\vert \innerproduct{ \left(\IdOp -\Aops \right) \left(\ww \ww^\top\right),    \Projwperp (\ZZ)  } \vert\\
         &\overleq{(a)} \delta_{(r+1)+1} \fronorm{\ww \ww^\top}\fronorm{ \Projwperp (\ZZ) }\\
       & \leq \delta_{r+2}  \fronorm{\ZZ},
    \end{align*}
    where in inequality $(a)$ we used \eqref{ineq:RIPintern1}.
    This completes the proof of Lemma \ref{lemma: RIP}.
    \end{proof}

\section{Proofs for the noisy case}\label{appendix:noise}

\subsection{Preliminaries}

In this section, we extend our main result in the noiseless setting 
to the case 
where the measurements are corrupted by noise,
i.e.,
\begin{equation*}
    \yy
    =
    \Aop 
    \left(
        \XXstar
    \right)+\xxi,
\end{equation*}
where $\xxi\in \mathbb R^{m}$ is a random vector with i.i.d. entries with distribution $\mathcal N(0,\sigma^2)$. 
Then, following the same derivation as in the noiseless case
and using the notation $ \DDeltat = \XXstar - \UUt \UUtT $, 
the gradient descent step becomes
\begin{align}
    \UUtplus 
    &=\UUt 
    + \frac{\mu}{m} \sum_{i=1}^m \innerproduct{\AAi, \XXstar - \UUt \UUtT} \AAi \UUt+\frac{\mu}{\sqrt m}\sum_{i=1}^m \xi_i \AAi \UUt\\
    &=\UUt 
    + \frac{\mu}{m} \sum_{i=1}^m \innerproduct{\AAi, \XXstar - \UUt \UUtT} \AAi \UUt+\mu\Aop^*(\xxi)\UUt.
\end{align}
Define $\hat{\xxi}\in \mathbb R^{m+1}$ such that $\hat{\xxi}=\begin{bmatrix}
\xxi\\
0
\end{bmatrix}$. 
Then we observe that
\begin{align} 
 \Aopw^*(\hat\xxi)
=\frac{1}{\sqrt m}\sum_{i=1}^m \xxi_i \AAiw,
\end{align}
and 
\begin{align}
    \yy_{\ww}=\Aopw 
    \left(
        \XXstar
    \right)+\hat\xxi.
\end{align}
The corresponding virtual sequence $\UUtw$ is now defined by
\begin{align}
    \UUtplusw
:=
\UUtw + \mu \left[ \left( \Aopws \right) \left( \XXstar - \UUtw \UUtw^\top \right) \right] \UUtw+\mu\Aopw^*(\xxi) \UUtw
\end{align}
with $ \UU_{0,\ww} := \UU_0 $.

We now gather several technical lemmas regarding 
the perturbation induced by the noise vector $\xxi$.
The first lemma compares $ \Aopw^* (\hat{\xxi}) $
to $ \Aop^* (\xxi) $.
\begin{lemma} \label{lem:noise_spec}
Assume $m\geq Cd$. 
With probability at least $1-3\exp(-d)$
it holds that
\begin{equation*}
   \sup_{w\in \mathcal N_{\varepsilon}}\specnorm{\Aopw^*(\hat\xxi)-\Aop^*(\xxi)} 
   = 
   \sup_{w\in \mathcal N_{\varepsilon}}\fronorm{\Aopw^*(\hat\xxi)-\Aop^*(\xxi)} 
   \leq 2\sigma \sqrt{2d}.
\end{equation*}
\end{lemma}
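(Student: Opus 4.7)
The crucial structural observation is that for each $\ww\in \mathcal N_\varepsilon$ the matrix $\Aopw^*(\hat\xxi)-\Aop^*(\xxi)$ is rank one. Using $\AAiw = \AAi - \langle\ww\ww^\top,\AAi\rangle\,\ww\ww^\top$ and $\hat\xxi_{m+1}=0$, a direct computation gives
\[
\Aopw^*(\hat\xxi)-\Aop^*(\xxi) \;=\; -c_\ww\,\ww\ww^\top,\qquad c_\ww \;:=\; \frac{1}{\sqrt m}\sum_{i=1}^m \xi_i\,\langle\ww\ww^\top,\AAi\rangle.
\]
Because $\ww\ww^\top$ is a rank-one orthogonal projection, $\fronorm{\ww\ww^\top}=\specnorm{\ww\ww^\top}=1$, so both norms in the statement reduce pointwise to $|c_\ww|$; this also explains the equality of the two suprema. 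The whole problem therefore collapses to the scalar estimate $\sup_{\ww\in\mathcal N_\varepsilon}|c_\ww|\le 2\sigma\sqrt{2d}$.

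To control a single $c_\ww$, I would exploit that $\xxi$ is independent of $(\AAi)_{i=1}^m$ and condition on $\xxi$. For fixed unit $\ww$, the random variables $\langle\ww\ww^\top,\AAi\rangle = \ww^\top\AAi\ww$ are i.i.d.\ $\mathcal N(0,1)$ by the standard GOE variance calculation using $\|\ww\|=1$. Hence, conditionally on $\xxi$, $c_\ww$ is a centered Gaussian of variance $\|\xxi\|^2/m$, so
\[
P\!\left(|c_\ww|\ge u\,\big|\, \xxi\right) \;\le\; 2\exp\!\Bigl(-\tfrac{u^2 m}{2\|\xxi\|^2}\Bigr).
\]

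The third step is to pin down $\|\xxi\|^2$. By Gaussian Lipschitz concentration applied to $\xxi\mapsto\|\xxi\|$, once $m\ge Cd$ for a sufficiently large absolute constant $C$, one has $\|\xxi\|^2 \le (1+\eta)\sigma^2 m$ with probability at least $1-2\exp(-d)$ for any preselected small $\eta>0$. On this event, plugging $u=2\sigma\sqrt{2d}$ yields the pointwise bound
\[
P\!\left(|c_\ww|\ge 2\sigma\sqrt{2d}\,\big|\, \xxi\right) \;\le\; 2\exp\!\Bigl(-\tfrac{4d}{1+\eta}\Bigr).
\]
A union bound over the at most $6^d$ elements of $\mathcal N_\varepsilon$ then leaves the exponent $d\bigl(\tfrac{4}{1+\eta}-\log 6\bigr)$, which exceeds $d$ once $\eta$ is small enough. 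Combining with the exceptional set from the $\|\xxi\|^2$ control gives the claimed $1-3\exp(-d)$ (with $C$ in $m\ge Cd$ absorbing any remaining slack).

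I expect no serious obstacle here: the rank-one identity in the first step is what allows the bound to scale like $\sigma\sqrt{d}$ rather than picking up an extra $\sqrt d$ or logarithmic factor from matrix concentration, and once that reduction is made the remainder is routine Gaussian tail bookkeeping. The only care required is tuning the constants ($\eta$ and the constant $C$ in $m\ge Cd$) so that $4/(1+\eta)$ comfortably exceeds $1+\log 6$.
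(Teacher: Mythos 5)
Your proof is correct. The first half coincides with the paper's: both arguments hinge on the rank-one identity $\Aopw^*(\hat\xxi)-\Aop^*(\xxi)=-\bigl(\tfrac{1}{\sqrt m}\sum_{i=1}^m \xi_i\langle \ww\ww^\top,\AAi\rangle\bigr)\ww\ww^\top$, which makes the spectral and Frobenius norms coincide and reduces everything to the scalar $c_\ww$. Where you diverge is in how the scalar is controlled: the paper conditions on $(\AAi)_{i=1}^m$, so that $c_\ww$ is conditionally $\mathcal N\bigl(0,\sigma^2\twonorm{\Aop(\ww\ww^\top)}^2\bigr)$, and then invokes Lemma \ref{lem:rank_RIP} (the rank-one RIP, needing $m\ge Cd$) to bound $\sup_{\ww\in\epscover}\twonorm{\Aop(\ww\ww^\top)}\le 2$; you condition in the opposite direction, on $\xxi$, use that $\ww^\top\AAi\ww$ are i.i.d.\ $\mathcal N(0,1)$ so that $c_\ww$ is conditionally $\mathcal N(0,\twonorm{\xxi}^2/m)$, and control $\twonorm{\xxi}^2\le(1+\eta)\sigma^2 m$ by Gaussian norm concentration (this is where your $m\ge Cd$ enters). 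Both routes require the independence of $\xxi$ from $(\AAi)_{i=1}^m$ and both are elementary; yours replaces the RIP input by chi-type concentration, and it has the additional merit of carrying out the union bound over the $6^d$ points of $\epscover$ explicitly, with the exponent bookkeeping $4/(1+\eta)>1+\log 6$ spelled out — a step the paper leaves implicit after its per-$\ww$ conditional tail bound. The only things to tighten in a write-up are routine: state the independence assumption when you condition on $\xxi$, and note that the final probability $1-3\exp(-d)$ absorbs both the exceptional set for $\twonorm{\xxi}$ and the union-bound failure, as you indicate.
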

\begin{proof}
We have for each $\ww\in \mathcal N_{\varepsilon}$,
\begin{align}
   \specnorm{\Aopw^*(\hat\xxi)-\Aop^*(\xxi)} 
   &=\specnorm{\frac{1}{\sqrt m} \sum_{i=1}^m\xxi_i \langle \AAi,\ww\ww^\top\rangle \ww\ww^\top} \\
   &=\left|\frac{1}{\sqrt m} \sum_{i=1}^m\xxi_i \langle \AAi,\ww\ww^\top \rangle \right|. 
   \label{eq:Awxi}
\end{align}
Note that $\Aopw^*(\hat\xxi)-\Aop^*(\xxi)$ is a rank-1 matrix 
which implies that the first identity in the lemma holds.

Conditioned on $(\AAi)_{i=1}^m$, the random variable
\eqref{eq:Awxi} is a Gaussian random variable with mean zero and variance 
\[\frac{\sigma^2}{m} \sum_{i=1}^m \langle \AAi,\ww\ww^\top \rangle^2=\sigma^2\|\Aop (\ww\ww^\top)\|^2.\]
Then,
conditioned on $(\AAi)_{i=1}^m$, we have
with probability at least $1-2\exp(-d)$ that
\begin{align}
    \specnorm{\Aopw^*(\hat\xxi)-\Aop^*(\xxi)}\leq \sqrt{2d}\sigma \|\Aop (\ww \ww^\top)\|.
\end{align}
With Lemma~\ref{lem:rank_RIP}, when $m\geq Cd$, with probability at least $1-\exp(-d)$, 
\[ \sup_{w\in\mathcal N_{\varepsilon}} \|\Aop (\ww \ww^\top)\| \leq 2.\]
This finishes the proof.
\end{proof}

We also need the following estimate 
on $\Aop^*(\xxi)$, see  \cite[Lemma 1.1]{candes2011tight}.
\begin{lemma}\label{lemma:candesyaniv}
With probability at least $1-2e^{-4d}$,
it holds that 
\begin{equation*}
    \specnorm{\Aop^* ( \xxi ) }
    \le
   c \sqrt{d} \sigma,
\end{equation*}
where $c>0$ is an absolute constant.
\end{lemma}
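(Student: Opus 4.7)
The plan is to observe that $\Aop^*(\xxi)$, conditional on the noise vector $\xxi$, is distributed as a scaled Gaussian orthogonal ensemble matrix. Once this is identified, the result follows by combining a standard operator-norm bound for GOE matrices with a standard tail bound for the Euclidean norm of a Gaussian vector.

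First, I would verify the conditional representation
\begin{equation*}
    \Aop^*(\xxi) = \frac{1}{\sqrt{m}} \sum_{i=1}^m \xxi_i \AAi \ \stackrel{d}{=}\ \frac{\twonorm{\xxi}}{\sqrt{m}}\, \mathbf{G},
\end{equation*}
where $\mathbf{G} \in \R^{d \times d}$ is a GOE matrix (with the same diagonal/off-diagonal variance convention as in Definition~\ref{def:measurement}) independent of $\xxi$. This is immediate from the fact that for any deterministic $\alpha \in \R^m$ the sum $\sum_{i} \alpha_i \AAi$ has jointly Gaussian entries that are independent on and above the diagonal, with variances $\twonorm{\alpha}^2$ on the diagonal and $\twonorm{\alpha}^2 / 2$ off the diagonal; applying this with $\alpha = \xxi/\sqrt{m}$ conditional on $\xxi$ gives the claim.

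Next, I would apply two standard concentration bounds. Since $\xxi \sim \mathcal{N}(0, \sigma^2 \Id_m)$, Lipschitz concentration for Gaussian vectors (or equivalently a $\chi^2$-tail bound) yields $\twonorm{\xxi} \le 2 \sigma \sqrt{m}$ with probability at least $1 - e^{-c m}$ for an absolute constant $c>0$. For the GOE part, an $\varepsilon$-net argument on the sphere $S^{d-1}$ together with the fact that for any fixed unit vector $\uu$ we have $\uu^\top \mathbf{G} \uu \sim \mathcal{N}(0,1)$, combined with sub-exponential tails for the sum $\sum_i \xxi_i (\uu^\top \AAi \uu)$, gives $\specnorm{\mathbf{G}} \le c' \sqrt{d}$ with probability at least $1 - e^{-5d}$ for some absolute $c'>0$. (Alternatively, one can cite any of the standard GOE operator-norm concentration results.)

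Combining these two bounds via the identity above yields
\begin{equation*}
    \specnorm{\Aop^*(\xxi)} \le \frac{\twonorm{\xxi}}{\sqrt{m}}\, \specnorm{\mathbf{G}} \le 2 c' \sigma \sqrt{d}
\end{equation*}
on the intersection of the two high-probability events, which by a union bound occurs with probability at least $1 - 2 e^{-4d}$ (the regime $m \gtrsim d$, which is in force throughout the paper, ensures $e^{-cm}$ is dominated by $e^{-4d}$). No step here is truly difficult; the only care needed is tracking the constants so that the final exponent matches $-4d$. This also matches the statement of \cite[Lemma~1.1]{candes2011tight} which is attributed as the source.
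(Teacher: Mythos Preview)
Your proposal is correct and follows a standard route. The paper itself does not prove this lemma; it simply cites \cite[Lemma~1.1]{candes2011tight}, so there is no in-paper proof to compare against. Your argument via the conditional GOE representation $\Aop^*(\xxi)\stackrel{d}{=}\tfrac{\twonorm{\xxi}}{\sqrt m}\mathbf G$ is one clean way to recover that result, and your remark that the stated probability $1-2e^{-4d}$ implicitly relies on $m\gtrsim d$ (in force throughout the paper) is accurate. One minor comment: once you pass to the conditional GOE matrix $\mathbf G$, the $\varepsilon$-net bound only needs Gaussian tails for $\uu^\top\mathbf G\uu$; the ``sub-exponential tails for the sum $\sum_i\xxi_i(\uu^\top\AAi\uu)$'' you mention belongs to the alternative \emph{direct} approach that bypasses the conditional representation altogether---either works, but they are two separate arguments rather than one combined one.
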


\subsection{Spectral Initialization}
In the noisy case, we consider the following eigendecompositions:
\begin{align}
    \left(\Aops \right)(\XXstar)+ (\Aop^*)(\xxi)
    &= \widetilde\VV\widetilde\LLambda \widetilde\VV^\top, \quad 
    \UU_{0}=\widetilde \VV_{r}{\widetilde\LLambda_{r}}^{1/2}, \label{eq:defU0_new}\\
    \left(\Aopws \right)(\XXstar)+(\Aopw^*)(\xxi)&= \widetilde\VV_{\ww}\widetilde\LLambda_{\ww} \widetilde\VV_{\ww}^\top, \quad 
    \UU_{0,\ww}=\widetilde \VV_{r,\ww}{\widetilde\LLambda_{r,\ww}}^{1/2}.
\end{align}

The following lemma is an adaptation of Lemma~\ref{lemma:spectralinitialization} to the noisy case. In particular, we recover the  bounds in Lemma~\ref{lemma:spectralinitialization} up to a constant factor  when $\sigma=0$.

\begin{lemma}\label{lemma:spectralinitializationnoise}
    Let $c>0$ be the absolute constant from Lemma~\ref{lemma:candesyaniv}. 
    Then there exists an absolute constant $C>0$  such that the following holds: 
\begin{enumerate}
    \item  With probability at least $1-3\exp(-4d)$, 
 if $m > C^2\kappa^2rd$ is satisfied, it holds that
 \begin{equation}  
        \specnorm{
        \XXstar - \UU_0 \UU_0^\top
        }
    \le C\kappa\sigma_{\min}(\XXstar) \sqrt{\frac{rd}{m}} +c\sqrt{d} \sigma .
    \end{equation}
\item  With probability at least $1-6\exp(-d)$, if $m > 4C^2\kappa^2rd$ is satisfied, 
it holds for every $\ww\in \mathcal N_{\varepsilon}$ that
 \begin{equation} 
        \specnorm{
        \XXstar - \UU_{0,\ww} \UU_{0,\ww}^\top
        }
    \le 2C\kappa\sigma_{\min}(\XXstar)\sqrt{\frac{rd}{m}}+(c+2)\sqrt{d} \sigma.
    \end{equation}
    Consequently, if $m > 4C^2\kappa^2rd$, with probability at least $1-9\exp(-d)$,
    it holds for every $\ww\in \mathcal N_{\varepsilon}$ that
    \begin{align}\label{eq:diff_U0_U0w2}
        \specnorm{\UU_0 \UU_0^\top-\UU_{0,\ww} \UU_{0,\ww}^\top}\leq 3C\kappa\sigma_{\min}(\XXstar) \sqrt{\frac{rd}{m}}+(c+2)\sqrt{d} \sigma.
    \end{align}    
    \item  Assume $\sigma \le \frac{\sigma_{\min}(\XXstar)}{32(c+2)\sqrt{d}}$ 
    % for an absolute constant $c>0$ 
    and for any $\alpha\in (0,1)$,  
    $m\geq \left(2400C^2+\frac{C_1}{\alpha^2}\right)\kappa^2 rd$ with an absolute constant $C_1>0$.
    Then, 
    with probability at least $1-18\exp(-d)$, 
    it holds for every $\ww\in \mathcal N_{\varepsilon}$ that
    \begin{align}\label{eq:diff_r_Awops_Aops2}
        &\fronorm{
        \UU_{0} \UU_{0}^\top
        -
        \UU_{0,\ww} \UU_{0,\ww}^\top
        }\\
        \le &  \left(  4\alpha +2C\kappa\sqrt{\frac{rd}{m}}\right) \left( \sigma_{\min}(\XXstar) +3\sqrt{2}  C\kappa \sqrt{\frac{rd}{m}}\sigma_{\min}(\XXstar)  +(c+2) \sqrt{2d}\sigma \right) \\ &+8\sigma\sqrt{2d} \left(2+3\sqrt{2} C\kappa \sqrt{\frac{rd}{m}} \right).
    \end{align}
\end{enumerate}
\end{lemma}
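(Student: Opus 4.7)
The plan is to reduce each of the three claims to the corresponding noiseless statement of Lemma \ref{lemma:spectralinitialization} by treating the stochastic noise term as an additive perturbation of the data matrix. In the noisy setting, Stage 1 of the algorithm is applied to $\Aop^*(\yy)=(\Aops)(\XXstar)+\Aop^*(\xxi)$ and, analogously, the virtual counterpart operates on $\Aopw^*(\yy_{\ww})=(\Aopws)(\XXstar)+\Aopw^*(\hat{\xxi})$. The size of the new term is controlled by Lemma \ref{lemma:candesyaniv} (bounding $\specnorm{\Aop^*(\xxi)}$) and Lemma \ref{lem:noise_spec} (comparing $\Aopw^*(\hat{\xxi})$ with $\Aop^*(\xxi)$).

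For part (1), I would use the Eckart--Young--Mirsky theorem (noting that $\UU_0\UU_0^\top$ is a best rank-$r$ approximation of $\Aop^*(\yy)$ and $\XXstar$ has rank $r$) to write
\[
\specnorm{\XXstar-\UU_0\UU_0^\top}\le 2\specnorm{\XXstar-\Aop^*(\yy)}\le 2\specnorm{(\IdOp-\Aops)(\XXstar)}+2\specnorm{\Aop^*(\xxi)}.
\]
The first summand is controlled by the noiseless argument in Lemma \ref{lemma:spectralinitialization}(1), and the second by Lemma \ref{lemma:candesyaniv}, after absorbing constants and taking a union bound. Part (2) proceeds identically with $\Aop$ replaced by $\Aopw$: the only new step is to write $\specnorm{\Aopw^*(\hat{\xxi})}\le \specnorm{\Aop^*(\xxi)}+\specnorm{\Aopw^*(\hat{\xxi})-\Aop^*(\xxi)}$ and invoke Lemma \ref{lem:noise_spec} uniformly over $\ww\in\mathcal{N}_\varepsilon$. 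The consequent bound on $\specnorm{\UU_0\UU_0^\top-\UU_{0,\ww}\UU_{0,\ww}^\top}$ follows from the triangle inequality applied to the bounds in parts (1) and (2).

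Part (3) is the most delicate because the noiseless proof of \eqref{eq:diff_r_Awops_Aops} uses the Davis--Kahan inequality, whose prerequisite requires that the spectral gap between $\sigma_r(\ZZ_1)$ and $\sigma_{r+1}(\ZZ_1)$ (where now $\ZZ_1=\Aop^*(\yy)$ and $\ZZ_2=\Aopw^*(\yy_{\ww})$) strictly dominates $\specnorm{\ZZ_1-\ZZ_2}$. My plan is to reproduce the orthogonal decomposition
\[
\fronorm{\UU_0\UU_0^\top-\UU_{0,\ww}\UU_{0,\ww}^\top}\le\fronorm{(\UU_0\UU_0^\top-\UU_{0,\ww}\UU_{0,\ww}^\top)\widetilde{\VV}_{r}}+\fronorm{(\UU_0\UU_0^\top-\UU_{0,\ww}\UU_{0,\ww}^\top)\widetilde{\VV}_{r,\perp}}
\]
and treat each piece by the same calculation as in the noiseless proof, but with two modifications: (i) the perturbation $\ZZ_1-\ZZ_2$ acquires the extra piece $\Aop^*(\xxi)-\Aopw^*(\hat{\xxi})$, whose Frobenius norm is bounded by Lemma \ref{lem:noise_spec} via $2\sigma\sqrt{2d}$; (ii) the spectral gap of $\ZZ_1$ is re-estimated by Weyl's inequality combined with Lemma \ref{lemma:candesyaniv}, which yields $\sigma_r(\ZZ_1)-\sigma_{r+1}(\ZZ_1)\ge \sigma_{\min}(\XXstar)-O\!\big(\kappa\sigma_{\min}(\XXstar)\sqrt{rd/m}\big)-O(\sqrt{d}\sigma)$.

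The main obstacle is ensuring that the Davis--Kahan prerequisite $\specnorm{\ZZ_1-\ZZ_2}\le(1-1/\sqrt{2})(\sigma_r(\ZZ_1)-\sigma_{r+1}(\ZZ_1))$ survives the noise contribution: the gap loses $O(\sqrt{d}\sigma)$ while $\specnorm{\ZZ_1-\ZZ_2}$ gains $O(\sqrt{d}\sigma)$. The hypothesis $\sigma\le \sigma_{\min}(\XXstar)/\big(32(c+2)\sqrt{d}\big)$ is tailored precisely so that the inequality is preserved. Once the Davis--Kahan step goes through, the remaining computation -- bounding $\fronorm{\widetilde{\VV}_{r,\perp}^\top \UU_{0,\ww}\UU_{0,\ww}^\top \widetilde{\VV}_{r,\perp}}$ via the subspace angle bound and feeding back the noisy estimate from part (2) -- follows the noiseless template term by term, each noiseless deviation being replaced by its noisy counterpart with the additive $O(\sqrt{d}\sigma)$ correction. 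A final union bound over the events from Lemma \ref{lem:rank_RIP}, Lemma \ref{lemma:candesyaniv}, Lemma \ref{lem:noise_spec}, and the noiseless Lemma \ref{lemma:spectralinitialization} yields the claimed probability of at least $1-18\exp(-d)$.
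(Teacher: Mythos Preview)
Your proposal is correct and follows essentially the same approach as the paper's proof: treat the noise as an additive perturbation of the data matrices $\ZZ_1=(\Aops)(\XXstar)+\Aop^*(\xxi)$ and $\ZZ_2=(\Aopws)(\XXstar)+\Aopw^*(\hat\xxi)$, apply Eckart--Young--Mirsky together with Lemma~\ref{lemma:candesyaniv} and Lemma~\ref{lem:noise_spec} for parts (1)--(2), and for part (3) rerun the noiseless Davis--Kahan argument with the extra $\Aop^*(\xxi)-\Aopw^*(\hat\xxi)$ term in $\ZZ_1-\ZZ_2$ and the noise-corrected spectral gap, using the hypothesis $\sigma\le \sigma_{\min}(\XXstar)/\bigl(32(c+2)\sqrt{d}\bigr)$ to preserve the Davis--Kahan prerequisite.
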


\begin{proof}
In our proof we can proceed mostly analogously to the proof of Lemma~\ref{lemma:spectralinitialization} in Appendix~\ref{appendix:spectral}.
Thus we will only highlight the differences.

To prove (1) we 
proceed analogously as in the proof of (1) for Lemma~\ref{lemma:spectralinitialization} in Appendix~\ref{appendix:spectral}
with the difference that in the last inequality chain
we obtain that due to \eqref{eq:defU0_new}, and Lemma~\ref{lemma:candesyaniv},
\begin{align}
    \specnorm{
        \XXstar - \UU_0 \UU_0^\top
        } &\leq \specnorm{
        \XXstar - (\Aops)(\XXstar)
        }  +\specnorm{(\Aops)(\XXstar)- \UU_0 \UU_0^\top}
        +\specnorm{\Aop^* (\xxi)}
        \\
        &\leq \specnorm{
        \XXstar - (\Aops)(\XXstar)
        }  +\specnorm{(\Aops) (\XXstar)- \XXstar}
        + \specnorm{\Aop^* (\xxi) } \\
        &
        \leq  C\kappa\sigma_{\min}(\XXstar)\sqrt{\frac{rd}{m}}
        + c\sqrt{d} \sigma.
\end{align}
with probability $1-3\exp(-4d)$.

In order to prove (2) we
proceed again analogously as in the proof of (2) for Lemma~\ref{lemma:spectralinitialization}
and we note that with probability at least $1-6\exp(-d)$,
\begin{align}
    \specnorm{
        \XXstar - \UU_{0,\ww} \UU_{0,\ww}^\top
        } 
        &\leq \specnorm{
        \XXstar - (\Aopws)(\XXstar)
        }  +\specnorm{(\Aopws)(\XXstar)- \UU_{0,\ww} \UU_{0,\ww}^\top}
        +\specnorm{ (\Aopw^*) (\hat\xxi) }\\
        &\leq 2\specnorm{
        \XXstar - (\Aopws)(\XXstar)
        } 
        +\specnorm{(\Aopw^*) (\hat\xxi) }\\
        &\leq 2C\kappa\sigma_{\min}(\XXstar) \sqrt{\frac{rd}{m}}
        + (c+2)\sqrt{d} \sigma,
\end{align}
where in the last line we have used Lemma \ref{lem:noise_spec}
and Lemma \ref{lemma:candesyaniv}.

In order to prove (3), we condition on all the good events in parts (1) and (2). Denote  
\[\ZZ_{1}:= (\Aops)(\XXstar)+\Aop^* (\xxi), 
\quad \ZZ_2:= (\Aopws)(\XXstar) + (\Aopw^*) (\hat\xxi) ,\]
and 
\[ \ZZ_{1,r}:=\UU_0\UU_0^\top, \quad \ZZ_{2,r}:=\UU_{0,\ww}\UU_{0,\ww}^\top.
\]

We have 
\begin{align}
    \sigma_{r+1} \left(\ZZ_2 \right)
    = 
    \|\sigma_{r+1}(\ZZ_{2})-\sigma_{r+1}(\XXstar)\| 
    \le 
    \| \ZZ_2-\XXstar\| \le 2C\kappa\sigma_{\min}(\XXstar) \sqrt{\frac{rd}{m}}
        + (c+2)\sqrt{d} \sigma.
\end{align}
Thus, instead of inequality \eqref{eq:UUUwUw}
we obtain that
\begin{align}
     &\| \left( \UU_0\UU_0^\top-\UU_{0,\ww}\UU_{0,\ww}^\top\right) \widetilde{\VV}_{r}\|_F
     \nonumber \\
     \leq  &\| (\ZZ_1-\ZZ_{2})\widetilde{\VV}_r\|_F
     +
     \left(C\kappa\sigma_{\min}(\XXstar)  \sqrt{\frac{rd}{m}}+ (c+2) \sqrt{d} \sigma \right)
     \| \widetilde{\VV}_{r,\ww,\perp}^\top\widetilde{\VV}_r\|_F. \label{eq:UUUwUw2}
\end{align}
From \eqref{eq:Aopsw_I}  and \eqref{eq:bernsteinAA}
it follows that, when $m\geq C^2\kappa^2 rd$,
\begin{align}
    \|\ZZ_1-\ZZ_2\| 
    &\leq \frac{3C\kappa}{2}\sigma_{\min}(\XXstar) \sqrt{\frac{rd}{m}}
    +\specnorm{ (\Aop^*)(\xxi) - (\Aopw^*)(\xxi) }\\
    &\le 
    \frac{3C}{2}\kappa\sigma_{\min}(\XXstar) \sqrt{\frac{rd}{m}}
    +2\sigma\sqrt{2d}\leq \left( \frac{ 3C \kappa }{2} \sqrt{\frac{rd}{m}}+\frac{\sqrt 2}{16}\right)\sigma_{\min}(\XXstar), \label{eq:Z_1-Z_2} 
\end{align}
where we used Lemma~\ref{lem:noise_spec} 
and the assumption that $\sigma \le \frac{\sigma_{\min}(\XXstar)}{32(c+2)\sqrt{d}}$ 
in the last line.

Similar to the derivation of \eqref{eq:sigma_r_Z2}, 
by using \eqref{eq:Aopsw_I} and Weyl's inequalities
we obtain that
\begin{align}
    |\sigma_r(\ZZ_1)-\sigma_{\min}(\XXstar)| &\leq |  \sigma_r((\Aops)(\XXstar))-\sigma_{\min}(\XXstar)|+ |\sigma_r((\Aops)(\XXstar))-\sigma_r(\ZZ_1) |\\
    &\leq |  \sigma_r((\Aops)(\XXstar))-\sigma_{\min}(\XXstar)|+ \specnorm{\Aop^*(\xxi)}\\
    &\leq  C\kappa\sigma_{\min}(\XXstar) \sqrt{\frac{rd}{m}}
    +c\sigma \sqrt{d}
\end{align}
and
\begin{align}
   \sigma_{r+1}(\ZZ_1) &\leq C\kappa\sigma_{\min}(\XXstar) \sqrt{\frac{rd}{m}}+c \sigma \sqrt{d}.
\end{align}
Therefore, if $m>64C^2\kappa^2rd$, 
the spectral gap between $\sigma_r (\ZZ_1)$ and $\sigma_{r+1} (\ZZ_1)$ 
can be bounded from below by
\begin{align}
    \sigma_r(\ZZ_1)-\sigma_{r+1}(\ZZ_1) \geq \left(1-2C\kappa \sqrt{\frac{rd}{m}}  \right)\sigma_{\min}(\XXstar)-2 c\sqrt{d} \sigma\geq \frac{1}{2}\sigma_{\min}(\XXstar),
\end{align}
where we have used the assumption that $\sigma \le \frac{\sigma_{\min}(\XXstar)}{32(c+2)\sqrt{d}}$. 
From \eqref{eq:Z_1-Z_2}, when $m\geq 2400C^2\kappa^2 rd$,
\begin{align}
     \|\ZZ_1-\ZZ_2\| \leq \left(1-\frac{1}{\sqrt{2}} \right)  \left(\sigma_r(\ZZ_1)-\sigma_{r+1}(\ZZ_1)\right).
\end{align}
Then we can apply Davis-Kahan inequality (Lemma~\ref{lem:DavisKahan})
as in the proof of Lemma~\ref{lemma:spectralinitialization}
and follow the proof steps to obtain 
\begin{align}
      \| \widetilde{\VV}_{r,\ww,\perp}^\top\widetilde{\VV}_r\|_F
    \leq 
    \frac{2\sqrt{2}\| (\ZZ_1-\ZZ_{2})\widetilde{\VV}_r\|_F }{\sigma_{\min}(\XXstar)}.
\end{align}
Hence,
when $m\geq \left(2400C^2+\frac{C_1}{\alpha^2}\right)\kappa^2 rd$,
we obtain from \eqref{eq:UUUwUw2} and \eqref{eq:diff_Awops_Aops} that when $\sigma \le \frac{\sigma_{\min}(\XXstar)}{32(c+2)\sqrt{d}}$,
\begin{align}
    & \| \left( \UU_0\UU_0^\top-\UU_{0,\ww}\UU_{0,\ww}^\top\right) \widetilde{\VV}_{r}\|_F\\
     \leq 
     &\left( 1+2\sqrt{2}C\kappa \sqrt{\frac{rd}{m}}
     + \frac{2 (c+2) \sigma \sqrt{2d}}{\sigma_{\min} (\XXstar)} \right)
     \fronorm{(\ZZ_1-\ZZ_2)\widetilde{\VV
     }_r}
    \\
     \leq &2\fronorm{(\ZZ_1-\ZZ_2)\widetilde{\VV
     }_r}\\
     \leq &  \left( 2\alpha +C\kappa\sqrt{\frac{rd}{m}}\right)\sigma_{\min}(\XXstar)
     +
     2\fronorm{\left(\Aop^* (\xxi) - \Aopw^* ( \hat\xxi ) \right) 
     \widetilde{\VV}_r}\\
     \leq &  \left( 2\alpha +C\kappa\sqrt{\frac{rd}{m}}\right)\sigma_{\min}(\XXstar)
     +
     4\sigma \sqrt{2d},
     \label{eq:rdm3}
\end{align}
where in the last inequality we use Lemma~\ref{lem:noise_spec}.
Following the remaining proof steps of Lemma~\ref{lemma:spectralinitialization}, we find the analog of \eqref{eq:28second1} to be
\begin{align}
    &\| \left( \UU_0\UU_0^\top-\UU_{0,\ww}\UU_{0,\ww}^\top\right) \widetilde{\VV}_{r,\perp}\|_F\\
    \leq & \left( 2\alpha +C\kappa\sqrt{\frac{rd}{m}}\right)\sigma_{\min}(\XXstar)+4\sigma \sqrt{2d}+ \| \widetilde{\VV}_{r,\perp}^\top \UU_{0,\ww}\UU_{0,\ww}^\top  \widetilde{\VV}_{r,\perp}\|_F
    \label{eq:new86}
\end{align}
and the analog of \eqref{eq:28second2} to be
\begin{align}
    &\| \widetilde{\VV}_{r,\perp}^\top \UU_{0,\ww}\UU_{0,\ww}^\top  \widetilde{\VV}_{r,\perp}\|_F \\
    \leq & \specnorm{\UU_{0,\ww}\UU_{0,\ww}^\top-\UU_0\UU_0^\top}\fronorm{\widetilde{\VV}_{r,\ww}^\top\widetilde{\VV}_{r,\perp}}\\
    \leq & \left(3C\kappa\sigma_{\min}(\XXstar) \sqrt{\frac{rd}{m}}+(c+2)\sqrt{d} \sigma\right)\frac{2\sqrt{2}\| (\ZZ_1-\ZZ_{2})\widetilde{\VV}_r\|_F }{\sigma_{\min}(\XXstar)}\\
    \leq & 2\sqrt{2} \left(3C\kappa\sigma_{\min}(\XXstar) \sqrt{\frac{rd}{m}}+(c+2)\sqrt{d} \sigma\right)\left(2\alpha +C\kappa\sqrt{\frac{rd}{m}}
     +\frac{4\sigma \sqrt{2d}}{\sigma_{\min}(\XXstar)}\right) .\label{eq:new87}
\end{align}
Therefore, combining  \eqref{eq:new86} and \eqref{eq:new87}, 
we obtain 
\begin{align}
    & \fronorm{
        \UU_{0} \UU_{0}^\top
        -
        \UU_{0,\ww} \UU_{0,\ww}^\top
        } \\
       \leq & \left(  4\alpha +2C\kappa\sqrt{\frac{rd}{m}}\right)\sigma_{\min}(\XXstar)+8\sigma \sqrt{2d} \\
       & +2\sqrt{2} \left(3C\kappa\sigma_{\min}(\XXstar) \sqrt{\frac{rd}{m}}+(c+2)\sqrt{d} \sigma\right)\left(2\alpha +C\kappa\sqrt{\frac{rd}{m}}
     +\frac{4\sigma \sqrt{2d}}{\sigma_{\min}(\XXstar)}\right)\\
     =&\left(  4\alpha +2C\kappa\sqrt{\frac{rd}{m}}\right) \left( \sigma_{\min}(\XXstar) +3\sqrt{2}  C\kappa \sigma_{\min}(\XXstar) \sqrt{\frac{rd}{m}} +(c+2) \sqrt{2d}\sigma \right)\\
     &+ 8\sigma\sqrt{2d} \left(  1+3\sqrt{2} C\kappa \sqrt{\frac{rd}{m}} +(c+2) \frac{\sqrt{2d}\sigma}{\sigma_{\min}(\XXstar)}\right)\\
     \leq &\left(  4\alpha +2C\kappa\sqrt{\frac{rd}{m}}\right) \left( \sigma_{\min}(\XXstar) +3\sqrt{2}  C\kappa \sigma_{\min}(\XXstar) \sqrt{\frac{rd}{m}} +(c+2) \sqrt{2d}\sigma \right) \\ &+8\sigma\sqrt{2d} \left(2+3\sqrt{2} C\kappa \sqrt{\frac{rd}{m}} \right),
\end{align}
where in the last inequality, we use the assumption $\sigma \le \frac{\sigma_{\min}(\XXstar)}{32(c+2)\sqrt{d}}$. This completes
the proof of part (iii).
\end{proof}

\subsection{Convergence proof}

For the convergence proof,
we proceed analogously to the proof of Theorem~\ref{thm:main} in the noiseless case
and incorporate the noise as perturbation terms.

\subsubsection{Controlling the distance between $\UU_t$ and $\UU_{t,\ww}$}

We start with the following lemma, 
which is an adaptation of Lemma~\ref{lemma:auxdistanceweak} to the noisy case.
\begin{lemma}\label{lemma:auxdistanceweak_noise}

Assume that the assumptions of Lemma \ref{lemma:auxdistanceweak} hold.
Assume in addition that 
the noise level satisfies
$\sigma \le \frac{\tilde{c} \sigma_{\min} (\XXstar)}{\sqrt{d}}$
for some sufficiently small $\tilde{c} >0$.  
Moreover, assume that the statements of Lemma \ref{lem:noise_spec}
and Lemma \ref{lemma:candesyaniv} hold.
     Then it holds that
     \begin{equation*}
      \fronorm{\UUtplus \UUtplus^\top - \UUtplusw \UUtplusw^\top}
      \le 
     \frac{\sqrt{\sqrt{2} - 1}}{40} \sigma_{\min} (\XXstar).
     \end{equation*} 
  \end{lemma}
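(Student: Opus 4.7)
The plan is to adapt the expansion used in the proof of Lemma~\ref{lemma:auxdistanceweak} by splitting the gradient descent update into its noiseless part and a noise perturbation, and then bounding the resulting additional terms using Lemma~\ref{lem:noise_spec} and Lemma~\ref{lemma:candesyaniv}. Concretely, write $\tilde{\MM}_t := \MM_t + \Aop^\ast(\xxi)$ and $\tilde{\MM}_{t,\ww} := \MM_{t,\ww} + \Aopw^\ast(\hat{\xxi})$, so that $\UUtplus = (\Id + \mu \tilde\MM_t) \UUt$ and $\UUtplusw = (\Id + \mu \tilde\MM_{t,\ww}) \UUtw$. Setting $\NN_t := \Aop^\ast(\xxi)$ and $\NN_{t,\ww} := \Aopw^\ast(\hat{\xxi})$, expand
\begin{align*}
\UUtplus \UUtplus^\top - \UUtplusw \UUtplusw^\top
= [\text{noiseless expansion in $\MM_t, \MM_{t,\ww}$}] + \mu \, \mathcal{E}_1 + \mu^2 \, \mathcal{E}_2,
\end{align*}
where $\mathcal{E}_1$ collects the cross terms $\NN_t \UUt \UUtT + \UUt \UUtT \NN_t - \NN_{t,\ww} \UUtw \UUtw^\top - \UUtw \UUtw^\top \NN_{t,\ww}$, and $\mathcal{E}_2$ collects all remaining second-order terms (both pure noise such as $\NN_t \UUt \UUtT \NN_t - \NN_{t,\ww} \UUtw \UUtw^\top \NN_{t,\ww}$, and the mixed terms between $\MM$ and $\NN$).

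The noiseless part is bounded exactly as in Lemma~\ref{lemma:auxdistanceweak}, yielding a contribution of at most $\tfrac{\sqrt{\sqrt{2}-1}}{40} \sigma_{\min}(\XXstar)$ minus a small slack. The key observation for the noise terms is that by Lemma~\ref{lemma:candesyaniv} and Lemma~\ref{lem:noise_spec},
\begin{equation*}
\specnorm{\NN_t} \le c \sqrt{d}\sigma, \qquad
\specnorm{\NN_t - \NN_{t,\ww}} \le 2 \sigma \sqrt{2d}, \qquad
\specnorm{\NN_{t,\ww}} \le (c+2) \sqrt{2d}\sigma,
\end{equation*}
and the assumption $\sigma \le \tilde{c}\, \sigma_{\min}(\XXstar)/\sqrt{d}$ converts each $\sqrt{d}\sigma$ factor into $\tilde{c}\,\sigma_{\min}(\XXstar)$. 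For $\mathcal{E}_1$, I would decompose $\NN_t \UUt \UUtT - \NN_{t,\ww} \UUtw \UUtw^\top = (\NN_t - \NN_{t,\ww}) \UUt \UUtT + \NN_{t,\ww}(\UUt \UUtT - \UUtw \UUtw^\top)$, so that using $\specnorm{\UUt}^2 \le 2\specnorm{\XXstar}$ and assumption~\eqref{ineq:weakboundintern4}, the contribution $\mu \fronorm{\mathcal{E}_1}$ is bounded by $\mu \cdot C \sigma \sqrt{d} \cdot \specnorm{\XXstar} \lesssim \tilde{c} \mu \specnorm{\XXstar} \sigma_{\min}(\XXstar) \lesssim \tilde{c} \sigma_{\min}(\XXstar)/\kappa$, which is small for $\tilde{c}$ small.

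For $\mathcal{E}_2$, the purely noise terms are bounded by $\specnorm{\NN_t}^2 \specnorm{\UUt}^2 + \specnorm{\NN_{t,\ww}}^2 \specnorm{\UUtw}^2 \lesssim d \sigma^2 \specnorm{\XXstar}$, so their $\mu^2$-scaled contribution is at most $\mu^2 \tilde{c}^2 \sigma_{\min}^2(\XXstar) \specnorm{\XXstar} \lesssim \tilde{c}^2 \sigma_{\min}(\XXstar)/\kappa$. The mixed terms of the form $\mu^2 \MM_t \UUt \UUtT \NN_t - \mu^2 \MM_{t,\ww} \UUtw \UUtw^\top \NN_{t,\ww}$ are handled by the same splitting trick, using the bounds $\specnorm{\MM_t} \le 2 \sigma_{\min}(\XXstar)$ and $\specnorm{\MM_{t,\ww} \VV_{\UUtw}} \le 3 \sigma_{\min}(\XXstar)$ that are derived in the proof of Lemma~\ref{lemma:auxdistanceweak} (inequalities~\eqref{ineq:weakboundintern9} and~\eqref{ineq:weakboundintern8}), giving a contribution of order $\mu^2 \tilde{c} \sqrt{d}\sigma \cdot \sigma_{\min}(\XXstar) \specnorm{\XXstar} \lesssim \tilde{c} \sigma_{\min}(\XXstar)/\kappa^2$.

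Summing the noiseless contribution with these noise contributions, each of which can be made arbitrarily small by choosing $\tilde{c}$ sufficiently small, yields the claimed bound $\tfrac{\sqrt{\sqrt{2}-1}}{40} \sigma_{\min}(\XXstar)$. The main obstacle will be organizational: ensuring that every cross term in the expansion of $\UUtplus \UUtplus^\top - \UUtplusw \UUtplusw^\top$ is paired with a matching term via the $(\NN_t - \NN_{t,\ww}) + \NN_{t,\ww}(\cdots - \cdots)$ splitting, so that each factor of $\specnorm{\NN_t}$ or $\specnorm{\NN_{t,\ww}}$ only ever contributes a power of $\sqrt{d}\sigma$ that is absorbed by the assumption on $\sigma$, rather than propagating through unbounded differences such as $\NN_t - \NN_{t,\ww} \cdot (\UUt - \UUtw)$. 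Beyond this careful bookkeeping, no new concentration estimate beyond Lemmas~\ref{lem:noise_spec} and~\ref{lemma:candesyaniv} is needed.
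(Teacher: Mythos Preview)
Your approach is correct and leads to the same conclusion, but it is organized differently from the paper. The paper does not separate noiseless and noise parts; instead it redefines
\[
\MM_t := (\Aops)(\XXstar - \UUt\UUtT) + \Aop^\ast(\xxi),
\qquad
\MM_{t,\ww} := (\Aopws)(\XXstar - \UUtw\UUtw^\top) + \Aopw^\ast(\hat\xxi),
\]
and then re-runs the identical $(i)$--$(v)$ decomposition from the proof of Lemma~\ref{lemma:auxdistanceweak}. The only changes are that the preliminary bounds $\specnorm{\MM_t}\le 2\sigma_{\min}(\XXstar)$, $\specnorm{\MM_{t,\ww}\VV_{\UUtw}}\le 3\sigma_{\min}(\XXstar)$, and the estimate on $\fronorm{(\MM_t-\MM_{t,\ww})\VV_{\UUtw}}$ each pick up an additive $O(\sigma\sqrt{d})$ term via Lemmas~\ref{lem:noise_spec} and~\ref{lemma:candesyaniv}, which is then absorbed using $\sigma\le\tilde c\,\sigma_{\min}(\XXstar)/\sqrt{d}$ at the very end. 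This is more economical because nothing new has to be expanded.

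Your additive decomposition into a noiseless part plus $\mu\mathcal E_1+\mu^2\mathcal E_2$ also works, but two points deserve care. First, you cannot quite treat the noiseless bound as a black box ``minus a small slack'': Lemma~\ref{lemma:auxdistanceweak} is stated as an equality to $\tfrac{\sqrt{\sqrt{2}-1}}{40}\sigma_{\min}(\XXstar)$, so you must revisit its final inequality and note that the constants $c_1,c_2,c_3$ were only required to be sufficiently small, leaving room to absorb your noise terms. Second, for the pure-noise part of $\mathcal E_2$ you wrote the crude sum $\specnorm{\NN_t}^2\specnorm{\UUt}^2+\specnorm{\NN_{t,\ww}}^2\specnorm{\UUtw}^2$; taken literally this bounds the Frobenius norm only up to a factor $\sqrt{r}$ (since $\NN_t\UUt\UUtT\NN_t$ has rank $r$), which is not automatically absorbed. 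You should instead apply the same $(\NN_t-\NN_{t,\ww})+\NN_{t,\ww}(\cdot-\cdot)$ splitting you already use for $\mathcal E_1$ and the mixed terms, exploiting that $\NN_t-\NN_{t,\ww}$ has rank one (Lemma~\ref{lem:noise_spec}); then no $\sqrt{r}$ appears. With these two adjustments your argument goes through.
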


\begin{proof}[Proof of Lemma \ref{lemma:auxdistanceweak_noise}]
The proof is an adaptation of the proof of Lemma~\ref{lemma:auxdistanceweak} to the noisy case.
We define
\begin{align*}
     \MM_t     &:= \left( \Aops \right) \left( \XXstar - \UUt \UUtT \right)
     + \Aop^* (\xxi) , \\
     \MM_{t,\ww} &:= \left( \Aopws \right) \left( \XXstar - \UUtw \UUtw^\top \right) +\Aopw^* (\hat{\xxi}) .
\end{align*}
We obtain that
\begin{align}
       \specnorm{\MM_t}
     \overleq{(a)}
     &\specnorm{\XXstar - \UUt \UUt}
     +
     \specnorm{ \left(\Aops - \IdOp \right) (\XXstar - \UUt \UUtT)}
     +
     \specnorm{\Aop^* (\xxi) }
     \nonumber \\
     \overleq{(b)}
     &  \specnorm{\XXstar - \UUt \UUtT}
     + \constone \sigma_{\min} \left( \XXstar \right)
     +
     c \sqrt{d} \sigma
     \label{ineq:weakboundintern5noise}\\
     \overleq{(c)}
     & 2 \sigma_{\min} \left( \XXstar \right).
     \label{ineq:weakboundintern9noise}
\end{align}
which holds under the assumption $\sigma\leq \frac{\sigma_{\min}(\XXstar)}{32(c+2)\sqrt{d}}$
and for sufficiently small $c_1$.
Moreover, we note that 
   \begin{align*}
     \MM_t - \MM_{t,\ww}
     =
     \left( \Aops - \Aopws \right) \left( \XXstar - \UUt \UUtT \right)
     -
     \left( \Aopws \right) \left(\UUt \UUtT - \UUtw \UUtw^\top \right)
     +\Aop^*(\xxi)-\Aopw^*(\hat\xxi).
   \end{align*}
   It follows that
   \begin{align}
     &\fronorm{ \left( \MM_t - \MM_{t,\ww} \right) \VV_{\UUtw} }\\
     \le
     &\fronorm{ \left[\left( \Aops - \Aopws \right) \left( \XXstar - \UUt \UUtT \right)\right] \VV_{\UUtw}} 
     +
     \fronorm{\left[ \left( \Aopws -\IdOp \right) \left(\UUt \UUtT - \UUtw \UUtw^\top \right) \right] \VV_{\UUtw}} \nonumber \\
     &+\fronorm{\UUt\UUtT-\UUtw \UUtw^\top}
     + \fronorm{(\Aop^*(\xxi)-\Aopw^*(\hat\xxi))\VV_{\UUtw} }
      \nonumber \\
     \overleq{(a)}
     &\left( \delta + \frac{8 \sqrt{rd} }{\sqrt{m}} \right) 
     \specnorm{\XXstar - \UUt \UUtT}
     \left( 3\delta+ \frac{4 \sqrt{2d}}{\sqrt{m}}+1 \right) \fronorm{ \UUt \UUtT - \UUtw \UUtw^\top }
     +
     2 \sigma \sqrt{2d} 
     \nonumber\\
     \leq
     &\frac{2\constthree}{\kappa} \specnorm{\XXstar - \UUt \UUtT}
     + \left( \frac{4 \constthree}{\kappa} +1 \right) \fronorm{\UUt \UUtT - \UUtw \UUtw^\top} +2\sigma\sqrt{2d},
     \label{ineq:weakboundintern7_new}
   \end{align}
   where inequality (a) 
   follows from Lemma \ref{lem:noise_spec}.  
    Thus, we obtain that
      \begin{align}
     \specnorm{\MM_{t,\ww} \VV_{\UUtw}}
     &\le 
     \specnorm{\MM_t}
     +
     \fronorm{\left(\MM_t -\MM_{t,\ww} \right) \VV_{\UUtw}} \nonumber \\
    %  &\overleq{(a)} 
     &\leq 
     2 \sigma_{\min} (\XXstar)
     +
     \frac{2 \constthree}{\kappa} \specnorm{\XXstar - \UUt \UUtT} 
     +\left( \frac{4 \constthree }{\kappa}+1 \right) \fronorm{\UUt \UUtT - \UUtw \UUtw^\top} 
     + 2 \sigma  \sqrt{2d} \nonumber \\
    %  &\overleq{(b)} 
     &\leq
     3 \sigma_{\min} (\XXstar).
   \end{align}
    Next, we use the decomposition
    \begin{align*}
        \UUtplus \UUtplus^\top - \UUtplusw \UUtplusw^\top
        = \UUt \UUtT - \UUtw \UUtw^\top
        + \mu \cdot (i)
        + \mu \cdot (ii)
        + \mu \cdot (iii)
        + \mu \cdot (iv)
        + \mu^2 \cdot (v),
    \end{align*}
    where $(i), (ii), (iii), (iv), (v)$ are defined as
    in the proof of Lemma~\ref{lemma:auxdistanceweak}.
   Then by arguing exactly as in the proof of Lemma~\ref{lemma:auxdistanceweak}, 
   we can obtain that
   \begin{align*}
     \fronorm{ (ii) } = \fronorm{ (iv) } 
    &\leq
    3 \specnorm{\XXstar} \fronorm{ \left(\MM_t - \MM_{t,\ww} \right) \VV_{\UUtw} },\\
    \fronorm{ (v) }
    &\leq
    4 \sigma_{\min}^2 (\XXstar) 
    \fronorm{ \UUt \UUtT - \UUtw \UUtw^\top }
    + 15 \sigma_{\min} (\XXstar) \specnorm{\XXstar} \fronorm{ \left(\MM_t - \MM_{t,\ww} \right) \VV_{\UUtw} },
   \end{align*}
   In order to bound $\fronorm{(i)}$ and $\fronorm{(iii)}$, we observe that
   \begin{align*}
    \fronorm{(i)}
    =
    \fronorm{(iii)}
    &\leq 
    \specnorm{\MM_t} \fronorm{\UUt \UUtT - \UUtw \UUtw^\top}\\
    &\leq
    \left( \specnorm{\XXstar - \UUt \UUtT} 
    + c_1 \sigma_{\min} (\XXstar) 
    + c \sqrt{d} \sigma \right)
    \fronorm{\UUt \UUtT - \UUtw \UUtw^\top},
   \end{align*}
   where in the last inequality we used \eqref{ineq:weakboundintern5noise}.
   Then,
   as in the proof of Lemma~\ref{lemma:auxdistanceweak}, we can combine all the inequalities
   and obtain that
   \begin{align*}
     &\fronorm{\UUtplus \UUtplus^\top - \UUtplusw \UUtplusw^\top}\\
     \le
     &\fronorm{\UUt \UUtT - \UUtw \UUtw^\top}
     +
     2 \mu \left( \specnorm{\XXstar - \UUt \UUtT} + \constone \sigma_{\min} (\XXstar)
     + c \sigma \sqrt{d}  \right) 
      \fronorm{\UUt \UUtT - \UUtw \UUtw^\top}\\
     &+
     6\mu \specnorm{\XXstar} \fronorm{ \left(\MM_t - \MM_{t,\ww} \right) \VV_{\UUtw} }\\
     &+
     \mu^2 \left( 4  \sigma^2_{\min} (\XXstar)  \fronorm{\UUt \UUtT - \UUtw \UUtw^\top}
     +
     15  \sigma_{\min} (\XXstar) \specnorm{\XXstar} \fronorm{\left(\MM_t - \MM_{t,\ww} \right) \VV_{\UUtw}}  \right)\\
     \overleq{(a)}
     &\left(1+2\mu \specnorm{\XXstar - \UUt \UUtT} + 2c_1 \sigma_{\min} (\XXstar)
     + 2 \mu c \sigma \sqrt{d}  \right)
     \fronorm{\UUt \UUtT - \UUtw \UUtw^\top}\\
     &+12 \mu \sigma_{\min} (\XXstar ) \constthree \specnorm{\XXstar - \UUt \UUtT}
     + 6 \mu \specnorm{\XXstar} \left( \frac{4 \constthree}{\kappa} +1 \right) \fronorm{\UUt \UUtT - \UUtw\UUtw^\top} 
     + 12 \mu \specnorm{\XXstar}  \sigma \sqrt{2d} \\
     &+
     4\mu^2  \sigma^2_{\min} (\XXstar)  \fronorm{\UUt \UUtT - \UUtw \UUtw^\top}
     +
     30c_3 \mu^2 \sigma_{\min}^2 (\XXstar) \specnorm{\XXstar - \UUt \UUtT}\\
     &+ 60 c_3 \mu^2  \sigma_{\min}^2 (\XXstar) \fronorm{\UUt \UUtT - \UUtw \UUtw^\top } 
     +15 \mu^2 \sigma_{\min} (\XXstar) \specnorm{\XXstar} \fronorm{\UUt \UUtT - \UUtw \UUtw^\top } 
     \\
     &+ 30 \sqrt{2d} \mu^2 \sigma \cdot  
     \sigma_{\min} \left( \XXstar \right) \specnorm{\XXstar}\\
     %\overleq{(a)} 
     =
     & \left( 1 + 2 \mu \specnorm{\XXstar-\UUt \UUtT} 
     +\left( 2 \constone + 24 \constthree  \right) \mu \sigma_{\min} (\XXstar)+6\mu \specnorm{\XXstar}
     + 4\mu^2 \sigma_{\min}^2 (\XXstar)+ 60 \constthree \mu^2 \sigma^2_{\min} (\XXstar) \right)\\
     & \cdot \fronorm{\UUt \UUtT - \UUtw \UUtw^\top}+ 
     \left( 12 \constthree \mu \sigma_{\min} (\XXstar)+ 30 \constthree \mu^2 \sigma_{\min}^2 (\XXstar)  \right)
     \specnorm{\XXstar - \UUt \UUtT}\\
     &+ \left( 12 \mu  + 30\mu^2 \sigma_{\min} (\XXstar) \right)
     \specnorm{\XXstar} \sigma \sqrt{2d}\\
     \overleq{(b)} 
     &\frac{\sqrt{\sqrt{2}-1}}{40} \sigma_{\min} (\XXstar).
   \end{align*}
   Inequality (a) follows form \eqref{ineq:weakboundintern7_new}.
   In inequality (b),
   compared to the proof of Lemma~\ref{lemma:auxdistanceweak},
   we used additionally the assumption that
   $\sigma \le \frac{\tilde{c} \sigma_{\min} (\XXstar)}{\sqrt{d}}$ for some sufficiently small $\tilde{c}>0$.
  This completes the proof of the lemma. 
\end{proof}
Next, we prove the following lemma,
which is an adaptation of Lemma~\ref{lemma:auxsequencecloseness}
to the noisy case.
\begin{lemma}\label{lemma:auxsequencecloseness_noise}
   Assume that the assumptions of Lemma \ref{lemma:auxsequencecloseness} hold.
Assume in addition that 
the noise level satisfies
$\sigma \le \frac{\tilde{c} \sigma_{\min} (\XXstar)}{\sqrt{d}}$
for some sufficiently small $\tilde{c} >0$.  
Moreover, assume that the statements of Lemma \ref{lem:noise_spec}
and Lemma \ref{lemma:candesyaniv} hold.
    Then, it holds that
    \begin{align*}
        &\fronorm{\VXXT \left( \UUtplus \UUtplus^\top - \UUtplusw \UUtplusw^\top \right)}\\
        \le 
        &\left( 1- \frac{\mu \sigma_{\min} (\XXstar)}{16} \right)
        \fronorm{ \VXXT \left( \UUt \UUt^\top - \UUtw \UUtw^\top \right) }
        +
        \mu   \sigma_{\min}(\XXstar)\specnorm{\XXstar - \UUt \UUtT}
        +
        20 \mu \sqrt{d} \sigma \specnorm{\XXstar}.
    \end{align*}
    \end{lemma}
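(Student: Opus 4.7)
The plan is to adapt the proof of Lemma \ref{lemma:auxsequencecloseness} by writing the noisy one-step update as a clean noiseless-plus-perturbation decomposition. Writing $\UUtplus = (\Id + \mu\MM_t)\UUt + \mu\Aop^*(\xxi)\UUt$ and $\UUtplusw = (\Id + \mu\MM_{t,\ww})\UUtw + \mu\Aopw^*(\hat\xxi)\UUtw$, where $\MM_t := (\Aops)(\XXstar - \UUt\UUtT)$ and $\MM_{t,\ww} := (\Aopws)(\XXstar - \UUtw\UUtw^\top)$ are the noiseless drift terms, I would expand the outer products and split:
\[
\UUtplus\UUtplus^\top - \UUtplusw\UUtplusw^\top = \mathcal{D}^{(0)} + \mu\,\mathcal{D}^{(1)} + \mu^2\,\mathcal{D}^{(2)},
\]
where $\mathcal{D}^{(0)}$ is the expression $(\Id+\mu\MM_t)\UUt\UUtT(\Id+\mu\MM_t) - (\Id+\mu\MM_{t,\ww})\UUtw\UUtw^\top(\Id+\mu\MM_{t,\ww})$ analyzed in Lemma \ref{lemma:auxsequencecloseness}, and $\mathcal{D}^{(1)}$, $\mathcal{D}^{(2)}$ collect all cross and quadratic noise terms built from $\Aop^*(\xxi)$ and $\Aopw^*(\hat\xxi)$.

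For $\mathcal{D}^{(0)}$, I would observe that the proof of Lemma \ref{lemma:auxsequencecloseness} only uses the structural assumptions \eqref{assump:closeness5}-\eqref{assump:closenessDelta} on the iterates themselves, which are assumed to hold here, so the bound
$\fronorm{\VXXT \mathcal{D}^{(0)}} \le (1 - \mu\sigma_{\min}(\XXstar)/16)\fronorm{\VXXT(\UUt\UUtT-\UUtw\UUtw^\top)} + \mu\sigma_{\min}(\XXstar)\specnorm{\XXstar-\UUt\UUtT}$ carries over verbatim. The novelty is in controlling $\mathcal{D}^{(1)}$ and $\mathcal{D}^{(2)}$ and showing their $\VXXT$-projections fit inside $20\mu\sqrt{d}\sigma\specnorm{\XXstar}$ (possibly after absorbing a piece into the contraction factor).

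For the linear-in-noise part, the key algebraic identity is
\[
\Aop^*(\xxi)\UUt\UUtT - \Aopw^*(\hat\xxi)\UUtw\UUtw^\top = \bigl[\Aop^*(\xxi)-\Aopw^*(\hat\xxi)\bigr]\UUt\UUtT + \Aopw^*(\hat\xxi)\bigl[\UUt\UUtT-\UUtw\UUtw^\top\bigr],
\]
together with its transpose. The first piece is bounded by $\specnorm{\Aop^*(\xxi)-\Aopw^*(\hat\xxi)}\cdot\specnorm{\UUt}^2 \le 2\sqrt{2d}\sigma \cdot 2\specnorm{\XXstar}$ using Lemma \ref{lem:noise_spec} and assumption \eqref{assump:closeness6}, contributing the main $O(\mu\sqrt{d}\sigma\specnorm{\XXstar})$ term. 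The second piece is bounded by $\specnorm{\Aopw^*(\hat\xxi)}\fronorm{\UUt\UUtT-\UUtw\UUtw^\top}$, and since $\specnorm{\Aopw^*(\hat\xxi)} \le c\sigma\sqrt{d} + 2\sigma\sqrt{2d}$ (combining Lemmas \ref{lemma:candesyaniv} and \ref{lem:noise_spec}) and $\fronorm{\UUt\UUtT-\UUtw\UUtw^\top} \le 3\fronorm{\VXXT(\UUt\UUtT-\UUtw\UUtw^\top)}$ (Lemma \ref{lemma:auxcloseness1}), the assumption $\sigma\sqrt{d} \le \tilde c\sigma_{\min}(\XXstar)$ lets us absorb it into the $\mu\sigma_{\min}(\XXstar)/16$ contraction factor for sufficiently small $\tilde c$. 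The $\mu^2$ cross-terms between noise and $\MM_t$ (or $\MM_{t,\ww}$) are bounded by $O(\mu^2\sigma_{\min}(\XXstar)\specnorm{\XXstar}\sigma\sqrt{d})$ and, using $\mu \le c_4/(\kappa\specnorm{\XXstar})$, contribute at most a small constant times $\mu\sigma\sqrt{d}\specnorm{\XXstar}$.

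For the quadratic-in-noise term $\mathcal{D}^{(2)}$, I would again telescope via $\Aop^*(\xxi)\UUt\UUtT\Aop^*(\xxi) - \Aopw^*(\hat\xxi)\UUtw\UUtw^\top\Aopw^*(\hat\xxi)$ into three pieces, each of size $O(\mu^2\sigma^2 d\,\specnorm{\XXstar})$; using the assumption $\sigma\sqrt{d} \lesssim \sigma_{\min}(\XXstar)$ and $\mu \lesssim 1/(\kappa\specnorm{\XXstar})$, the quantity $\mu\sigma\sqrt{d}$ is bounded by an absolute constant times $1/\kappa$, so $\mathcal{D}^{(2)}$ contributes another small multiple of $\mu\sigma\sqrt{d}\specnorm{\XXstar}$. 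Summing all contributions yields the claimed bound with coefficient at most $20$. The main obstacle is purely bookkeeping: carefully collecting every noise cross term and verifying that the absolute constants from Lemmas \ref{lem:noise_spec}, \ref{lemma:candesyaniv}, and \ref{lemma:auxcloseness1}, combined with the smallness conditions on $\sigma\sqrt{d}/\sigma_{\min}(\XXstar)$ and $\mu\kappa\specnorm{\XXstar}$, fit inside the stated numerical constant.
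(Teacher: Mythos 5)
Your proposal is correct and follows essentially the same route as the paper: the noiseless part is handled verbatim by the argument of Lemma \ref{lemma:auxsequencecloseness}, and all noise-induced terms are telescoped and bounded using Lemma \ref{lem:noise_spec} and Lemma \ref{lemma:candesyaniv}, with the higher-order pieces controlled through $\mu \lesssim 1/(\kappa \specnorm{\XXstar})$ and $\sigma\sqrt{d} \lesssim \sigma_{\min}(\XXstar)$ — the paper simply collects every noise term into one additive matrix whose Frobenius norm is bounded by $20\mu\sqrt{d}\sigma\specnorm{\XXstar}$. The only difference is that you absorb the cross term $\mu\,\Aopw^*(\hat\xxi)\left(\UUt\UUtT-\UUtw\UUtw^\top\right)$ into the contraction factor (which, to keep the stated constant $1/16$ exactly, requires re-entering the noiseless proof where the contraction is $1-\mu\sigma_{\min}(\XXstar)/8$ with slack, rather than invoking the lemma as a black box); it is simpler to bound that term additively via assumption \eqref{assump:closeness8}, giving a contribution of order $\mu\sigma\sqrt{d}\,\sigma_{\min}(\XXstar)$ that fits comfortably inside the $20\mu\sqrt{d}\sigma\specnorm{\XXstar}$ budget.
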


\begin{proof}[Proof of Lemma \ref{lemma:auxsequencecloseness_noise}]
    Analogously to the proof of Lemma~\ref{lemma:auxsequencecloseness}, we can compute that
    \begin{align*}
        &\UUtplus \UUtplus^\top\\
        =
        &\left( \Id +\mu \left[ \left( \Aops \right) \left(\XXstar- \UUt \UUt^\top \right) 
        + \mu \Aop^* (\xxi) \right]  \right)  \UUt \UUt^\top   \left( \Id +\mu \left[ \left( \Aops \right) \left(\XXstar- \UUt \UUt^\top \right)
        + \mu \Aop^* (\xxi) \right]  \right) \\
        =
        &\left( \Id +\mu  (\XXstar -  \UUt \UUt^\top - \UUtw \UUtw^\top ) \right) \UUt \UUt^\top \left( \Id +\mu  (\XXstar -  \UUt \UUt^\top -  \UUtw \UUtw^\top ) \right)\\
        &+\mu \UUtw \UUtw^\top  \UUt \UUt^\top + \mu \UUt \UUt^\top  \UUtw \UUtw^\top\\
        &+\mu^2 \UUtw \UUtw^\top  \UUt \UUt^\top \left( \XXstar - \UUt \UUt^\top \right)
        +\mu^2\left( \XXstar - \UUt \UUt^\top \right) \UUt \UUt^\top  \UUtw \UUtw^\top \\
        &- \mu^2  \UUtw \UUtw^\top  \UUt \UUt^\top  \UUtw \UUtw^\top\\ 
        &+\mu \left[ \left(  \Aops - \IdOp \right) \left(\XXstar- \UUt \UUt^\top \right) \right] \UUt \UUt^\top \left( \Id + \mu \XXstar - \mu \UUt \UUt^\top \right)\\
        &+\mu \left( \Id + \mu \XXstar - \mu \UUt \UUt^\top \right) \UUt \UUt^\top \left[ \left(  \Aops - \IdOp \right) \left(\XXstar- \UUt \UUt^\top \right) \right]\\
        &+ \mu^2 \left[ \left(  \Aops - \IdOp \right) \left(\XXstar- \UUt \UUt^\top \right) \right] \UUt \UUt^\top  \left[ \left(  \Aops - \IdOp \right) \left(\XXstar- \UUt \UUt^\top \right) \right]\\
        &+ \mu \Aop^* (\xxi) \UUt \UUtT
        + \mu \UUt \UUtT \Aop^* (\xxi)
        +\mu^2  \Aop^* (\xxi)  \UUt \UUt  \Aop^* (\xxi) 
    \end{align*}
    and
    \begin{align*}
        &\UUtplusw \UUtplusw^\top\\
        =
        &\left( \Id +\mu  (\XXstar -  \UUt \UUt^\top - \UUtw \UUtw^\top ) \right) \UUtw \UUtw^\top \left( \Id +\mu  (\XXstar -  \UUt \UUt^\top -  \UUtw \UUtw^\top ) \right)\\
        &+\mu \UUtw \UUtw^\top  \UUt \UUt^\top + \mu \UUt \UUt^\top  \UUtw \UUtw^\top\\
        &+\mu^2 \UUt \UUt^\top  \UUtw \UUtw^\top \left( \XXstar - \UUtw \UUtw^\top \right)
        +\mu^2\left( \XXstar - \UUtw \UUtw^\top \right) \UUtw \UUtw^\top  \UUt \UUt^\top \\
        &- \mu^2  \UUt \UUt^\top  \UUtw \UUtw^\top  \UUt \UUt^\top\\ 
        &+\mu \left[ \left(  \Aopws - \IdOp \right) \left(\XXstar- \UUtw \UUtw^\top \right) \right] \UUtw \UUtw^\top \left( \Id + \mu \XXstar - \mu \UUtw \UUtw^\top \right)\\
        &+\mu \left( \Id + \mu \XXstar - \mu \UUtw \UUtw^\top \right) \UUtw \UUtw^\top \left[ \left(  \Aopws - \IdOp \right) \left(\XXstar- \UUtw \UUtw^\top \right) \right]\\
        &+ \mu^2 \left[ \left(  \Aopws - \IdOp \right) \left(\XXstar- \UUtw \UUtw^\top \right) \right] \UUtw \UUtw^\top  \left[ \left(  \Aopws - \IdOp \right) \left(\XXstar- \UUtw \UUtw^\top \right) \right]\\
        &+ \mu \Aopw^* (\xxi) \UUtw \UUtw^\top
        + \mu \UUtw \UUtw^\top \Aopw^* (\xxi)
        +\mu^2  \Aopw^* (\xxi)  \UUtw \UUtw  \Aopw^* (\xxi).
    \end{align*}
    Let $\MM_i$ for $i=1,\ldots,7$ be defined as in the proof of Lemma~\ref{lemma:auxsequencecloseness}.
    Moreover, we define
    \begin{align*}
        \MM_8
        :=
        &\mu \left( \Aops^* (\xxi) - \Aopw^* (\xxi) \right) \UUt \UUt^\top
        + \mu \UUt \UUt^\top \left( \Aops^* (\xxi) - \Aopw^* (\xxi) \right)\\
        &+ \mu^2 
        \left(  \Aop^* (\xxi)  \UUt \UUt  \Aop^* (\xxi) 
        -\Aopw^* (\xxi)  \UUtw \UUtw  \Aopw^* (\xxi)  \right)\\
        =&\mu \left( \Aops^* (\xxi) - \Aopw^* (\xxi) \right) \UUt \UUt^\top
        + \mu \UUt \UUt^\top \left( \Aops^* (\xxi) - \Aopw^* (\xxi) \right)\\
        &+ \mu^2
        \left(  \Aop^* (\xxi) - \Aopw^* (\xxi) \right)  \UUt \UUt  \Aop^* (\xxi)
        + \mu^2
        \Aopw^* (\xxi)  \UUtw \UUtw  \left( \Aop^* (\xxi) - \Aopw^* (\xxi) \right).
    \end{align*}
    Thus, we obtain that
    \begin{align}
        &\UUtplus \UUtplus^\top-\UUtplusw \UUtplusw^\top
        \\
        = &
        \MM_1+\mu^2 \MM_2 + \mu^2 \MM_3 +\mu^2 \MM_4 + \mu^2 \MM_4 + \mu \MM_5 + \mu \MM_6 + \mu^2 \MM_7
        +\MM_8. \label{ineq:intern42_noise}
    \end{align}
    We need to estimate the Frobenius norm of each of the terms $\MM_i$ for $i=1,\ldots,7$.
    For $\MM_1, \ldots, \MM_7$ we can proceed as in the proof of Lemma~\ref{lemma:auxsequencecloseness}.
    It remains to estimate the Frobenius norm of $\MM_8$.
    We obtain that
    \begin{align*}
        \fronorm{\MM_8}
        \le
        &2 \mu \fronorm{\Aop^* (\xxi) - \Aopw^* (\xxi) }
        \specnorm{\UUt}^2
        +
        \mu^2
        \fronorm{\Aop^* (\xxi) - \Aopw^* (\xxi) }
        \specnorm{\UUt}^2
        \specnorm{\Aop^* (\xxi)}\\
        &+
        \mu^2
        \fronorm{\Aop^* (\xxi) - \Aopw^* (\xxi) }
        \left(
        \specnorm{\UUt }^2
        -
        \specnorm{\UUt \UUt^\top - \UUtw \UUtw^\top }
        \right)
        \left(
            \specnorm{\Aop^* (\xxi)}
            +
            \specnorm{\Aop^* (\xxi) - \Aopw^* (\xxi)}
        \right)\\
        \le
        &8  \sqrt{2d} \mu \sigma  \specnorm{\XXstar}
        +
        8\sqrt{2d} \mu^2  \sigma  \specnorm{\XXstar} c d \sqrt{\sigma}
        +
        \mu^2  2 \sigma \sqrt{2d}
        \left( 4 \specnorm{\XXstar} + c_3 \sigma_{\min} (\XXstar)  \right)
        \left(
            c \sqrt{d} \sigma
            +
            2 \sqrt{2} \sigma
        \right)\\
        \le
        &20 \mu \sqrt{d} \sigma \specnorm{\XXstar}.
    \end{align*}
    Then, by arguing similarly as in the proof of Lemma \ref{lemma:auxsequencecloseness}, we obtain that
    \begin{align*}
        &\fronorm{
            \VXXT
            \left(
                \UUtplus \UUtplus^\top - \UUtplusw \UUtplusw^\top
            \right)
        }\\
        \le
        &\left(
            1 - \frac{\mu \sigma_{\min} (\XXstar )}{16}
        \right)
        \fronorm{ \VXXT \left(\UUt \UUt^\top - \UUtw \UUtw^\top \right)}
        +
        \mu \sigma_{\min} (\XXstar) \specnorm{\XXstar - \UUt \UUtT}
        +
        20 \mu \sqrt{d} \sigma \specnorm{\XXstar}.
    \end{align*}
This completes the proof of the lemma.
\end{proof}

\subsubsection{Lemmas controlling the distance between $\XXstar$ 
and $\UUt \UUtT$}

The following lemma is an adaptation of Lemma~\ref{lemma:localconv} to the noisy case.
\begin{lemma}\label{lemma:localconv_noise}
    Let $\triplenorm{\cdot}$ be a norm
    which is submultiplicative in the sense of inequality \eqref{ineq:triplenorm}.
    Assume that
    the assumptions of Lemma \ref{lemma:localconv} hold.
    Then it holds that
    \begin{align*}
        &\triplenorm{\VXXT \left( \UUtplus \UUtplus^\top - \XXstar \right)}\\
        \le 
        &\left(1 - \frac{\mu}{8} \sigma_{\min} \left( \XXstar \right) \right) 
        \triplenorm{\VXXT \left( \XXstar - \UUt \UUtT \right) }
        +
        5 \mu \specnorm{\XXstar}
        \triplenorm{
            \left[ \left(\Aops - \IdOp \right)
            \left(\XXstar - \UUt \UUtT\right)
            \right]
            \VUUt
        }\\
        &+5 \mu \specnorm{\XXstar}
        \triplenorm{
         \left(   \Aop^* (\xxi) \right) \VUUt
        }
        .
    \end{align*}
\end{lemma}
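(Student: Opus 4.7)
The plan is to adapt the proof of Lemma~\ref{lemma:localconv} by tracking the single additional contribution of the noise term $\Aop^*(\xxi)$ through the gradient descent update. First I would introduce the shorthand
\begin{equation*}
\MM_t := \left(\Aops\right)(\XXstar - \UUt\UUtT) + \Aop^*(\xxi) = (\XXstar - \UUt\UUtT) + \EEb_t + \Aop^*(\xxi),
\end{equation*}
where $\EEb_t := (\Aops - \IdOp)(\XXstar - \UUt\UUtT)$, so that the noisy gradient step reads $\UUtplus = (\Id + \mu\MM_t)\UUt$. Expanding $\XXstar - \UUtplus\UUtplus^\top$ exactly as in the noiseless proof, the presence of $\Aop^*(\xxi)$ inside $\MM_t$ generates two additional linear contributions, $-\mu\,\Aop^*(\xxi)\UUt\UUtT$ and $-\mu\,\UUt\UUtT\Aop^*(\xxi)$, and also modifies the quadratic term $-\mu^2\MM_t\UUt\UUtT\MM_t$.

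Projecting by $\VXXT$, I would split into the same three groupings $(I)$, $\mu(II)$, and $(III)$ used in the proof of Lemma~\ref{lemma:localconv}, now collecting the new noise-induced summands into $(III)$. The bounds for $(I)$ and $(II)$ are unchanged, since they depend only on $\XXstar - \UUt\UUtT$ and on the assumption $\mu\leq \tfrac{1}{1024\kappa\specnorm{\XXstar}}$, and they yield $\triplenorm{(I)}\leq (1-\tfrac{\mu}{4}\sigma_{\min}(\XXstar))\triplenorm{\VXXT(\XXstar-\UUt\UUtT)}$ and $\mu\triplenorm{(II)}\leq \tfrac{\mu\sigma_{\min}(\XXstar)}{16}\triplenorm{\VXXT(\XXstar-\UUt\UUtT)}$, as before.

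For the new pieces of $(III)$ I would use the factorization $\UUt\UUtT = \VUUt\SSigma_{\UUt}^2\VUUt^\top$ together with submultiplicativity of $\triplenorm{\cdot}$ to obtain $\triplenorm{\VXXT\Aop^*(\xxi)\UUt\UUtT}\leq \specnorm{\UUt}^2\triplenorm{\Aop^*(\xxi)\VUUt}\leq 2\specnorm{\XXstar}\triplenorm{\Aop^*(\xxi)\VUUt}$, and likewise for the transposed term. For the modified $-\mu^2\MM_t\UUt\UUtT\MM_t$ contribution I would bound $\specnorm{\MM_t}\leq \specnorm{\XXstar-\UUt\UUtT}+\specnorm{\EEb_t}+\specnorm{\Aop^*(\xxi)}$, which is at most $\sigma_{\min}(\XXstar)$ plus the noise spectral norm; repeating the chain of inequalities from the noiseless proof then produces one additional summand of the form $\mu^2\sigma_{\min}(\XXstar)\specnorm{\XXstar}\triplenorm{\Aop^*(\xxi)\VUUt}$, which is absorbed into $5\mu\specnorm{\XXstar}\triplenorm{\Aop^*(\xxi)\VUUt}$ via the step-size assumption $\mu\leq \tfrac{1}{1024\kappa\specnorm{\XXstar}}$.

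The main obstacle is purely bookkeeping: ensuring that every noise-containing term is controlled by the single clean quantity $5\mu\specnorm{\XXstar}\triplenorm{\Aop^*(\xxi)\VUUt}$, parallel to how the RIP deviation $\EEb_t$ enters the noiseless bound through $5\mu\specnorm{\XXstar}\triplenorm{\EEb_t\VUUt}$. Because $\Aop^*(\xxi)$ appears as an additive linear perturbation of $\MM_t$ just like $\EEb_t$, and because the small step size converts the quadratic cross-terms into lower-order contributions, the noise contribution tracks that of $\EEb_t$ with the same prefactor, giving the claimed inequality.
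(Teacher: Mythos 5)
Your proposal is correct and takes essentially the same route as the paper: the paper simply redefines $\EEb_t := \left(\Aops - \IdOp\right)\left(\XXstar - \UUt\UUtT\right) + \Aop^*(\xxi)$, reruns the proof of Lemma~\ref{lemma:localconv} verbatim to obtain the bound with $5\mu\specnorm{\XXstar}\triplenorm{\EEb_t\VUUt}$, and then splits that term by the triangle inequality, which is exactly your term-by-term bookkeeping done more compactly. One shared caveat: your bound on $\specnorm{\MM_t}$ in the quadratic term (like the paper's implicit reuse of inequality \eqref{ineq:localconv9}) quietly relies on $\specnorm{\Aop^*(\xxi)}$ being small compared to $\sigma_{\min}(\XXstar)$, which is guaranteed where the lemma is applied but is not among the stated assumptions of Lemma~\ref{lemma:localconv}.
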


\begin{proof}[Proof of Lemma \ref{lemma:localconv_noise}]
We can proceed as in the proof of Lemma \ref{lemma:localconv}
with the only change that the matrix $\EEb_t$ is now defined as
\begin{equation*}
    \EEb_t
    :=
    \left( \Aops \right) \left( \XXstar - \UUt \UUtT \right)
    + \Aop^* (\xxi).
\end{equation*}
Then by proceeding as in the proof of Lemma \ref{lemma:localconv}
we obtain that
\begin{align*}
    &\triplenorm{\VXXT \left( \UUtplus \UUtplus^\top - \XXstar \right)}\\
    \le 
    &\left(1 - \frac{\mu}{8} \sigma_{\min} \left( \XXstar \right) \right) 
    \triplenorm{\VXXT \left( \XXstar - \UUt \UUtT \right) }
    +
    5 \mu \specnorm{\XXstar}
    \triplenorm{
        \EEb_t
        \VUUt
    }
\end{align*}
Then the claim follows from the triangle inequality.
\end{proof}
The next lemma is an adaptation of Lemma~\ref{lemma:convaprioribound}
to the noisy case.
\begin{lemma}\label{lemma:convaprioribound_noise}
    Assume that the assumptions of Lemma \ref{lemma:convaprioribound} hold.
    Moreover, assume in addition that 
    the noise level satisfies
$\sigma \le \frac{\tilde{c} \sigma_{\min} (\XXstar)}{\sqrt{d}}$
for some sufficiently small $\tilde{c} >0$.  
Moreover, assume that the statements of Lemma \ref{lem:noise_spec}
and Lemma \ref{lemma:candesyaniv} hold.    
    Then it holds that
    \begin{align*}
        \specnorm{ \XXstar - \UUtplus \UUtplus^\top  }
        &\le 
        \left(
        1-\frac{1}{\sqrt{2}}
        \right)
        \sigma_{\min} \left( \XXstar \right).
    \end{align*}   
\end{lemma}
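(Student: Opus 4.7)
The plan is to follow the structure of the proof of Lemma \ref{lemma:convaprioribound} almost verbatim, with the only substantive change being that the ``error'' matrix $\EEb_t$ and the full update matrix $\MM_t$ now absorb an additional noise contribution $\Aop^*(\xxi)$. Concretely, in the noisy setting the iterate update reads $\UUtplus = (\Id + \mu \MM_t)\UUt$ with
\begin{align*}
\MM_t := \left(\Aops\right)(\XXstar - \UUt \UUtT) + \Aop^*(\xxi)
= (\XXstar - \UUt\UUtT) + \EEb_t,
\qquad
\EEb_t := (\Aops - \IdOp)(\XXstar - \UUt\UUtT) + \Aop^*(\xxi).
\end{align*}

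First, I would control the spectral norms of $\EEb_t$ and $\MM_t$. By Lemma \ref{lemma:candesyaniv} we have $\specnorm{\Aop^*(\xxi)} \le c\sqrt{d}\,\sigma$, and by the hypothesis $\sigma \le \tilde c\,\sigma_{\min}(\XXstar)/\sqrt{d}$ this is at most $c\tilde c\,\sigma_{\min}(\XXstar)$. Combined with assumptions \eqref{eq:UUt3} and \eqref{eq:UUt2} from Lemma \ref{lemma:convaprioribound} this yields
\begin{align*}
\specnorm{\EEb_t} \le (c_3 + c\tilde c)\,\sigma_{\min}(\XXstar),
\qquad
\specnorm{\MM_t} \le (c_2 + c_3 + c\tilde c)\,\sigma_{\min}(\XXstar).
\end{align*}
By choosing $\tilde c$ small enough the noise term is absorbed into the same kind of bound that was used in the noiseless proof.

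Next, I would expand $\XXstar - \UUtplus \UUtplus^\top$ exactly as in the proof of Lemma \ref{lemma:convaprioribound}, i.e.,
\begin{align*}
\XXstar - \UUtplus \UUtplus^\top
=& (\Id - \mu \UUt \UUtT)(\XXstar - \UUt \UUtT)(\Id - \mu \UUt \UUtT)
- \mu^2 \UUt \UUtT (\XXstar - \UUt \UUtT)\UUt \UUtT \\
&- \mu \EEb_t \UUt\UUtT - \mu \UUt\UUtT \EEb_t - \mu^2 \MM_t \UUt\UUtT \MM_t.
\end{align*}
Taking spectral norms and using $\specnorm{\Id - \mu\UUt\UUtT} \le 1$ (since $\mu \le c_1/\specnorm{\XXstar}$ and $\specnorm{\UUt}^2 \le 2\specnorm{\XXstar}$), together with the bounds on $\specnorm{\EEb_t}$ and $\specnorm{\MM_t}$ established above, gives
\begin{align*}
\specnorm{\XXstar - \UUtplus \UUtplus^\top}
\le \specnorm{\XXstar - \UUt\UUtT} + 4\mu^2 c_2 \specnorm{\XXstar}^2 \sigma_{\min}(\XXstar)
+ 4\mu (c_3 + c\tilde c)\specnorm{\XXstar}\sigma_{\min}(\XXstar)\\
+ 2(c_2 + c_3 + c\tilde c)^2 \mu^2 \specnorm{\XXstar}\sigma_{\min}^2(\XXstar).
\end{align*}

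Finally, I would collect terms and use $\mu\specnorm{\XXstar} \le c_1$ together with \eqref{eq:UUt3} to obtain a prefactor $\big(c_2 + 4c_1^2 c_2 + 4c_1(c_3+c\tilde c) + 2(c_2+c_3+c\tilde c)^2 c_1^2\big)\sigma_{\min}(\XXstar)$, which is at most $(1-1/\sqrt{2})\sigma_{\min}(\XXstar)$ by choosing $c_1,c_2,c_3,\tilde c$ sufficiently small. The main (but mild) obstacle is only to verify that the noise-induced term $4\mu c\tilde c\,\specnorm{\XXstar}\sigma_{\min}(\XXstar)$, which replaces what was previously controlled purely by $c_3$, fits within the same budget; this is immediate from the smallness hypothesis on $\sigma$ and it does not require any new probabilistic input, since all high-probability events (Lemmas \ref{lem:noise_spec} and \ref{lemma:candesyaniv}) are already assumed to hold. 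No new structural ideas are needed beyond those in Lemma \ref{lemma:convaprioribound}.
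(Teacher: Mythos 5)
Your proposal is correct and takes essentially the same route as the paper: the paper likewise redefines $\EEb_t$ and $\MM_t$ to absorb $\Aop^*(\xxi)$, bounds $\specnorm{\Aop^*(\xxi)}\le c\sqrt{d}\,\sigma\le c\tilde c\,\sigma_{\min}(\XXstar)$ via Lemma \ref{lemma:candesyaniv} and the noise-level assumption, and then repeats the expansion and term-by-term estimates of Lemma \ref{lemma:convaprioribound} with the slightly enlarged constants absorbed by taking $c_1,c_2,c_3,\tilde c$ small enough.
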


\begin{proof}[Proof of Lemma \ref{lemma:convaprioribound_noise}]
    Define the matrix $\MM_t$ as
    \begin{align*}
        \MM_t
        &:= \left( \Aops \right) \left( \XXstar - \UUt \UUtT \right)
        + \Aop^* (\xxi)\\
        &= 
        \XXstar - \UUt \UUtT
        +
        \bracing{=:\EEb_t}{\left( \Aops - \IdOp \right) \left( \XXstar - \UUt \UUtT \right)
        + \Aop^* (\xxi)}.
    \end{align*}
Then we obtain that 
\begin{align*}
    \specnorm{\EEb_t}
    \le 
    \specnorm{ \left( \Aops \right) \left( \XXstar - \UUt \UUtT \right)}
    + \specnorm{\Aop^* (\xxi)}
    \le
    c_3 \sigma_{\min} \left( \XXstar \right)
    + c \sqrt{d} \sigma
    \le
    \left( c_3 + \sqrt{2} c \tilde{c} \right) \sigma_{\min} \left( \XXstar \right),
\end{align*}
where we have used Lemma \ref{lemma:candesyaniv}
and the assumption on the noise level $\sigma$.
Consequently, we also obtain that
\begin{align*}
    \specnorm{\MM_t}
    \le 
    &\specnorm{ \XXstar - \UUt \UUtT}
    + \specnorm{\EEb_t}
    \le 
    \left( c_2 +c_3 + \sqrt{2} c \tilde{c} \right) \sigma_{\min} \left( \XXstar \right).
\end{align*}
With these estimates in place we can proceed similarly as in the proof of Lemma \ref{lemma:convaprioribound}
to complete the proof of this lemma.
\end{proof}

\subsection{
    Statement and proof of the main convergence lemma in the noisy case
}

Having gathered all technical lemmas, we can now state the main convergence lemma in the noisy case.
This lemma is an adaptation of Lemma~\ref{lemma:phase1} to the noisy case.
The proof of this lemma is similar to the proof of Lemma~\ref{lemma:phase1}.
\begin{lemma}\label{lemma:phase1noise}  
    There are absolute constants $c_1,c_2,c_3, c_4 >0$ 
    chosen sufficiently small
    such that the following statement holds.
    Assume that the spectral initialization $\UU_0$ satisfies 
    \begin{align}
        \specnorm{
        \XXstar - \UU_0 \UU_0^\top
        }
    \le 
     c_1 \sigma_{\min} \left( \XXstar \right)
    \label{assump:localconv1noise}
    \end{align}
    and that for every $\ww \in \epscover$ we have that
    \begin{align}
        \fronorm{
        \UU_{0} \UU_{0}^\top
        -
        \UU_{0,\ww} \UU_{0,\ww}^\top
        }
        \le c_2 \sigma_{\min} \left( \XXstar \right)
        .
    \label{assump:localconv2noise}
    \end{align}
    Furthermore, we assume that 
   \begin{align}
    \max 
    \left\{
        \delta;  8 \sqrt{\frac{2rd}{m}}
    \right\}
    &\le \frac{\constthree}{\kappa}, 
    \label{assump:localconv4noise}
   \end{align}
   where $\delta= \delta_{4r+2}$ denotes the Restricted Isometry Property of order $4r+2$.
   In addition, assume that $\mu \le \frac{c_4}{\kappa \specnorm{\XXstar}}$, and $\sigma\leq \frac{c_5 \sigma_{\min}(\XXstar)}{\sqrt{d}}$ for a sufficiently small constant $c_5>0$.
Moreover, assume that the statements of Lemma \ref{lem:noise_spec}
and Lemma \ref{lemma:candesyaniv} hold.
 Then take $T=\frac{1}{2} \cdot  6^d$. 
 With probability at least $1-C_1\exp(-d)$, for every iteration $t$ with $0 \le t \le T$ it holds that
\begin{align}\label{ineq:phase1noise_convergence}
    \fronorm{\XXstar - \UUt \UUtT}
    \leq  
    &3 \sqrt{r} \left(1 - \frac{\mu  \sigma_{\min} (\XXstar) }{16}\right)^t
    \specnorm{
       \XXstar - \UU_0 \UU_0^\top
    }+C_2\sigma\kappa \sqrt{rd},
\end{align}
where $\UUstar \in \R^{n \times r}$ denotes a matrix
which satisfies $\UUstar \UUstar^\top =\XXstar$, and $C_1, C_2 > 0$ are absolute constants.
\end{lemma}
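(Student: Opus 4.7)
The plan is to mirror the induction argument used in the proof of Lemma \ref{lemma:phase1}, but to replace each key one-step estimate with its noisy counterpart (Lemmas \ref{lemma:auxdistanceweak_noise}, \ref{lemma:auxsequencecloseness_noise}, \ref{lemma:localconv_noise}, \ref{lemma:convaprioribound_noise}) and to augment the invariants with additive noise floors. Concretely, I will carry for every $0\le t\le T$ the six invariants
\begin{align*}
\fronorm{\VXXT(\XXstar - \UUt\UUtT)} &\le \Bigl(1-\tfrac{\mu\sigma_{\min}(\XXstar)}{16}\Bigr)^{t} \fronorm{\VXXT(\XXstar - \UU_0\UU_0^\top)} + A\sigma\kappa\sqrt{rd},\\
\specnorm{\VXXT(\XXstar - \UUt\UUtT)} &\le c_1\sigma_{\min}(\XXstar), \quad \specnorm{\VXXPT\VUUt}\le \sqrt{2}c_1, \quad \specnorm{\XXstar - \UUt\UUtT} \le 3c_1\sigma_{\min}(\XXstar),\\
\fronorm{\VXXT(\UUt\UUtT - \UUtw\UUtw^\top)} &\le c_2\sigma_{\min}(\XXstar) + B\sigma\kappa\sqrt{d}, \quad \fronorm{\UUt\UUtT - \UUtw\UUtw^\top} \le 3\bigl(c_2\sigma_{\min}(\XXstar) + B\sigma\kappa\sqrt{d}\bigr),
\end{align*}
with $A,B$ absolute constants to be fixed. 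The base case at $t=0$ is provided by the initialization assumptions \eqref{assump:localconv1noise}--\eqref{assump:localconv2noise} and by Davis-Kahan (exactly as in the noiseless case), since the additive floor terms are nonnegative.

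For the induction step I would first invoke Proposition \ref{proposition:key} to control $\specnorm{(\Aops-\IdOp)(\XXstar-\UUt\UUtT)}$ by $O(\sigma_{\min}(\XXstar)/\kappa)$, which still works because assumption \eqref{assump:localconv4noise} kills the rank-$r$ prefactor and the noise contribution to the sixth invariant, scaled by $\delta+4\sqrt{d/m}\lesssim 1/\kappa$, remains of order $\sigma_{\min}(\XXstar)$. Applying Lemma \ref{lemma:localconv_noise} with $\triplenorm{\cdot}=\fronorm{\cdot}$ together with $\fronorm{\Aop^*(\xxi)\VUUt}\le \sqrt{r}\specnorm{\Aop^*(\xxi)}\le c\sqrt{rd}\sigma$ from Lemma \ref{lemma:candesyaniv} yields a one-step recursion
\[
\fronorm{\VXXT(\XXstar - \UUtplus\UUtplus^\top)} \le \Bigl(1-\tfrac{\mu\sigma_{\min}(\XXstar)}{16}\Bigr)\fronorm{\VXXT(\XXstar-\UUt\UUtT)} + C\mu\specnorm{\XXstar}\sqrt{rd}\,\sigma,
\]
whose unrolling produces the updated first invariant with $A\asymp \kappa$ after summing the geometric series. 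The same lemma with $\triplenorm{\cdot}=\specnorm{\cdot}$ and $\specnorm{\Aop^*(\xxi)\VUUt}\le c\sqrt{d}\sigma$ gives the second invariant; Lemma \ref{lemma:convaprioribound_noise} plus Davis-Kahan then delivers the third, and the fourth follows from Lemma \ref{lemma:localconvaux}. For the virtual sequences I apply Lemma \ref{lemma:auxsequencecloseness_noise}: the extra noise term $20\mu\sqrt{d}\sigma\specnorm{\XXstar}$, divided by the contraction rate $\mu\sigma_{\min}(\XXstar)/16$, yields the $B\sigma\kappa\sqrt{d}$ floor in the fifth invariant; the a priori bound of Lemma \ref{lemma:auxdistanceweak_noise} then justifies using Lemma \ref{lemma:auxcloseness1} to upgrade the fifth invariant to the sixth (picking up the factor $3$). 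The final conclusion \eqref{ineq:phase1noise_convergence} follows by using Lemma \ref{lemma:localconvaux} to convert the first invariant from $\VXXT$-restricted to full Frobenius norm (producing the factor $3$) and by bounding $\fronorm{\VXXT(\XXstar-\UU_0\UU_0^\top)}\le\sqrt{r}\specnorm{\XXstar-\UU_0\UU_0^\top}$ (producing the factor $\sqrt{r}$).

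The main obstacle will be the compatibility of the spectral-norm invariant $\specnorm{\VXXT(\XXstar-\UUt\UUtT)}\le c_1\sigma_{\min}(\XXstar)$ with the noise term $5\mu\specnorm{\XXstar}\specnorm{\Aop^*(\xxi)}\lesssim \mu\kappa\sqrt{d}\sigma\cdot\sigma_{\min}(\XXstar)$ introduced by Lemma \ref{lemma:localconv_noise}: preserving the invariant requires $\kappa\sqrt{d}\sigma$ to be dominated by a small multiple of $\sigma_{\min}(\XXstar)$, and this is precisely where the smallness assumption $\sigma\le c_5\sigma_{\min}(\XXstar)/\sqrt{d}$ in the hypotheses is consumed (with $c_5$ chosen small enough relative to $c_1$ and the absolute constants appearing in Lemmas \ref{lem:noise_spec} and \ref{lemma:candesyaniv}). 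Once these inequalities among constants are secured, all remaining noise accumulations are controlled by the geometric-series identity $\sum_{s=0}^{t-1}(1-\eta)^{s}\le 1/\eta$ with $\eta\asymp\mu\sigma_{\min}(\XXstar)$, and the rest of the argument proceeds exactly as in the proof of Lemma \ref{lemma:phase1}; the probabilistic part of the statement is inherited from the events of Lemmas \ref{lemma:independencebound}, \ref{lem:noise_spec}, \ref{lemma:candesyaniv}, and \ref{lemma:spectralinitializationnoise} via a union bound.
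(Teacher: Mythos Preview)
Your proposal is correct and follows essentially the same route as the paper's proof: the same induction structure, the same six invariants, and the same noisy one-step lemmas (\ref{lemma:auxdistanceweak_noise}, \ref{lemma:auxsequencecloseness_noise}, \ref{lemma:localconv_noise}, \ref{lemma:convaprioribound_noise}) in the same places. The only cosmetic difference is bookkeeping: the paper keeps the virtual-sequence invariants at the noiseless thresholds $c_2\sigma_{\min}(\XXstar)$ and $3c_2\sigma_{\min}(\XXstar)$ and absorbs the extra $20\mu\sqrt{d}\sigma\specnorm{\XXstar}$ term directly into them using the smallness of $\mu$ and $\sigma$, whereas you carry explicit additive floors $B\sigma\kappa\sqrt{d}$; similarly, the paper tracks the partial geometric sum $\sum_{\ell=0}^{t-1}(1-\mu\sigma_{\min}/16)^\ell$ in the Frobenius invariant and bounds it only at the very end, whereas you bound it immediately by a constant floor.
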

\begin{proof}[Proof of Lemma \ref{lemma:phase1noise}]
As in the noiseless case,
we prove by induction that for all iterations $t$ with $ 0 \le t \le T$ the following inequalities hold:    
\begin{align}
    \fronorm{ \VXXT \left( \XXstar - \UUt \UUtT \right)} 
    \le
    &\left(1 - \frac{\mu}{16} \sigma_{\min} (\XXstar) \right)^{t} 
    \fronorm{\VXXT \left( \XXstar - \UU_0 \UU_0^\top  \right)}
    \\
    &+5 c\mu \sigma\sqrt{rd} \specnorm{\XXstar} 
    \sum_{\ell=0}^{t-1} 
    \left(1 - \frac{\mu}{16} \sigma_{\min} \left( \XXstar \right) \right)^\ell 
    %+ \text{noise term}
    ,\label{ineq:induction3noise}\\
    \specnorm{\VXXT \left(\XXstar - \UUt \UUtT \right)} 
    \le
    & c_1 \sigma_{\min} \left( \XXstar \right)  ,\label{ineq:induction4noise}\\
    \specnorm{\VXXPT \VUUt}
    \le 
    &   \sqrt{2} c_1 ,
    \label{ineq:induction6noise}\\
    \specnorm{\XXstar - \UUt \UUtT} 
    \le
    & 3 c_1 \sigma_{\min} \left( \XXstar \right) ,\label{ineq:induction5noise}
\end{align}
and, for every $\ww \in \epscover$,
\begin{align}
    \fronorm{
        \VXXT
        \left(\UUt \UUtT
        -
        \UUtw \UUtw^\top\right)
    }
    \le
    &  c_2 \sigma_{\min} \left( \XXstar \right) , \label{ineq:induction1noise}\\
    \fronorm{ \UUt \UUtT - \UUtw \UUtw^\top }
    \le 
    &  3 c_2 \sigma_{\min} (\XXstar) .\label{ineq:induction2noise}
\end{align}
The constants $c_1, c_2 >0$ are the same as in assumptions \eqref{assump:localconv1noise}
and \eqref{assump:localconv2noise}
and are thus, in particular, independent of the iteration number $t$.
With the exact same arguments as in the proof of Lemma~\ref{lemma:phase1} we obtain that
these inequalities hold for $t=0$.

For the induction step, assume now that these inequalities hold for some $t$.
With the exact same arguments as in the proof of Lemma~\ref{lemma:phase1} we obtain that
\begin{align}
    \specnorm{\left(\Aops - \IdOp \right) \left(\XXstar - \UUt \UUtT \right) }
    \le
    \frac{10 c_3}{\kappa} \sigma_{\min} \left( \XXstar \right), \label{ineq:intern87noise}
\end{align}
Next, we note that from Lemma \ref{lemma:localconv_noise} applied with 
$\triplenorm{\cdot}= \fronorm{\cdot} $ it follows that 
\begin{align*}
    &\fronorm{\VXXT \left(
        \UUtplus \UUtplus^\top - \XXstar
    \right)}\\
    \le
    &\left(1 - \frac{\mu}{8} \sigma_{\min} \left( \XXstar \right) \right) 
        \fronorm{\VXXT \left( \XXstar - \UUt \UUtT \right) }
        +
        5 \mu \specnorm{\XXstar}
        \fronorm{
            \left[\left(\Aops - \IdOp \right)
            \left(\XXstar - \UUt \UUtT\right)\right]
            \VUUt
        } \\
    &+5 \mu \specnorm{\XXstar}
        \fronorm{
         \left(   \Aop^* (\xxi) \right) \VUUt}\\
    \le
    &\left(1 - \frac{\mu}{16} \sigma_{\min} \left( \XXstar \right) \right) 
        \fronorm{\VXXT \left( \XXstar - \UUt \UUtT \right) }
        +5 \mu \specnorm{\XXstar}
        \fronorm{
         \left(   \Aop^* (\xxi) \right) \VUUt}\\
    \le
    &\left(1 - \frac{\mu}{16} \sigma_{\min} \left( \XXstar \right) \right) 
        \fronorm{\VXXT \left( \XXstar - \UUt \UUtT \right) }
        +5 \mu \sqrt{r} \specnorm{\XXstar}\specnorm{\Aop^* (\xxi)}\\
    \overleq{(a)}
    &\left(1 - \frac{\mu}{16} \sigma_{\min} \left( \XXstar \right) \right) 
        \fronorm{\VXXT \left( \XXstar - \UUt \UUtT \right) }
        +5 c\mu \sigma \sqrt{rd} \specnorm{\XXstar} .
\end{align*}
Here we argued as in the proof of Lemma~\ref{lemma:phase1}
and in inequality $(a)$ we have used additionally Lemma~\ref{lemma:candesyaniv}.
Thus, using the induction assumption
obtain that
\begin{align*}
 &\fronorm{\VXXT \left(
        \UUtplus \UUtplus^\top - \XXstar
    \right)}\\   
\le 
&\left(1 - \frac{\mu}{16} \sigma_{\min} \left( \XXstar \right) \right)^{t+1}
        \fronorm{\VXXT \left( \XXstar - \UU_0 \UU_0^\top \right) }
        +5  c\mu \sigma \sqrt{rd} \specnorm{\XXstar} 
        \sum_{\ell=0}^t 
        \left(1 - \frac{\mu}{16} \sigma_{\min} \left( \XXstar \right) \right)^\ell.
\end{align*}
Thus, we see that inequality \eqref{ineq:induction3noise} holds for $t+1$.

To prove inequality \eqref{ineq:induction4noise} for $t+1$
we argue as in the proof of Lemma~\ref{lemma:phase1}
and use additionally
Lemma \ref{lemma:localconv_noise}
with $ \triplenorm{\cdot} = \specnorm{\cdot} $ that
\begin{align}
    &\specnorm{\VXXT \left(
        \UUtplus \UUtplus^\top - \XXstar
    \right)}\\
    \le
    &\left(1 - \frac{\mu}{8} \sigma_{\min} \left( \XXstar \right) \right) 
    \specnorm{\VXXT \left( \XXstar - \UUt \UUtT \right) }
    +
     5 \mu \specnorm{\XXstar}
    \specnorm{
         \left(\Aops - \IdOp \right)
            \left(\XXstar - \UUt \UUtT\right)
        }\\
    &+ 5 \mu \specnorm{\XXstar}
        \specnorm{
         \left(   \Aop^* (\xxi) \right) \VUUt}\\
    \le
    &\left(1 - \frac{\mu}{8} \sigma_{\min} \left( \XXstar \right) \right) 
    c_1 \sigma_{\min} (\XXstar) 
    +
    50 c_3 \mu  \sigma_{\min}^2 \left( \XXstar \right)+5c\mu\sigma\sqrt{d}\|\XXstar\|
    % \overleq{(b)}
    \le
    c_1 \sigma_{\min} \left( \XXstar \right).
    \label{ineq:inter789_noise}
\end{align}
We observe that Lemma \ref{lemma:convaprioribound_noise}  yields the 
a-priori bound
\begin{equation*}
    \specnorm{\XXstar- \UUtplus \UUtplus^\top}
    \le
    \left( 1 - \frac{1}{\sqrt{2}} \right)
    \sigma_{\min} \left( \XXstar \right).
\end{equation*}

Then inequality \eqref{ineq:induction6noise} for $t+1$ follows
with the exact same argument as in the noiseless case.
Also inequality \eqref{ineq:induction5noise} for $t+1$ 
can be shown in the same way as in the proof of Lemma~\ref{lemma:phase1}.

Next, we apply Lemma \ref{lemma:auxsequencecloseness_noise}
and it follows that
\begin{align}
    &\fronorm{
        \VXXT
        \left(\UUtplus \UUtplus^\top
        -
        \UUtplusw \UUtplusw^\top\right)
    }\\
    \le
    &\left( 1- \frac{\mu \sigma_{\min} (\XXstar)}{16} \right)
    \fronorm{ \VXXT \left( \UUt \UUt^\top - \UUtw \UUtw^\top \right) }
    +
    \mu  
    \sigma_{\min}(\XXstar)\specnorm{\XXstar - \UUt \UUtT} +20 \mu \sqrt{d} \sigma \specnorm{\XXstar} \\
    % \overleq{(a)} 
    \le 
    &\left( 1- \frac{\mu \sigma_{\min} (\XXstar)}{16} \right)
   c_2 \sigma_{\min} \left( \XXstar \right)
    +
    3 c_1 \mu   \sigma^2_{\min}(\XXstar) +20 \mu \sqrt{d} \sigma \specnorm{\XXstar} \\
    \overleq{(a)} 
    & c_2 \sigma_{\min} \left( \XXstar \right).
\end{align}
Inequality $(a)$ holds since we can choose that $ c_1 \le  \frac{c_2}{48} $
and   from the assumption on $\mu$ and $\sigma$ we have 
$\mu \sigma \sqrt{d}\specnorm{\XXstar} \le c_4c_5\sigma_{\min}(\XXstar)$.
This proves inequality \eqref{ineq:induction1noise}.
In order to prove inequality \eqref{ineq:induction2noise} for $t+1$,
we can apply Lemma \ref{lemma:auxdistanceweak_noise}
and Lemma \ref{lemma:auxcloseness1} and argue as in the noiseless case.
This completes the induction step.

\medskip

It remains to prove inequality \eqref{ineq:phase1noise_convergence}.
We note that 
\begin{align*}
    \fronorm{\XXstar - \UUt \UUtT}
    \overleq{(a)} 
    & 3 \fronorm{\VXXT \left( \XXstar - \UUt \UUtT \right)}\\
    \overleq{(b)}
    % \overleq{(a)}
    &3\left(1 - \frac{\mu  \sigma_{\min} (\XXstar) }{16}\right)^t
    \fronorm{
       \VXXT \left( \XXstar - \UU_0 \UU_0^\top \right)
    }\\
    &+15c \mu \sigma \sqrt{rd} \specnorm{\XXstar} 
    \sum_{\ell=0}^{t-1} 
    \left(1 - \frac{\mu}{16} \sigma_{\min} \left( \XXstar \right) \right)^\ell\\
    \overleq{(c)} 
    &3 \sqrt{r} \left(1 - \frac{\mu  \sigma_{\min} (\XXstar) }{16}\right)^t
    \specnorm{
       \XXstar - \UU_0 \UU_0^\top
    }+240c\sigma\kappa \sqrt{rd}.
\end{align*}
Inequality $(a)$ follows from 
Lemma \ref{lemma:localconvaux} 
with $ \triplenorm{\cdot} = \fronorm{\cdot}  $
which is applicable since we have shown by induction that \eqref{ineq:induction6noise} holds for $0 \le t \le T$.
Inequality $(b)$ follows from \eqref{ineq:induction1noise}
and for inequality $(c)$
we have used the geometric sum over $\ell$.
Thus, the proof of Lemma \ref{lemma:phase1noise} is complete.
\end{proof}

\subsection{Proof of Theorem~\ref{thm:mainnoise}}\label{sec:proof_mainnoise}

We are now ready to prove Theorem~\ref{thm:mainnoise}.
\begin{proof}[Proof of Theorem~\ref{thm:mainnoise}]
In the following 
$c>0$ denotes a sufficiently small absolute constant.
First, by Lemma \ref{lem:rank_RIP} due to our assumption $m \gtrsim rd \kappa^2$, with probability $1-\exp(-d)$
the measurement operator $\mathcal{A}$ satisfies the Restricted Isometry Property
of order $6r$ with a constant $\delta=\delta_{6r} \le \frac{c}{\kappa}$.

As in the noiseless case,
we choose an $\varepsilon$-net $ \epscover$ of the unit sphere in $\R^d$
with $\varepsilon=1/2$
such that $\vert \mathcal{N}_{\varepsilon} \vert \le 6^d$.
Thus, it follows from Lemma \ref{lemma:independencebound}
that
with probability at least $1-2\exp (-10d)$
it holds that
\begin{equation*}
    \vert   \innerproduct{\ww \ww^\top, \left(\Aops \right) \left( \Projwperp \left( \XXstar - \UUtw \UUtw^\top  \right)  \right)  }\vert \\
    \le 
    4 \sqrt{\frac{d}{m}} \twonorm{ \Aop \left( \Projwperp \left( \XXstar - \UUtw \UUtw^\top \right) \right) }
    \end{equation*}
    for all $\ww \in \epscover$
    and for all $ 0 \le t \le 6^d/2$.

   From  Lemma~\ref{lemma:spectralinitializationnoise} and Lemma~\ref{lem:rank_RIP}, when $m\geq C\kappa^2 rd$ and $\sigma<\frac{c_2\sigma_{\min}(\XXstar)}{\sqrt d}$ for sufficiently large $C$ and sufficiently small $c_2>0$, all conditions in Lemma~\ref{lemma:phase1noise} hold with probability $1-C'\exp(-d)$.
In particular, we have that 
\[ \specnorm{
       \XXstar - \UU_0 \UU_0^\top
   }
   \le C\kappa\sigma_{\min}(\XXstar) \sqrt{\frac{rd}{m}} 
   +c\sqrt{d} \sigma
   \ll
   \sigma_{\min} (\XXstar).
\]
   Then by Lemma~\ref{lemma:phase1noise} we obtain that
    \begin{align}\label{eq:noisy_convergence_bound}
    \fronorm{\XXstar - \UUt \UUtT}
    \leq  
    &3 \sqrt{r} \left(1 - \frac{\mu  \sigma_{\min} (\XXstar) }{16}\right)^t
    \specnorm{
       \XXstar - \UU_0 \UU_0^\top
    }+C_2\sigma\kappa \sqrt{rd}.
\end{align}
By taking 
\[  T 
   := \Big\lceil \frac{\log (\sigma \kappa \sqrt{d}/\sigma_{\min}(\XXstar))}{\log(1-\mu \sigma_{\min}(\XXstar)/16)} \Big\rceil
   \asymp
   \frac{ \log \left( \frac{\sigma_{\min} (\XXstar)}{ \kappa \sqrt{d} \sigma } \right)  }{ \mu \sigma_{\min} (\XXstar)  },\]
we can simplify \eqref{eq:noisy_convergence_bound} to obtain \eqref{eq:noisy_T}.
Note that the condition 
\begin{align}
    \sigma \geq \frac{\sigma_{\min}(\XXstar)}{\kappa\sqrt{d} } \exp(-\mu \sigma_{\min}(\XXstar) 3^d ),
\end{align}
guarantees that $ T \le \frac{1}{2} \cdot 6^d $.
This finishes the proof.
\end{proof}

 \end{document}